\documentclass{article}
\usepackage{microtype}
\usepackage{graphicx}
\usepackage{caption}
\usepackage{subcaption}
\usepackage{booktabs} 
\usepackage{bbm} 
\usepackage{capt-of} 
\usepackage{hyperref}

\usepackage[accepted]{icml2026}

\usepackage{amsmath}
\usepackage{amssymb}
\usepackage{mathtools}
\usepackage{amsthm}

\usepackage{algorithm}
\usepackage{algorithmic}

\usepackage{tikz}
\usetikzlibrary{fit,backgrounds}
\usetikzlibrary{calc}

\usepackage{multirow}

\usepackage{fontawesome5}

\usepackage{xcolor}

\usepackage[capitalize,noabbrev]{cleveref}

\usepackage{appendix}

\usepackage{titletoc}
\usepackage{etoc}   

\DeclareUnicodeCharacter{2113}{\ensuremath{\ell}}

\theoremstyle{plain}
\newtheorem{theorem}{Theorem}[section]
\newtheorem{proposition}[theorem]{Proposition}
\newtheorem{lemma}[theorem]{Lemma}
\newtheorem{corollary}[theorem]{Corollary}
\theoremstyle{definition}
\newtheorem{definition}[theorem]{Definition}

\theoremstyle{remark}
\newtheorem{remark}[theorem]{Remark}

\usepackage{enumitem}

\usepackage[textsize=tiny]{todonotes}

\icmltitlerunning{Rectified LpJEPA: Joint-Embedding Predictive Architectures with Sparse and Maximum-Entropy Representations}

\begin{document}

\twocolumn[
  \icmltitle{Rectified LpJEPA: Joint-Embedding Predictive Architectures\\with Sparse and Maximum-Entropy Representations}

  \icmlsetsymbol{equal}{*}

  \begin{icmlauthorlist}
    \icmlauthor{Yilun Kuang}{nyu}
    \icmlauthor{Yash Dagade}{duke}
    \icmlauthor{Tim G. J. Rudner}{utoronto}
    \icmlauthor{Randall Balestriero}{brown}
    \icmlauthor{Yann LeCun}{nyu}
  \end{icmlauthorlist}

  \vspace{3pt}
  \begin{center}
    \small
    \href{https://github.com/YilunKuang/rectified-lp-jepa}{\faGithub\,GitHub}\hspace{1.2em}
    \href{https://yilunkuang.github.io/blog/2026/rectified-lp-jepa/}{\faGlobe\,Blog}
  \end{center}
  \vspace{-12pt}

  \icmlaffiliation{nyu}{New York University.}
  \icmlaffiliation{duke}{Duke University.}
  \icmlaffiliation{utoronto}{University of Toronto.}
  \icmlaffiliation{brown}{Brown University}

  \icmlcorrespondingauthor{Yilun Kuang}{yilun.kuang@nyu.edu}
  \icmlcorrespondingauthor{Yann LeCun}{yann.lecun@nyu.edu}

  \icmlkeywords{Machine Learning, ICML}

  \vskip 0.3in

\begin{abstract}
Joint-Embedding Predictive Architectures (JEPA) learn view-invariant representations and admit projection-based distribution matching for collapse prevention. Existing approaches regularize representations towards isotropic Gaussian distributions, but inherently favor dense representations and fail to capture the key property of sparsity observed in efficient representations. We introduce Rectified Distribution Matching Regularization (RDMReg), a sliced two-sample distribution-matching loss that aligns representations to a Rectified Generalized Gaussian (RGG) distribution. RGG enables explicit control over expected $\ell_0$ norm through rectification, while its continuous truncated component admits a maximum-entropy characterization under expected $\ell_p$ norm and support constraints. Equipping JEPAs with RDMReg yields Rectified LpJEPA, which strictly generalizes prior Gaussian-based JEPAs. Empirically, Rectified LpJEPA learns sparse, non-negative representations with favorable sparsity--performance trade-offs and competitive downstream performance on image classification benchmarks, showing that RDMReg can enforce sparsity while preserving task-relevant information.
\end{abstract}
\vspace{-2.1\baselineskip}

\begin{center}
\captionsetup{type=figure} 

\subcaptionbox{Rectified LpJEPA Diagram.\label{fig:ttt1}}{%
  \includegraphics[width=0.45\linewidth]{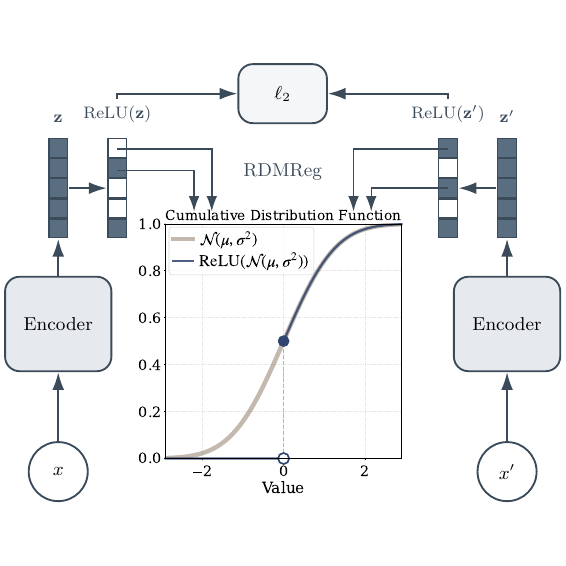}%
}\hspace{0.02\linewidth}
\subcaptionbox{Non-Closure under Projections.\label{fig:ttt2}}{%
  \includegraphics[width=0.45\linewidth]{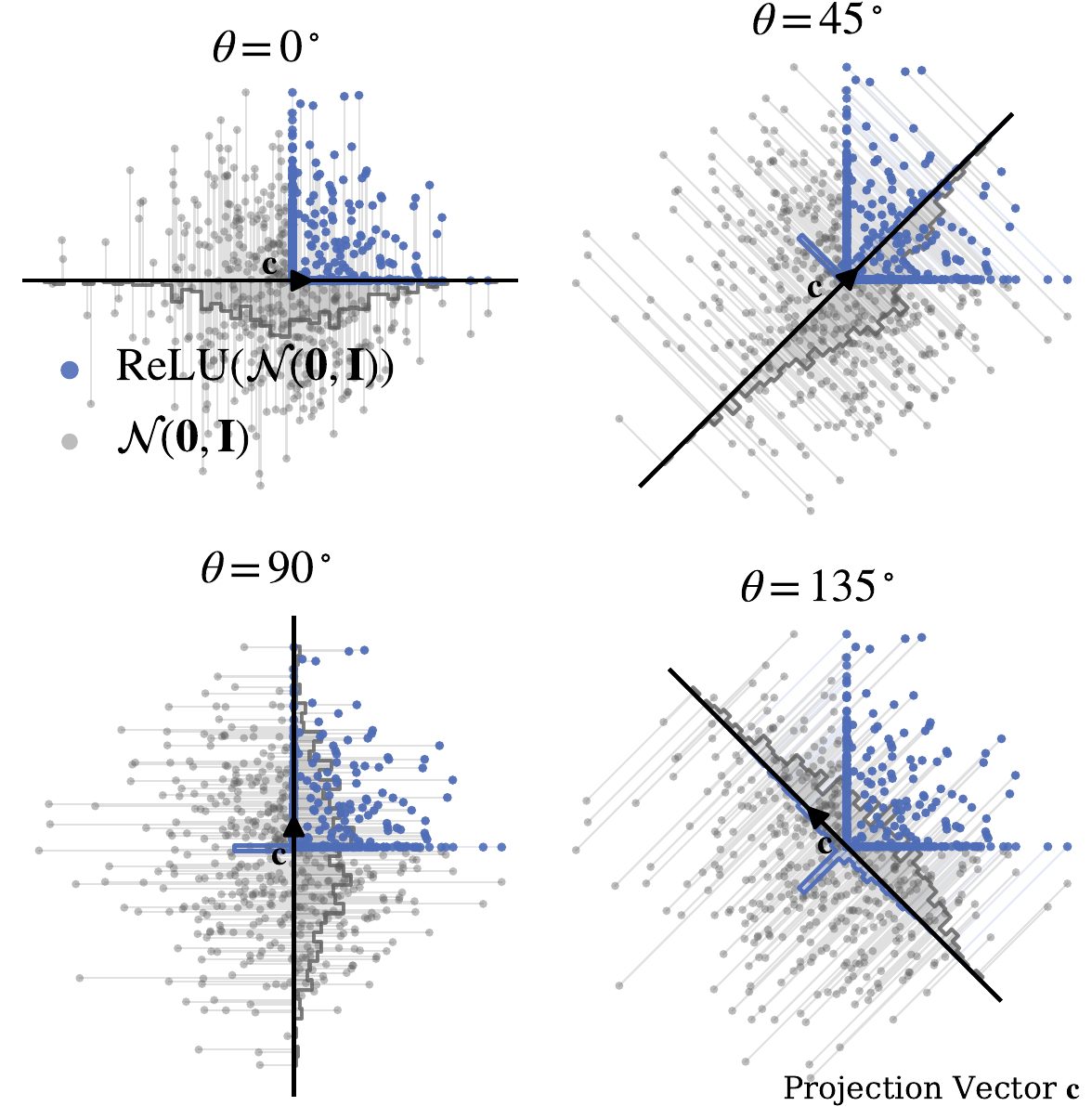}%
}

\setcounter{figure}{0}
\captionof{figure}{\textbf{Rectified LpJEPA.} (a) Two views $(x,x')$ of the same underlying data are embedded and rectified to obtain $\operatorname{ReLU}(\mathbf{z})$ and $\operatorname{ReLU}(\mathbf{z}') \in \mathbb{R}^d$. Rectified LpJEPA minimizes the $\ell_2$ distance between rectified features while regularizing the $d$-dimensional rectified feature distribution towards a product of i.i.d.\ Rectified Gaussian distributions $\operatorname{ReLU}(\mathcal{N}(\mu,\sigma^2))$ using RDMReg. As a result, each coordinate of the learned representation aligns towards a Rectified Gaussian distribution (CDF shown above), a special case of the Rectified Generalized Gaussian family $\mathcal{RGN}_p(\mu,\sigma)$ when $p=2$. In the absence of rectification on both the features and the target distribution, Rectified LpJEPA reduces to isotropic Gaussian regularization as in LeJEPA~\citep{balestriero2025lejepaprovablescalableselfsupervised}. (b) Samples from $2$-dimensional Gaussian $\mathcal{N}(\mathbf{0},\mathbf{I})$ and Rectified Gaussian $\operatorname{ReLU}(\mathcal{N}(\mathbf{0},\mathbf{I}))$ are drawn and projected along a certain direction $\mathbf{c}$. As opposed to Gaussian which is closed under linear combinations, the projected marginals of the Rectified Gaussian distribution no longer fall in the same family, motivating the necessity of using two-sample distribution-matching losses.}\label{fig:totalttt}
\end{center}

\printAffiliationsAndNoticeTwoColumns{}

]

\section{Introduction}

Self-supervised representation learning has emerged as a promising paradigm for advancing machine intelligence without explicit supervision \citep{radford2018improving,chen2020simpleframeworkcontrastivelearning}. A prominent class of methods—Joint-Embedding Predictive Architectures (JEPAs)—learn representations by enforcing consistency across multiple views of the same data in the latent space, while avoiding explicit reconstructions or density estimations in the observation space \citep{lecun2022path,assran2023selfsupervisedlearningimagesjointembedding}. 

By decoupling learning from observation-level constraints, JEPAs operate at a higher level of abstraction, enabling flexibility in encoding task-relevant information. However, invariance alone admits degenerate solutions, including complete or dimensional collapse, where representations concentrate in trivial or low-rank subspaces \citep{jing2022understandingdimensionalcollapsecontrastive}.

\begin{figure*}[t]
\vspace*{3pt}
\centering
\begin{tikzpicture}[>=stealth]

\definecolor{NormBG}{RGB}{232,235,240}   
\definecolor{NormText}{RGB}{95,105,120}
\definecolor{LzeroBG}{HTML}{B6BBC1}      
\definecolor{LzeroText}{HTML}{43484F}

\definecolor{LaplaceBG}{RGB}{233,238,246}   
\definecolor{GaussianBG}{RGB}{245,233,235}  
\definecolor{LaplaceText}{HTML}{364377}   
\definecolor{GaussianText}{HTML}{9C5F6A}  

\tikzset{
  lzeronormprop/.style={
    fill=LzeroBG,
    rounded corners=3pt,
    inner sep=1.0pt,
    text opacity=1,
    fill opacity=1,
  }
}

\tikzset{
  normprop/.style={
    fill=NormBG,
    rounded corners=3pt,
    inner sep=0.0pt,
    text opacity=1,
    fill opacity=1,
  }
}

\tikzset{
  legitem/.style={inner sep=0pt, outer sep=0pt},
  legswatch/.style={rounded corners=3pt, minimum width=7pt, minimum height=7pt},
  leglabel/.style={font=\large, anchor=west}
}

\node (a) at (6,4) {$\mathbf{u}=\frac{\mathbf{x}}{\|\mathbf{x}\|_1}\sim\text{Unif}(\mathbb{S}^{d-1}_{\ell_1^+})$};
\node (b) at (14,4) {$\mathbf{u}=\frac{\mathbf{x}}{\|\mathbf{x}\|_2}\sim\text{Unif}(\mathbb{S}^{d-1}_{\ell_2^+})$};
\node (c) at (6,2) [align=center] {Truncated Laplace \\ $\mathbf{x}\sim\prod_{i=1}^d\mathcal{TGN}_{1}(0,1)$};
\node (d) at (14,2) [align=center] {Truncated Gaussian \\ $\mathbf{x}\sim\prod_{i=1}^d\mathcal{TGN}_{2}(0,1)$};
\node (e) at (6,0) [align=center, text=LaplaceText] {Rectified Laplace \\ $\prod_{i=1}^d\mathcal{RGN}_{1}(0,1)$};

\node (f) at (14,0) [align=center, text=GaussianText] {Rectified Gaussian \\ $\prod_{i=1}^d\mathcal{RGN}_{2}(0,1)$};
\node (g) at (2.5,2) [align=center] {
$r=\|\mathbf{x}\|_1^1$\\
$\sim\Gamma(d/1,1)$
};
\node (h) at (17.5,2) [align=center] {$r^2=\|\mathbf{x}\|_2^2$\\
$\sim\Gamma(d/2,2)$
};
\node (i) at (10,2) [align=center] {
    Maximum \\
    Entropy
};
\node (j) at (10,0) [align=center] {
    Dirac \\
    Measure \\
    $\prod_{i=1}^{d}\delta_0$
};
\node (k) at (2.5,0) [align=center, lzeronormprop] {
$\mathbb{E}[\|\mathbf{x}\|_0]$\\
$=d\cdot\Phi_{\mathcal{L}}(0)$
};

\node (l) at (17.5,0) [align=center, lzeronormprop] {
$\mathbb{E}[\|\mathbf{x}\|_0]$\\
$=d\cdot\Phi_{\mathcal{N}}(0)$
};

\node (m) at (2.5,3.75) [align=center, inner sep=0pt] {
\includegraphics[width=2cm]{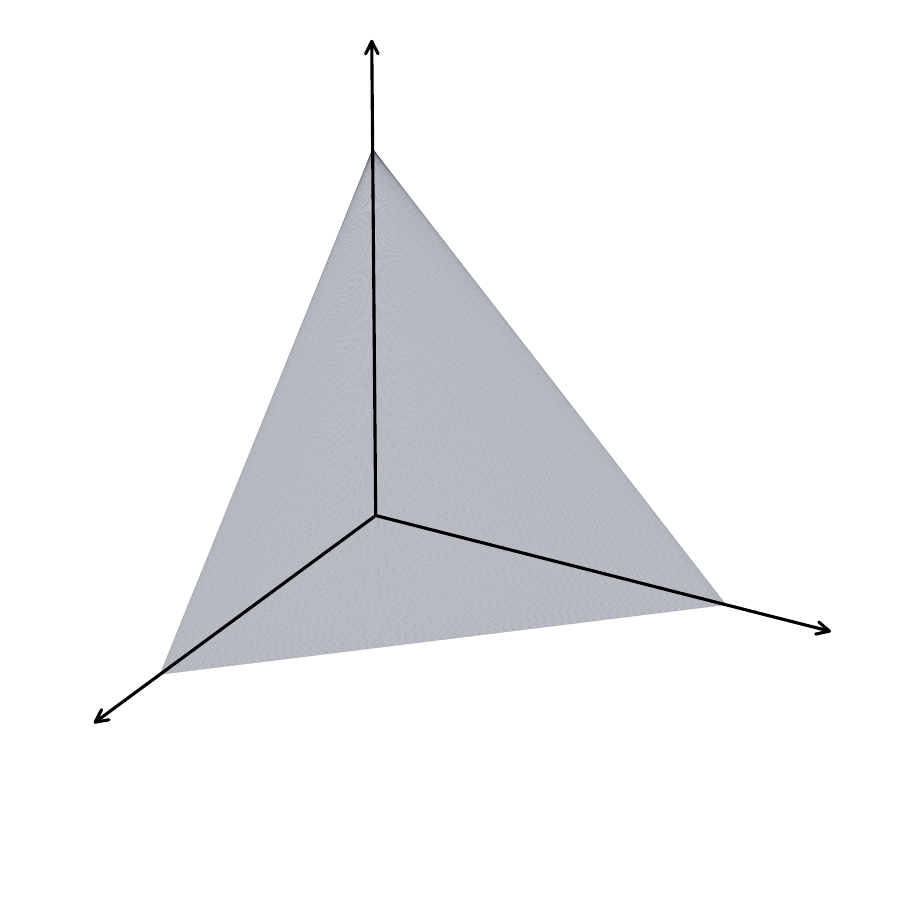}\\[-1.75em]
{\scriptsize $\mathbb{S}^{d-1}_{\ell_1^+}$}
};
\node (n) at (17.5,3.75) [align=center, inner sep=0pt] {
\includegraphics[width=2cm]{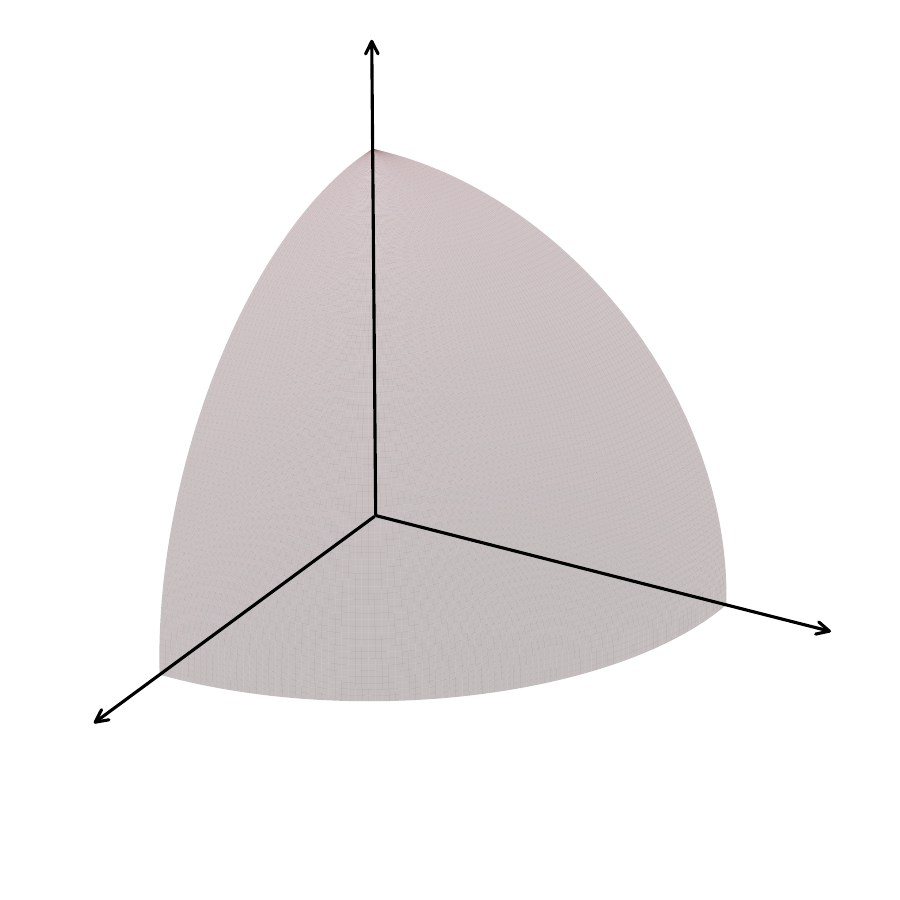}\\[-1.75em]
{\scriptsize $\mathbb{S}^{d-1}_{\ell_2^+}$}
};
\draw[<->] (a) -- (b) node[midway, above] {Uniform Distributions} node[midway, below] {over $\ell_{p}^{+}$-spheres ($p\in\{1,2\}$)};
\draw[->] (a) -- (c) node[midway, left] {};
\draw[->] (b) -- (d) node[midway, right] {};
\draw[->] (c) -- (e) node[midway, left] {$\Phi_{\!\mathcal{L}}(0)$} node[midway, right] {(i.i.d.)};
\draw[->] (d) -- (f) node[midway, right] {$\Phi_{\!\mathcal{N}}(0)$} node[midway, left] {(i.i.d.)};
\draw[->] (g) -- (c) node[midway, right] {};
\draw[->] (h) -- (d) node[midway, right] {};
\draw[<->] (a) -- (g) node[midway, sloped, above] {$r\perp\!\!\!\perp\mathbf{u}$};
\draw[<->] (b) -- (h) node[midway, sloped, above] {$r\perp\!\!\!\perp\mathbf{u}$};

\path (c) -- (i) coordinate[pos=0.5] (ciMid);
\node[normprop, align=center] at ([yshift=2.5pt]ciMid) {
  $\mathbb{E}[\|\mathbf{x}\|_1^1]$\\
  $=d$
};
\draw[->] (c) -- (i);

\path (d) -- (i) coordinate[pos=0.5] (diMid);
\node[normprop, align=center] at ([yshift=2.5pt]diMid) {
  $\mathbb{E}[\|\mathbf{x}\|_2^2]$\\
  $=d$
};
\draw[->] (d) -- (i);

\draw[->] (j) -- (e) node[midway, above, align=center] {$1-$\\$\Phi_{\!\mathcal{L}}(0)$} node[midway, below] {(i.i.d.)};
\draw[->] (j) -- (f) node[midway, above, align=center] {$1-$\\$\Phi_{\!\mathcal{N}}(0)$} node[midway, below] {(i.i.d.)};
\draw[->] (e) -- (k) node[midway, above] {};
\draw[->] (f) -- (l) node[midway, above] {};

\node (rgn) at (10,-1.3) [align=center] {
Rectified Generalized Gaussian \\
$\prod_{i=1}^d\!\mathcal{RGN}_p=\prod_{i=1}^d\!\operatorname{ReLU}(\mathcal{GN}_p)$
};

\begin{pgfonlayer}{background}
  \node[
    fit=(rgn),
    fill=none,
    rounded corners=6pt,
    inner sep=0.0pt,
    draw,
  ] {};
\end{pgfonlayer}

\draw[->] (rgn) -- (e) node[midway, sloped, above] {};
\draw[->] (rgn) -- (f) node[midway, sloped, above] {};

\node[legitem] (L1item) at ([xshift=-7.75cm]rgn.center) {%
  \tikz[baseline]{
    \node[fill=LaplaceBG, legswatch] (s) {};
    \node[leglabel, text=LaplaceText] at ([xshift=4pt]s.east) {$\mathbf{p\!=\!1}$};
  }%
};

\node[legitem] (Lpitem) at ([xshift=-4.75cm]rgn.center) {%
  \tikz[baseline]{
    \node[fill=NormBG, legswatch] (s) {};
    \node[leglabel, text=NormText] at ([xshift=4pt]s.east) {Expected $\ell_p$ Norms};
  }%
};

\node[legitem] (L0item) at ([xshift=4.75cm]rgn.center) {%
  \tikz[baseline]{
    \node[fill=LzeroBG, legswatch] (s) {};
    \node[leglabel, text=LzeroText] at ([xshift=4pt]s.east) {Expected $\ell_0$ Norms};
  }%
};

\node[legitem] (L2item) at ([xshift=7.75cm]rgn.center) {%
  \tikz[baseline]{
    \node[fill=GaussianBG, legswatch] (s) {};
    \node[leglabel, text=GaussianText] at ([xshift=4pt]s.east) {$\mathbf{p\!=\!2}$};
  }%
};

\begin{pgfonlayer}{background}
  \node[
    fit=(m)(a)(g)(c)(e)(k),
    inner sep=0.0pt,
    rounded corners=6pt,
    draw=none,
    fill=LaplaceBG,
  ] (bgLaplace) {};

  \node[
    fit=(n)(b)(h)(d)(f)(l),
    inner sep=0.0pt,
    rounded corners=6pt,
    draw=none,
    fill=GaussianBG,
  ] (bgGaussian) {};
\end{pgfonlayer}

\begin{pgfonlayer}{background}
  \node[
    fit=(e),
    fill=none,
    rounded corners=6pt,
    inner sep=-0.2pt,
    draw=LaplaceText,
  ] {};
\end{pgfonlayer}

\begin{pgfonlayer}{background}
  \node[
    fit=(f),
    fill=none,
    rounded corners=6pt,
    inner sep=-0.2pt,
    draw=GaussianText,
  ] {};
\end{pgfonlayer}

\end{tikzpicture}
\vspace*{3pt}
\caption{\textbf{Rectified Laplace ($p=1$) and Rectified Gaussian ($p=2$) as special cases of Rectified Generalized Gaussian distributions.} Assume $\mu=0$ and $\sigma=1$. For any $p>0$, the Truncated Generalized Gaussian $\prod_{i=1}^{d}\mathcal{TGN}_p$ over the product support $(0,\infty)^d$ is the maximum differential entropy distribution under a fixed expected $\ell_p$-norm constraint. For $p\in\{1,2\}$, $\prod_{i=1}^{d}\mathcal{TGN}_p$ further admits a radial--angular decomposition $\mathbf{x}=r\cdot\mathbf{u}$ with $r\perp\!\!\!\perp\mathbf{u}$, where $\mathbf{u}$ is uniformly distributed with respect to the surface measure on the unit $\ell_p$-sphere confined to the positive orthant and $r^p$ follows a Gamma distribution. Rectified Laplace and Rectified Gaussian arise via coordinate-wise mixing of the corresponding truncated distributions with a Dirac measure at zero, yielding a distribution with expected $\ell_0$-norm guarantees, where $\Phi_{\mathcal{L}}$ and $\Phi_{\mathcal{N}}$ denote the cumulative distribution functions of the standard Laplace and standard Gaussian distributions respectively.}
\label{fig:maxent-l1-l2}
\end{figure*}

Recent efforts culminate towards a distribution-matching approach for collapse prevention. In particular, LeJEPA \citep{balestriero2025lejepaprovablescalableselfsupervised} introduces the SIGReg loss, which aligns one-dimensional projected feature marginals towards a univariate Gaussian across random projections, thereby regularizing the full representation towards isotropic Gaussian with asymptotic convergence guaranteed by the Cramér--Wold theorem. By decomposing high-dimensional distribution matching into parallel one-dimensional projection-based optimizations, SIGReg mitigates the curse of dimensionality and enables scalable representation learning. The resulting features are encouraged to be maximum-entropy under expected $\ell_2$ norm constraints, and projection-based matching can be viewed as extending second-order regularization methods such as VICReg \citep{bardes2022vicregvarianceinvariancecovarianceregularizationselfsupervised} beyond covariance matching by also penalizing higher-order distributional discrepancies.

However, restricting feature distributions to isotropic Gaussian severely limits the range of representational structures that can be expressed. In particular, isotropic Gaussian features alone do not capture a key property of effective representations: \emph{sparsity}. Across neuroscience, signal processing, and deep learning, sparse and non-negative codes repeatedly emerge as efficient and interpretable representations \citep{olshausen1996emergence,donoho2006compressed,lee1999learning,glorot2011deep}.

To this end, we propose Rectified Distribution Matching Regularization (RDMReg), a two-sample sliced distribution-matching regularizer that aligns JEPA representations to the Rectified Generalized Gaussian (RGG) distribution, a novel family of probability distributions with controllable expected $\ell_p$ norms and induced $\ell_0$ regularizations from explicit rectifications. The continuous truncated component of RGG inherits a maximum-entropy characterization under expected $\ell_p$ norm and support constraints, while rectification adds explicit, analytically controlled sparsity through a point mass at zero.

The resulting method, Rectified LpJEPA, strictly generalizes LeJEPA, which arises as a special case corresponding to the dense regime of the Generalized Gaussian family. By introducing a principled inductive bias toward sparsity and non-negativity, Rectified LpJEPA jointly enforces invariance and enables controllable sparsity while retaining task-relevant information.

We summarize our contributions as follows:\vspace*{-1pt}
\begin{enumerate}[topsep=0pt, align=left, leftmargin=15pt, labelindent=1pt,
listparindent=\parindent, labelwidth=0pt, itemindent=!, itemsep=6pt, parsep=0pt]
\item \textbf{Rectified Generalized Gaussian Distributions}. We introduce the Rectified Generalized Gaussian (RGG) distribution, relate its truncated continuous component to maximum-entropy distributions under expected $\ell_p$ norm and support constraints, and show how rectification induces explicit $\ell_0$ sparsity control.
\item \textbf{Rectified LpJEPA with RDMReg}. We propose Rectified LpJEPA, a novel JEPA architecture equipped with Rectified Distribution Matching Regularization (RDMReg), enabling controllable sparsity and non-negativity in learned representations.
\item \textbf{Empirical Validation}. We empirically demonstrate that Rectified LpJEPA achieves controllable sparsity, favorable sparsity–performance trade-offs, improved statistical independence, and competitive downstream accuracy across image classification benchmarks.
\end{enumerate}

\section{Background}

In this section, we review key notions of sparsity.
Additional background can be found in \cref{sec:additionalbackground}.

\textbf{Sparsity}. Beyond its role in robust recovery and compressed sensing \citep{mallat1999wavelet,donoho2006compressed}, sparsity has long been argued to be a fundamental organizing principle of efficient information processing in human and animal intelligence \citep{barlow1961possible}. In sensory neuroscience, extensive empirical evidences suggest that neural systems encode dense and high-dimensional sensory inputs into non-negative, sparse activations under strict metabolic and signaling constraints \citep{olshausen1996emergence,attwell2001energy}. 

In signal processing, sparse coding seeks to reconstruct signals using a minimal number of active components, typically enforced through $\ell_1$ regularization \citep{chen2001atomic}. Complementarily, non-negative matrix factorization enforces non-negativity by restricting representations to the positive orthant, inducing a conic geometry that yields parts-based and interpretable decompositions \citep{lee1999learning}.

In deep learning, rectifying nonlinearities such as ReLU enforce non-negativity by zeroing negative responses, inducing support sparsity akin to $\ell_0$ constraints and underpinning the success of modern deep networks \citep{nair2010rectified,glorot2011deep}.

\textbf{Metrics of Sparsity.} To quantify sparsity, we consider a vector $\mathbf{x}\in\mathbb{R}^d$. The $\ell_0$ (pseudo-)norm $\|\mathbf{x}\|_0:=\sum_{i=1}^d\mathbbm{1}_{\mathbb{R}\setminus\{0\}}(\mathbf{x}_i)$ counts the number of nonzero elements in $\mathbf{x}$, where $\mathbbm{1}_{S}(x)$ is the indicator function that evaluates to $1$ if $x\in S$ and $0$ otherwise. Direct minimization of $\ell_0$ norm is however an NP-hard problem \citep{natarajan1995sparse}. A standard relaxation is the $\ell_1$ norm, $\|\mathbf{x}\|_1:=\sum_{i=1}^{d}|\mathbf{x}_i|$, which is the tightest convex envelope of $\ell_0$ on bounded domains and enables tractable optimization \citep{tibshirani1996regression}. 

More generally, $\ell_p$ quasi-norms $\|\mathbf{x}\|_p^p:=\sum_{i=1}^d|\mathbf{x}_i|^p$ with $0<p<1$ provide a closer, nonconvex approximation to $\ell_0$: their singular behavior near zero strongly favors exact sparsity while exerting weaker penalties on large-magnitude components. Although nonconvexity complicates optimization, such penalties have been shown to yield sparser and less biased solutions than $\ell_1$ under suitable conditions \citep{chartrand2007exact,chartrand2008iteratively}.

\section{Sparse and Maximum-Entropy Distributions}

In the following section, we show that the proposed Rectified Generalized Gaussian distribution is a rectified mixture built from maximum-entropy truncated Generalized Gaussian components under $\ell_p$ constraints, yielding a target family that combines high-entropy continuous components with explicit $\ell_0$ sparsity control. We first introduce the Generalized Gaussian distribution (\cref{sec:gengaussdist}) and its truncated variant (\cref{sec:truncatedgengaussdist}), and show that they are the maximum-entropy distributions under an expected $\ell_p$ norm constraint (\cref{sec:maxentoftruncatedgengaussdist}). We then show that incorporating rectification yields the Rectified Generalized Gaussian distribution (\cref{sec:recgengaussdist}) with an entropy characterization using Rényi information dimension and an analytical guanratee of $\ell_0$ sparsity (\cref{sec:sparsityandddimentsec}).

\subsection{Generalized Gaussian Distributions}\label{sec:gengaussdist}
In Definition~\ref{def:generalizedgaussiandist}, we present the standard form of the Generalized Gaussian Distribution \citep{Sub23,GOODMAN1973204,Nadarajah01092005}.
\begin{definition}[Generalized Gaussian Distribution ]\label{def:generalizedgaussiandist}
The Generalized Gaussian distribution $\mathcal{GN}_{p}(\mu,\sigma)$ over the support $(-\infty,\infty)$ has the probability density function
\begin{align}
    f_{\mathcal{GN}_{p}(\mu,\sigma)}(x)=\frac{p^{1-1/p}}{2\sigma\Gamma(1/p)}\exp\bigg(-\frac{|x-\mu|^p}{p\sigma^p}\bigg)
\end{align}
where $\Gamma(s):=\int^{\infty}_0 t^{s-1}e^{-t}dt$ is the gamma function. 
\end{definition}
We observe that $\mathcal{GN}_{p}(\mu,\sigma)$ reduces to the Laplace distribution when $p=1$ and the Gaussian distribution for $p=2$. 

\subsection{Truncated Generalized Gaussian Distributions}\label{sec:truncatedgengaussdist}
If we restrict the support, we obtain the Truncated Generalized Gaussian Distributions in Definition~\ref{def:truncatedgeneralizedgaussiandist}.
\begin{definition}[Truncated Generalized Gaussian Distribution]\label{def:truncatedgeneralizedgaussiandist}
Let $S\subseteq\mathbb{R}$ be a subset of $\mathbb{R}$ with positive Lebesgue measure. The Truncated Generalized Gaussian distribution $\mathcal{TGN}_{p}(\mu,\sigma, S)$ is the restriction of the Generalized Gaussian distribution $\mathcal{GN}_p(\mu,\sigma)$ to the support $S$. The probability density function of $\mathcal{TGN}_{p}(\mu,\sigma, S)$ is given by
\begin{align}
    f_{\mathcal{TGN}_{p}(\mu,\sigma, S)}(x)=\frac{\mathbbm{1}_{S}(x)}{Z_S(\mu,\sigma,p)}\exp\bigg(-\frac{|x-\mu|^p}{p\sigma^p}\bigg) 
\end{align}
where $\mathbbm{1}_{S}(x)$ is the indicator function that evaluates to $1$ if $x\in S$ and $0$ otherwise. The partition function is
\begin{align}
    Z_S(\mu,\sigma,p)=\int_{S}\exp\bigg(-\frac{|x-\mu|^p}{p\sigma^p}\bigg)d\mathbf{x}
\end{align}
\end{definition}
When $S=\mathbb{R}$, $\mathcal{TGN}_p(\mu,\sigma)$ is equivalent to $\mathcal{GN}_p(\mu,\sigma)$.

\subsection{Maximum Entropy under $\ell_p$ Constraints}\label{sec:maxentoftruncatedgengaussdist}

We consider the multivariate generalization \citep{GOODMAN1973204} as the joint distribution resulting from the product measure of independent and identically distributed (i.i.d.) Truncated Generalized Gaussian random variables, i.e. $\mathbf{x}\sim\prod_{i=1}^{d}\mathcal{TGN}_p(\mu,\sigma, S)$ where $\mathbf{x}=(x_1,\dots,x_d)$ for each $x_i\sim\mathcal{TGN}_p(\mu,\sigma, S)$. For our purposes, we only need $S=(0,\infty)$ and thus the product support is $(0,\infty)^d$.

In Proposition~\ref{proposition:truncatedgeneralizedgaussmaxent}, we show that the zero-mean Multivariate Truncated Generalized Gaussian Distribution is in fact the maximum differential entropy distribution under the expected $\ell_p$ norm constraints. 
\begin{proposition}[Maximum Entropy Characterizations of Multivariate Truncated Generalized Gaussian Distributions]\label{proposition:truncatedgeneralizedgaussmaxent}
The maximum entropy distribution over $S\subseteq\mathbb{R}^d$, where $S$ is a subset of $\mathbb{R}^d$ with positive Lebesgue measure, under the constraints
\begin{align}
    \int_{S} p(\mathbf{x}) d\mathbf{x} &= 1, \quad \mathbb{E}[\|\mathbf{x}\|_p^p] = \frac{d}{d\lambda_1}\log Z_{S}(\lambda_1)
\end{align}
is the Multivariate Truncated Generalized Gaussian distribution $\prod_{i=1}^{d}\mathcal{TGN}_{p}(0,\sigma,S)$ with probability density function
\begin{align}
    p(x)=\frac{1}{Z_S(\lambda_1)}\exp\bigg(-\frac{\|\mathbf{x}\|_p^p}{p\sigma^p}\bigg) \cdot\mathbbm{1}_S(\mathbf{x})
\end{align}
where $\lambda_1=-1/p\sigma^p$ and $Z_S(\lambda_1)$ is the partition function. 
\end{proposition}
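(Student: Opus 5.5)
The plan is the standard Gibbs/exponential-family argument. I would use the Lagrangian to identify the candidate density, then certify global optimality with a relative-entropy (Gibbs) inequality rather than relying on first-order conditions alone.

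\textbf{Step 1: Lagrangian and stationarity.} Consider densities $p$ supported on $S$ and the Lagrangian
\begin{align*}
\mathcal{L}[p]=-\int_S p\log p\,d\mathbf{x}+\lambda_0\Big(\int_S p\,d\mathbf{x}-1\Big)+\lambda_1\Big(\int_S \|\mathbf{x}\|_p^p\,p\,d\mathbf{x}-m\Big),
\end{align*}
where $m$ denotes the prescribed value of $\mathbb{E}[\|\mathbf{x}\|_p^p]$. Setting the pointwise functional derivative to zero gives $-\log p-1+\lambda_0+\lambda_1\|\mathbf{x}\|_p^p=0$ on $S$. Hence $p(\mathbf{x})\propto\exp(\lambda_1\|\mathbf{x}\|_p^p)\mathbbm{1}_S(\mathbf{x})$. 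Normalizing gives $p=\exp(\lambda_1\|\mathbf{x}\|_p^p)\mathbbm{1}_S/Z_S(\lambda_1)$, with
\begin{align*}
Z_S(\lambda_1)=\int_S e^{\lambda_1\|\mathbf{x}\|_p^p}d\mathbf{x}.
\end{align*}
This integral is finite precisely when $\lambda_1<0$, at least whenever $S$ is unbounded such as $(0,\infty)^d$. Writing $\lambda_1=-1/(p\sigma^p)$ reproduces the stated density. When $S$ is a product set $S_0^d$, this density factors as $\prod_i\mathcal{TGN}_p(0,\sigma,S_0)$ because $\|\mathbf{x}\|_p^p=\sum_i|x_i|^p$, matching Definition~\ref{def:truncatedgeneralizedgaussiandist}.

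\textbf{Step 2: Matching the constraint.} Differentiating under the integral sign, which is justified by dominated convergence for $\lambda_1<0$, gives
\begin{align*}
\frac{d}{d\lambda_1}\log Z_S(\lambda_1)=\frac{1}{Z_S}\int_S\|\mathbf{x}\|_p^p e^{\lambda_1\|\mathbf{x}\|_p^p}d\mathbf{x}=\mathbb{E}_{p}[\|\mathbf{x}\|_p^p].
\end{align*}
So the candidate satisfies the moment constraint in exactly the form written in the statement. This also explains why the constraint is parametrized through $\lambda_1$.

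\textbf{Step 3: Global optimality.} This is the step that needs care, since Step 1 only produces a stationary point. Take any density $q$ on $S$ with the same normalization and $\mathbb{E}_q[\|\mathbf{x}\|_p^p]=\mathbb{E}_p[\|\mathbf{x}\|_p^p]$. Nonnegativity of KL divergence gives
\begin{align*}
0\le D_{\mathrm{KL}}(q\|p)=-h(q)-\int_S q\log p\,d\mathbf{x}.
\end{align*}
Since $\log p=\lambda_1\|\mathbf{x}\|_p^p-\log Z_S$ is affine in $\|\mathbf{x}\|_p^p$, we get
\begin{align*}
-\int_S q\log p=-\lambda_1\mathbb{E}_q\|\mathbf{x}\|_p^p+\log Z_S=-\lambda_1\mathbb{E}_p\|\mathbf{x}\|_p^p+\log Z_S=h(p).
\end{align*}
Therefore $h(q)\le h(p)$, with equality iff $q=p$ almost everywhere. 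This establishes maximality and uniqueness.

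The main obstacle is technical bookkeeping rather than conceptual. One must ensure $Z_S(\lambda_1)<\infty$ and that the moment and entropy integrals are finite, so the KL manipulation is legitimate. These follow from $\lambda_1<0$, since $\|\mathbf{x}\|_p^p e^{\lambda_1\|\mathbf{x}\|_p^p}$ is integrable on $\mathbb{R}^d$. For a bounded $S$ one can additionally allow any real $\lambda_1$, but the stated parametrization only needs $\lambda_1<0$.
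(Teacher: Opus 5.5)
Your proof is correct. Steps 1--2 follow the paper's route: the paper applies its general Lagrangian/Gateaux-derivative lemma (Lemma~\ref{lemma:universalmaxentcontmultivariatelemma}) with $r_1(\mathbf{x})=\|\mathbf{x}\|_p^p$ to obtain $p\propto e^{\lambda_1\|\mathbf{x}\|_p^p}$ on $S$. It then sets $\lambda_1=-1/(p\sigma^p)<0$ for integrability and differentiates $\log Z_S$ under the integral to identify the constraint.

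Where you genuinely depart is Step 3. The paper's lemma gives only a \emph{necessary} condition: it assumes a maximizer exists and is strictly positive a.e.\ on $S$, and shows such a maximizer must have exponential form. It never verifies that the exponential-family candidate actually attains the supremum, nor that it is the unique maximizer. Your Gibbs inequality $0\le D_{\mathrm{KL}}(q\|p)$, combined with $\log p$ being affine in $\|\mathbf{x}\|_p^p$, closes exactly this gap. It proves global optimality and uniqueness up to null sets with no existence or positivity hypotheses. It also makes Step 1 dispensable except as a heuristic for guessing the form. The paper's lemma is phrased for arbitrary finite families of constraints $r_i$, but your KL argument extends verbatim to that setting, so no generality is lost.

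One small correction in Step 1: $Z_S(\lambda_1)<\infty$ forces $\lambda_1<0$ when $S$ has \emph{infinite} Lebesgue measure, as $(0,\infty)^d$ does, not merely when $S$ is unbounded. An unbounded $S$ of finite measure already has $Z_S(0)$ equal to that finite measure.
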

\begin{proof}
See \cref{proof:proofoftruncatedgeneralizedgaussmaxent}.
\end{proof}
In fact, if $S=\mathbb{R}^d$, we show in Corollary~\ref{corollary:generalizedgaussmaxent} that $\mathbb{E}[\|\mathbf{x}\|_p^p]=d\sigma^p$. An immediate consequence of Proposition~\ref{proposition:truncatedgeneralizedgaussmaxent} is the well-known fact that Truncated Laplace and Truncated Gaussian over the same support set $S$ are maximal entropy under the expected $\ell_1$ and $\ell_2$ norm constraints respectively. For any $0\!<\!p\!<\!1$, this proposition still holds true and thus we obtain a continuous spectrum of sparse distributions.

For the rest of the paper, we will always assume product support for multivariate generalizations, but we note that Proposition~\ref{proposition:truncatedgeneralizedgaussmaxent} applies more generally to joint support.

\subsection{Rectified Generalized Gaussian Distributions}\label{sec:recgengaussdist}

In Definition~\ref{def:rectifiedgeneralizedgaussiandist}, we introduce the Rectified Generalized Gaussian (RGG) distribution. 
\begin{definition}[Rectified Generalized Gaussian]\label{def:rectifiedgeneralizedgaussiandist}
The Rectified Generalized Gaussian distribution $\mathcal{RGN}_{p}(\mu,\sigma)$ is a mixture between a discrete Dirac measure $\delta_0(x)$ (Definition~\ref{def:diracmeasuredef}) and a Truncated Generalized Gaussian distribution $\mathcal{TGN}_{p}(\mu,\sigma,(0,\infty))$ with probability density function
\begin{align}
    &f_{\mathcal{RGN}_p(\mu,\sigma)}(x)=\Phi_{\mathcal{GN}_p(0,1)}\bigg(-\frac{\mu}{\sigma}\bigg)\cdot\mathbbm{1}_{\{0\}}(x)\\&+\frac{p^{1-1/p}}{2\sigma\Gamma(1/p)}\exp\bigg(-\frac{|x-\mu|^p}{p\sigma^p}\bigg)\cdot\mathbbm{1}_{(0,\infty)}(x)
\end{align}
where $\Phi_{\mathcal{GN}_p(0,1)}$ is the cumulative distribution function for the standard Generalized Gaussian distribution $\mathcal{GN}_p(0, 1)$.
\end{definition}
In \cref{appendix:detailsonrectifiedgeneralizedgaussian,sec:additionaldetailsonmultivariaterecgengaussdistsec}, we present additional technical details of the Rectified Generalized Gaussian distribution. We also visualizes the connections between Truncated Generalized Gaussian and Rectified Generalized Gaussian distributions in \cref{fig:maxent-l1-l2}. For $p=2$, we recover the Rectified Gaussian distribution \citep{NIPS1997_28fc2782,AndersonRecGauss}. To the best of our knowledge, our extension and application of the Generalized Gaussian distribution to its rectified variant is novel for $p\neq 2$. 

\citet{Nardon01112009} proposed the simulation technique for Generalized Gaussian random variables. In Algorithm~\ref{alg:samplingrectifiedgeneralizedgaussian}, we show how to sample from the Rectified Generalized Gaussian distribution $\mathcal{RGN}_{p}(\mu,\sigma)$. Essentially, we only need to first sample from the Generalized Gaussian distribution, and then rectify. In other words, $x\sim\operatorname{ReLU}(\mathcal{GN}_p(\mu,\sigma))$ is equivalent to $x\sim\mathcal{RGN}_p(\mu,\sigma)$. 

In Proposition~\ref{proposition:expectedl0normforrecgengauss}, we show the expected $\ell_0$ norm of the Multivariate Rectified Generalized Gaussian distribution is determined by the parameters $\{\mu,\sigma,p\}$.

\begin{figure*}[t!]
    \centering
    \begin{subfigure}[t]{0.32\linewidth}
        \centering
        \includegraphics[width=\linewidth]{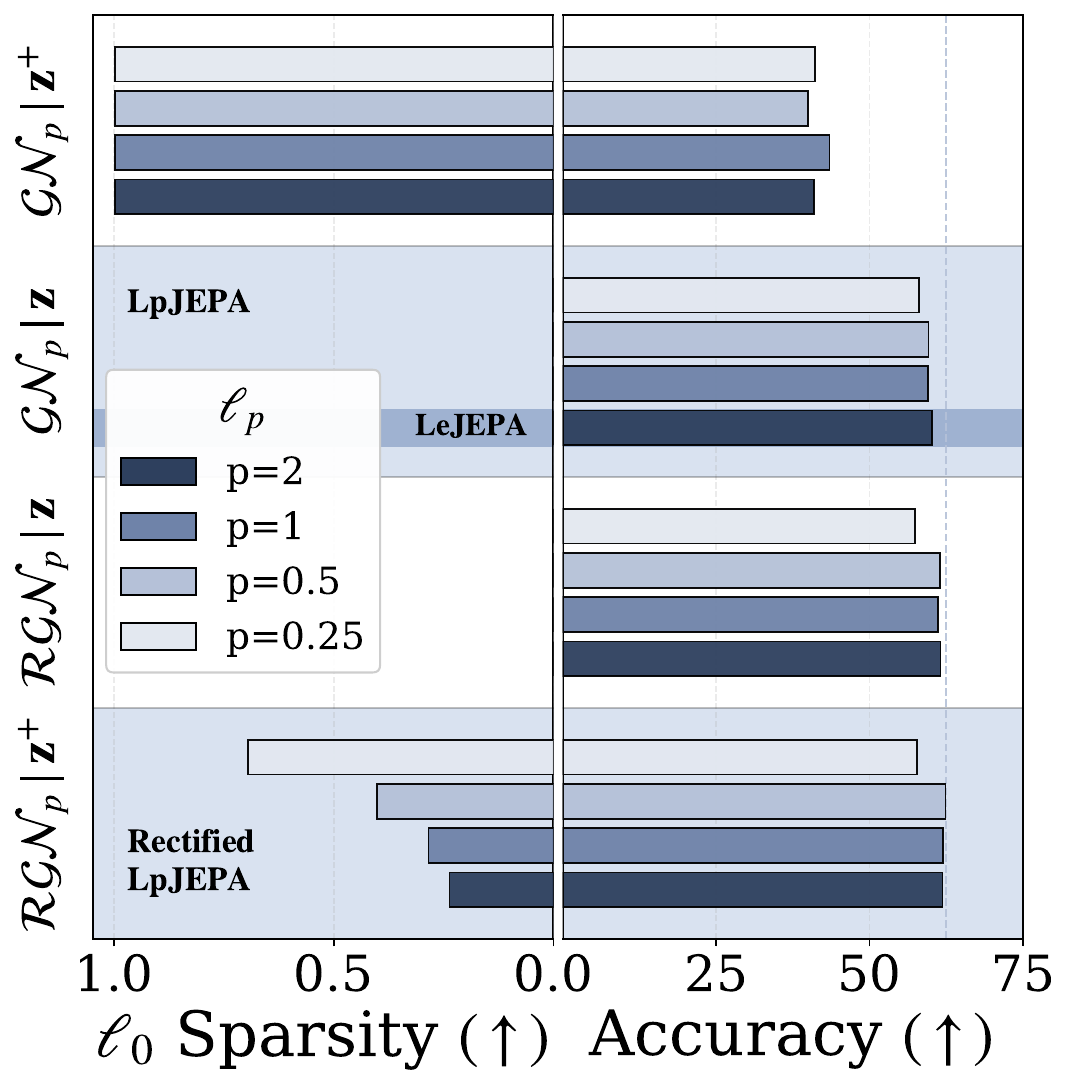}
        \captionsetup{justification=centering}
        \caption{Necessity of Rectification.}
        \label{fig:fourquadrantssubfig1}
    \end{subfigure}
    \hfill
    \begin{subfigure}[t]{0.32\linewidth}
        \centering
        \includegraphics[width=\linewidth]{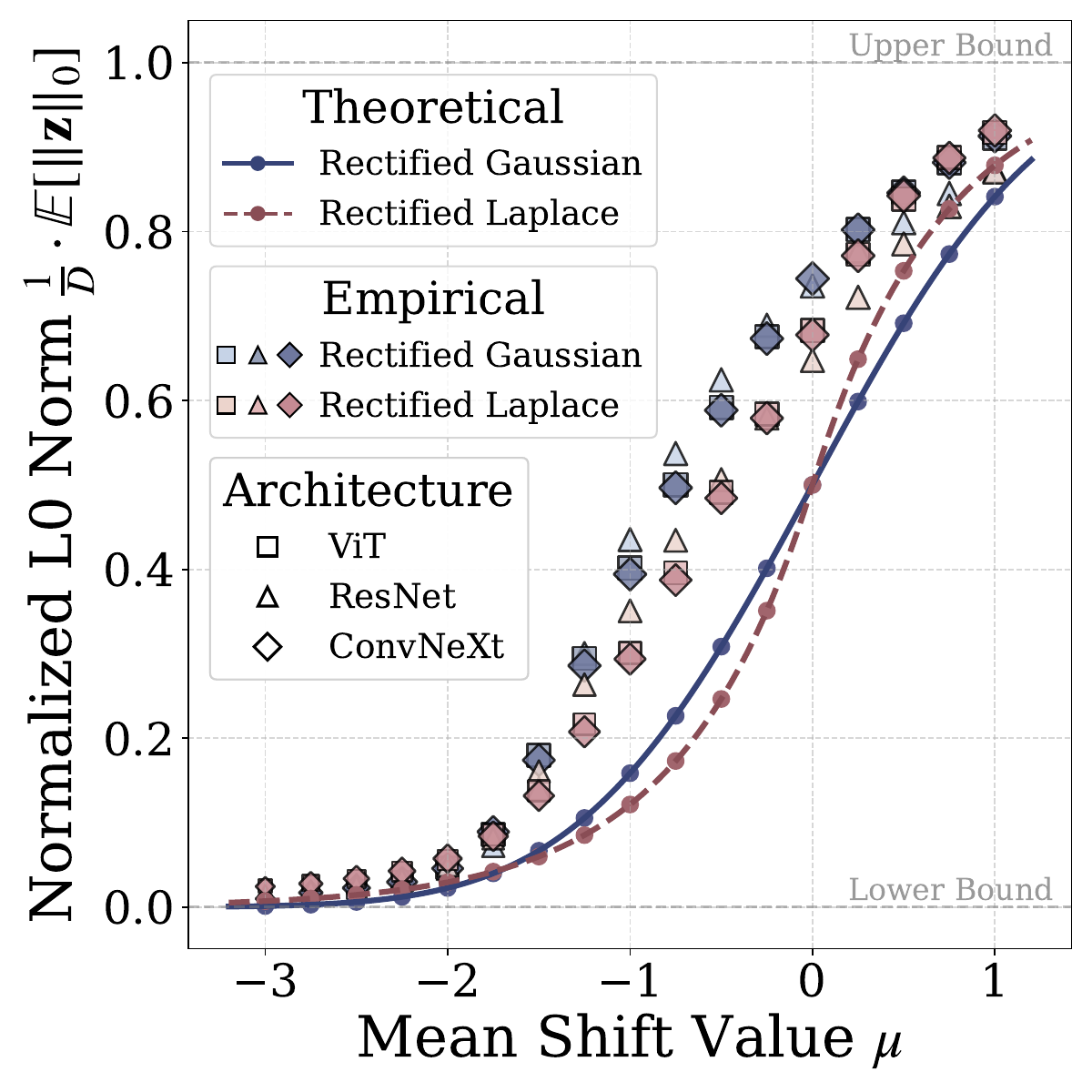}
        \captionsetup{justification=centering}
        \caption{Controllable Sparsity.}
        \label{fig:controllablesparsitysubfig2}
    \end{subfigure}
    \hfill
    \begin{subfigure}[t]{0.32\linewidth}
        \centering
        \includegraphics[width=\linewidth]{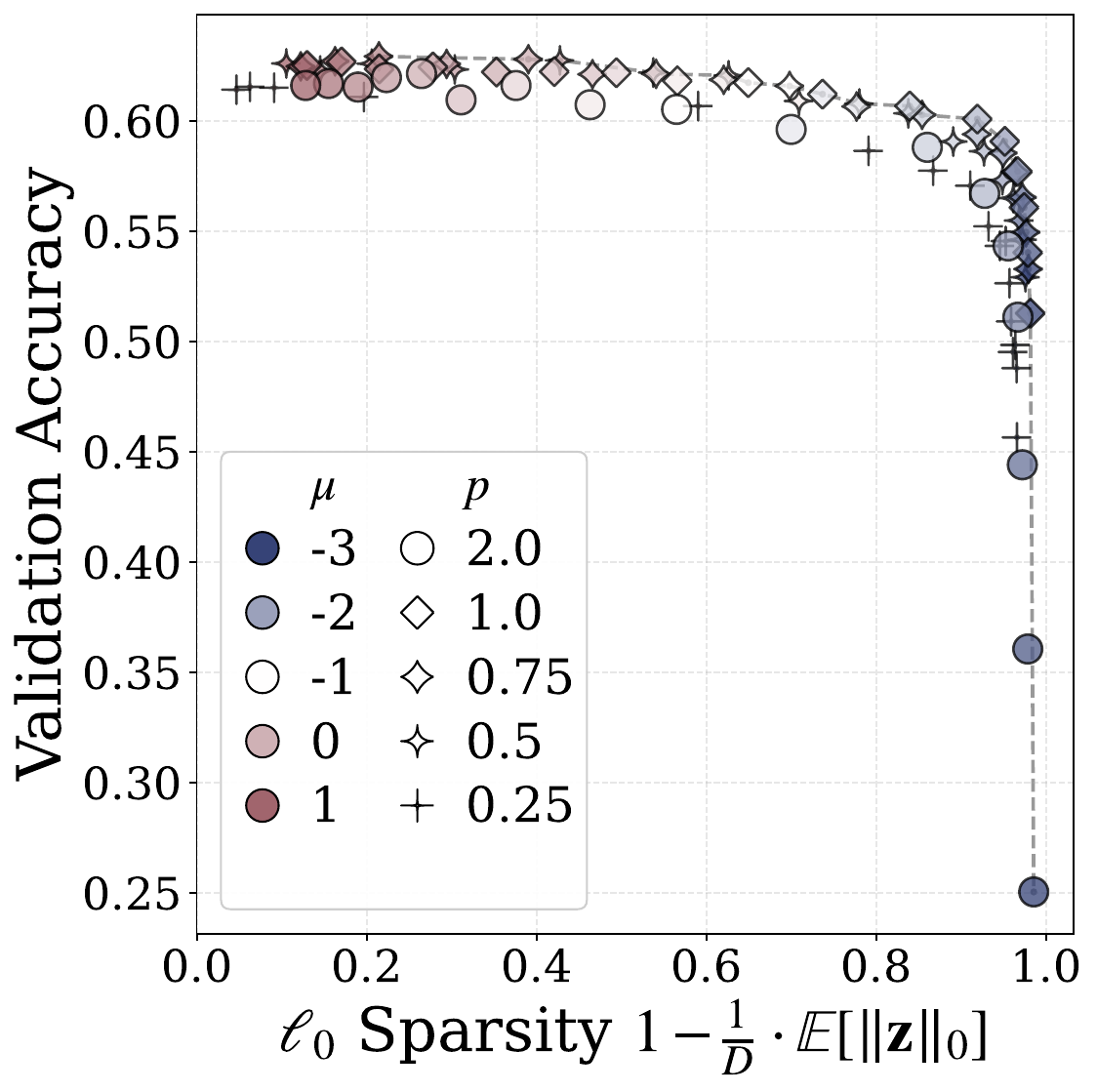}
        \captionsetup{justification=centering}
        \caption{Sparsity and Performance Tradeoff.}
        \label{fig:paretofrontieraccandsparsesubfig3}
    \end{subfigure}
    \caption{\textbf{Rectified LpJEPA achieves controllable sparsity and favorable sparsity-performance tradeoffs under proper parameterizations.} (a) We report CIFAR-100 validation accuracy and the $\ell_0$ sparsity metric $1-(1/d)\cdot\mathbb{E}[\|\mathbf{x}\|_0]$ for four settings where we match non-rectified features $\mathbf{z}$ or rectified features $\mathbf{z}^{+}:=\operatorname{ReLU}(\mathbf{z})$ to either Rectified Generalized Gaussian $\mathcal{RGN}_p$ or conventional Generalized Gaussian $\mathcal{GN}_p$. Rectified LpJEPA $(\mathcal{RGN}_p\mid\mathbf{z}^{+})$ achieves the best sparsity-performance tradeoffs compared to other settings. (b) We compare the normalized $\ell_0$ norm of pretrained Rectified LpJEPA features against the theoretical predictions of Proposition~\ref{proposition:expectedl0normforrecgengauss} as $\mu$ varies. Empirical sparsity closely follows the predicted behavior across different values of $\mu$ and $p$. (c) We plot the Pareto frontier of sparsity versus accuracy across varying values of $\mu$ and $p$. Performance drops sharply only when more than $\sim\!95\%$ of entries are zero.}
    \label{fig:TBD1fig}
\end{figure*}

\subsection{Sparsity and Entropy}\label{sec:sparsityandddimentsec}
\begin{proposition}[Sparsity]\label{proposition:expectedl0normforrecgengauss}
    Let $\mathbf{x}\sim\prod_{i=1}^{d}\mathcal{RGN}_p(\mu,\sigma)$ in $d$ dimension. Then
    \begin{align}
        \mathbb{E}[\|\mathbf{x}\|_0]&=d\cdot\Phi_{\mathcal{GN}_p(0,1)}\bigg(\frac{\mu}{\sigma}\bigg)\\
        &=\frac{d}{2}\bigg(1+\operatorname{sgn}\bigg(\frac{\mu}{\sigma}\bigg)P\bigg(\frac{1}{p},\frac{|\mu/\sigma|^p}{p}\bigg)\bigg)\label{eq:theolzeroeq}
    \end{align}
    where $\operatorname{sgn}(\cdot)$ is the sign function and $P(\cdot,\cdot)$ is the lower regularized gamma function.
\end{proposition}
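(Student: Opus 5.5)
By linearity of expectation, $\mathbb{E}[\|\mathbf{x}\|_0]=\sum_{i=1}^d \Pr(x_i\neq 0)$. Since the coordinates are i.i.d.\ $\mathcal{RGN}_p(\mu,\sigma)$, this equals $d\cdot\Pr(x_1>0)$. I will use the sampling characterization stated after Definition~\ref{def:rectifiedgeneralizedgaussiandist}: $x_1=\operatorname{ReLU}(y)$ with $y\sim\mathcal{GN}_p(\mu,\sigma)$. Then $x_1\neq 0$ exactly when $y>0$. The point mass in the density of Definition~\ref{def:rectifiedgeneralizedgaussiandist} is $\Phi_{\mathcal{GN}_p(0,1)}(-\mu/\sigma)$, and this is consistent with $\Pr(y\le 0)$.

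Next, standardize by writing $y=\mu+\sigma t$ with $t\sim\mathcal{GN}_p(0,1)$; this is immediate from the density in Definition~\ref{def:generalizedgaussiandist} by change of variables. Then
\begin{align}
\Pr(y>0)=\Pr(t>-\mu/\sigma)=1-\Phi_{\mathcal{GN}_p(0,1)}(-\mu/\sigma)=\Phi_{\mathcal{GN}_p(0,1)}(\mu/\sigma),
\end{align}
where the last step uses that the standard density is even in $t$. This gives the first equality.

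For the closed form, I will compute $\Phi_{\mathcal{GN}_p(0,1)}(a)$ for $a=\mu/\sigma$. By symmetry, $\Phi(a)=\tfrac12+\operatorname{sgn}(a)\int_0^{|a|} f(t)\,dt$, with $f(t)=\frac{p^{1-1/p}}{2\Gamma(1/p)}e^{-t^p/p}$ for $t\ge 0$. Substitute $s=t^p/p$, so that $t=(ps)^{1/p}$ and $dt=p^{1/p-1}s^{1/p-1}\,ds$. The constant $p^{1-1/p}$ cancels against $p^{1/p-1}$, leaving
\begin{align}
\int_0^{|a|}f(t)\,dt=\frac{1}{2\Gamma(1/p)}\int_0^{|a|^p/p}s^{1/p-1}e^{-s}\,ds=\tfrac12 P\!\left(\tfrac1p,\tfrac{|a|^p}{p}\right).
\end{align}
Multiplying by $d$ then yields \eqref{eq:theolzeroeq}. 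The case $a=0$ is consistent because $\operatorname{sgn}(0)=0$ and $P(\cdot,0)=0$.

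The only step needing care is the Jacobian bookkeeping in this substitution, so that the normalizing constants cancel exactly. It is also worth checking that the total mass in Definition~\ref{def:rectifiedgeneralizedgaussiandist} is one, which confirms that $\Pr(x_1>0)$ is indeed the continuous part's mass. Everything else is routine.
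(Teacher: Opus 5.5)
Your proof is correct and follows the paper's argument: you write $\|\mathbf{x}\|_0$ as a sum of indicators, use $x_i=\operatorname{ReLU}(z_i)$ with $z_i\sim\mathcal{GN}_p(\mu,\sigma)$, and evaluate $\Pr(z_i>0)=\Phi_{\mathcal{GN}_p(0,1)}(\mu/\sigma)$. The only difference is that you derive the lower-regularized-gamma form of the Generalized Gaussian CDF via the substitution $s=t^p/p$, while the paper quotes that closed-form CDF from its appendix.
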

\begin{proof}
See \cref{proof:proofofexpectedl0normforrecgengauss}.
\end{proof}
Due to explicit rectifications, the RGG family is absolutely continuous with respect to the mixture between the Dirac and Lebesgue measure (Lemma~\ref{lemma:absolutecontofrggwithrespecttomixed}), rendering differential entropy ill-defined. Thus we resort to the concept of $d(\boldsymbol{\xi})$-dimensional entropy by \citep{renyi1959dimension}, which measures the Shannon entropy of quantized random vector under successive grid refinement. In Theorem~\ref{theorem:renyiinfoentropyofrecgengauss}, we provide a $d(\boldsymbol{\xi})$-dimensional entropy characterization of Rectified Generalized Gaussian, where $d(\boldsymbol{\xi})$ is the Rényi information dimension. We defer additional details on the Rényi information dimension to \cref{sec:renyiinfodimtotalsec}.
\begin{theorem}[Rényi Information Dimension Characterizations of Multivariate Rectified Generalized Gaussian Distributions]\label{theorem:renyiinfoentropyofrecgengauss}
    Let $\boldsymbol{\xi}\sim\prod_{i=1}^{D}\mathcal{RGN}_p(\mu,\sigma)$ be a Rectified Generalized Gaussian random vector. The Rényi information dimension of $\boldsymbol{\xi}$ is $d(\boldsymbol{\xi})=D\cdot\Phi_{\mathcal{GN}_p(0,1)}(\mu/\sigma)$, and the $d(\boldsymbol{\xi})$-dimensional entropy of $\boldsymbol{\xi}$ is given by    
    \begin{align}
        \mathbb{H}_{d(\boldsymbol{\xi}_i)}(\boldsymbol{\xi}_i)&=\Phi_{\mathcal{GN}_p(0,1)}\bigg(\frac{\mu}{\sigma}\bigg)\cdot\mathbb{H}_{1}(\mathcal{TGN}_p(\mu,\sigma))\\
        &+\mathbb{H}_{0}(\mathbbm{1}_{(0,\infty)}(\boldsymbol{\xi}_i))\\
        \mathbb{H}_{d(\boldsymbol{\xi})}(\boldsymbol{\xi})&=\sum_{i=1}^{D}\mathbb{H}_{d(\boldsymbol{\xi}_i)}(\boldsymbol{\xi}_i)=D\cdot \mathbb{H}_{d(\boldsymbol{\xi}_i)}(\boldsymbol{\xi}_i)
    \end{align}
    where $\mathbb{H}_0(\cdot)$ is the discrete Shannon entropy, $\mathbb{H}_1(\cdot)$ denotes the differential entropy, and $\mathbbm{1}_{(0,\infty)}(\boldsymbol{\xi}_i)$ is a Bernoulli random variable that equals $1$ with probability $\Phi_{\mathcal{GN}_p(0,1)}(\mu/\sigma)$ and $0$ with probability $1-\Phi_{\mathcal{GN}_p(0,1)}(\mu/\sigma)$. 
    \begin{proof}
        See \cref{proof:proofofrenyiinfoentropyofrecgengauss}. 
    \end{proof}
\end{theorem}
Thus we have shown that rectifications still preserve the maximal entropy property of the original distribution up to rescaling by the Rényi information dimension and constant offsets. In Lemma~\ref{lemma:equidefofddiment}, we further shows that the $d(\boldsymbol{\xi})$-dimensional entropy coincides with differential entropy under change of measure, enabling the interpretation of entropy under a Dirac and Lebesgue mixed measure.

\section{Rectified LpJEPA}

In the following section, we present a distributional regularization method based on the Cramér--Wold device (\cref{sec:cramerwoldloss}) for matching feature distributions towards Rectified Generalized Gaussian targets, resulting in Rectified LpJEPA with Rectified Distribution Matching Regularization (RDMReg) (\cref{sec:reclpjeparegularizationloss}). Contrary to isotropic Gaussian distributions, Rectified Generalized Gaussian distributions are not closed under linear combinations, leading to the necessity of two-sample sliced distribution matching (\cref{sec:twosamplehypotest}). We further demonstrate that RDMReg recovers a form of Non-Negative VCReg, which we defined in \cref{sec:effectofprojectionvectors}. Finally, we discuss various design choices of the target distribution with the parameter sets $(\mu,\sigma,p)$ which balance between sparsity and maximum-entropy (\cref{sec:choiceofparaminrecgengaussdist}). 

\subsection{Cramér--Wold Based Distribution Matching}\label{sec:cramerwoldloss}
The Cramér–Wold device states that two random vectors $\mathbf{x},\mathbf{y}\in\mathbb{R}^d$ are equal in distribution, i.e. $\mathbf{x}\stackrel{\operatorname{d}}{=}\mathbf{y}$, if and only if all their one-dimensional linear projections are equal in distribution \citep{cramer1936,wold1938}
\begin{align}
    \mathbf{x}\stackrel{\operatorname{d}}{=}\mathbf{y}\iff \mathbf{c}^\top\mathbf{x}\stackrel{\operatorname{d}}{=}\mathbf{c}^\top\mathbf{y}\text{ for all }\mathbf{c}\in\mathbb{R}^d
\end{align}
This result enables us to decompose a high-dimensional distribution matching problem into parallelized one-dimension optimizations, which significantly reduces the sample complexity in each of the one-dimensional problems. 

\subsection{Rectified LpJEPA with RDMReg}\label{sec:reclpjeparegularizationloss}

Let $(\mathbf{x},\mathbf{x'})\!\sim\!\mathbb{P}_{\mathbf{x},\mathbf{x}'}$ denote a pair of random vectors jointly distributed according to a view-generating distribution $\mathbb{P}_{\mathbf{x},\mathbf{x}'}$, where $\mathbf{x}$ and $\mathbf{x}'$ represent two stochastic views (e.g., random augmentations) of the same underlying input data. Let $f_{\boldsymbol{\theta}}$ be a neural network. We write $\mathbf{z}=\operatorname{ReLU}(f_{\boldsymbol{\theta}}(\mathbf{x}))$ and $\mathbf{z}'=\operatorname{ReLU}(f_{\boldsymbol{\theta}}(\mathbf{x}'))$, where $\mathbf{z},\mathbf{z}'\in\mathbb{R}^{D}$ are the output feature random vectors. We further sample $\mathbf{y}\!\sim\!\prod_{i=1}^{d}\mathcal{RGN}_p(\mu,\sigma)$ and the random projection vectors $\mathbf{c}$ from the uniform distribution on the $\ell_2$ sphere, i.e. $\mathbf{c}\!\sim\!\operatorname{Unif}(\mathbb{S}^{d-1}_{\ell_2})$. We denote the induced distribution under projections as $\mathbb{P}_{\mathbf{c}^\top\mathbf{z}}$ and $\mathbb{P}_{\mathbf{c}^\top\mathbf{y}}$.

Our self-supervised learning objective consists of (i) an invariance term enforcing consistency across views, and (ii) a two-sample sliced distribution-matching loss which we called the Rectified Distribution Matching Regularization (RDMReg). The resulting loss takes the form
\begin{align}
\min_{\boldsymbol{\theta}}\,&\mathbb{E}_{\mathbf{z},\mathbf{z}'}[
\|\mathbf{z} - \mathbf{z}'\|_2^2]\\
&+\mathbb{E}_{\mathbf{c}}[\mathcal{L}(\mathbb{P}_{\mathbf{c}^\top\mathbf{z}}\|\mathbb{P}_{\mathbf{c}^\top\mathbf{y}})]+\mathbb{E}_{\mathbf{c}}[\mathcal{L}(\mathbb{P}_{\mathbf{c}^\top\mathbf{z}'}\|\mathbb{P}_{\mathbf{c}^\top\mathbf{y}})]
\end{align}
where $\mathcal{L}(P\|Q)$ is any loss function that minimizes the distance between two univariate distributions $P$ and $Q$. The expectation over projection vectors represents the population sliced objective; in practice we approximate it with a finite number of projections and empirical mini-batch samples.

\subsection{The Necessity of Two-Sample Hypothesis Testing}\label{sec:twosamplehypotest}

Contrary to the isotropic Gaussian, which is closed under linear combinations, the Rectified Generalized Gaussian (RGG) family is not preserved under linear projections: the one-dimensional projected marginals generally fall outside the RGG family. In fact, closure under linear combinations characterizes the class of multivariate stable distributions \citep{nolan1993multivariate}, which is disjoint from our RGG family. As illustrated in \cref{fig:ttt2}, while any linear projection of a Gaussian remains Gaussian, projecting a Rectified Gaussian along different directions yields distinctly different marginals that no longer belong to the Rectified Gaussian family.

Consequently, the distribution matching loss $\mathcal{L}(\cdot\|\cdot)$ must rely on sample-based, nonparametric two-sample hypothesis tests on projected marginals \citep{lehmann1951consistency,gretton2012kernel}. Among many possible choices, we instantiate this objective using the sliced $2$-Wasserstein distance \citep{bonneel2015sliced,kolouri2018swae} as it works well empirically. Let $\mathbf{Z},\mathbf{Y}\in\mathbb{R}^{B\times D}$ be empirical neural network feature matrix and the samples from RGG where $B$ is batch size and $D$ is dimension. We denote a single random projection vector as $\mathbf{c}_i\in\mathbb{R}^D$ out of $N$ total projections. The RDMReg loss function is given by
\begin{align}\label{eq:iterdistloss}
\mathcal{L}(\mathbb{P}_{\mathbf{c}_i^\top\mathbf{z}}\|\mathbb{P}_{\mathbf{c}_i^\top\mathbf{y}}):=\frac{1}{B}\|(\mathbf{Z}\mathbf{c}_i)^{\uparrow}-(\mathbf{Y}\mathbf{c}_i)^{\uparrow}\|_2^2
\end{align}
where $(\cdot)^{\uparrow}$ denotes sorting in ascending order. We additionally show in \cref{fig:samplefficiencyofrandproj} that a small, dimension-independent $N$ is sufficient to achieve strong empirical performance in our experiments, although finite-$N$ matching remains an approximation to the population Cramér--Wold criterion. 

\subsection{Connection to Non-Negative VCReg}\label{sec:effectofprojectionvectors}

In \cref{sec:nonnegvcregrecovery}, we show a conditional second-order connection between RDMReg and Non-Negative VCReg (\cref{sec:baselinedesignssec}): if projected marginals match the RGG target along the eigenvectors of the feature covariance, then the centered feature covariance is isotropic. This is weaker than claiming that arbitrary finite random projections exactly recover VCReg, but it clarifies why eigenvector projections directly accelerate second-order dependency removal. We further show in \cref{fig:total_eigen} that using eigenvectors of the empirical feature covariance matrices as projection vectors $\mathbf{c}_i$ leads to faster convergence toward optimal performance in our experiments. 

\subsection{Hyperparameters of the Target Distributions}\label{sec:choiceofparaminrecgengaussdist}

Proposition~\ref{proposition:expectedl0normforrecgengauss} shows that the hyperparameter set $\{\mu,\sigma,p\}$ collectively determines the $\ell_0$ sparsity. $\sigma$ is a special parameter since we always want $\sigma>\epsilon$, where $\epsilon$ is some pre-specified threshold value, to prevent collapse.

We denote $\sigma_{\text{GN}}=\Gamma(1/p)^{1/2}/(p^{1/p}\cdot\Gamma(3/p)^{1/2})$ as the choice to ensure that the variance of the random variable before rectification is $1$ since the closed form variance is readily available for the Generalized Gaussian distribution. 

It's also possible to find $\sigma_{\text{RGN}}$ such that the variance after rectification is $1$. In Proposition~\ref{proposition:meanandvarofrecgengauss}, we derive the closed form expectation and variance of the Rectified Generalized Gaussian distribution. The choice of $\sigma_{\text{RGN}}$ can be determined by running a bisection search algorithm (see \cref{alg:bisection_sigma}) over the closed form variance formula. We defer additional comparisons between $\sigma_{\text{RGN}}$ and $\sigma_{\text{GN}}$ to \cref{sec:dedicatedstudyonsigmasec}. Unless otherwise specified, we use $\sigma_{\text{GN}}$ as the default option. 

\begin{table*}[t!]
  \caption{\textbf{Linear Probe Results on ImageNet-100.}
  Acc1 (\%) is higher-is-better ($\uparrow$); sparsity is lower-is-better ($\downarrow$).
  \textbf{Bold} denotes best and \underline{underline} denotes second-best in each column (ties allowed).}
  \label{tab:imagenet100-table}
  \centering
  \small
  \setlength{\tabcolsep}{6pt}
  \renewcommand{\arraystretch}{1.10}
  \begin{tabular}{llcccc}
    \toprule
    \cmidrule(lr){3-6}
    & & Encoder Acc1 $\uparrow$
    & Projector Acc1 $\uparrow$
    & L1 Sparsity $\downarrow$
    & L0 Sparsity $\downarrow$ \\
    \midrule

    \textbf{Rectified LpJEPA}
      & $\mathcal{RGN}_{1.0}(0,\sigma_{\operatorname{GN}})$        & 84.72 & 80.40 & 0.2726 & 0.6940 \\
    & $\mathcal{RGN}_{2.0}(0,\sigma_{\operatorname{GN}})$        & \textbf{85.08} & 80.00 & 0.3412 & 0.7298 \\
    & $\mathcal{RGN}_{1.0}(0.25,\sigma_{\operatorname{GN}})$     & \underline{84.98} & \textbf{80.76} & 0.3745 & 0.7437 \\
    & $\mathcal{RGN}_{2.0}(1.0,\sigma_{\operatorname{GN}})$      & \textbf{85.08} & \underline{80.54} & 0.6278 & 0.8668 \\
    & $\mathcal{RGN}_{2.0}(-2.5,\sigma_{\operatorname{GN}})$     & 82.02 & 67.82 & 0.0137 & 0.0224 \\
    & $\mathcal{RGN}_{1.0}(-3.0,\sigma_{\operatorname{GN}})$     & 82.72 & 71.88 & 0.0058 & 0.0098 \\
    \midrule

    \textbf{Sparse Baselines}
      & NVICReg-ReLU      & 84.48 & 77.74 & 0.5207 & 0.7117 \\
    & NCL-ReLU          & 82.58 & 76.88 & \underline{0.0037} & \underline{0.0085} \\
    & NVICReg-RepReLU   & 84.20 & 78.18 & 0.4965 & 0.7549 \\
    & NCL-RepReLU       & 82.76 & 76.70 & \textbf{0.0024} & \textbf{0.0048} \\
    \midrule

    \textbf{Dense Baselines}
      & VICReg            & 84.18 & 78.88 & 0.7954 & 1.0000 \\
    & SimCLR            & 83.44 & 77.90 & 0.6338 & 1.0000 \\
    & LeJEPA            & 84.80 & 79.52 & 0.6365 & 1.0000 \\
    \bottomrule
  \end{tabular}
\end{table*}

\section{Empirical Results}

In the following sections, we introduce the basic settings and evaluations (\cref{sec:basicexpsettings}). We establish our Rectified LpJEPA designs as the correct parameterizations to learn informative and sparse features compared to other possible alternatives (\cref{sec:necessityofrelusec,sec:contmaptheoremfigsec}). Rectified LpJEPA achieves controllable sparsity (\cref{sec:empresultscontrollablesparsity}) and favorable sparsity and performance tradeoffs (\cref{sec:empresultssparseacctradeoffs}) with added benefits of learning more statistically independent (\cref{sec:emphsicsec}), high-entropy (\cref{sec:empddimentropysec}) features, and performs competitively in pretraining and transfer evaluations (\cref{sec:empdownstreamevalssec}). 

\subsection{Experimental Settings}\label{sec:basicexpsettings}

\textbf{Baselines}. We compare Rectified LpJEPA with dense baselines including SimCLR (denoted CL) \citep{chen2020simpleframeworkcontrastivelearning} and VICReg \citep{bardes2022vicregvarianceinvariancecovarianceregularizationselfsupervised}, as well as their sparse counterparts NCL \citep{wang2024nonnegativecontrastivelearning} and Non-Negative VICReg (denoted NVICReg). We additionally compare against LpJEPA, which matches non-rectified features to Generalized Gaussian targets. Additional details for all baselines are provided in \cref{sec:baselinedesignssec}.

\textbf{Sparsity Metrics}. We define the $\ell_1$ sparsity metric for a $D$-dimensional random vector $m_{\ell_1}(\mathbf{x})=(1/D)\cdot\mathbb{E}[\|\mathbf{x}\|_1^2/\|\mathbf{x}\|_2^2]$, which attains its minimum value $1/D$ for extremely sparse vectors and its maximum value $1$ for dense, uniformly distributed features. We additionally report the $\ell_0$ sparsity metric $m_{\ell_0}(\mathbf{x})=(1/D)\cdot\mathbb{E}[\|\mathbf{x}\|_0]$, which measures the fraction of nonzero entries, with $m_{\ell_0}=0$ indicating all-zero vectors and $m_{\ell_0}=1$ indicating fully dense representations. In \cref{fig:correlationbetweendiffmetrics}, we empirically observe strong correlations between $m_{\ell_1}$ and $m_{\ell_0}$ metrics. Sometimes we report $1-m_{\ell_1}(\mathbf{x})$ or $1-m_{\ell_0}(\mathbf{x})$ for visualization purposes.

\textbf{Backbones}. Following conventional practices in self-supervised learning \citep{balestriero2023cookbookselfsupervisedlearning}, we adopt the encoder-projector design $\mathbf{z}=\operatorname{ReLU}(f_{\boldsymbol{\theta}_2}(f_{\boldsymbol{\theta}_1}(\mathbf{x}))$ where $f_{\boldsymbol{\theta}_1}$ is a encoder like ResNet \citep{he2016deep} or ViT \citep{dosovitskiy2020image} and $f_{\boldsymbol{\theta}_2}$ is an additional multilayer perceptron. The Rectified LpJEPA loss is applied over $\mathbf{z}$ and linear probe evaluations are carried out on both $\mathbf{z}$ and $f_{\boldsymbol{\theta}_1}(\mathbf{x})$. We note that we add $\operatorname{ReLU}(\cdot)$ at the end based on our design. The overall architecture is visualized in \cref{fig:ttt1}. 

\begin{figure*}[t!]
    \centering
    \begin{subfigure}[t]{0.3\linewidth}
        \centering
        \includegraphics[width=\linewidth]{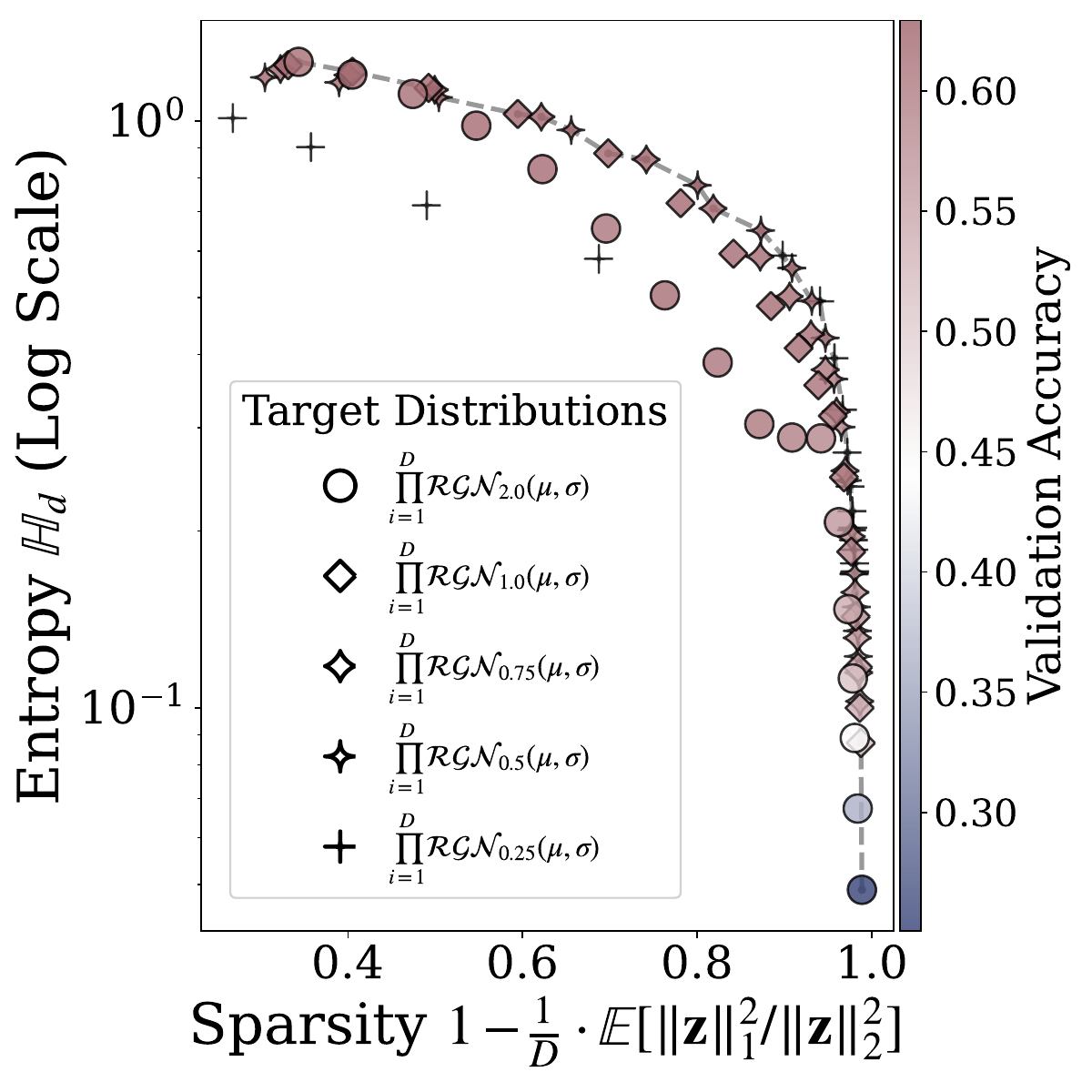}
        \captionsetup{justification=centering}
        \caption{$d(\xi)$-dimensional Entropy}
        \label{fig:ddimentsubfig1}
    \end{subfigure}
    \hfill
    \begin{subfigure}[t]{0.3\linewidth}
        \centering
        \includegraphics[width=\linewidth]{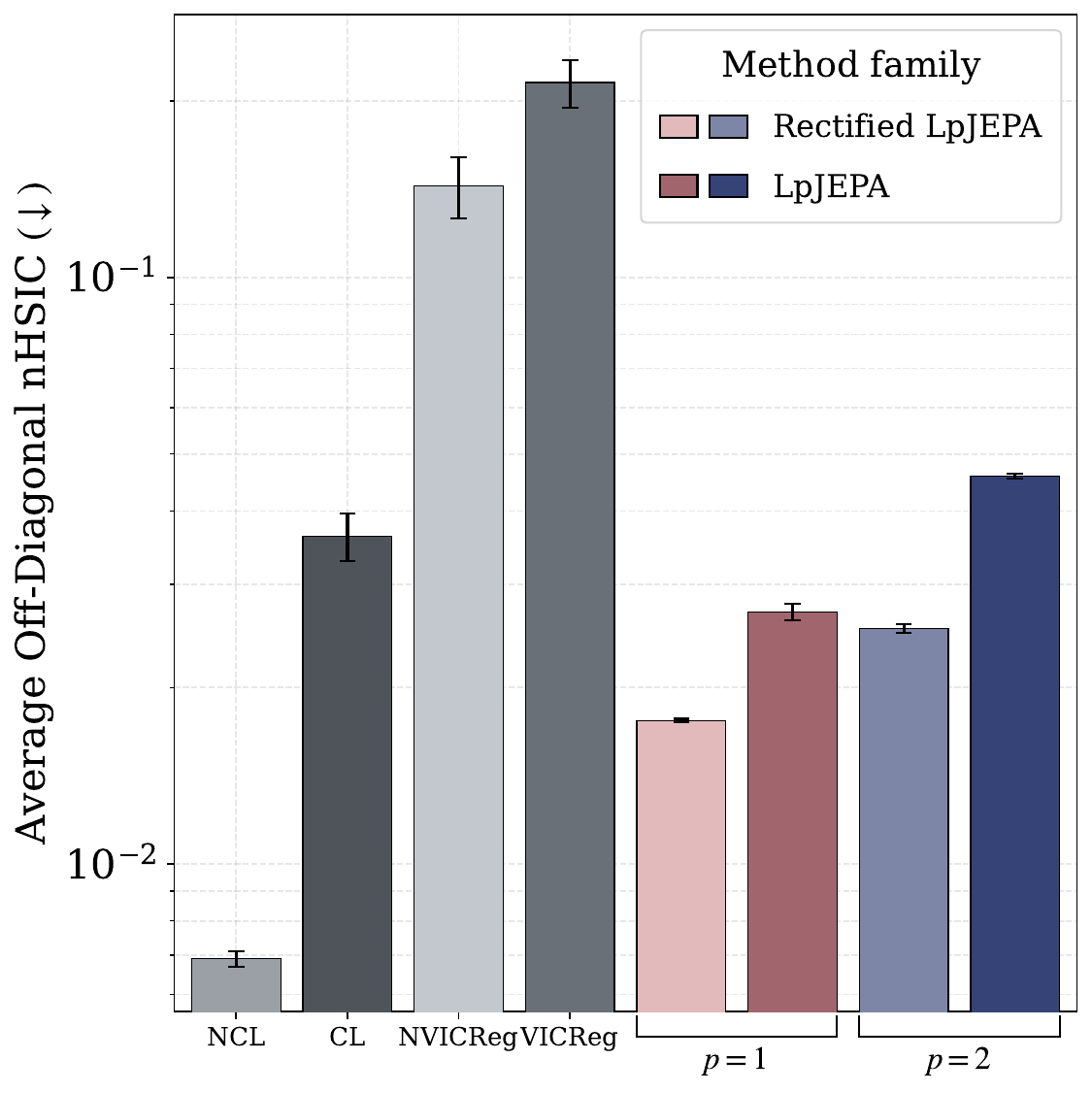}
        \captionsetup{justification=centering}
        \caption{Hilbert-Schmidt independence Criterion (HSIC).}
        \label{fig:hsicsubfig2}
    \end{subfigure}
    \hfill
    \begin{subfigure}[t]{0.3\linewidth}
        \centering
        \includegraphics[width=\linewidth]{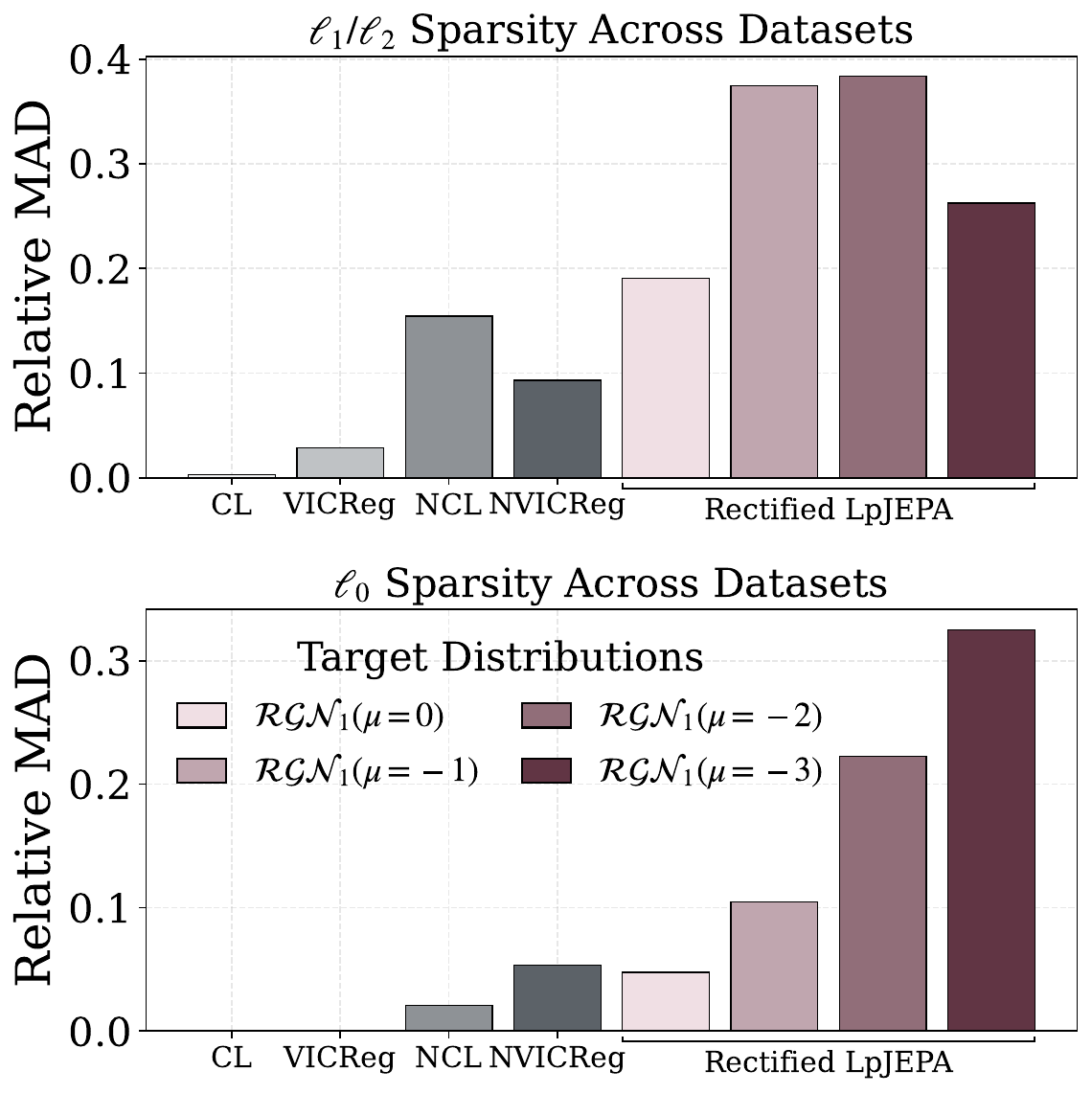}
        \captionsetup{justification=centering}
        \caption{Dataset-Adaptive Sparsity}
        \label{fig:sparsityoodsubfig3}
    \end{subfigure}
    \caption{\textbf{Rectified LpJEPA empirically achieves higher-entropy, more independent features with dataset-adaptive sparsity.} (a) The averaged univariate $d(\xi)$-dimensional entropy of the Rectified LpJEPA features are computed against the $\ell_1$ sparsity metric $1-(1/D)\cdot\mathbb{E}[\|\mathbf{z}\|_1^2/\|\mathbf{z}\|_2^2]$ across varying $\mu$ and $p$. Overall, we observe the expected behavior of sparsity-entropy tradeoff (b) We evaluate the normalized Hilbert-Schmidt independence Criterion (nHSIC) for LpJEPA, Rectified LpJEPA, and other baselines. Rectified LpJEPA achieves smaller nHSIC values compared to VICReg or NVICReg that only penalizes second-order statistics. (c) The relative mean absolute deviations (MAD) away from the median of the $\ell_1$ and $\ell_0$ sparsity metrics are computed over different methods. Rectified LpJEPA exhibits the highest variations of sparsity for different downstream dataset. Additional visualizations can be found in \cref{fig:total_transfer_sparsity}.}
    \label{fig:TBD2fig}
\end{figure*}

\subsection{Necessity of Rectifications}\label{sec:necessityofrelusec}

In \cref{fig:fourquadrantssubfig1}, we report CIFAR-100 validation accuracy against the $\ell_0$ sparsity metric $1-(1/D)\cdot\mathbb{E}[\|\mathbf{x}\|_0]$ under ablations that independently control rectification of the target distribution and the learned features. Corresponding results using $\ell_1$ sparsity are provided in \cref{fig:fourorthantsl1l2plot}. Without rectification, models achieve competitive accuracy but produce dense representations with no zero entries. When features are rectified, Rectified LpJEPA attains the best accuracy and sparsity tradeoff, whereas imposing an isotropic Gaussian distribution for rectified features leads to substantial performance drops. 

\subsection{Anti-Collapse via Continuous Mapping Theorems}\label{sec:contmaptheoremfigsec}
By the continuous mapping theorem, convergence of $\mathbf{x}\in\mathbb{R}^d$ to a Generalized Gaussian implies that $\operatorname{ReLU}(\mathbf{x})$ follows a Rectified Generalized Gaussian. In \cref{fig:contmaptheoremfig}, we compare linear probe evaluations of Rectified LpJEPA versus LpJEPA features, where the linear probe is trained on pretrained LpJEPA after an additional rectification. We observe that performance drops sharply for the latter case, indicating that it's necessary to directly match to the Rectified Generalized Gaussian distribution.

\subsection{Controllable Sparsity}\label{sec:empresultscontrollablesparsity}
Under the correct parameterizations of both the target distributions and the neural network features, we proceed to validate if we observe controllable sparsity in practice. Proposition~\ref{proposition:expectedl0normforrecgengauss} shows that the expected $\ell_0$ norm is collectively determined by the set of parameters $\{\mu,\sigma,p\}$. In \cref{fig:controllablesparsitysubfig2}, we show both the empirical $\ell_0$ norms measured over different pretrained backbones (ResNet \citep{he2016deep}, ViT \citep{dosovitskiy2020image}, ConvNext \citep{liu2022convnet}) and the theoretical $\ell_0$ norm computed using \cref{eq:theolzeroeq} as a function of varying $\mu$ and $p$ and the choice of $\sigma_{\text{GN}}$ mentioned in \cref{sec:choiceofparaminrecgengaussdist}. We observe that across different mean shift values $\mu$ on the x-axis, the empirical $\ell_0$ closely tracks the theoretical predictions, and the theoretical ordering between $p$ in the expected $\ell_0$ norms is also preserved in the empirical results. We defer additional comparisons between $\sigma_{\text{GN}}$ and $\sigma_{\text{RGN}}$ and more choices of $p$ to \cref{fig:l0normwithdiffsigmachoices}. 

\subsection{Sparsity and Performance Tradeoff}\label{sec:empresultssparseacctradeoffs}

With controllable sparsity at hand, we are interested in to what extent we can sparsify our features without performance drops. In \cref{fig:paretofrontieraccandsparsesubfig3}, we plot the Pareto frontier of validation accuracy against the $\ell_0$ sparsity metrics $1-(1/D)\cdot\mathbb{E}[\|\mathbf{x}\|_0]$ across varying $\mu$ and $p$ with the choice of $\sigma_{\text{GN}}$. We observe smooth and slow decay of performance as number of zeros in the feature representations increase, and the cliff-like drop in performance only occurs when roughly 95\% of the entries are zero, indicating significant exploitable sparsity in our learned image representations. Additional visualizations are deferred to the \cref{fig:additionalparetofrontier}.

\subsection{Pair-wise Independence via HSIC}\label{sec:emphsicsec}

Beyond sparsity, we evaluate whether the learned representations form approximately independent, factorial encodings of the input data. A principled measure of dependence is the \emph{total correlation}, defined as the KL divergence between the joint distribution and the product of its marginals. However, estimating total correlation is intractable in high-dimensional space \citep{mcallester2020formallimitationsmeasurementmutual}. We therefore resort to the Hilbert--Schmidt Independence Criterion (HSIC)~\cite{gretton2005measuring} as a practical surrogate for detecting statistical dependence beyond second-order correlations captured by the covariance matrix.

In \cref{fig:hsicsubfig2}, we report the normalized HSIC values (see \cref{sec:hsicdetailsappendix} for details) of Rectified LpJEPA and several dense and sparse baselines. Compared to methods such as VICReg and NVICReg, which explicitly regularize second-order statistics but do not constrain higher-order dependencies, Rectified LpJEPA consistently achieves lower nHSIC values, indicating representations that are closer to being statistically independent. Contrastive methods such as CL and NCL also attain low nHSIC scores; however, contrastive objectives are known to suffer from high sample complexity in high-dimensional representation spaces~\cite{chen2020simpleframeworkcontrastivelearning}. Overall, these results suggest that RDMReg objectives encourage not only sparsity but also reduced higher-order dependence.

\subsection{Rényi Information Dimension and Entropy}\label{sec:empddimentropysec}

We would like to quantify whether the learned representations exhibit high entropy. However, due to rectification, the resulting feature distributions are not absolutely continuous with respect to the Lebesgue measure, rendering standard differential entropy ill-defined and obscuring whether the usual decomposition of total correlation into marginal and joint entropies remains valid. In \cref{sec:genoftotalcorrelationundernuent}, we show that this decomposition continues to hold when entropy is defined in terms of the $d(\xi)$-dimensional entropy.

In \cref{fig:ddimentsubfig1}, we report the sum of marginal $d(\xi)$-dimensional entropies as an upper bound on the joint entropy across a range of dense and sparse representations. The results reveal a clear Pareto frontier between entropy and sparsity. Moreover, since Rectified LpJEPA consistently attains lower nHSIC values than VICReg-style baselines, indicating reduced statistical dependence, the marginal entropy estimates for Rectified LpJEPA are expected to provide a tighter and more faithful approximation of the joint entropy.

\subsection{Pretraining and Transfer Evaluations}\label{sec:empdownstreamevalssec}

In \cref{tab:imagenet100-table}, we report linear probe results for Rectified LpJEPA pretrained on ImageNet100, compared against a range of dense and sparse baselines. Rectified LpJEPA consistently achieves a favorable trade-off between downstream accuracy and representation sparsity.

We further evaluate transfer performance under both few-shot and full-shot settings (see \cref{tab:1shot-encoder-probe,tab:1shot-projector-probe,tab:10shot-encoder-probe,tab:10shot-projector-probe,tab:allshot-encoder-probe,tab:allshot-projector-probe}). Across all configurations, Rectified LpJEPA achieves competitive accuracy, demonstrating strong transferability. In \cref{fig:sparsityoodsubfig3}, we additionally observe that pretrained Rectified LpJEPA representations exhibit distinct sparsity patterns across multiple out-of-distribution datasets, suggesting that sparsity statistics can serve as a useful proxy for distinguishing in-distribution training data from OOD inputs. Additional ImageNet-1K, batch-size, runtime, and asymptotic-efficiency results can be seen in \cref{sec:imagenet1k_experiments,sec:batch_size_dependency_rdmreg,sec:runtime_sigreg_rdmreg,sec:big_o_efficiency_rdmreg}. We also present additional nearest-neighbors retrieval and visual attribution maps in \cref{sec:totalseconqualanalysis}. 

\section{Conclusion}

We introduced Rectified LpJEPA, a JEPA model equipped with Rectified Distribution Matching Regularization (RDMReg) that induces sparse representations through distribution matching to the Rectified Generalized Gaussian distributions. By showing that sparsity can be achieved via target distribution design while preserving task-relevant information, our work opens new avenues for fundamental research on JEPA regularizers.

\section*{Acknowledgements}

We thank Deep Chakraborty and Nadav Timor for helpful discussions. This work was supported in part by AFOSR under grant FA95502310139, NSF Award 1922658, and Kevin Buehler’s gift. This work was also supported through the NYU IT High Performance Computing resources, services, and staff expertise.

\section*{Impact Statement}

This paper presents work whose goal is to advance the field of Machine Learning. There are many potential societal consequences of our work, none which we feel must be specifically highlighted here.

\bibliography{example_paper}
\bibliographystyle{icml2026}

\newpage
\appendix
\onecolumn

\clearpage

\onecolumn

\begin{appendices}


\setcounter{equation}{0}
\renewcommand{\theequation}{\thesection.\arabic{equation}}

\small

\crefalias{section}{appendix}
\crefalias{subsection}{appendix}

\section*{\LARGE Appendix}
\label{sec:appendix}



\section{Additional Backgrounds}\label{sec:additionalbackground}

\textbf{Self-Supervised Learning}. Common self-supervised learning can be categorized into 1) contrastive methods \citep{chen2020simpleframeworkcontrastivelearning,he2020momentumcontrastunsupervisedvisual}, 2) non-contrastive methods \citep{zbontar2021barlowtwinsselfsupervisedlearning,bardes2022vicregvarianceinvariancecovarianceregularizationselfsupervised,ermolov2021whiteningselfsupervisedrepresentationlearning}, 3) self-distillation methods \citep{grill2020bootstraplatentnewapproach,caron2021emergingpropertiesselfsupervisedvision,chen2020exploringsimplesiameserepresentation} based on \citet{balestriero2023cookbookselfsupervisedlearning}. Along the line of statistical redundancy reductions, MCR$^2$ \citep{yu2020learningdiversediscriminativerepresentations} regularizes the log determinant of the scaled empirical covariance matrix shifted by the identity matrix while MMCR \citep{yerxa2023learningefficientcodingnatural} penalizes the nuclear norm of the centroid feature matrix. E2MC \citep{chakraborty2025improvingpretrainedselfsupervisedembeddings} minimizes the sum of marginal entropies of the feature distribution on top of minimizing the VCReg loss \citep{bardes2022vicregvarianceinvariancecovarianceregularizationselfsupervised}. Radial-VCReg \citep{kuang2025radialvcreg} and LeJEPA \citep{balestriero2025lejepaprovablescalableselfsupervised} go beyond second-order dependencies by learning isotropic Gaussian features. Our Rectified LpJEPA also reduce higher-order dependencies by design, while enforcing sparsity over learned representations. 

Prior work like Non-Negative Contrastive learning (NCL) \citep{wang2024nonnegativecontrastivelearning} also aims to learn sparse features by optimizing contrastive losses over rectified features. Contrastive Sparse Representation (CSR) \citep{wen2025matryoshkarevisitingsparsecoding} develops a post-training sparsity adaptation method by learning a sparse auto-encoder (SAE) \citep{gao2024scalingevaluatingsparseautoencoders} over pretrained dense features using NCL loss, reconstruction loss, and a couple of SAE-specific auxiliary losses. 

\textbf{Cramér--Wold Based Distribution Matching Losses}. Exemplars include sliced Wasserstein distances and their generative extensions \citep{bonneel2015sliced,kolouri2018swae}, sliced kernel discrepancies \citep{nadjahi2020spd}, projection-averaged multivariate tests \citep{kim2019projection}, and more recently LeJEPA with SIGReg loss \citep{balestriero2025lejepaprovablescalableselfsupervised}, which also show that it suffices to sample $\mathbf{c}\in\mathbb{S}^{d-1}_{\ell_2}:=\{\mathbf{c}\in\mathbb{R}^d\mid\|\mathbf{c}\|_2=1\}$.

\section{Properties of Univariate Generalized Gaussian, Truncated Generalized Gaussian, and Rectified Generalized Gaussian Distributions}\label{appendix:detailsonrectifiedgeneralizedgaussian}
In the following section, we present additional details on the Generalized Gaussian (\cref{sec:gengaussdefappendix}), Truncated Generalized Gaussian (\cref{sec:truncatedgengaussdefappendix}), and the Rectified Generalized Gaussian distributions (\cref{sec:unirggdefandproperties}). We also present the expectation and variance (\cref{sec:expectationandvarofrecgengausssec}) and the sampling method (\cref{sec:simulationmethodsforrecgengausssec}) for the Rectified Generalized Gaussian distribution.

\subsection{Univariate Case - Generalized Gaussian}\label{sec:gengaussdefappendix}

The Generalized Gaussian distribution $\mathcal{GN}_{p}(\mu,\sigma)$ \citep{Sub23,GOODMAN1973204,Nadarajah01092005} has the probability density function given in Definition~\ref{def:generalizedgaussiandist} with expectation and variance as
\begin{align}
    \mathbb{E}[x]&=\mu\\
    \operatorname{Var}[x]&=\sigma^2p^{2/p}\frac{\Gamma(3/p)}{\Gamma(1/p)}\label{eq:varofgengausseq}
\end{align}
The cumulative distribution function of $\mathcal{GN}_p(\mu,\sigma)$ is given by
\begin{align}
    \Phi_{\mathcal{GN}_p(\mu,\sigma)}(x)=\frac{1}{2}+\operatorname{sgn}(x-\mu)\frac{1}{2\Gamma(1/p)}\gamma\bigg(\frac{1}{p},\frac{|x-\mu|^p}{p\sigma^p}\bigg)
\end{align}
where $\gamma(\cdot,\cdot)$ is the lower incomplete gamma function. We note that the probability density function of $\mathcal{GN}_{p}(\mu,\sigma)$  has other parameterizations (Remark~\ref{remark:altgengausspdf}) and there are well-known special cases when $p=1$ or $p=2$ (Remark~\ref{remark:gausslaplacespecialcases}).

\begin{remark}\label{remark:altgengausspdf}
    The probability density function of the Generalized Gaussian distribution can also be written as
    \begin{align}
        f_{\mathcal{GN}_{p}(\mu,\sigma)}(x)=\frac{p}{2\alpha\Gamma(1/p)}\exp\bigg(-\frac{|x-\mu|^p}{\alpha^p}\bigg)
    \end{align}
    where $\alpha:=p^{1/p}\sigma$. We choose the particular presentation in Definition~\ref{def:generalizedgaussiandist} for its connection to the family of $L_p$-norm spherical distributions \citep{GUPTA1997241}.
\end{remark}

\begin{remark}\label{remark:gausslaplacespecialcases}
    When $p=1$, the Generalized Gaussian distribution reduces to the Laplace distribution $\mathcal{L}(\mu,\sigma)$ with probability density function
    \begin{align}
        f_{\mathcal{GN}_{1}(\mu,\sigma)}(x)=f_{\mathcal{L}(\mu,\sigma)}(x)=\frac{1}{2\sigma}\exp\bigg(-\frac{|x-\mu|}{\sigma}\bigg) 
    \end{align}
    If $p=2$, we recover the Gaussian distribution $\mathcal{N}(\mu,\sigma^2)$
    \begin{align}
        f_{\mathcal{GN}_{2}(\mu,\sigma)}(x)=f_{\mathcal{N}(\mu,\sigma^2)}(x)=\frac{1}{\sigma\sqrt{2\pi}}\exp\bigg(-\frac{|x-\mu|^2}{2\sigma^2}\bigg) 
    \end{align}
\end{remark}

For measure-theoretical characterizations of the Rectified Generalized Gaussian distribution in \cref{sec:unirggdefandproperties}, we denote the probability measure for $X\sim\mathcal{GN}_p(\mu,\sigma)$ as $\mathbb{P}_{\mathcal{GN}_p(\mu,\sigma)}$.

\subsection{Univariate Case - Truncated Generalized Gaussian}\label{sec:truncatedgengaussdefappendix}
The Truncated Generalized Gaussian distribution is defined in Definition~\ref{def:truncatedgeneralizedgaussiandist} in terms of the probability density function. In Definition~\ref{def:trungengaussmeasuredef}, we present the definition of the Truncated Generalized Gaussian probability measure.

\begin{definition}[Truncated Generalized Gaussian Probability Measure]\label{def:trungengaussmeasuredef}
Let $X \sim \mathbb{P}_{\mathcal{GN}_p(\mu,\sigma)}$ be a Generalized Gaussian random variable with $\ell_p$ parameter $p>0$, location $\mu \in \mathbb{R}$, and scale $\sigma>0$.  
The Truncated Generalized Gaussian probability measure
$\mathbb{P}_{\mathcal{TGN}_p(\mu,\sigma)}$ on the measurable space
$(\mathbb{R},\mathcal{B}(\mathbb{R}))$ is defined as the conditional distribution of $X$ given $X>0$, i.e.,
\begin{align}
\mathbb{P}_{\mathcal{TGN}_p(\mu,\sigma)}(A)
&:= \mathbb{P}\!\left(X \in A \mid X>0\right) \nonumber=\frac{\mathbb{P}_{\mathcal{GN}_p(\mu,\sigma)}(A\cap(0,\infty))}{\mathbb{P}_{\mathcal{GN}_p(\mu,\sigma)}((0,\infty))}=\frac{\mathbb{P}_{\mathcal{GN}_p(\mu,\sigma)}(A\cap(0,\infty))}{1-\Phi_{\mathcal{GN}_p(0,1)}(-\mu/\sigma)}
\end{align}
for any $A \in \mathcal{B}(\mathbb{R})$, where
$\Phi_{\mathcal{GN}_p(0,1)}$ denotes the cumulative distribution function
of the standardized Generalized Gaussian distribution.
\end{definition}

\subsection{Univariate Case - Rectified Generalized Gaussian}\label{sec:unirggdefandproperties}
In Definition~\ref{def:rectifiedgeneralizedgaussiandist}, we provide a probability density function (PDF) characterization of the Rectified Generalized Gaussian distribution. However, we note that the PDF presented in Definition~\ref{def:rectifiedgeneralizedgaussiandist} is not the Radon–Nikodym derivative of the Rectified Generalized Gaussian probability measure with respect to the standard Lebesgue measure over $\mathbb{R}$, which we denote as $\lambda$. In Definition~\ref{def:measuretheoreticaldefofrgg}, we provide a measure-theoretical treatment of the Rectified Generalized Gaussian distribution. We start by introducing the Dirac measure in Definition~\ref{def:diracmeasuredef}.

\begin{figure*}[t!]
    \centering
    \begin{subfigure}[t]{0.3\linewidth}
        \centering
        \includegraphics[width=\linewidth]{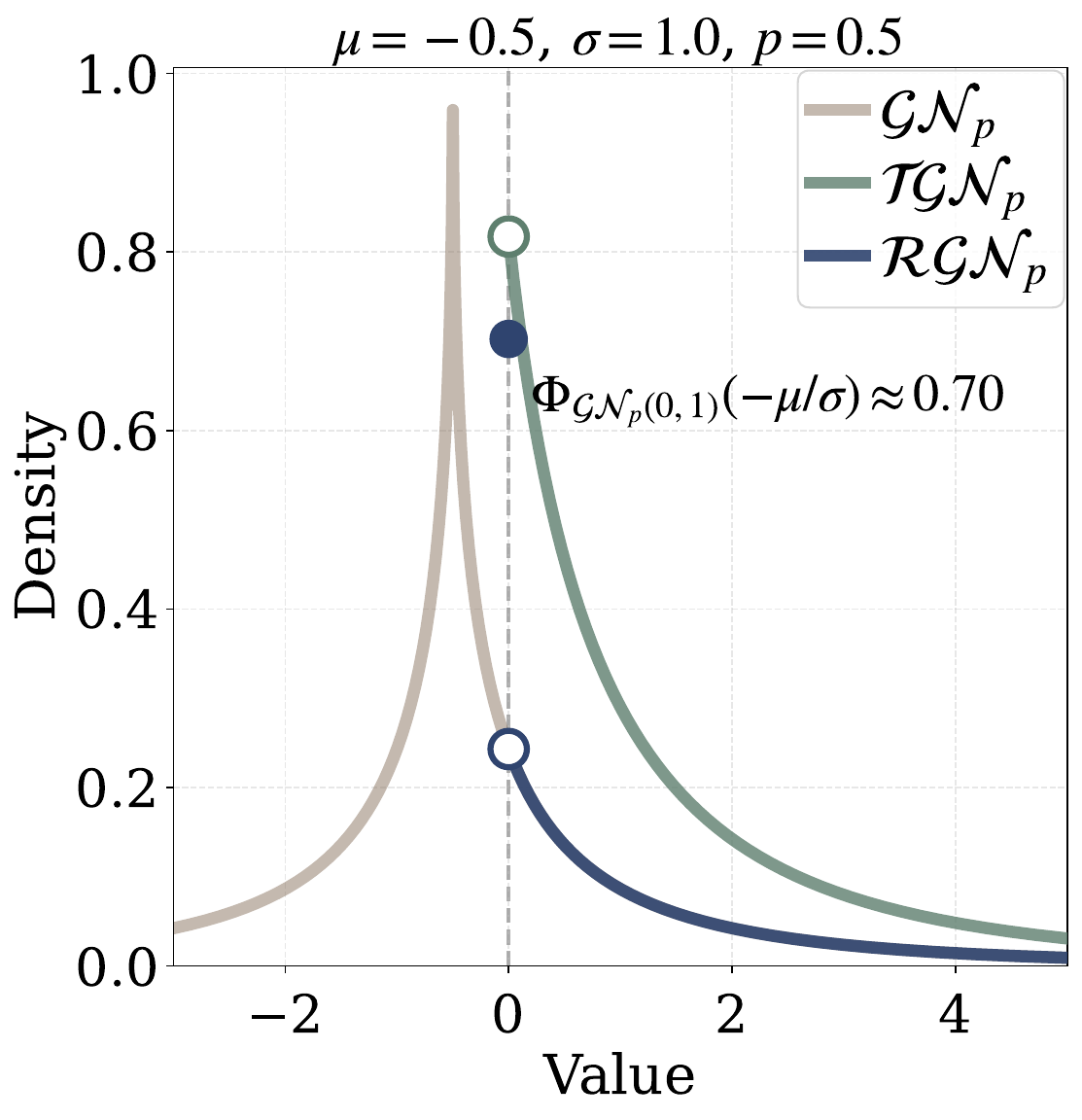}
        \captionsetup{justification=centering}
        \caption{$p=0.5$}
        \label{fig:pdf_comparison_1}
    \end{subfigure}
    \hfill
    \begin{subfigure}[t]{0.3\linewidth}
        \centering
        \includegraphics[width=\linewidth]{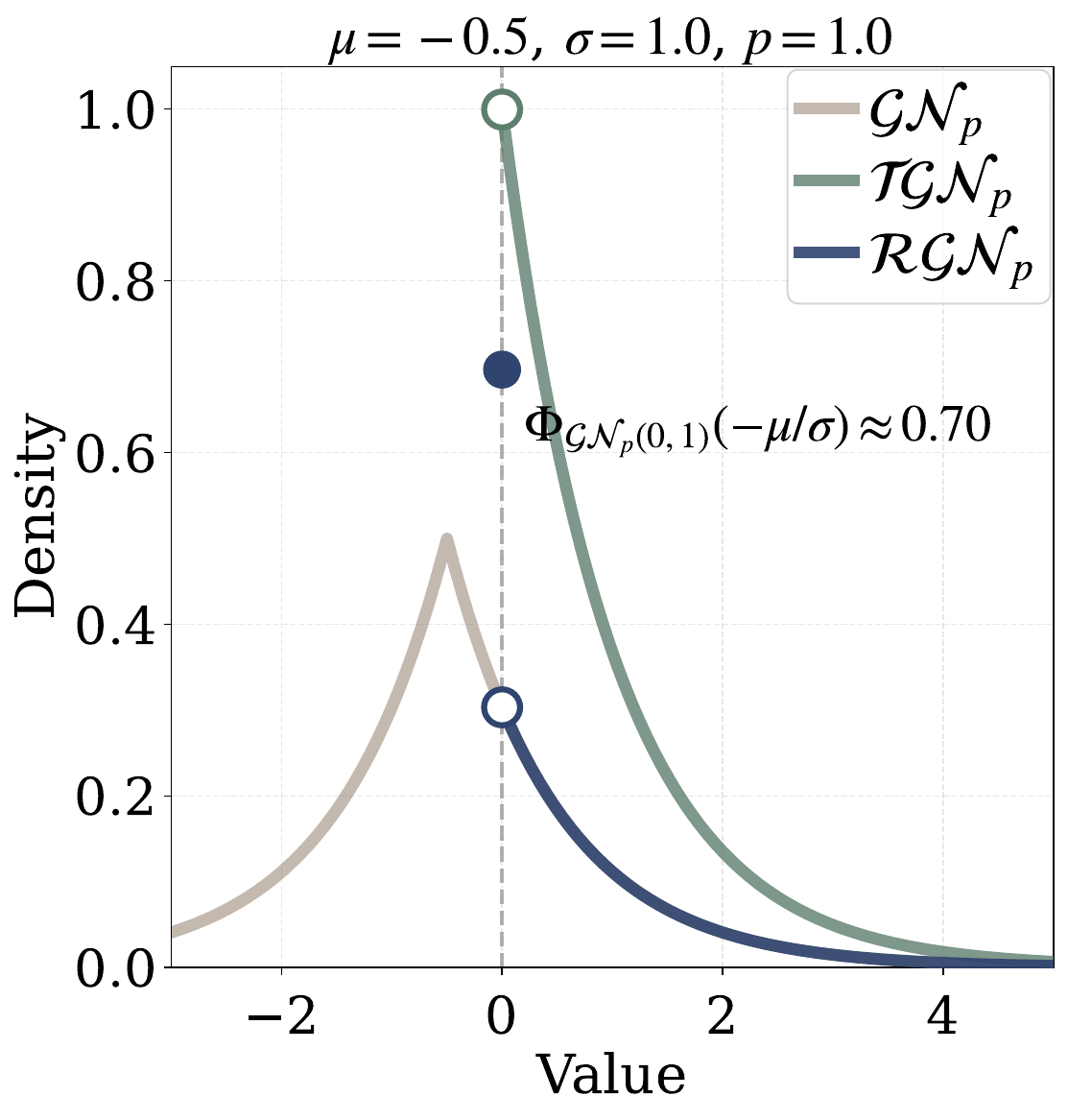}
        \captionsetup{justification=centering}
        \caption{$p=1.0$}
        \label{fig:pdf_comparison_2}
    \end{subfigure}
    \hfill
    \begin{subfigure}[t]{0.3\linewidth}
        \centering
        \includegraphics[width=\linewidth]{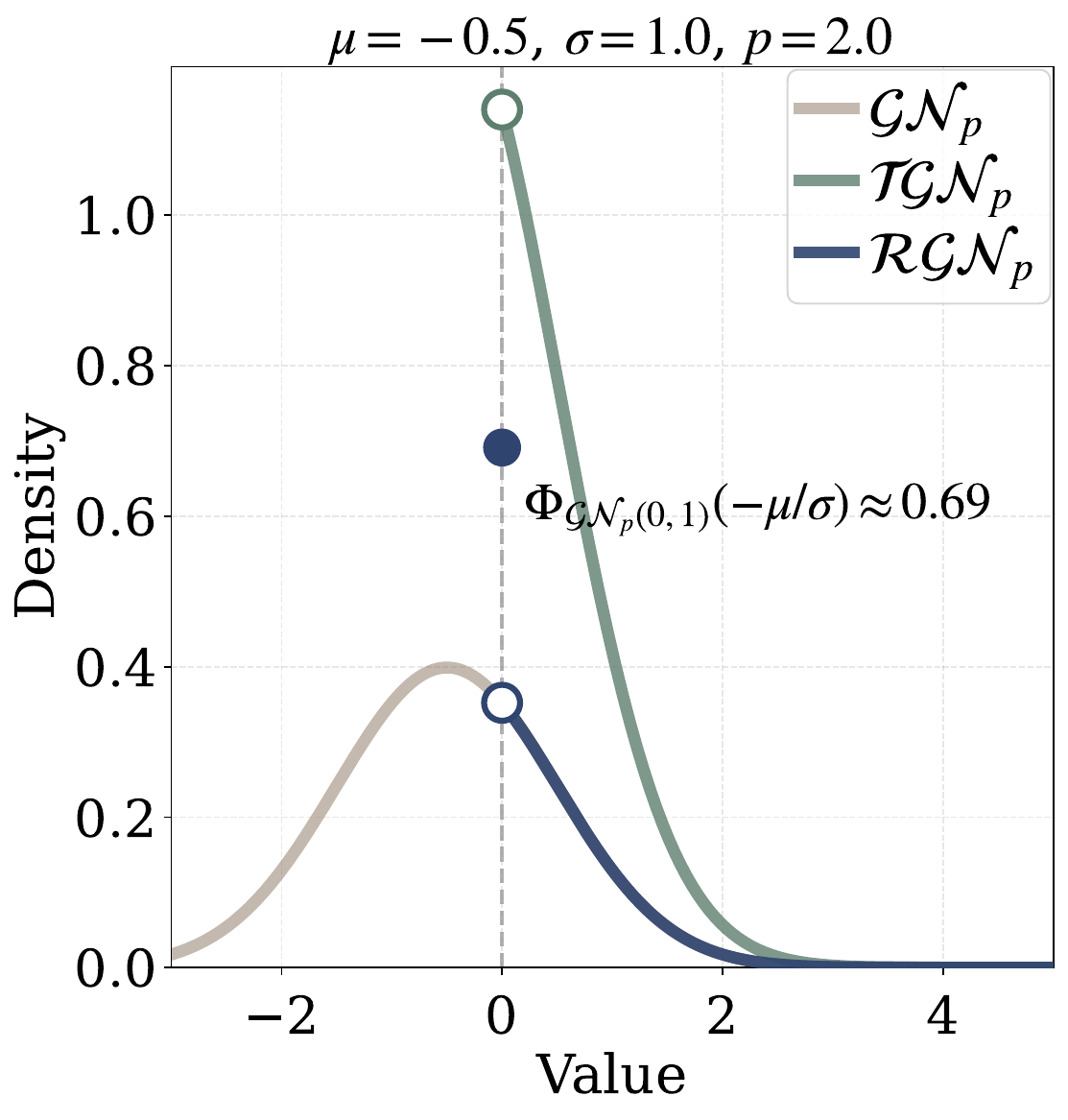}
        \captionsetup{justification=centering}
        \caption{$p=2.0$}
        \label{fig:pdf_comparison_3}
    \end{subfigure}
    \caption{\textbf{The Probability Density Function of Generalized Gaussian $\mathcal{GN}_p$, Truncated Generalized Gaussian $\mathcal{TGN}_p$, and Rectified Generalized Gaussian $\mathcal{RGN}_p$ across varying $p$ with fixed $\mu=-0.5$ and $\sigma=1$.} $\Phi_{\mathcal{GN}_p(0,1)}$ is the CDF of the Generalized Gaussian $\mathcal{GN}_{p}(0,1)$. (a) The case when $p=0.5$. (b) When $p=1$, we obtain Laplace, Truncated Laplace, and Rectified Laplace. (c) For $p=2$, we have Gaussian, Truncated Gaussian, and Rectified Gaussian.}
    \label{fig:pdf_comparison_total}
\end{figure*}

\begin{definition}[Dirac Measure]\label{def:diracmeasuredef}
    The Dirac measure $\delta_x$ over a measurable space $(X,\Sigma)$ for a given $x\in X$ is defined as
    \begin{align}
        \delta_x(A)=\mathbbm{1}_{A}(x)=\begin{cases}
            0,x\notin A\\
            1, x\in A
        \end{cases}
    \end{align}
    for any measurable set $A\subseteq X$. 
\end{definition}
In Definition~\ref{def:measuretheoreticaldefofrgg}, we formally introduce the Rectified Generalized Gaussian probability measure and its probability density function.
\begin{definition}[Measure-Theoretical Definition of the Rectified Generalized Gaussian]\label{def:measuretheoreticaldefofrgg}
Fix parameters $p>0$, $\mu\in\mathbb{R}$, and $\sigma>0$. We denote $(\mathbb{R}, \mathcal{B}(\mathbb{R}))$ as the real line equipped with Borel $\sigma$-algebra. Let $\lambda$ be the Lebesgue measure on $\mathcal{B}(\mathbb{R})$ and let $\delta_0$ be the Dirac measure at $0$ presented in Definition~\ref{def:diracmeasuredef}. The probability measure $\mathbb{P}_{X}$ of the Rectified Generalized Gaussian random variable $X$ is given by the mixture
\begin{align}
    \mathbb{P}_{X}=\Phi_{\mathcal{GN}_p(0,1)}\bigg(-\frac{\mu}{\sigma}\bigg)\cdot\delta_0+\bigg(1-\Phi_{\mathcal{GN}_p(0,1)}\bigg(-\frac{\mu}{\sigma}\bigg)\bigg)\cdot\mathbb{P}_{\mathcal{TGN}_p(\mu,\sigma)}
\end{align}
where $\mathbb{P}_{\mathcal{TGN}_p(\mu,\sigma)}$ is the Truncated Generalized Gaussian probability measure in Definition~\ref{def:trungengaussmeasuredef} and $\Phi_{\mathcal{GN}_p(0,1)}$ is the CDF of the standard Generalized Gaussian $\mathcal{GN}_{p}(0,1)$. Define the mixed measure $\nu:=\lambda+\delta_0$. By Lemma~\ref{lemma:radonnikodymderivativeofrgg}, the Radon-Nikodym derivative of $\mathbb{P}_X$ with respect to $\nu$ exists and is given by 
\begin{align}
    \frac{d\mathbb{P}_{X}}{d\nu}(x)=f_{\mathcal{RGN}_p(\mu,\sigma)}(x)&=\Phi_{\mathcal{GN}_p(0,1)}\bigg(-\frac{\mu}{\sigma}\bigg)\cdot\mathbbm{1}_{\{0\}}(x)+\frac{p^{1-1/p}}{2\sigma\Gamma(1/p)}\exp\bigg(-\frac{|x-\mu|^p}{p\sigma^p}\bigg)\cdot\mathbbm{1}_{(0,\infty)}(x)
\end{align}
\end{definition}

\begin{lemma}[Absolute Continuity]\label{lemma:absolutecontofrggwithrespecttomixed}
    The Rectified Generalized Gaussian probability measure $\mathbb{P}_X$ in Definition~\ref{def:measuretheoreticaldefofrgg} is absolutely continuous with respect to the mixed measure $\nu:=\delta_0+\lambda$, i.e. $\mathbb{P}_X \ll\nu$.
    \begin{proof}
        According to \citet{folland1999real}, if $\mathbb{P}_X$ is a signed measure and $\nu$ is a positive measure on the same measurable space $(\mathbb{R}, \mathcal{B}(\mathbb{R}))$, then $\mathbb{P}_X \ll\nu$ if $\nu(A)=0$ for every $A\in\mathcal{B}(\mathbb{R})$ implies $\mathbb{P}_{X}(A)=0$. 

        Let's consider the case of $\nu(A)=0$. By definition, $\nu(A)=\delta_0(A)+\lambda(A)=0$. Since both $\delta_0$ and $\lambda$ are non-negative measures, $\delta_0(A)=\lambda(A)=0$. We observe that $\delta_0(A)=0$ implies $0\notin A$ by the definition of the Dirac measure. Thus
        \begin{align}
            \mathbb{P}_{X}(A)&=\Phi_{\mathcal{GN}_p(0,1)}\bigg(-\frac{\mu}{\sigma}\bigg)\cdot\delta_0(A)+\bigg(1-\Phi_{\mathcal{GN}_p(0,1)}\bigg(-\frac{\mu}{\sigma}\bigg)\bigg)\cdot\mathbb{P}_{\mathcal{TGN}_p(\mu,\sigma)}(A)\\
            &=\bigg(1-\Phi_{\mathcal{GN}_p(0,1)}\bigg(-\frac{\mu}{\sigma}\bigg)\bigg)\cdot\mathbb{P}_{\mathcal{TGN}_p(\mu,\sigma)}(A)
        \end{align}
        where the first term vanishes because $0\notin A$. It's trivial that $\mathbb{P}_{\mathcal{TGN}_p(\mu,\sigma)}$ is absolutely continuous with respect to the Lebesgue measure. Since $\nu(A)=0\implies \lambda(A)=0$, we have $\lambda(A)=0\implies \mathbb{P}_{\mathcal{TGN}_p(\mu,\sigma)}(A)=0$. Thus 
        \begin{align}
            \mathbb{P}_{X}(A)&=\bigg(1-\Phi_{\mathcal{GN}_p(0,1)}\bigg(-\frac{\mu}{\sigma}\bigg)\bigg)\cdot\mathbb{P}_{\mathcal{TGN}_p(\mu,\sigma)}(A)=0
        \end{align}
        and we have proven the absolutely continuity result $\mathbb{P}_X \ll\nu$.
    \end{proof}
\end{lemma}

\begin{lemma}[Radon–Nikodym Derivative]\label{lemma:radonnikodymderivativeofrgg}
    The Radon–Nikodym derivative of the Rectified Generalized Gaussian probability measure $\mathbb{P}_X$ with respect to the mixed measure $\nu:=\delta_0+\lambda$ exists and is given by
    \begin{align}
        \frac{d\mathbb{P}_{X}}{d\nu}(x)=f_{\mathcal{RGN}_p(\mu,\sigma)}(x)&=\Phi_{\mathcal{GN}_p(0,1)}\bigg(-\frac{\mu}{\sigma}\bigg)\cdot\mathbbm{1}_{\{0\}}(x)+\frac{p^{1-1/p}}{2\sigma\Gamma(1/p)}\exp\bigg(-\frac{|x-\mu|^p}{p\sigma^p}\bigg)\cdot\mathbbm{1}_{(0,\infty)}(x)
    \end{align}
    \begin{proof}
        By Lemma~\ref{lemma:absolutecontofrggwithrespecttomixed}, $\mathbb{P}_X \ll\nu$ so the Radon–Nikodym derivative $d\mathbb{P}_{X}/d\nu$ exists and it suffices to show that for any $A\subseteq\mathcal{B}(\mathbb{R})$ we have
        \begin{align}
            \mathbb{P}_{X}(A)=\int_A\frac{d\mathbb{P}_{X}}{d\nu}d\nu
        \end{align}
        We start by expanding the integral with respect to a sum of measures
        \begin{align}
            \int_A\frac{d\mathbb{P}_{X}}{d\nu}d\nu=\int_A\frac{d\mathbb{P}_{X}}{d\nu}d\delta_0+\int_A\frac{d\mathbb{P}_{X}}{d\nu}d\lambda
        \end{align}
        By the property of the Dirac measure, we have
        \begin{align}
            \int_A\frac{d\mathbb{P}_{X}}{d\nu}d\delta_0=\frac{d\mathbb{P}_{X}}{d\nu}(0)\delta_0(A)=f_{\mathcal{RGN}_p(\mu,\sigma)}(0)\delta_0(A)
        \end{align}
        We observe that $\mathbbm{1}_{\{0\}}(x)=1$ and $\mathbbm{1}_{(0,\infty)}(0)=0$. So we have 
        \begin{align}
            f_{\mathcal{RGN}_p(\mu,\sigma)}(0)\delta_0(A)=\Phi_{\mathcal{GN}_p(0,1)}\bigg(-\frac{\mu}{\sigma}\bigg)\cdot\delta_0(A)
        \end{align}
        Now the second term can be expanded as
        \begin{align}
            \int_A\frac{d\mathbb{P}_{X}}{d\nu}d\lambda&=\int_A\Phi_{\mathcal{GN}_p(0,1)}\bigg(-\frac{\mu}{\sigma}\bigg)\cdot\mathbbm{1}_{\{0\}}(x)d\lambda(x)+\int_A\frac{p^{1-1/p}}{2\sigma\Gamma(1/p)}\exp\bigg(-\frac{|x-\mu|^p}{p\sigma^p}\bigg)\cdot\mathbbm{1}_{(0,\infty)}(x)d\lambda(x)\\
            &=\Phi_{\mathcal{GN}_p(0,1)}\bigg(-\frac{\mu}{\sigma}\bigg)\cdot\int_A\mathbbm{1}_{\{0\}}(x)d\lambda(x)+\int_A\frac{p^{1-1/p}}{2\sigma\Gamma(1/p)}\exp\bigg(-\frac{|x-\mu|^p}{p\sigma^p}\bigg)\cdot\mathbbm{1}_{(0,\infty)}(x)d\lambda(x)
        \end{align}
        where the term
        \begin{align}
            \int_A\mathbbm{1}_{\{0\}}(x)d\lambda(x)=\lambda(A\cap\{0\})=0
        \end{align}
        simply vanishes. Thus we are left with 
        \begin{align}
            \int_A\frac{d\mathbb{P}_{X}}{d\nu}d\lambda&=\int_A\frac{p^{1-1/p}}{2\sigma\Gamma(1/p)}\exp\bigg(-\frac{|x-\mu|^p}{p\sigma^p}\bigg)\cdot\mathbbm{1}_{(0,\infty)}(x)d\lambda(x)\\
            &=\int_{A\cap(0,\infty)}\frac{p^{1-1/p}}{2\sigma\Gamma(1/p)}\exp\bigg(-\frac{|x-\mu|^p}{p\sigma^p}\bigg)d\lambda(x)\\
            &=\int_{A\cap(0,\infty)}\frac{d\mathbb{P}_{\mathcal{GN}_p(\mu,\sigma)}}{d\lambda}(x)d\lambda(x)\\
            &=\mathbb{P}_{\mathcal{GN}_p(\mu,\sigma)}(A\cap(0,\infty))
        \end{align}
        By Definition~\ref{def:trungengaussmeasuredef}, the Truncated Generalized Gaussian probability measure is given by
        \begin{align}
            \mathbb{P}_{\mathcal{TGN}_p(\mu,\sigma)}(A)&=\frac{\mathbb{P}_{\mathcal{GN}_p(\mu,\sigma)}(A\cap(0,\infty))}{\mathbb{P}_{\mathcal{GN}_p(\mu,\sigma)}((0,\infty))}\\
            &=\frac{\mathbb{P}_{\mathcal{GN}_p(\mu,\sigma)}(A\cap(0,\infty))}{1-\Phi_{\mathcal{GN}_p(0,1)}\bigg(-\frac{\mu}{\sigma}\bigg)}\\
        \end{align}
        Thus we have the identity
        \begin{align}
            \mathbb{P}_{\mathcal{GN}_p(\mu,\sigma)}(A\cap(0,\infty))&=1-\Phi_{\mathcal{GN}_p(0,1)}\bigg(-\frac{\mu}{\sigma}\bigg)\cdot\mathbb{P}_{\mathcal{TGN}_p(\mu,\sigma)}(A)
        \end{align}
        Putting everything together, we arrive at 
        \begin{align}
            \int_A\frac{d\mathbb{P}_{X}}{d\nu}d\nu&=\Phi_{\mathcal{GN}_p(0,1)}\bigg(-\frac{\mu}{\sigma}\bigg)\cdot\delta_0(A)+\bigg(1-\Phi_{\mathcal{GN}_p(0,1)}\bigg(-\frac{\mu}{\sigma}\bigg)\bigg)\cdot\mathbb{P}_{\mathcal{TGN}_p(\mu,\sigma)}(A)\\
            &=\mathbb{P}_{X}(A)
        \end{align}
        Thus we have proven the form of the Radon–Nikodym Derivative.
    \end{proof}
\end{lemma}

It's trivial to observe that the Rectified Generalized Gaussian probability measure is a valid probability measure since
\begin{align}
    \mathbb{P}_{\mathcal{RGN}_p(\mu,\sigma)}(\mathbb{R})&=\Phi_{\mathcal{GN}_p(0,1)}\bigg(-\frac{\mu}{\sigma}\bigg)\cdot\delta_0(\mathbb{R})+\bigg(1-\Phi_{\mathcal{GN}_p(0,1)}\bigg(-\frac{\mu}{\sigma}\bigg)\bigg)\cdot\mathbb{P}_{\mathcal{TGN}_p(\mu,\sigma)}(\mathbb{R})\\&=\Phi_{\mathcal{GN}_p(0,1)}\bigg(-\frac{\mu}{\sigma}\bigg)+\bigg(1-\Phi_{\mathcal{GN}_p(0,1)}\bigg(-\frac{\mu}{\sigma}\bigg)\bigg)=1
\end{align}

In Definition~\ref{def:rectifiedgeneralizedgaussiandist}, we show that the Rectified Generalized Gaussian distribution can be presented as
\begin{align}
    f_{\mathcal{RGN}_p(\mu,\sigma)}(x)=\Phi_{\mathcal{GN}_p(0,1)}\bigg(-\frac{\mu}{\sigma}\bigg)\cdot\mathbbm{1}_{\{0\}}(x)+\frac{p^{1-1/p}}{2\sigma\Gamma(1/p)}\exp\bigg(-\frac{|x-\mu|^p}{p\sigma^p}\bigg)\cdot\mathbbm{1}_{(0,\infty)}(x)
\end{align}
At first glance, the second term is the probability density function of the Generalized Gaussian distribution instead of its truncated version. In Corollary~\ref{corollary:equivalentdefofrecgengauss}, we provide an alternative presentation of the Rectified Generalized Gaussian distribution with explicit components of the probability density function of the Truncated Generalized Gaussian distribution.

\begin{corollary}[Equivalent Definition of Rectified Generalized Gaussian]\label{corollary:equivalentdefofrecgengauss}
The probability density function of the Rectified Generalized Gaussian distribution $\mathcal{RGN}_{p}(\mu,\sigma)$ can also be written as
\begin{align}
    f_{\mathcal{RGN}_p(\mu,\sigma)}(x)&=\Phi_{\mathcal{GN}_p(0,1)}\bigg(-\frac{\mu}{\sigma}\bigg)\cdot\mathbbm{1}_{\{0\}}(x)\\&+\bigg(1-\Phi_{\mathcal{GN}_p(0,1)}\bigg(-\frac{\mu}{\sigma}\bigg)\bigg)\frac{1}{Z_{(0,\infty)}(\mu,\sigma,p)}\exp\bigg(-\frac{|x-\mu|^p}{p\sigma^p}\bigg)\cdot\mathbbm{1}_{(0,\infty)}(x)
\end{align}
where $\Phi_{\mathcal{GN}_p(0,1)}$ is the cumulative distribution function for the standard Generalized Gaussian distribution $\mathcal{GN}_p(0, 1)$.
\begin{proof}
    We can simplify the expression as
    \begin{align}
        1-\Phi_{\mathcal{GN}_p(0,1)}\bigg(-\frac{\mu}{\sigma}\bigg)=1-\Phi_{\mathcal{GN}_p(\mu,\sigma)}(0)=\Phi_{\mathcal{GN}_p(\mu,\sigma)}(0)=\int_{-\infty}^0\frac{p^{1-1/p}}{2\sigma\Gamma(1/p)}\exp\bigg(-\frac{|x-\mu|^p}{p\sigma^p}\bigg)dx
    \end{align}
    So we have
    \begin{align}
        \bigg(1-\Phi_{\mathcal{GN}_p(0,1)}\bigg(-\frac{\mu}{\sigma}\bigg)\bigg)\frac{1}{Z_{(0,\infty)}(\mu,\sigma,p)}=\frac{\int_{-\infty}^0\frac{p^{1-1/p}}{2\sigma\Gamma(1/p)}\exp\bigg(-\frac{|x-\mu|^p}{p\sigma^p}\bigg)dx}{\int_{0}^{\infty}\exp\bigg(-\frac{|x-\mu|^p}{p\sigma^p}\bigg)dx}=\frac{p^{1-1/p}}{2\sigma\Gamma(1/p)}
    \end{align}
    where the extra terms cancel out due to symmetry around $0$. Thus we have recovered the forms in Definition~\ref{def:rectifiedgeneralizedgaussiandist}.
\end{proof}
\end{corollary}

In \cref{fig:pdf_comparison_total}, we visualize the probability density of the Generalized Gaussian, Truncated Generalized Gaussian, and the Rectified Generalized Gaussian distributions across varying $p$. 

\subsection{Expectation and Variance of the Rectified Generalized Gaussian Distribution}\label{sec:expectationandvarofrecgengausssec}

\begin{proposition}\label{proposition:meanandvarofrecgengauss}
Let $X\sim\mathcal{RGN}_{p}(\mu,\sigma)$ and $\operatorname{sgn}(\mu)\in\{-1,0,+1\}$ be the sign function. Let $\gamma(s,t)$ be the lower incomplete gamma function, $\Gamma(s,t)$ be the upper incomplete gamma function, $\Gamma(s)$ be the gamma function, and $P(s,t)=\gamma(s,t)/\Gamma(s)$ be the lower regularized gamma function. Then
\begin{align}
\mathbb{E}[X]&=\frac{1}{2}\bigg[\mu\bigg(1+\operatorname{sgn}(\mu)P\bigg(\frac{1}{p},\frac{|\mu|^p}{p\sigma^p}\bigg)\bigg)+p^{1/p}\sigma\frac{\Gamma(2/p, |\mu|^p/(p\sigma^p))}{\Gamma(1/p)}\bigg]\\
\mathbb{E}[X^2]&=\frac{1}{2}\bigg[\mu^2\bigg(1+\operatorname{sgn}(\mu)\,P\bigg(\frac{1}{p},\frac{|\mu|^p}{p\sigma^p}\bigg)\bigg)+
2\mu p^{1/p}\sigma\frac{\Gamma(2/p, |\mu|^p/(p\sigma^p))}{\Gamma(1/p)}\\&+p^{2/p}\sigma^2\frac{\Gamma(3/p)}{\Gamma(1/p)}\bigg(1+\operatorname{sgn}(\mu)\,P \bigg(\frac{3}{p},\frac{|\mu|^p}{p\sigma^p}\bigg)\bigg)\bigg]\\
\operatorname{Var}(X)&=\mathbb{E}[X^2]-\big(\mathbb{E}[X]\big)^2.\label{eq:varofrecgengausseq}
\end{align}
\end{proposition}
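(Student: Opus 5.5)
Since $X\overset{d}{=}\operatorname{ReLU}(Y)$ with $Y\sim\mathcal{GN}_p(\mu,\sigma)$, the plan is to compute $\mathbb{E}[X^k]=\mathbb{E}[Y^k\mathbbm{1}_{\{Y>0\}}]$ for $k=1,2$. The atom at zero contributes nothing to either moment. With the standardization $Y=\mu+\sigma W$, $W\sim\mathcal{GN}_p(0,1)$, we get
\begin{align}
\mathbb{E}[X^k]=\int_{-\mu/\sigma}^{\infty}(\mu+\sigma w)^k f_{\mathcal{GN}_p(0,1)}(w)\,dw .
\end{align}
Expanding the binomial reduces everything to the truncated moments $M_j(a):=\int_{a}^{\infty}w^j f_{\mathcal{GN}_p(0,1)}(w)\,dw$ for $j=0,1,2$ and $a=-\mu/\sigma$. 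Thus $\mathbb{E}[X]=\mu M_0+\sigma M_1$ and $\mathbb{E}[X^2]=\mu^2M_0+2\mu\sigma M_1+\sigma^2M_2$. The variance line then follows by definition.

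Each $M_j$ is evaluated with the substitution $t=|w|^p/p$, so $w=(pt)^{1/p}$ and $dw=p^{1/p-1}t^{1/p-1}\,dt$. With the density normalizer $p^{1-1/p}/(2\Gamma(1/p))$ this gives
\begin{align}
\int_{b}^{\infty}w^j f(w)\,dw=\frac{p^{j/p}}{2\Gamma(1/p)}\,\Gamma\!\left(\tfrac{j+1}{p},\tfrac{b^p}{p}\right)
\end{align}
for $b\ge 0$, where $\Gamma(s,t)$ is the upper incomplete gamma function. Set $\tau:=|\mu|^p/(p\sigma^p)$.

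For $\mu\ge 0$ the lower limit $a=-\mu/\sigma$ is nonpositive, so I split $M_j=\int_a^0+\int_0^\infty$ and map the negative piece to $(0,|a|)$ by $w\mapsto -w$. Using symmetry of the density, the even moments become
\begin{align}
M_j=\frac{p^{j/p}}{2\Gamma(1/p)}\left[\Gamma\!\left(\tfrac{j+1}{p}\right)+\gamma\!\left(\tfrac{j+1}{p},\tau\right)\right], \qquad j\in\{0,2\},
\end{align}
that is, $\tfrac{p^{j/p}\Gamma((j+1)/p)}{2\Gamma(1/p)}\bigl(1+P(\tfrac{j+1}{p},\tau)\bigr)$. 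The odd moment $M_1$ picks up a sign flip on the negative piece: it equals $\tfrac{p^{1/p}}{2\Gamma(1/p)}[\Gamma(2/p)-\gamma(2/p,\tau)]=\tfrac{p^{1/p}}{2\Gamma(1/p)}\Gamma(2/p,\tau)$.

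For $\mu<0$ we have $a>0$, and the formula applies directly. This gives $M_j=\tfrac{p^{j/p}}{2\Gamma(1/p)}\Gamma(\tfrac{j+1}{p},\tau)$, which equals $\tfrac{p^{j/p}\Gamma((j+1)/p)}{2\Gamma(1/p)}\bigl(1-P(\tfrac{j+1}{p},\tau)\bigr)$ for $j=0,2$. For $j=1$ it is again $\Gamma(2/p,\tau)$.

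Both cases therefore unify as follows: $M_0=\tfrac12(1+\operatorname{sgn}(\mu)P(1/p,\tau))$, $M_1=\tfrac{p^{1/p}}{2\Gamma(1/p)}\Gamma(2/p,\tau)$ with no sign factor, and $M_2=\tfrac{p^{2/p}\Gamma(3/p)}{2\Gamma(1/p)}(1+\operatorname{sgn}(\mu)P(3/p,\tau))$. The case $\mu=0$ is consistent because $P(\cdot,0)=0$. Substituting into $\mu M_0+\sigma M_1$ and $\mu^2M_0+2\mu\sigma M_1+\sigma^2M_2$ reproduces the stated $\mathbb{E}[X]$ and $\mathbb{E}[X^2]$. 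The same computation for $M_0$ also re-derives $1-\Phi_{\mathcal{GN}_p(0,1)}(-\mu/\sigma)$, consistent with Proposition~\ref{proposition:expectedl0normforrecgengauss}.

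The main obstacle is bookkeeping the sign cases and the odd moment. The key point is that $M_1$ is the same upper incomplete gamma $\Gamma(2/p,\tau)$ in both cases: for $\mu\ge0$ the odd integrand cancels over $(-|a|,|a|)$, leaving $\int_{|a|}^\infty$, and for $\mu<0$ we integrate only from $a=|a|$. I will verify this carefully, along with the constant $p^{j/p}$ produced by the substitution.
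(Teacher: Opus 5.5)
Your proposal is correct and follows essentially the same route as the paper: write $\mathbb{E}[X^k]=\int_0^\infty z^k f_{\mathcal{GN}_p(\mu,\sigma)}(z)\,dz$ and change variables. Then expand binomially into three truncated integrals and evaluate each with incomplete gamma identities, splitting on the sign of $\mu$ so that the odd integral reduces to $\Gamma(2/p,\tau)$ in both cases. The only cosmetic difference is that you also rescale by $\sigma$ to work with $\mathcal{GN}_p(0,1)$, whereas the paper shifts by $\mu$ alone and keeps $a=1/(p\sigma^p)$ in the exponent; the two are related by $C I_j=\sigma^j M_j$.
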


\begin{proof}
Let $Z\sim \mathcal{GN}_{p}(\mu,\sigma)$ with density
\begin{align}
f_Z(z)=\frac{p^{1-\frac1p}}{2\sigma\Gamma(1/p)}
\exp\!\Big(-\frac{|z-\mu|^p}{p\sigma^p}\Big).
\end{align}
If $X=\operatorname{ReLU}(Z)$, then we know $X\sim\mathcal{RGN}_{p}(\mu,\sigma)$. Thus for any $k\in\{1,2\}$, we have
\begin{align}
\mathbb{E}[X^k]=\mathbb{E}[Z^k\mathbbm{1}_{(0,\infty)}(Z)]=\int_{0}^{\infty} z^k f_Z(z)\,dz .    
\end{align}
To simplify notations, let's denote $C:=p^{1-(1/p)}/(2\sigma\Gamma(1/p))$, $a:=1/(p\sigma^p)$, and $t_0:=a|\mu|^p=|\mu|^p/(p\sigma^p)$. Then
\begin{align}\label{eq:EXk-start}
\mathbb{E}[X^k]=C\int_{0}^{\infty} z^k \exp\!\big(-a|z-\mu|^p\big)\,dz.
\end{align}
Define the change of variables $t=z-\mu$. Thus we have $z=t+\mu$ and $z\ge 0\!\iff\!t\ge -\mu$. Rewrite the integral as
\begin{align}\label{eq:shift}
\mathbb{E}[X^k]=C\int_{-\mu}^{\infty} (t+\mu)^k \exp\!\big(-a|t|^p\big)\,dt.
\end{align}
Let's define the three auxiliary integrals
\begin{align}
I_0&:=\int_{-\mu}^{\infty} e^{-a|t|^p}\,dt\\
I_1&:=\int_{-\mu}^{\infty} t\,e^{-a|t|^p}\,dt\\
I_2&:=\int_{-\mu}^{\infty} t^2\,e^{-a|t|^p}\,dt.
\end{align}
Then we can rewrite \eqref{eq:shift} for $k=1,2$ as
\begin{align}
\mathbb{E}[X] &= C(\mu I_0 + I_1),\label{eq:EX-I}\\
\mathbb{E}[X^2] &= C(\mu^2 I_0 + 2\mu I_1 + I_2).\label{eq:EX2-I}
\end{align}
Now we just need to compute $I_0$, $I_1$, and $I_2$. By Lemma~\ref{lemma:I0integraleq}, Lemma~\ref{lemma:I1integraleq}, and Lemma~\ref{lemma:I2integraleq}, we have
\begin{align}
I_0&=\frac{1}{p}a^{-1/p}\Gamma\!\Big(\frac{1}{p}\Big)\Big(1+\operatorname{sgn}(\mu)\,P\!\Big(\frac{1}{p},t_0\Big)\Big) \\
I_1&=\frac{1}{p}a^{-2/p}\Gamma\!\Big(\frac{2}{p},t_0\Big) \\
I_2&=\frac{1}{p}a^{-3/p}\Gamma\!\Big(\frac{3}{p}\Big)\Big(1+\operatorname{sgn}(\mu)\,P\!\Big(\frac{3}{p},t_0\Big)\Big).
\end{align}
So we can substitute and get the expression for $\mathbb{E}[X]$ as
\begin{align}
    \mathbb{E}[X]&= C(\mu I_0 + I_1)\\
    &= C\mu\cdot \frac{1}{p}a^{-1/p}\Gamma\!\Big(\frac{1}{p}\Big)\Big(1+\operatorname{sgn}(\mu)P(1/p,t_0)\Big)+ C\cdot \frac{1}{p}a^{-2/p}\Gamma\!\Big(\frac{2}{p},t_0\Big) \\
    &=\frac{1}{2}\mu\Big(1+\operatorname{sgn}(\mu)P\!\Big(\frac{1}{p},t_0\Big)\Big)+\frac{1}{2}p^{1/p}\sigma\,\frac{\Gamma(2/p,t_0)}{\Gamma(1/p)}\\
    &=\frac{1}{2}\bigg[\mu\bigg(1+\operatorname{sgn}(\mu)P\bigg(\frac{1}{p},\frac{|\mu|^p}{p\sigma^p}\bigg)\bigg)+p^{1/p}\sigma\frac{\Gamma(2/p, |\mu|^p/(p\sigma^p))}{\Gamma(1/p)}\bigg]
\end{align}
Similarly, the second moment is given by
\begin{align}
    \mathbb{E}[X^2]&= C(\mu^2 I_0 + 2\mu I_1 + I_2)\\
    &=C\mu^2 I_0 + 2C\mu I_1 + C I_2\\
    &= C\mu^2\cdot \frac{1}{p}a^{-1/p}\Gamma\!\Big(\frac{1}{p}\Big)\Big(1+\operatorname{sgn}(\mu)P(1/p,t_0)\Big) + 2C\mu\cdot \frac{1}{p}a^{-2/p}\Gamma\!\Big(\frac{2}{p},t_0\Big)\\
    &+ C\cdot \frac{1}{p}a^{-3/p}\Gamma\!\Big(\frac{3}{p}\Big)\Big(1+\operatorname{sgn}(\mu)P(3/p,t_0)\Big)\\
    &=\frac{1}{2}\mu^2\Big(1+\operatorname{sgn}(\mu)P\!\Big(\frac{1}{p},t_0\Big)\Big)+\frac{1}{2}\Big(2\mu\,p^{1/p}\sigma\,\frac{\Gamma(2/p,t_0)}{\Gamma(1/p)}\Big)\\
    &+\frac{1}{2}p^{2/p}\sigma^2\frac{\Gamma(3/p)}{\Gamma(1/p)}\Big(1+\operatorname{sgn}(\mu)P\!\Big(\frac{3}{p},t_0\Big)\Big)\\
    &=\frac{1}{2}\bigg[\mu^2\bigg(1+\operatorname{sgn}(\mu)\,P\bigg(\frac{1}{p},\frac{|\mu|^p}{p\sigma^p}\bigg)\bigg)+2\mu p^{1/p}\sigma\frac{\Gamma(2/p, |\mu|^p/(p\sigma^p))}{\Gamma(1/p)}\\
    &+p^{2/p}\sigma^2\frac{\Gamma(3/p)}{\Gamma(1/p)}\bigg(1+\operatorname{sgn}(\mu)\,P \bigg(\frac{3}{p},\frac{|\mu|^p}{p\sigma^p}\bigg)\bigg)
\bigg],\\
\end{align}
Thus we have proven the expression.
\end{proof}

\begin{definition}[Gamma Functions]
If $u\ge 0$ and $b>-1$, then 
\begin{align}
\int_{0}^{u} t^{b} e^{-a t^p}\,dt
&=\frac{1}{p}\,a^{-(b+1)/p}\,\gamma\!\Big(\frac{b+1}{p},\,a u^p\Big),\label{eq:lower-gamma-id}\\
\int_{u}^{\infty} t^{b} e^{-a t^p}\,dt
&=\frac{1}{p}\,a^{-(b+1)/p}\,\Gamma\!\Big(\frac{b+1}{p},\,a u^p\Big),\label{eq:upper-gamma-id}
\end{align}
where $\gamma(\cdot,\cdot)$ and $\Gamma(\cdot,\cdot)$ are the lower and upper incomplete gamma functions. By definition, we also have
\begin{align}
P(s,t):=\frac{\gamma(s,t)}{\Gamma(s)},\qquad \Gamma(s,t)=\Gamma(s)-\gamma(s,t)=\Gamma(s)\big(1-P(s,t)\big).
\end{align}
\end{definition}

\begin{lemma}[$I_0$ Integral]\label{lemma:I0integraleq}
The $I_0$ integral in Proposition~\ref{proposition:meanandvarofrecgengauss} is given by
\begin{align}\label{eq:I0}
I_0=\frac{1}{p}a^{-1/p}\Gamma\!\Big(\frac{1}{p}\Big)\Big(1+\operatorname{sgn}(\mu)\,P\!\Big(\frac{1}{p},t_0\Big)\Big).
\end{align}
\end{lemma}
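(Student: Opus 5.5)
Since $e^{-a|t|^p}$ is even in $t$, I will split $I_0=\int_{-\mu}^{\infty}e^{-a|t|^p}\,dt$ at $t=0$ and handle the sign of $\mu$ by cases. The tail piece is always the same. By the upper-gamma identity \eqref{eq:upper-gamma-id} with $b=0$ and $u=0$,
\begin{align}
\int_{0}^{\infty}e^{-at^p}\,dt=\frac{1}{p}a^{-1/p}\Gamma\Big(\frac{1}{p}\Big),
\end{align}
since $\Gamma(s,0)=\Gamma(s)$.

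\textbf{Case $\mu>0$.} The interval $[-\mu,\infty)$ contains the extra piece $[-\mu,0]$. Substituting $t\mapsto -t$ and using evenness, this piece equals $\int_0^{\mu}e^{-at^p}\,dt$. By the lower-gamma identity \eqref{eq:lower-gamma-id} with $b=0$ and $u=\mu$, it equals $\frac{1}{p}a^{-1/p}\gamma(1/p,a\mu^p)$. Here $a\mu^p=|\mu|^p/(p\sigma^p)=t_0$. Factoring out $\frac{1}{p}a^{-1/p}\Gamma(1/p)$ and writing $\gamma/\Gamma=P$ gives the factor $1+P(1/p,t_0)$, which matches $\operatorname{sgn}(\mu)=+1$.

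\textbf{Case $\mu<0$.} The integral runs over $[|\mu|,\infty)$. By \eqref{eq:upper-gamma-id} with $u=|\mu|$, it equals $\frac{1}{p}a^{-1/p}\Gamma(1/p,t_0)=\frac{1}{p}a^{-1/p}\Gamma(1/p)\bigl(1-P(1/p,t_0)\bigr)$, using $\Gamma(s,t)=\Gamma(s)(1-P(s,t))$. This matches $\operatorname{sgn}(\mu)=-1$.

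\textbf{Case $\mu=0$.} Only the tail integral remains, and $\operatorname{sgn}(0)=0$ makes the formula consistent. So all three cases combine into \eqref{eq:I0}.

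The only real obstacle is checking that the identities \eqref{eq:lower-gamma-id}–\eqref{eq:upper-gamma-id} are valid as used. They follow from the substitution $s=at^p$, so $dt=\frac{1}{p}a^{-1/p}s^{1/p-1}\,ds$, and they need $p>0$ and $a>0$, both of which hold. The remaining care point is tracking the sign of $\mu$ in the lower limit, which the case split above handles.
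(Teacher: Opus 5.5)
Your proof is correct and matches the paper's argument. Both split at $t=0$ and use evenness together with the lower and upper incomplete-gamma identities when $\mu\ge 0$, and apply the upper identity alone on $[|\mu|,\infty)$ when $\mu<0$. The only cosmetic difference is that you treat $\mu=0$ as a separate case, while the paper folds it into $\mu\ge 0$, where $t_0=0$ gives $P(1/p,0)=0$.
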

\begin{proof}
If $\mu\ge 0$, then $-\mu\le 0$. So we can split the integral at $0$ and get:
\begin{align}
I_0=\int_{-\mu}^{0} e^{-a|t|^p}\,dt+\int_{0}^{\infty} e^{-a t^p}\,dt
=\int_{0}^{\mu} e^{-a s^p}\,ds+\int_{0}^{\infty} e^{-a t^p}\,dt.
\end{align}
Applying \eqref{eq:lower-gamma-id} with $b=0$ to the first term and \eqref{eq:upper-gamma-id} with $u=0$ to the second term gives us
\begin{align}
I_0=\frac{1}{p}a^{-1/p}\gamma\!\Big(\frac{1}{p},t_0\Big)+\frac{1}{p}a^{-1/p}\Gamma\!\Big(\frac{1}{p}\Big)
=\frac{1}{p}a^{-1/p}\Gamma\!\Big(\frac{1}{p}\Big)\Big(1+P\!\Big(\frac{1}{p},t_0\Big)\Big).    
\end{align}
Now if $\mu<0$, then $-\mu=|\mu|>0$ and we have
\begin{align}
I_0=\int_{|\mu|}^{\infty} e^{-a t^p}\,dt
=\frac{1}{p}a^{-1/p}\Gamma\!\Big(\frac{1}{p},t_0\Big)
=\frac{1}{p}a^{-1/p}\Gamma\!\Big(\frac{1}{p}\Big)\Big(1-P\!\Big(\frac{1}{p},t_0\Big)\Big).
\end{align}
Combining both cases, we arrive at
\begin{align}
I_0=\frac{1}{p}a^{-1/p}\Gamma\!\Big(\frac{1}{p}\Big)\Big(1+\operatorname{sgn}(\mu)\,P\!\Big(\frac{1}{p},t_0\Big)\Big).
\end{align}
\end{proof}

\begin{lemma}[$I_1$ Integral]\label{lemma:I1integraleq}
The $I_1$ integral in Proposition~\ref{proposition:meanandvarofrecgengauss} is given by
\begin{align}\label{eq:I1}
I_1=\frac{1}{p}a^{-2/p}\Gamma\!\Big(\frac{2}{p},t_0\Big).
\end{align}
\end{lemma}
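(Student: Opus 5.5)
We need $I_1=\int_{-\mu}^{\infty} t\,e^{-a|t|^p}\,dt$ with $a=1/(p\sigma^p)$ and $t_0=a|\mu|^p$. As in Lemma~\ref{lemma:I0integraleq}, we split into the cases $\mu\ge 0$ and $\mu<0$. Because the integrand $t\,e^{-a|t|^p}$ is odd, the sign of $\mu$ should drop out of the final formula.

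\emph{Case $\mu\ge 0$.} Split the integral at $0$ and substitute $s=-t$ on the negative part:
\begin{align}
I_1=\int_{-\mu}^{0} t\,e^{-a|t|^p}\,dt+\int_{0}^{\infty} t\,e^{-at^p}\,dt=-\int_{0}^{\mu} s\,e^{-as^p}\,ds+\int_{0}^{\infty} t\,e^{-at^p}\,dt.
\end{align}
The two pieces over $[0,\mu]$ cancel, which leaves $\int_{\mu}^{\infty} t\,e^{-at^p}\,dt$. Alternatively, apply \eqref{eq:lower-gamma-id} and \eqref{eq:upper-gamma-id} with $b=1$ directly. This gives
\begin{align}
I_1=\tfrac{1}{p}a^{-2/p}\bigl(\Gamma(2/p)-\gamma(2/p,t_0)\bigr)=\tfrac{1}{p}a^{-2/p}\Gamma(2/p,t_0),
\end{align}
where we used $a\mu^p=t_0$ and $\Gamma(s)=\gamma(s,t)+\Gamma(s,t)$.

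\emph{Case $\mu<0$.} Here $-\mu=|\mu|>0$, so the integral runs only over positive $t$. Then $I_1=\int_{|\mu|}^{\infty} t\,e^{-at^p}\,dt$, and \eqref{eq:upper-gamma-id} with $b=1$ and $u=|\mu|$ gives $\tfrac{1}{p}a^{-2/p}\Gamma(2/p,a|\mu|^p)$. This is the same expression.

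Combining the two cases yields \eqref{eq:I1} with no $\operatorname{sgn}(\mu)$ term, consistent with the oddness noted above. There is no real obstacle. The only points needing care are the sign in the reflection $t\mapsto -t$ on $[-\mu,0]$ and checking the conditions of the gamma identities ($b=1>-1$, $u\ge 0$). The boundary case $\mu=0$ is covered by the first case and gives $\Gamma(2/p,0)=\Gamma(2/p)$.
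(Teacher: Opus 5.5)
Your proof is correct and follows essentially the same route as the paper. Both split on the sign of $\mu$, split the integral at $0$ with the reflection $s=-t$ to reduce to $\int_{\mu}^{\infty} t\,e^{-at^p}\,dt$, and then apply the upper incomplete gamma identity with $b=1$ (directly so when $\mu<0$); your oddness remark and the alternative evaluation via $\Gamma(s)=\gamma(s,t)+\Gamma(s,t)$ are valid but not needed.
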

\begin{proof}
If $\mu\ge 0$, then $-\mu\le 0$. So we can split the integral at $0$ and get:
\begin{align}
I_1=\int_{-\mu}^{0} t e^{-a|t|^p}\,dt+\int_{0}^{\infty} t e^{-a t^p}\,dt.
\end{align}
On $[-\mu,0]$, we can substitute $s=-t$ to get
\begin{align}
\int_{-\mu}^{0} t e^{-a|t|^p}\,dt
=-\int_{0}^{\mu} s e^{-a s^p}\,ds.
\end{align}
So we have
\begin{align}
I_1=-\int_{0}^{\mu} s e^{-a s^p}\,ds+\int_{0}^{\infty} t e^{-a t^p}\,dt
=\int_{\mu}^{\infty} t e^{-a t^p}\,dt. 
\end{align}
Applying \eqref{eq:upper-gamma-id} with $b=1$, we have
\begin{align}
I_1=\frac{1}{p}a^{-2/p}\Gamma\!\Big(\frac{2}{p},t_0\Big).
\end{align}
Now if $\mu<0$, then $-\mu=|\mu|>0$ and we have
\begin{align}
I_1=\int_{|\mu|}^{\infty} t e^{-a t^p}\,dt=\frac{1}{p}a^{-2/p}\Gamma\!\Big(\frac{2}{p},t_0\Big).
\end{align}
Combining both cases, we arrive at
\begin{align}
I_1=\frac{1}{p}a^{-2/p}\Gamma\!\Big(\frac{2}{p},t_0\Big).
\end{align}
\end{proof}

\begin{lemma}[$I_2$ Integral]\label{lemma:I2integraleq}
The $I_2$ integral in Proposition~\ref{proposition:meanandvarofrecgengauss} is given by
\begin{align}\label{eq:I2}
I_2=\frac{1}{p}a^{-3/p}\Gamma\!\Big(\frac{3}{p}\Big)\Big(1+\operatorname{sgn}(\mu)\,P\!\Big(\frac{3}{p},t_0\Big)\Big).
\end{align}
\end{lemma}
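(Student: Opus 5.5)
The plan mirrors the proof of Lemma~\ref{lemma:I0integraleq}. I would split on the sign of $\mu$ and use the gamma-function identities \eqref{eq:lower-gamma-id} and \eqref{eq:upper-gamma-id} with $b=2$. The key structural fact is that $t^2 e^{-a|t|^p}$ is an even function of $t$. This is the same situation as $I_0$ and unlike $I_1$, so the contribution from negative $t$ adds to, rather than cancels, the contribution from positive $t$.

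\textbf{Case $\mu\ge 0$.} Here $-\mu\le 0$, and I would split $I_2=\int_{-\mu}^{0}t^2e^{-a|t|^p}\,dt+\int_0^\infty t^2 e^{-at^p}\,dt$. Substituting $s=-t$ in the first piece gives $\int_0^{\mu}s^2e^{-as^p}\,ds$, with no sign change since $s^2=t^2$. Applying \eqref{eq:lower-gamma-id} with $b=2$ and $u=\mu$ turns this into $\frac1p a^{-3/p}\gamma(3/p,t_0)$, using $a\mu^p=t_0$. Applying \eqref{eq:upper-gamma-id} with $b=2$ and $u=0$ turns the second piece into $\frac1p a^{-3/p}\Gamma(3/p)$. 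Factoring out $\Gamma(3/p)$ and using $P(s,t)=\gamma(s,t)/\Gamma(s)$ yields $\frac1p a^{-3/p}\Gamma(3/p)\bigl(1+P(3/p,t_0)\bigr)$.

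\textbf{Case $\mu<0$.} Now the lower limit is $-\mu=|\mu|>0$, so only the region $t>0$ contributes, and $I_2=\int_{|\mu|}^\infty t^2e^{-at^p}\,dt$. Applying \eqref{eq:upper-gamma-id} with $u=|\mu|$ gives $\frac1p a^{-3/p}\Gamma(3/p,t_0)$. The relation $\Gamma(s,t)=\Gamma(s)(1-P(s,t))$ then rewrites this as $\frac1p a^{-3/p}\Gamma(3/p)\bigl(1-P(3/p,t_0)\bigr)$.

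\textbf{Combining and checking.} The two cases combine into the single formula with $\operatorname{sgn}(\mu)$. At $\mu=0$ both readings agree, since $P(3/p,0)=0$. There is no real obstacle here. The only points needing care are that $b=2>-1$, so the identities apply, and that the even symmetry makes the $[-\mu,0]$ piece positive. That sign is exactly what distinguishes $I_2$ (and $I_0$) from $I_1$.
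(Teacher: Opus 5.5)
Your proposal is correct and follows the paper's proof essentially step for step. You split at $0$ when $\mu\ge 0$, reflect the $[-\mu,0]$ piece via $s=-t$, apply \eqref{eq:lower-gamma-id} and \eqref{eq:upper-gamma-id} with $b=2$, and for $\mu<0$ use the upper identity with $u=|\mu|$ together with $\Gamma(s,t)=\Gamma(s)(1-P(s,t))$. Your check that the $\mu=0$ case is consistent because $P(3/p,0)=0$ makes explicit what the paper leaves implicit.
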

\begin{proof}
If $\mu\ge 0$, then $-\mu\le 0$. So we can split the integral at $0$ and get:
\begin{align}
I_2=\int_{-\mu}^{0} t^2 e^{-a|t|^p}\,dt+\int_{0}^{\infty} t^2 e^{-a t^p}\,dt
=\int_{0}^{\mu} s^2 e^{-a s^p}\,ds+\int_{0}^{\infty} t^2 e^{-a t^p}\,dt.
\end{align}
Apply \eqref{eq:lower-gamma-id} with $b=2$ to the first term and the full gamma integral to the second term, we have:
\begin{align}
I_2=\frac{1}{p}a^{-3/p}\gamma\!\Big(\frac{3}{p},t_0\Big)+\frac{1}{p}a^{-3/p}\Gamma\!\Big(\frac{3}{p}\Big)
=\frac{1}{p}a^{-3/p}\Gamma\!\Big(\frac{3}{p}\Big)\Big(1+P\!\Big(\frac{3}{p},t_0\Big)\Big).
\end{align}
Now if $\mu<0$, then $-\mu=|\mu|>0$ and we have
\begin{align}
I_2=\int_{|\mu|}^{\infty} t^2 e^{-a t^p}\,dt
=\frac{1}{p}a^{-3/p}\Gamma\!\Big(\frac{3}{p},t_0\Big)
=\frac{1}{p}a^{-3/p}\Gamma\!\Big(\frac{3}{p}\Big)\Big(1-P\!\Big(\frac{3}{p},t_0\Big)\Big).
\end{align}
Combining both cases, we arrive at
\begin{align}
I_2=\frac{1}{p}a^{-3/p}\Gamma\!\Big(\frac{3}{p}\Big)\Big(1+\operatorname{sgn}(\mu)\,P\!\Big(\frac{3}{p},t_0\Big)\Big).
\end{align}
\end{proof}

\subsection{Simulation Techniques for Rectified Generalized Gaussian}\label{sec:simulationmethodsforrecgengausssec}

In \cref{alg:samplingrectifiedgeneralizedgaussian}, we show how to sample from the Rectified Generalized Gaussian distribution. 

\begin{algorithm}[tb]
  \caption{Simulation of the Rectified Generalized Gaussian Random Variables $\mathcal{RGN}_p(\mu,\sigma)$}
  \label{alg:samplingrectifiedgeneralizedgaussian}
  \begin{algorithmic}
    \STATE {\bfseries Input:} $\ell_p$ parameter $p>0$, location $\mu\in\mathbb{R}$, scale $\sigma>0$
    \STATE {\bfseries Output:} sample $Y \sim \mathcal{RGN}_p(\mu,\sigma)$
    \STATE Sample $S \sim \mathrm{Unif}\{-1,+1\}$
    \STATE Sample $G \sim \mathrm{Gamma}\!\left(\text{shape}=\frac{1}{p},\,\text{rate}=1\right)$ 
    \STATE Set $X \gets \mu + \sigma \, S \cdot (pG)^{1/p}$ 
    \STATE Set $Y \gets \max(0, X)$ 
    \STATE \textbf{return} $Y$
  \end{algorithmic}
\end{algorithm}

\begin{algorithm}[tb]
\caption{Bisection Search for the Scale Parameter $\sigma$ for Rectified Generalized Gaussian with Unit Variance}
\label{alg:bisection_sigma}
\begin{algorithmic}
\STATE {\bfseries Input:} $\ell_p$ parameter $p>0$, location $\mu\in\mathbb{R}$, tolerance $\varepsilon>0$
\STATE {\bfseries Output:} scale $\sigma^\star>0$ such that $\operatorname{Var}(\mathcal{RGN}_p(\mu,\sigma^\star))\approx 1$ \COMMENT{$\operatorname{Var}(\mathcal{RGN}_p(\mu,\sigma^\star))$ is defined in Proposition~\ref{proposition:meanandvarofrecgengauss}.}

\STATE Define $V(\sigma)\coloneqq \operatorname{Var}(\mathcal{RGN}_p(\mu,\sigma))$
\STATE Define $f(\sigma)\coloneqq V(\sigma)-1$
\STATE Choose initial bounds $\sigma_L>0$ and $\sigma_U>\sigma_L$ such that $f(\sigma_L)<0, f(\sigma_U)>0$
\REPEAT
    \STATE $\sigma_M \gets (\sigma_L+\sigma_U)/2$
    \IF{$f(\sigma_M)>0$}
        \STATE $\sigma_U \gets \sigma_M$
    \ELSE
        \STATE $\sigma_L \gets \sigma_M$
    \ENDIF
\UNTIL{$|\sigma_U-\sigma_L|\le \varepsilon$}

\STATE $\sigma^\star \gets (\sigma_L+\sigma_U)/2$
\STATE \textbf{return} $\sigma^\star$
\end{algorithmic}
\end{algorithm}

\section{Properties of Multivariate Generalized Gaussian, Truncated Generalized Gaussian, and Rectified Generalized Gaussian Distributions}\label{sec:additionaldetailsonmultivariaterecgengaussdistsec}

In the following section, we present additional definitions and properties of the Multivariate Generalized Gaussian (\cref{sec:multivargengaussdistsec}), Truncated Generalized Gaussian (\cref{sec:multivartruncatedgengaussdistsec}), and Rectified Generalized Gaussian distributions (\cref{sec:multivarrectifiedgengaussdistsec}). We further derive the expected $\ell_0$ norm for a Multivariate Rectified Generalized Gaussian distribution in \cref{proof:proofofexpectedl0normforrecgengauss}.

\subsection{Multivariate Case - Multivariate Generalized Gaussian}\label{sec:multivargengaussdistsec}

We consider the multivariate generalization \citep{GOODMAN1973204} as the joint distribution resulting from the product measure of independent and identically distributed (i.i.d.) Generalized Gaussian random variables, i.e. $\mathbf{x}\sim\prod_{i=1}^{d}\mathcal{GN}_p(\mu,\sigma)$ where $\mathbf{x}=(x_1,\dots,x_d)$ for each $x_i\sim\mathcal{GN}_p(\mu,\sigma)$. The probability density function is given by 
\begin{align}
    f_{\prod_{i=1}^{d}\mathcal{GN}_p(\mu,\sigma)}(\mathbf{x})=\frac{p^{d-d/p}}{(2\sigma)^d\Gamma(1/p)^d}\exp\bigg(-\frac{\|\mathbf{x}-\boldsymbol{\mu}\|_p^p}{p\sigma^p}\bigg)
\end{align}

Assume that $\mu=0$. \citet{Barthe_2005} show that $r^p:=\|\mathbf{x}\|_p^p\sim\Gamma(d/p,p\sigma^p)$ up to different notations. Also, $\mathbf{u}:=\mathbf{x}/\|\mathbf{x}\|_p$ follows the cone measure on the $\ell_p$ sphere $\mathbb{S}^{d-1}_{\ell_{p}}:=\{\mathbf{x}\in\mathbb{R}^d|\|\mathbf{x}\|_p=1\}$. It's shown that $\mathbf{x}=r\cdot\mathbf{u}$ and $r\perp \!\!\ \mathbf{u}$ \citep{Barthe_2005}. In fact, the cone measure is identical to the $(d-1)$-dimensional Hausdorff measure $\mathcal{H}^{d-1}$ (also called surface measure) when $p\in\{1,2,\infty\}$ \citep{alonsogutierrez2018gaussianfluctuationshighdimensionalrandom}. So if $A\subseteq \mathbb{S}^{d-1}_{\ell_{p}}$, then $p(\mathbf{u}\in A)=\mathcal{H}^{d-1}(A)/\mathcal{H}^{d-1}(\mathbb{S}^{d-1}_{\ell_{p}})$.

Thus for zero-mean Laplace ($p=1$) and zero-mean Gaussian ($p=2$), the distributions of $\mathbf{u}$ are the uniform distributions on the simplex $\Delta^{d-1}$ (or $\mathbb{S}^{d-1}_{\ell_1}$) and the standard Euclidean unit sphere $\mathbb{S}^{d-1}_{\ell_2}$ respectively. 

More generally, the Multivariate Generalized Gaussian distribution \citep{GOODMAN1973204} is a special case of the family of $p$-symmetric distributions \citep{fang1990symmetric} or $L_p$-norm spherical distributions \citep{GUPTA1997241}. The $L_p$-norm spherical distributions has density functions of the form $g(\|\mathbf
x\|_p^p)$ for $g:[0,\infty)\to[0,\infty)$. If $\mathbf{x}$ follows the $L_p$-norm spherical distribution, then $\|\mathbf{x\|}_p$ and $\mathbf{x}/\|\mathbf{x\|}_p$ are also independent with each other. 

There exist many other $L_p$-norm spherical distributions induced by the choice of the density generator function $g(\cdot)$ like $p$-generalized Weibull distribution, $L_p$-norm Pearson Type II distribution, $L_p$-norm Pearson Type VII distribution, $L_p$-norm multivariate t-distribution, $L_p$-norm multivariate Cauchy distributions etc \citep{GUPTA1997241}. We particularly choose the Generalized Gaussian distribution with the density generator function $g(\cdot)=\exp(\cdot)$ for the inevitable consequences of the exponential function as the maximum entropy solutions. We show this in Lemma~\ref{lemma:universalmaxentcontmultivariatelemma}. 

\subsection{Multivariate Case - Multivariate Truncated Generalized Gaussian} \label{sec:multivartruncatedgengaussdistsec}
Let $\mathbf{x}=(x_1,\dots,x_d)\sim\prod_{i=1}^{d}\mathcal{TGN}_p(\mu,\sigma, S)$ be a Multivariate Truncated Generalized Gaussian random vector where each $x_i\sim\mathcal{TGN}_{p}(\mu,\sigma,S)$. For our purposes, we only need $S=(0,\infty)$ and thus the product support is $(0,\infty)^d$.

We observe that the angular distribution $\mathbf{x}/\|\mathbf{x}\|_p$ is still uniform over the support after truncation to the positive orthant $[0,\infty)^d$ for $p\in\{1,2\}$. This is because truncation only rescales the density, which is already constant over the support. Due to the independence between $\|\mathbf{x}\|_p$ and $\mathbf{x}/\|\mathbf{x}\|_p$, the radial distribution is unchanged. Thus if $\mathbf{x}\sim\prod_{i=1}^{d}\mathcal{TGN}_{2.0}(0,\sigma, [0,\infty))$, then $\|\mathbf{x}\|_2^2\sim\Gamma(d/2,2\sigma^2)$ and $\mathbf{x}/\|\mathbf{x}\|_2\sim\operatorname{Unif}(\mathbb{S}^{d-1}_{\ell_2^{+}})$ where $\mathbb{S}^{d-1}_{\ell_p^+}:=\{\mathbf{x}\in\mathbb{R}^d\cap[0,\infty)^d|\|\mathbf{x}\|_p=1\}$ is the $\ell_p$ sphere confined to the positive orthant and $\operatorname{Unif}(\cdot)$ is uniform distribution over the $\ell_p$ sphere confined to the positive orthant. 

When $p=1.0$, the multivariate truncated Laplace distribution $\prod_{i=1}^{d}\mathcal{TGN}_{1.0}(0,\sigma, [0,\infty))$ reduces to the product of i.i.d exponential distribution. Thus $\|\mathbf{x}\|_1\sim\Gamma(d/1,\sigma)$ and $\mathbf{x}/\|\mathbf{x}\|_1$ is the Dirichlet distribution with all concentration parameters being $1$ on the simplex $\Delta^{d-1}$, which we also denote as $\mathbb{S}^{d-1}_{\ell_1^+}$ \citep{devroye2006nonuniform}. 

\subsection{Multivariate Case - Multivariate Rectified Generalized Gaussian}\label{sec:multivarrectifiedgengaussdistsec}

We denote $\mathbf{x}=(x_1,\dots,x_d)\sim\prod_{i=1}^{d}\mathcal{RGN}_p(\mu,\sigma)$ as a Multivariate Rectified Generalized Gaussian random vector where each $x_i\sim\mathcal{RGN}_{p}(\mu,\sigma)$. Contrary to the family of Truncated Generalized Gaussian distribution with smooth isotropic $\ell_p$ geometry, rectification collapses most of the samples in the interior of the positive orthant into an exponentially large family of lower-dimensional faces, inducing polyhedral conic geometry. In fact, the probability of the random vector being in the interior of the positive orthant $[0,\infty)^d$ is $(1-\Phi_{\mathcal{GN}_p(0,1)}(-\mu/\sigma))^d$, which decays to $0$ exponentially fast as $d\to\infty$. Thus in high dimensions, most of the rectified samples concentrates on the boundary of the positive orthant cone.

\subsection{Proof of Proposition~\ref{proposition:expectedl0normforrecgengauss} (Theoretical Expected $\ell_0$ Sparsity)}\label{proof:proofofexpectedl0normforrecgengauss}
\begin{proof}
    Let $\mathbf{z}\sim\prod_{i=1}^{d}\mathcal{GN}_{p}(\mu,\sigma)$ be a Generalized Gaussian random vector in $d$ dimensions and $\mathbf{x}=\text{ReLU}(\mathbf{z})$, or equivalently, $\mathbf{x}\sim\prod_{i=1}^{d}\mathcal{RGN}_{p}(\mu,\sigma)$. By construction, we have independence between dimensions. Thus
    \begin{align}
        \|\mathbf{x}\|_0=\sum_{i=1}^{d}\mathbbm{1}_{\mathbf{x}_i>0}=\sum_{i=1}^{d}\mathbbm{1}_{\mathbf{z}_i>0}
    \end{align}
    So we have the expectation given by
    \begin{align}
        \mathbb{E}[\|\mathbf{x}\|_0]=\sum_{i=1}^{d}\mathbb{E}[\mathbbm{1}_{\mathbf{z}_i>0}]=\sum_{i=1}^{d}\mathbb{P}(\mathbf{z}_i>0)=\sum_{i=1}^{d}\Phi_{\mathcal{GN}_p(0,1)}\bigg(\frac{\mu}{\sigma}\bigg)=d\cdot \Phi_{\mathcal{GN}_p(0,1)}\bigg(\frac{\mu}{\sigma}\bigg)
    \end{align}
    where the CDF defined in Definition~\ref{def:generalizedgaussiandist} evaluates to
    \begin{align}
        \Phi_{\mathcal{GN}_p(0,1)}\bigg(\frac{\mu}{\sigma}\bigg)=\frac{1}{2}\bigg(1+\operatorname{sgn}\bigg(\frac{\mu}{\sigma}\bigg)P\bigg(\frac{1}{p},\frac{|\mu/\sigma|^p}{p}\bigg)\bigg)
    \end{align}
    Thus
    \begin{align}
        \mathbb{E}[\|\mathbf{x}\|_0]=\frac{d}{2}\bigg(1+\operatorname{sgn}\bigg(\frac{\mu}{\sigma}\bigg)P\bigg(\frac{1}{p},\frac{|\mu/\sigma|^p}{p}\bigg)\bigg)
    \end{align}
\end{proof}

\begin{figure*}[t!]
    \centering
    \begin{subfigure}[t]{0.48\linewidth}
        \centering
        \includegraphics[width=\linewidth]{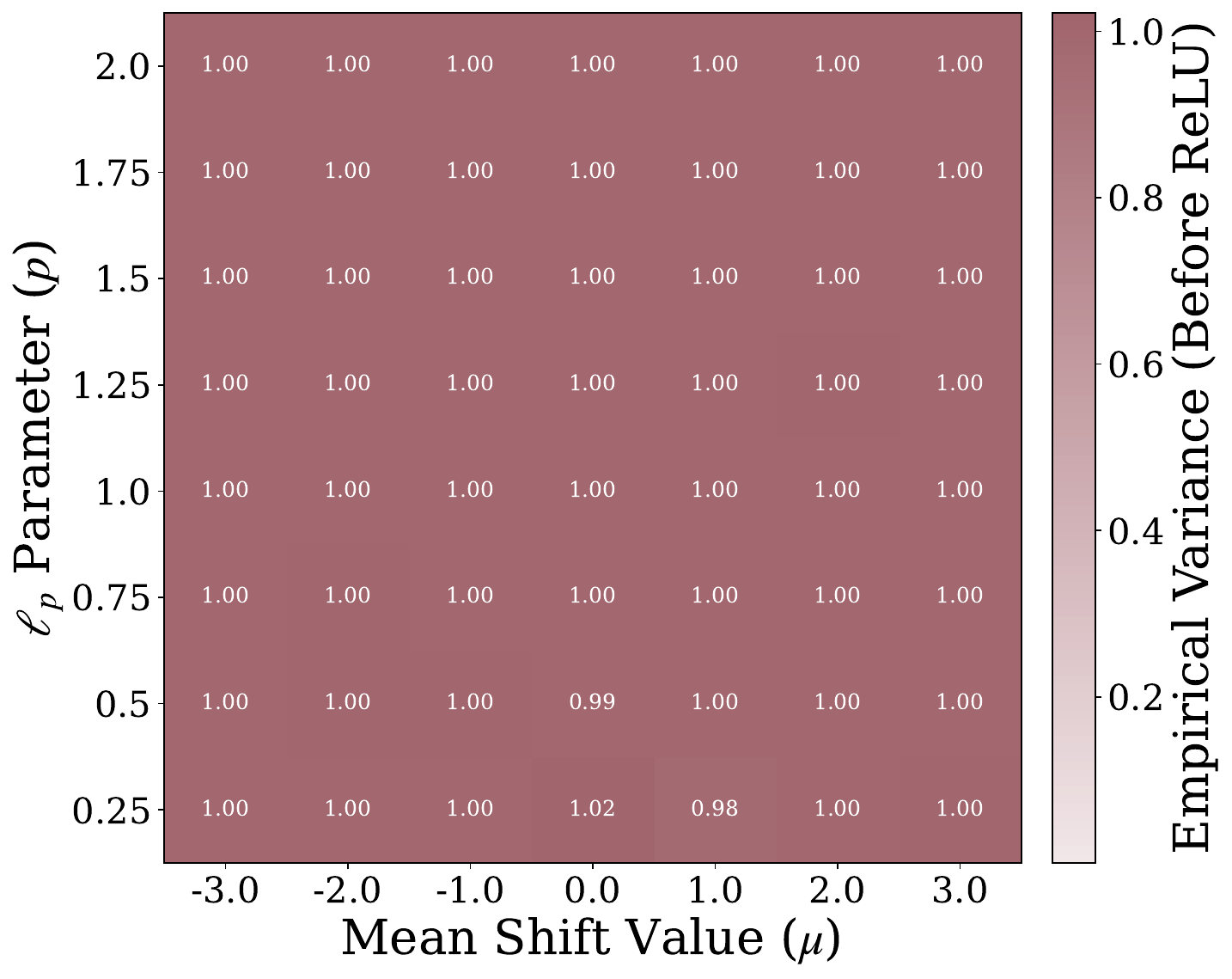}
        \captionsetup{justification=centering}
        \caption{Empirical Variance of Generalized Gaussian.}
        \label{fig:empvarbeforerelu}
    \end{subfigure}
    \hfill
    \begin{subfigure}[t]{0.48\linewidth}
        \centering
        \includegraphics[width=\linewidth]{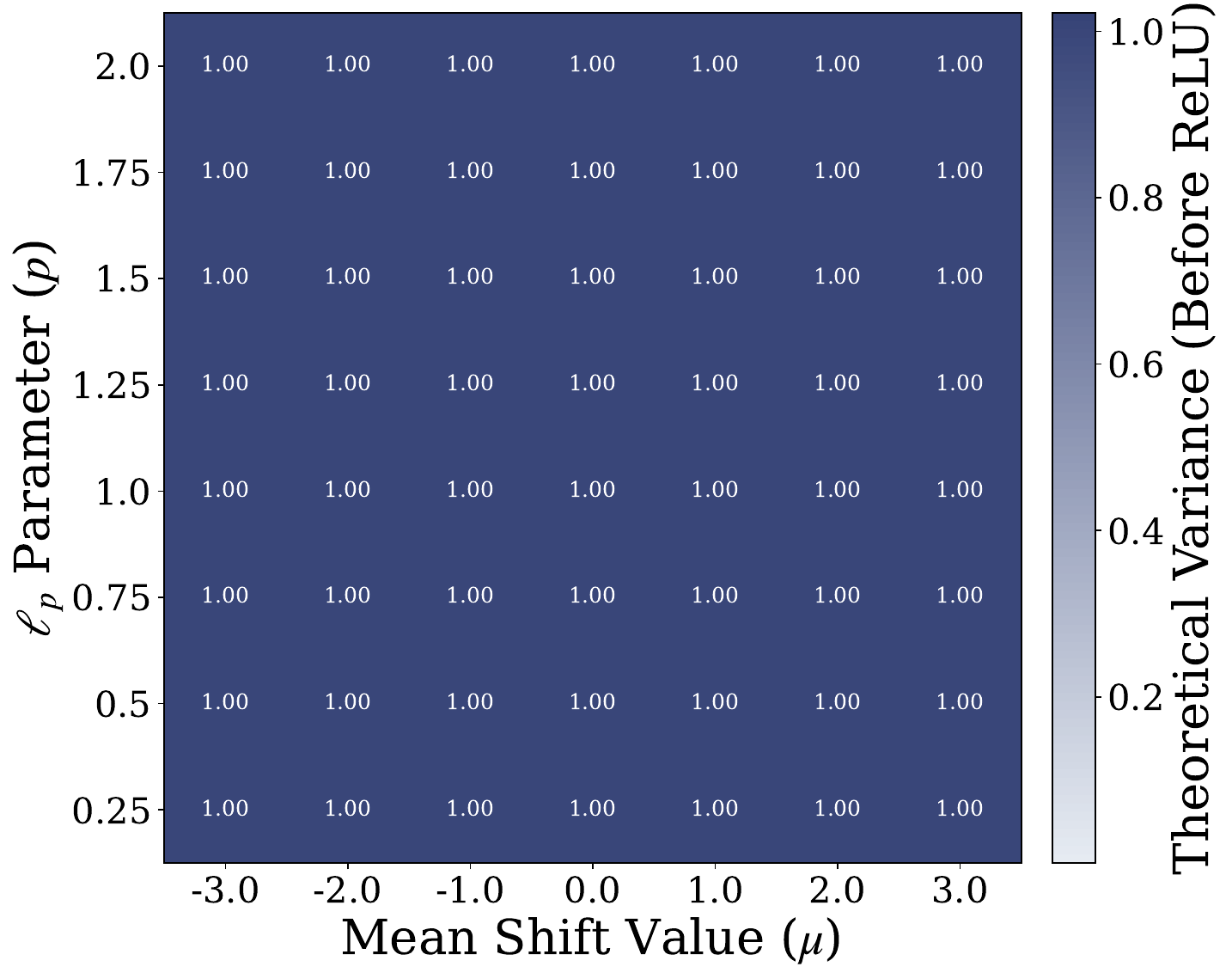}
        \captionsetup{justification=centering}
        \caption{Theoretical Variance of Generalized Gaussian.}
        \label{fig:thmvarbeforerelu}
    \end{subfigure}

    \vspace{0.5em} 

    \begin{subfigure}[t]{0.48\linewidth}
        \centering
        \includegraphics[width=\linewidth]{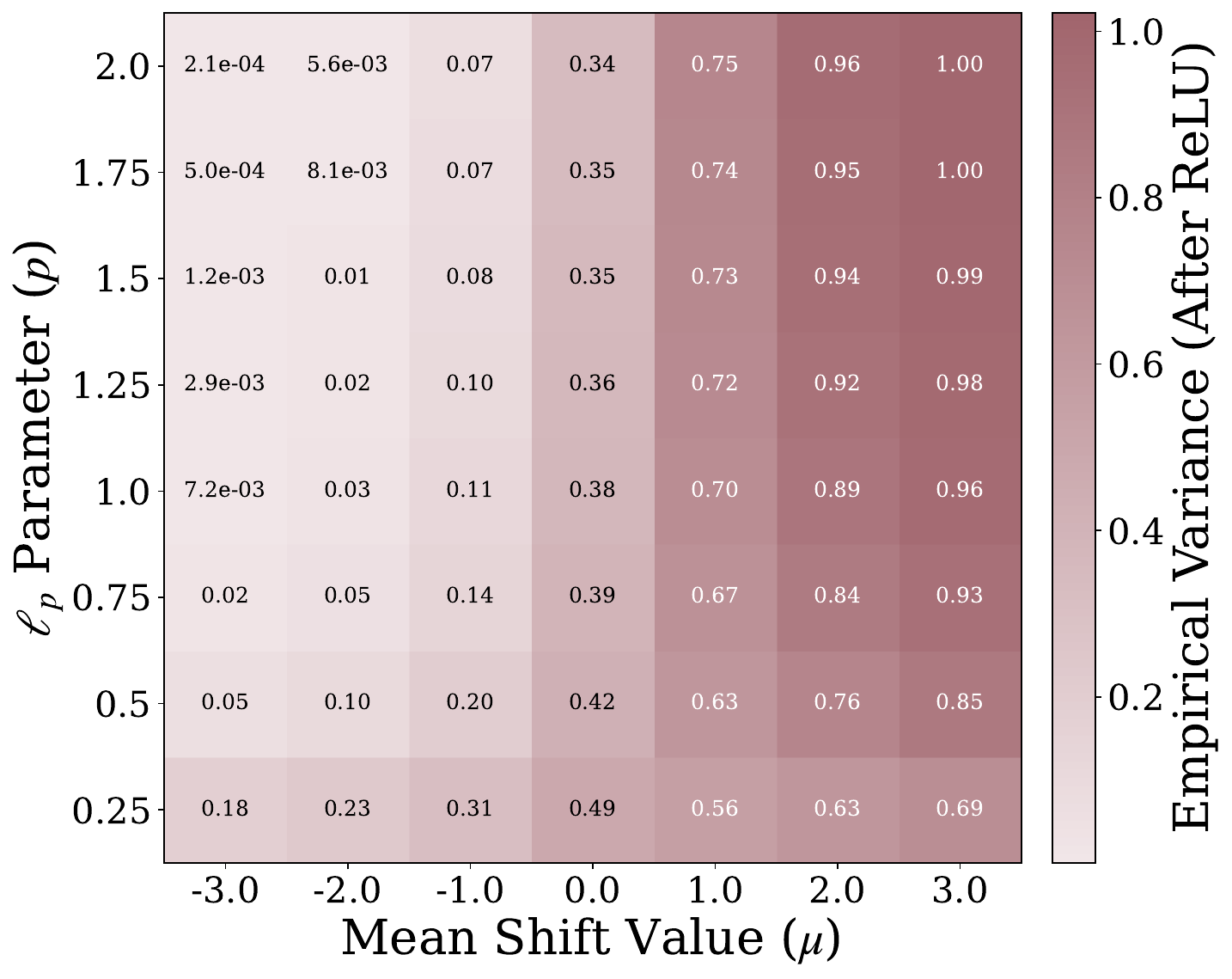}
        \captionsetup{justification=centering}
        \caption{Empirical Variance of Rectified Generalized Gaussian.}
        \label{fig:empvarafterrelu}
    \end{subfigure}
    \hfill
    \begin{subfigure}[t]{0.48\linewidth}
        \centering
        \includegraphics[width=\linewidth]{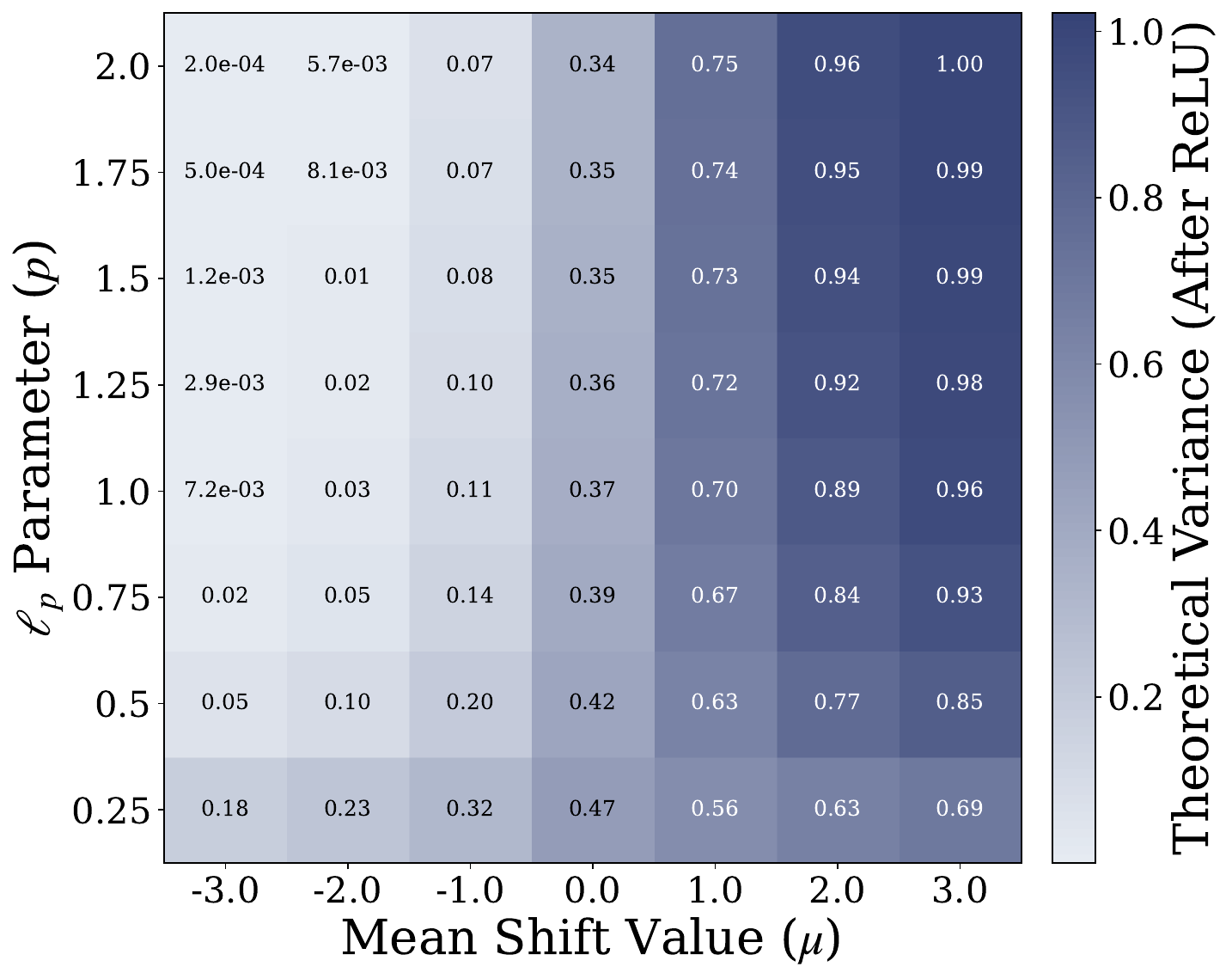}
        \captionsetup{justification=centering}
        \caption{Theoretical Variance of Rectified Generalized Gaussian.}
        \label{fig:thmvarafterrelu}
    \end{subfigure}

    \caption{\textbf{Variance of Generalized Gaussian Distribution and Rectified Generalized Gaussian distributions under the choice of $\sigma=\sigma_{\operatorname{GN}}$.} \textbf{Top row:} Variance of $x \sim \mathcal{GN}_p(\mu,\sigma_{\operatorname{GN}})$. (a) The empirical variance of $\mathcal{GN}_p(\mu,\sigma_{\operatorname{GN}})$. (b) The theoretical variance of $\mathcal{GN}_p(\mu,\sigma_{\operatorname{GN}})$ by evaluating \cref{eq:varofgengausseq}. \textbf{Bottom row:} Variance of $x \sim \mathcal{RGN}_p(\mu,\sigma_{\operatorname{GN}})$. (c) The empirical variance of $\mathcal{RGN}_p(\mu,\sigma_{\operatorname{GN}})$. (d) The theoretical variance of $\mathcal{RGN}_p(\mu,\sigma_{\operatorname{GN}})$ by evaluating \cref{eq:varofrecgengausseq}. The empirical variance in (a) and (c) are computed by i.i.d sampling $100000$ samples from 32 dimensions from either $\mathcal{GN}_p(\mu,\sigma_{\operatorname{GN}})$ or $\mathcal{RGN}_p(\mu,\sigma_{\operatorname{GN}})$. The per-dimension variance is estimated and we report the average variance across dimensions as a function of the mean shift value $\mu$ and the parameter $p$.}
    \label{fig:variance_ggn_relu_combined}
\end{figure*}

\begin{figure*}[t!]
    \centering
    \begin{subfigure}[t]{0.48\linewidth}
        \centering
        \includegraphics[width=\linewidth]{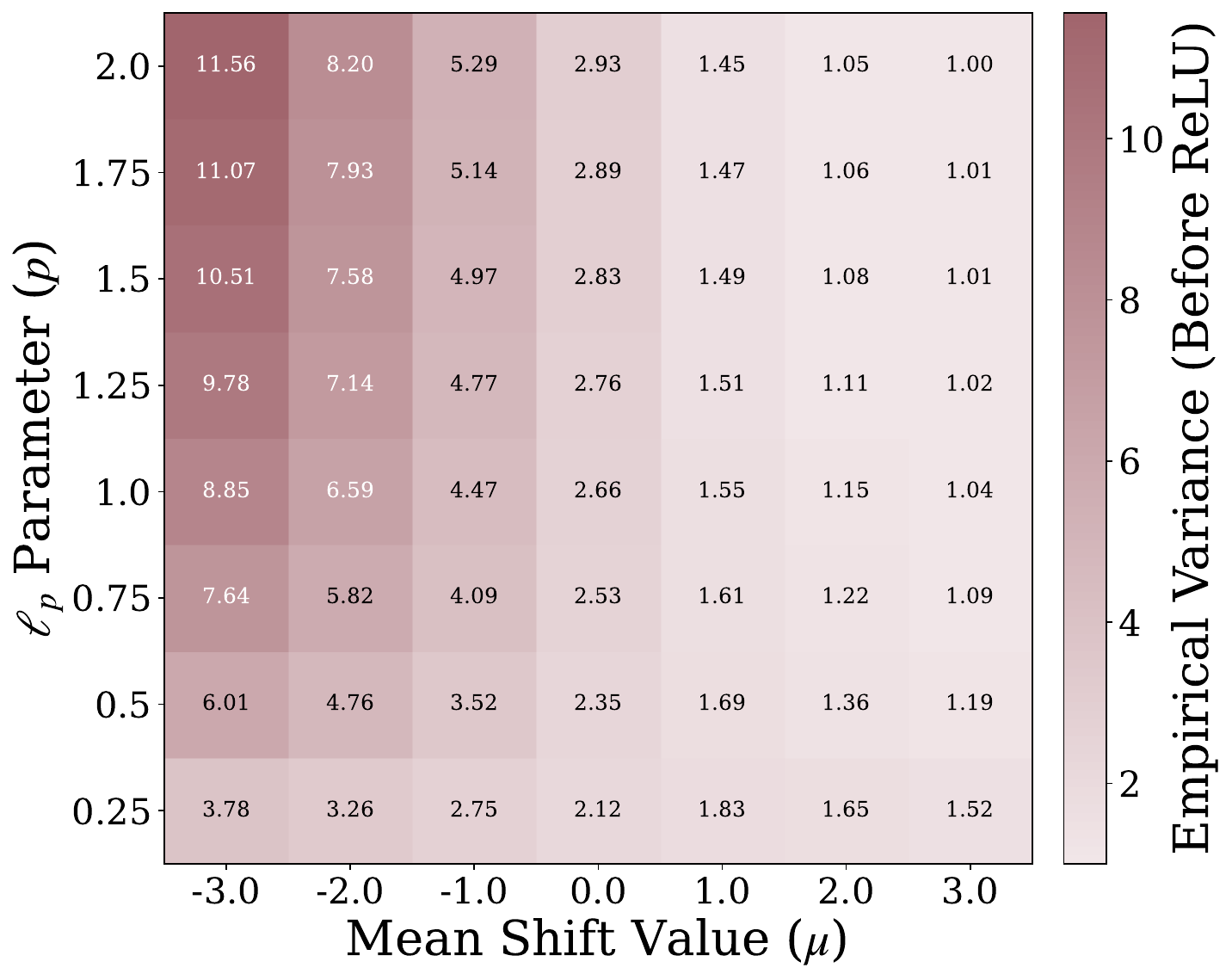}
        \captionsetup{justification=centering}
        \caption{Empirical Variance of Generalized Gaussian.}
        \label{fig:empvarbeforerelu_choice_2}
    \end{subfigure}
    \hfill
    \begin{subfigure}[t]{0.48\linewidth}
        \centering
        \includegraphics[width=\linewidth]{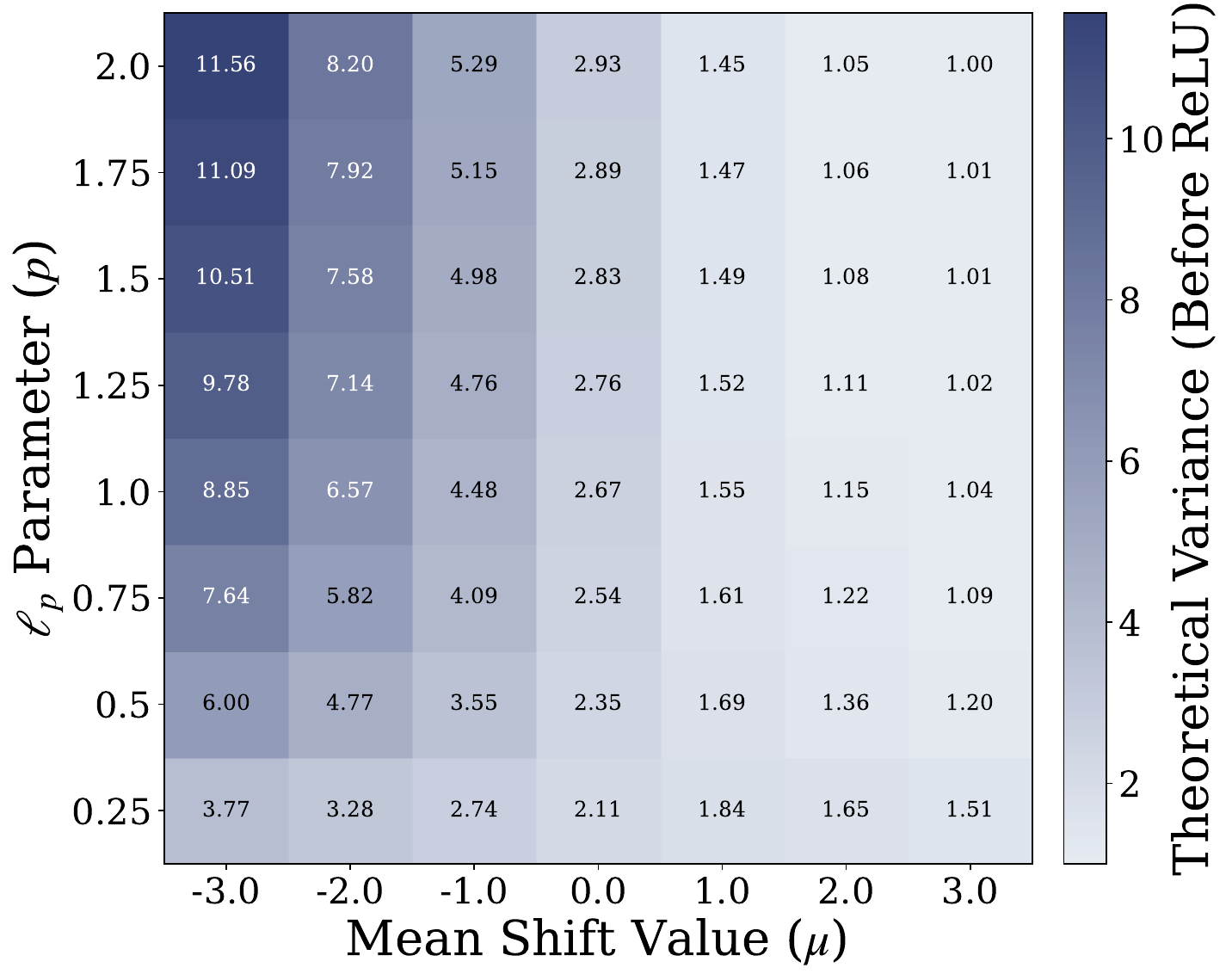}
        \captionsetup{justification=centering}
        \caption{Theoretical Variance of Generalized Gaussian.}
        \label{fig:thmvarbeforerelu_choice_2}
    \end{subfigure}

    \vspace{0.5em} 

    \begin{subfigure}[t]{0.48\linewidth}
        \centering
        \includegraphics[width=\linewidth]{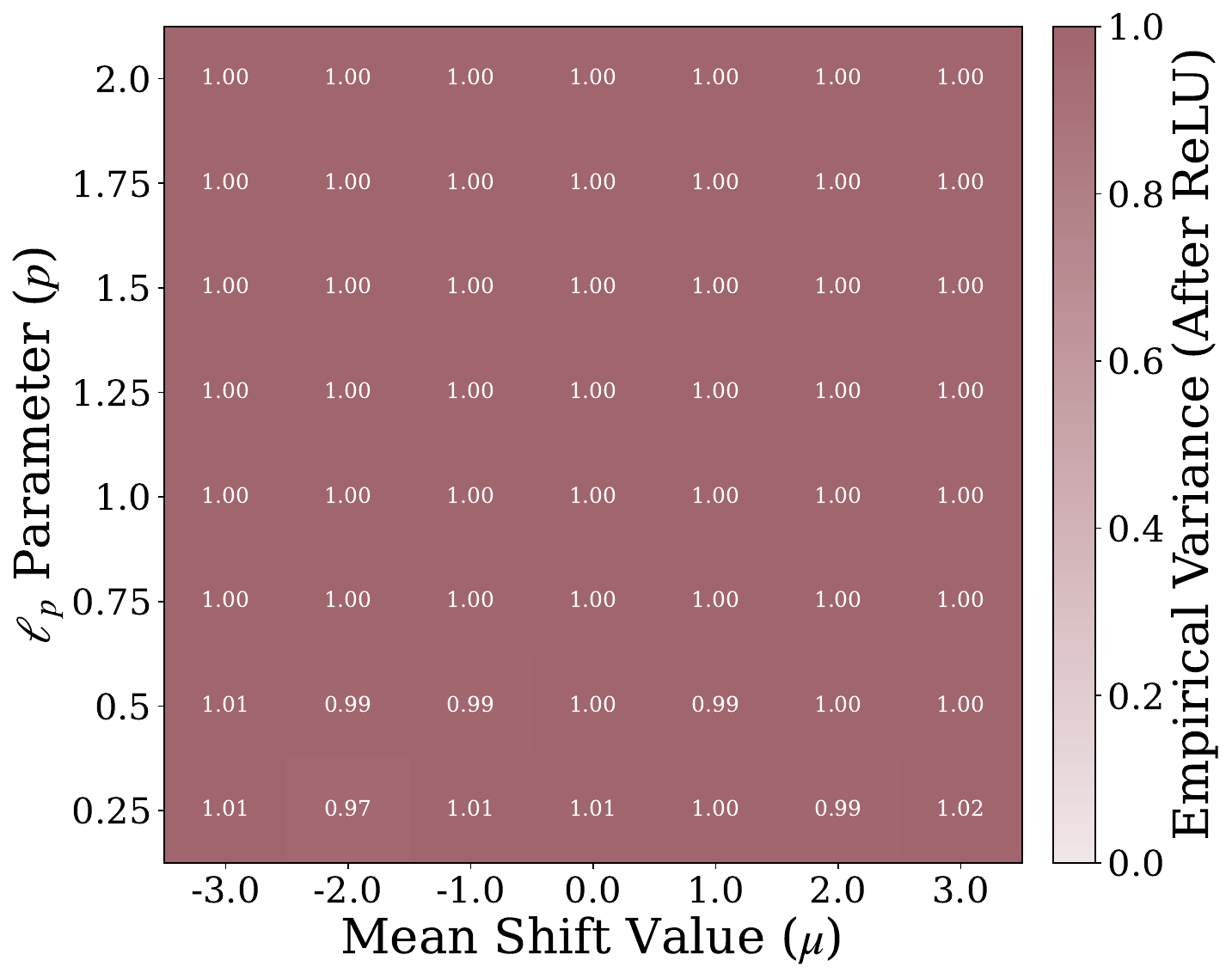}
        \captionsetup{justification=centering}
        \caption{Empirical Variance of Rectified Generalized Gaussian.}
        \label{fig:empvarafterrelu_choice_2}
    \end{subfigure}
    \hfill
    \begin{subfigure}[t]{0.48\linewidth}
        \centering
        \includegraphics[width=\linewidth]{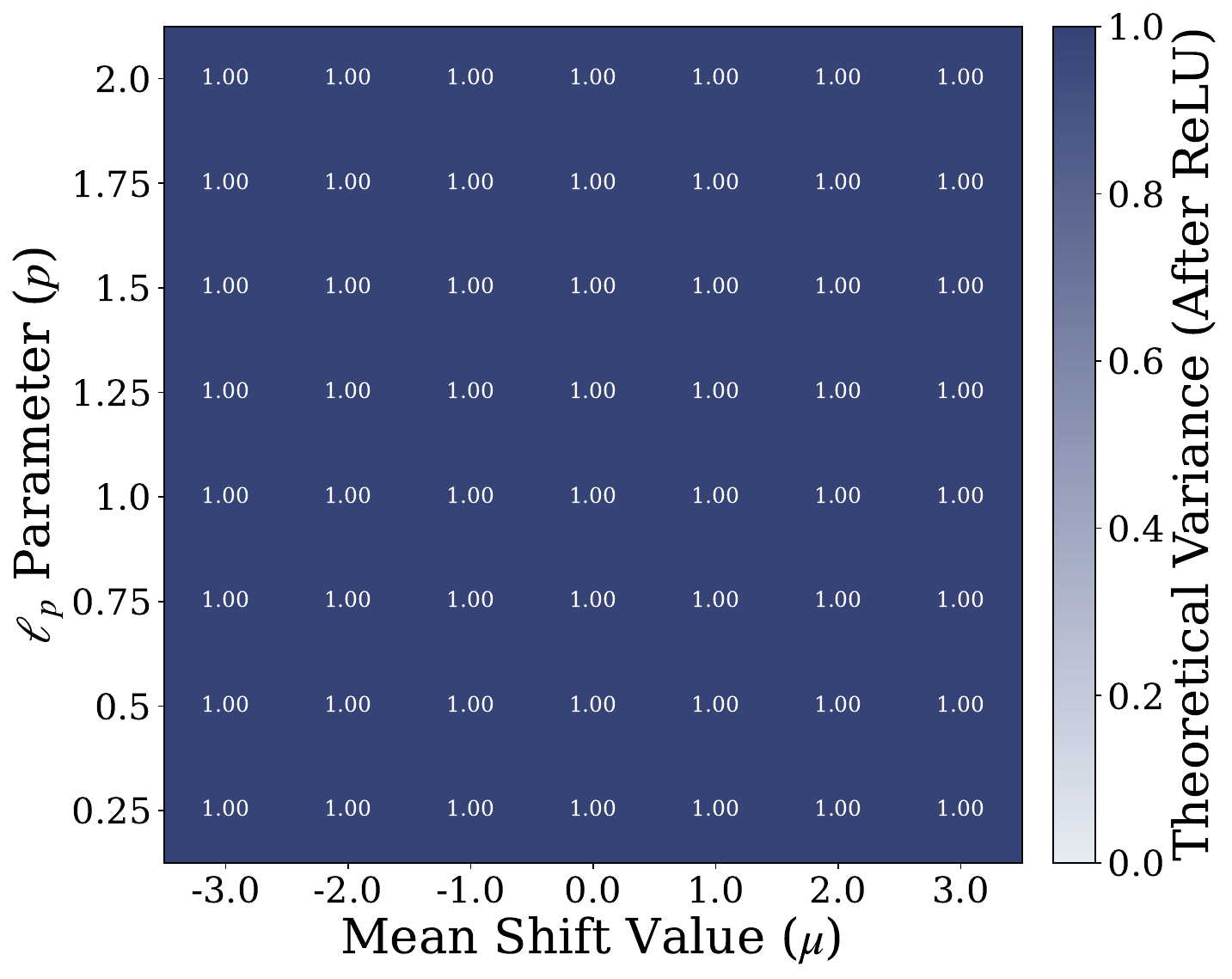}
        \captionsetup{justification=centering}
        \caption{Theoretical Variance of Rectified Generalized Gaussian.}
        \label{fig:thmvarafterrelu_choice_2}
    \end{subfigure}

    \caption{\textbf{Variance of Generalized Gaussian Distribution and Rectified Generalized Gaussian distributions under the choice of $\sigma=\sigma_{\operatorname{RGN}}$.} \textbf{Top row:} Variance of $x \sim \mathcal{GN}_p(\mu,\sigma_{\operatorname{RGN}})$. (a) The empirical variance of $\mathcal{GN}_p(\mu,\sigma_{\operatorname{RGN}})$. (b) The theoretical variance of $\mathcal{GN}_p(\mu,\sigma_{\operatorname{RGN}})$ by evaluating \cref{eq:varofgengausseq}. \textbf{Bottom row:} Variance of $x \sim \mathcal{RGN}_p(\mu,\sigma_{\operatorname{RGN}})$. (c) The empirical variance of $\mathcal{RGN}_p(\mu,\sigma_{\operatorname{RGN}})$. (d) The theoretical variance of $\mathcal{RGN}_p(\mu,\sigma_{\operatorname{RGN}})$ by evaluating \cref{eq:varofrecgengausseq}. The empirical variance in (a) and (c) are computed by i.i.d sampling $100000$ samples from 32 dimensions from either $\mathcal{GN}_p(\mu,\sigma_{\operatorname{RGN}})$ or $\mathcal{RGN}_p(\mu,\sigma_{\operatorname{RGN}})$. The per-dimension variance is estimated and we report the average variance across dimensions as a function of the mean shift value $\mu$ and the parameter $p$.}
    \label{fig:variance_ggn_relu_combined_choice_2}
\end{figure*}

\begin{figure*}[t!]
    \centering
    \begin{subfigure}[t]{0.48\linewidth}
        \centering
        \includegraphics[width=\linewidth]{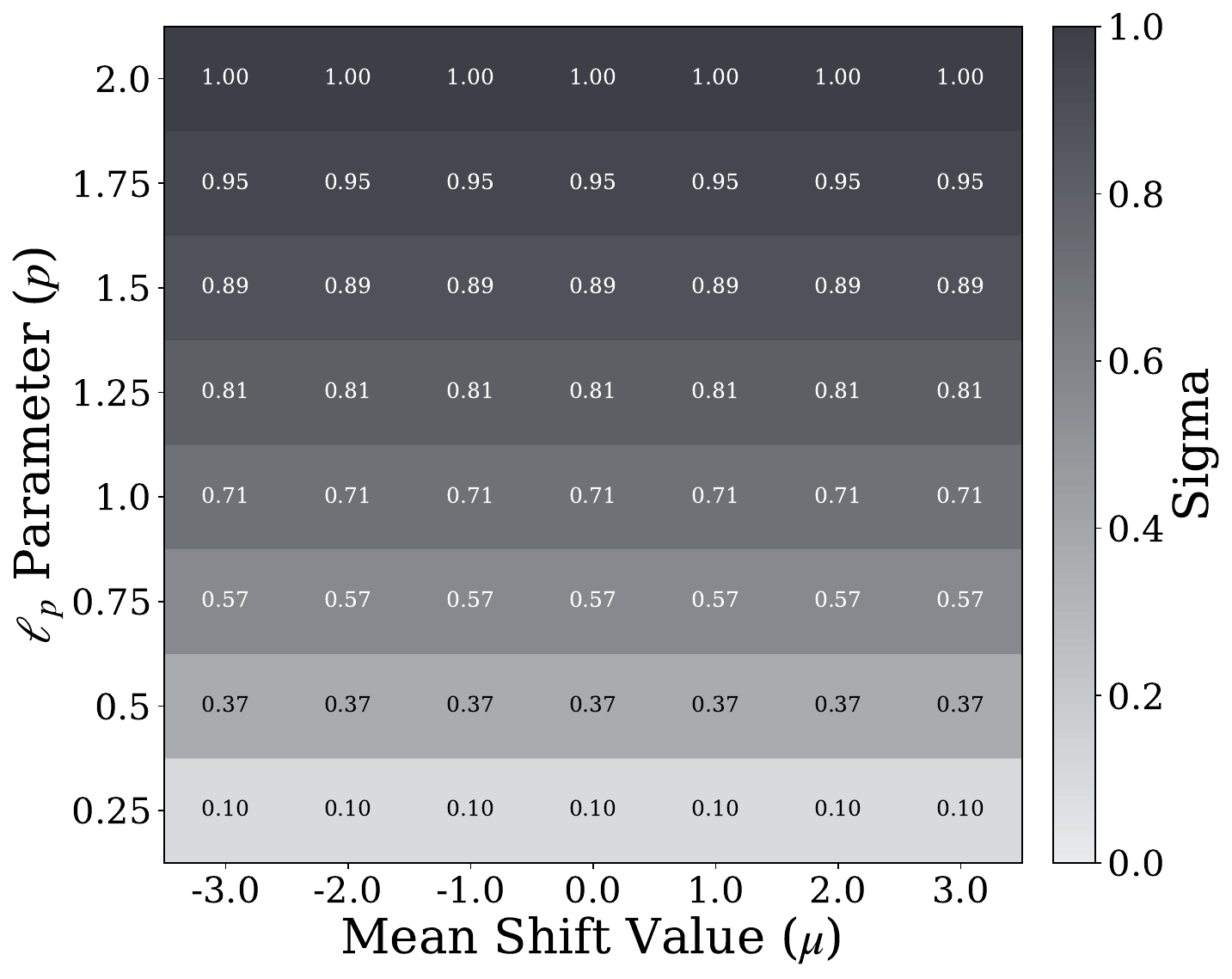}
        \captionsetup{justification=centering}
        \caption{Values of $\sigma_{\operatorname{GN}}$ across varying $\mu$ and $p$. }
        \label{fig:sigma_map_choices_1}
    \end{subfigure}
    \hfill
    \begin{subfigure}[t]{0.48\linewidth}
        \centering
        \includegraphics[width=\linewidth]{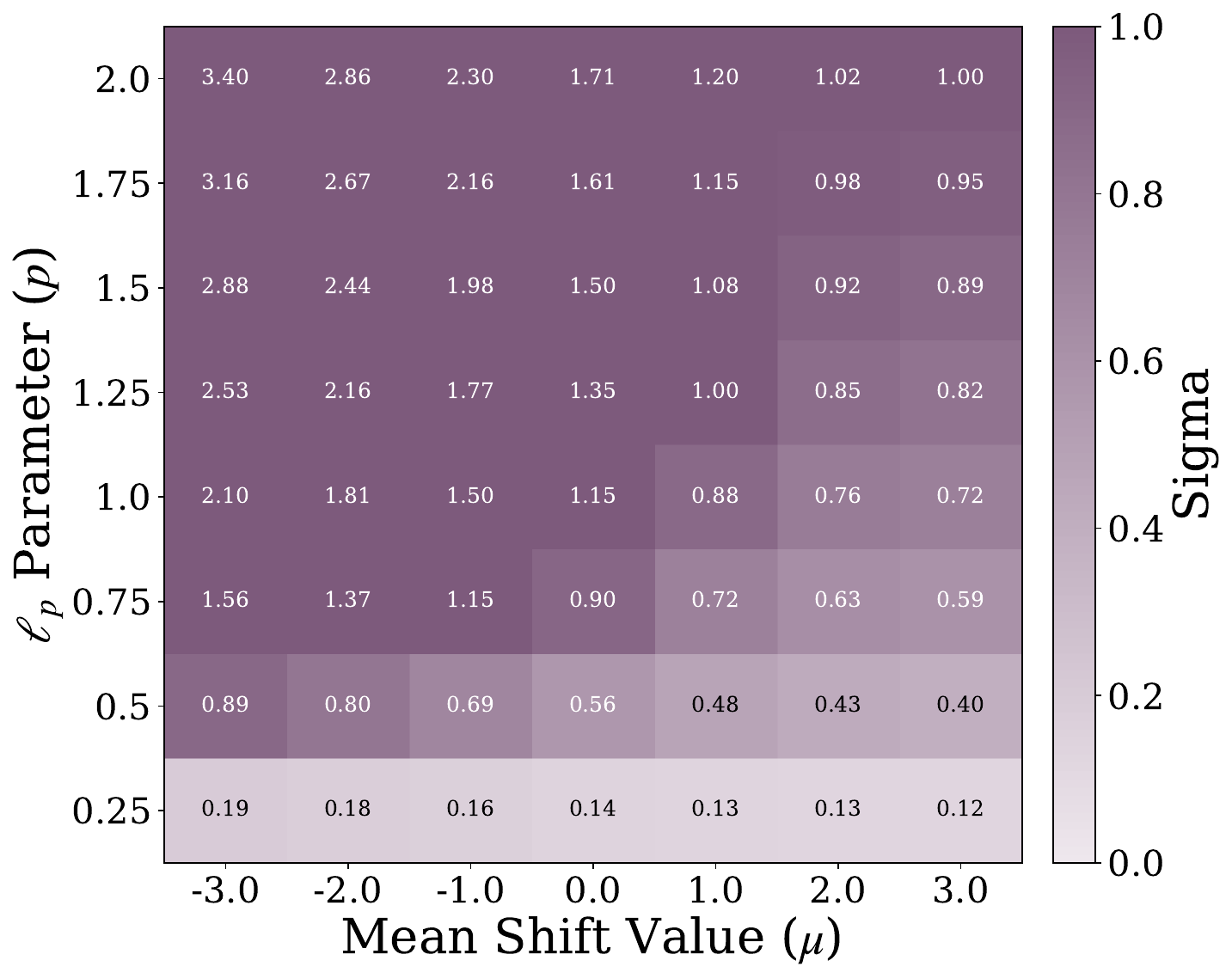}
        \captionsetup{justification=centering}
        \caption{Values of $\sigma_{\operatorname{RGN}}$ across varying $\mu$ and $p$. }
        \label{fig:sigma_map_choices_22}
    \end{subfigure}
    
    \caption{\textbf{Values of $\sigma_{\operatorname{GN}}$ and $\sigma_{\operatorname{RGN}}$ under Different Choices of $\mu$ and $p$}. (a) The values of $\sigma_{\operatorname{GN}}$ are invariant to the mean shift value $\mu$. (b) $\sigma_{\operatorname{RGN}}$ changes as a function of both $\mu$ and $p$.}
    \label{fig:sigma_maps_different_choices}
\end{figure*}

\section{Choice of $\sigma$ for Rectified Generalized Gaussian}\label{sec:dedicatedstudyonsigmasec}

In the following section, we investigate how we should pick the scale parameter $\sigma$ for the Rectified Generalized Gaussian distribution $\mathcal{RGN}_p(\mu,\sigma)$. We show that different choices of $\sigma$ leads to different per-dimension variance (\cref{sec:howdoessigmaaffectvar}), sparsity as measured by $\ell_0$ metrics (\cref{sec:howdoessigmaaffectsparsity}), and also the sparsity-performance tradeoffs (\cref{sec:howdoessigmaaffectsparsityandperf}). We also provide our final recommendation of $\sigma$ at the end of this section. 

\subsection{How does $\sigma$ affect the variance?}\label{sec:howdoessigmaaffectvar}

\cref{eq:varofgengausseq} and \cref{eq:varofrecgengausseq} are the closed form expression for the variance of the Generalized Gaussian $\mathcal{GN}_p(\mu,\sigma)$ and Rectified Genealized Gaussian distributions $\mathcal{RGN}_p(\mu,\sigma)$ respectively. 

To prevent feature collapses along each feature dimension, we always want non-zero variance and hence the target distribution should have non-zero variance as well. We consider two strategies of picking $\sigma$. 

First, we can set $\sigma=\sigma_{\text{GN}}=\Gamma(1/p)^{1/2}/(p^{1/p}\cdot\Gamma(3/p)^{1/2})$. In this case, the variance of the Generalized Gaussian distribution is fixed to be $1$, i.e. $\operatorname{Var}(\mathcal{GN}_p(\mu,\sigma_{\text{GN}}))=1$ for any $\mu$ and $p$. However, the variance of the Rectified Generalized Gaussian distribution under the choice of $\sigma_{\text{GN}}$ is no longer fixed. In \cref{fig:variance_ggn_relu_combined}, we plotted the variance of the Generalized Gaussian and the Rectified Generalized Gaussian distributions under the choice of $\sigma_{\text{GN}}$ with varying $\mu$ and $p$. We observe that the variance for the Generalized Gaussian distribution is indeed $1$, but the variance for the Rectified Generalized Gaussian distribution decreases as we increase $p$ and decrease $\mu$. In the worst case, the variance of the Rectified Gaussian distribution $\mathcal{RGN}_{2}(-3,\sigma_{\text{GN}})$ is around $0.0002$. 

Second, we can also pick $\sigma=\sigma_{\text{RGN}}$ such that the variance of the Rectified Generalized Gaussian distribution is $1$, i.e. $\operatorname{Var}(\mathcal{RGN}_p(\mu,\sigma_{\text{GN}}))=1$ for any $\mu$ and $p$. Since the closed form expression in \cref{eq:varofrecgengausseq} is complicated, we resort to using a bisection search algorithm (see \cref{alg:bisection_sigma}) to estimate $\sigma_{\text{RGN}}$. In \cref{fig:subfig_bisect_rate}, we observe that it only takes around $30$ iterations, invariant to choices of $\mu$ and $p$, to estimate $\sigma_{\text{RGN}}$ with bisection error below $10^{-10}$. We also only need to estimate $\sigma_{\text{RGN}}$ once for any $\mu$ and $p$.

In \cref{fig:variance_ggn_relu_combined_choice_2}, we reported the variance of the Generalized Gaussian and the Rectified Generalized Gaussian distributions if we choose $\sigma=\sigma_{\text{RGN}}$. Both theoretically and empirically, the variance of the Rectified Generalized Gaussian distribution is $1$. Under the choice of $\sigma_{\text{RGN}}$, we observe that the variance of the Generalized Gaussian distribution increases as we increase $p$ and decreases $\mu$. In the extreme case, the variance of the Gaussian distribution $\mathcal{GN}_{2}(-3,\sigma_{\text{RGN}})$ is around $11.56$.

In \cref{fig:sigma_maps_different_choices}, we also visualize values of $\sigma_{\text{GN}}$ and $\sigma_{\text{RGN}}$ under varying $\mu$ and $p$. We observe that none of the values of $\sigma$'s are extreme, and thus our sampling method (\cref{alg:samplingrectifiedgeneralizedgaussian}) for Rectified Generalized Gaussian also won't be subjected to extreme value multiplications. 

\subsection{How does $\sigma$ affect the sparsity?}\label{sec:howdoessigmaaffectsparsity}

Intuitively, it seems desirable to pick $\sigma_{\text{RGN}}$ over $\sigma_{\text{GN}}$ because the choice of $\sigma_{\text{RGN}}$ encourages the per-dimensional variance of the features to be $1$, which is desirable as we know from the results in VICReg \citep{bardes2022vicregvarianceinvariancecovarianceregularizationselfsupervised}. However, we observe that there is no simple free lunch here. Rectifications in general reduce variance by squashing negative values to be zeros, and enforcing the variance after rectifications being $1$ will reduce sparsity. 

In \cref{fig:l0normwithdiffsigmachoices}, we report the theoretical $\ell_0$ norms evaluated based on Proposition~\ref{proposition:expectedl0normforrecgengauss} and the empirical $\ell_0$ norms computed over pretrained Rectified LpJEPA features. The choice of $\sigma_{\text{RGN}}$ leads to reduced sparsity measured by increased normalized $\ell_0$ norms $(1/D)\cdot\mathbb{E}[\|\mathbf{x}\|_0]$ both theoretically and empirically. Interestingly, we note that for the choice of $\sigma_{\text{RGN}}$, the primary way to increase sparsity is to reduce $p$. If we choose $\sigma=\sigma_{\text{GN}}$, then sparsity is more easily induced by decreasing $\mu$, whereas varying $p$ only induces small gaps in the amount of sparsity both theoretically and empirically. 

\subsection{How does $\sigma$ affect performance?}\label{sec:howdoessigmaaffectsparsityandperf}

We have already observed in \cref{fig:l0normwithdiffsigmachoices} that for the same value of $\mu$ (more specifically, $\mu<0$ as we're interested in sparse representations) and $p$, choosing $\sigma_{\operatorname{GN}}$ always lead to sparser representations. However, we're rather more interested in whether the pareto frontier of sparsity-performance tradeoff induced by the choices of $\{\mu,\sigma_{\operatorname{GN}},p\}$ can be significantly different from that of $\{\mu,\sigma_{\operatorname{GN}},p\}$. In other words, we would like to know if the choices of $\sigma_{\operatorname{GN}}$ or $\sigma_{\operatorname{RGN}}$ can lead to systematically better sparsity-performance tradeoffs as we vary $\mu$ and $p$.

In \cref{fig:acc_vs_sparsity_diff_sigma_options}, we show that in fact there is again no free lunch here. We report CIFAR-100 validation accuracy of pretrained Rectified LpJEPA projector representations against different mean shift values $\mu$ (\cref{fig:acc_vs_mu}), $\ell_1$ sparsity metrics $(1/D)\cdot\mathbb{E}[\|\mathbf{z}\|_1^2/\|\mathbf{z}\|_2^2]$ (\cref{fig:acc_vs_l1}), and $\ell_0$ metrics $(1/D)\cdot\mathbb{E}[\|\mathbf{z}\|_0]$ (\cref{fig:acc_vs_l0}) across varying $p$. In general, $\{\mu,\sigma_{\operatorname{GN}},p\}$ has different sparsity patterns compared to $\{\mu,\sigma_{\operatorname{RGN}},p\}$, but the overall sparsity-performance tradeoffs are largely overlapped. Under $\ell_0$ metric, we actually observe that Rectified Laplace $\mathcal{RGN}_{1}(\mu, \sigma_{\operatorname{GN}})$ stands out as the setting that attains the best sparsity and accuracy tradeoff. Thus even if the choice of $\sigma_{\text{GN}}$ can lead to small variance as we show in \cref{fig:variance_ggn_relu_combined}, we still choose $\sigma_{\text{GN}}$ as the default scale parameter for our target Rectified Generalized Gaussian distribution.

\begin{figure*}[t!]
    \centering
    \begin{subfigure}[t]{0.3\linewidth}
        \centering
        \includegraphics[width=\linewidth]{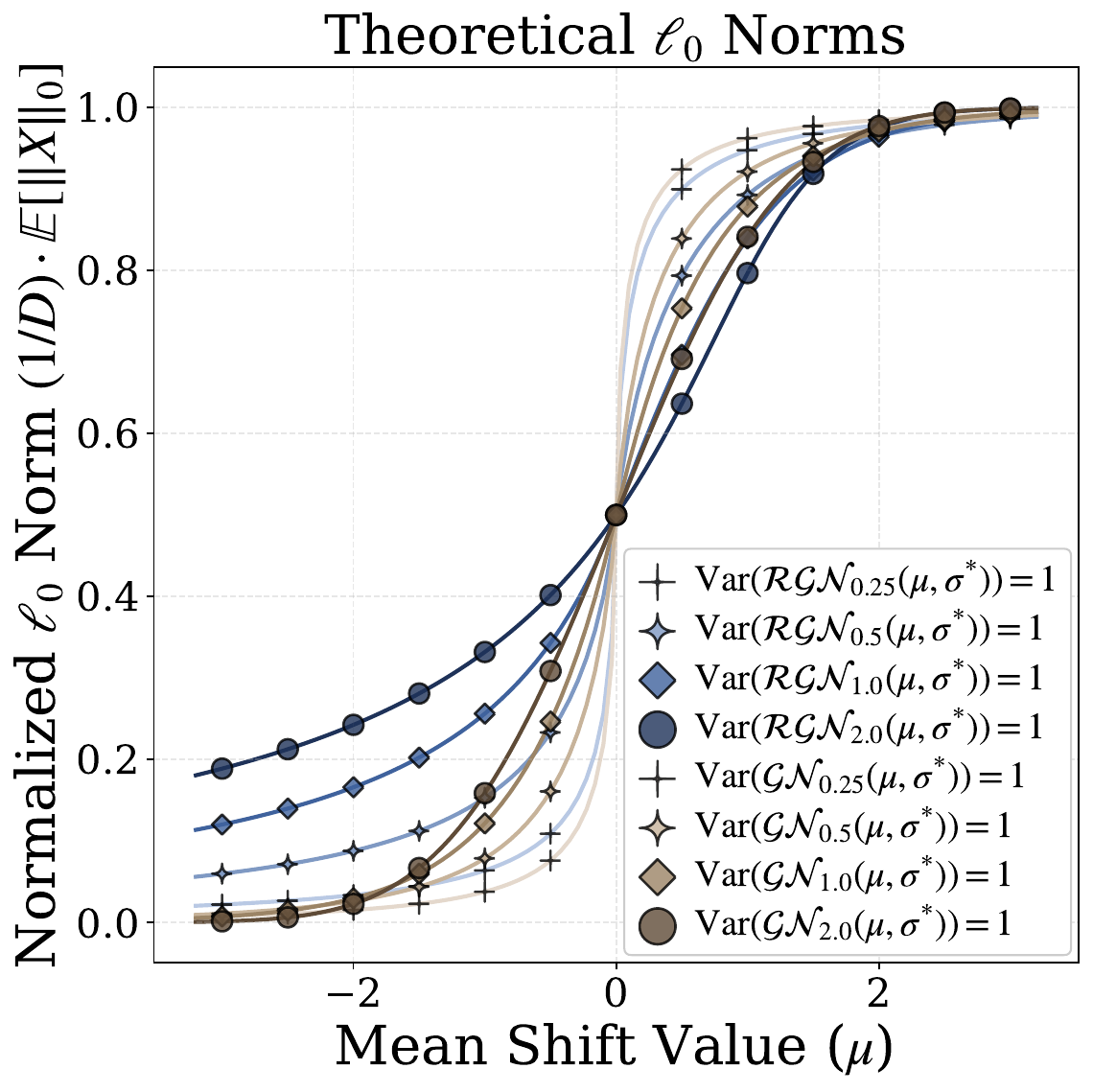}
        \captionsetup{justification=centering}
        \caption{Theoretical $\ell_0$ Norms Under Different Choices of $\sigma^{*}$}
        \label{fig:theo_l0_norm_fig}
    \end{subfigure}
    \hfill
    \begin{subfigure}[t]{0.3\linewidth}
        \centering
        \includegraphics[width=\linewidth]{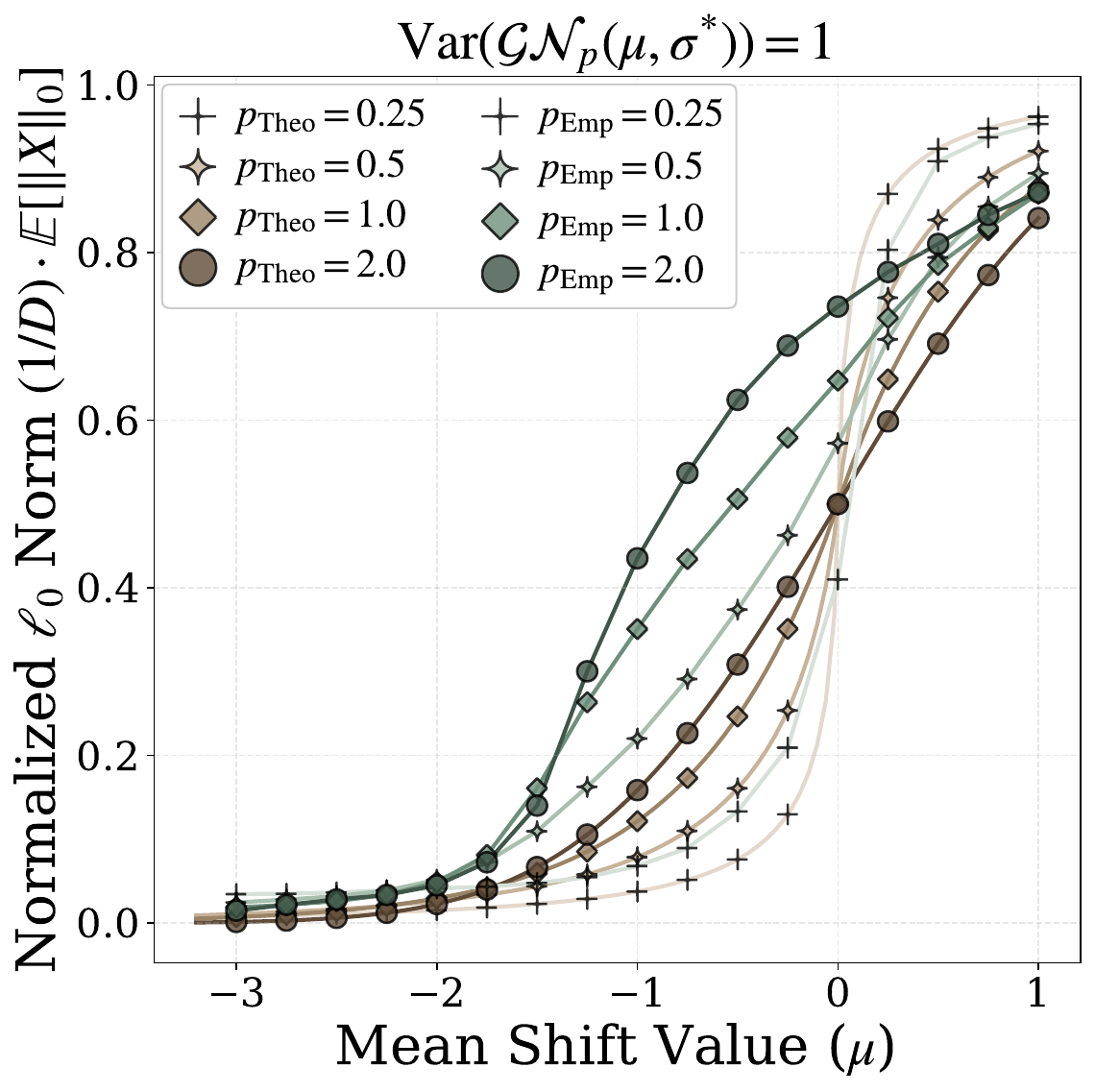}
        \captionsetup{justification=centering}
        \caption{Theoretical and Empirical of $\ell_0$ Norms under the Choice of $\sigma_{\operatorname{GN}}$}
        \label{fig:l0_sigma_choices_1}
    \end{subfigure}
    \hfill
    \begin{subfigure}[t]{0.3\linewidth}
        \centering
        \includegraphics[width=\linewidth]{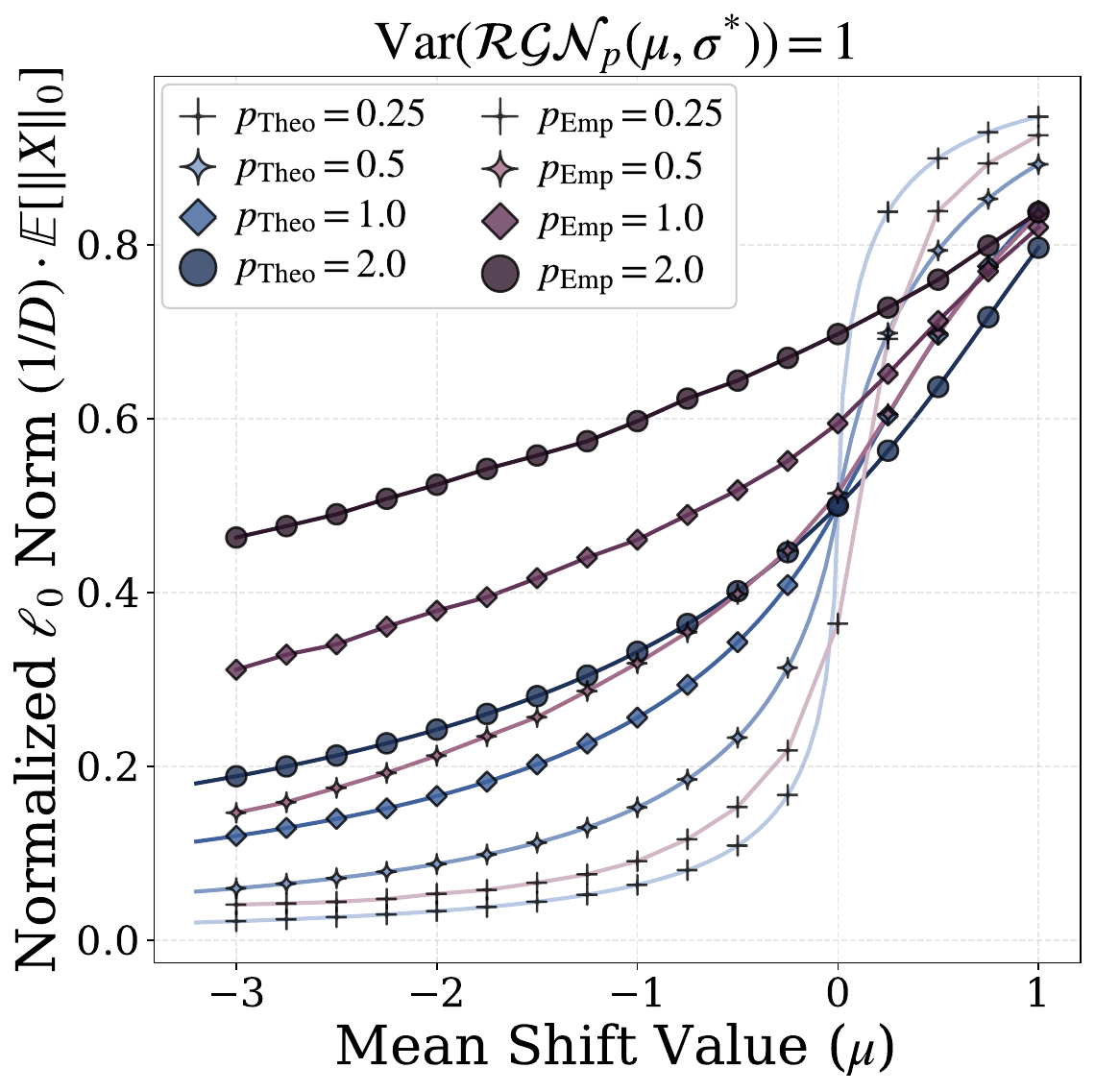}
        \captionsetup{justification=centering}
        \caption{Theoretical and Empirical of $\ell_0$ Norms under the Choice of $\sigma_{\operatorname{RGN}}$}
        \label{fig:l0_sigma_choices_2}
    \end{subfigure}
    \caption{\textbf{The theoretical and empirical normalized $\ell_0$ norms under Different Choices of $\sigma^*$.} (a) We report the theoretical $\ell_0$ norms based on Proposition~\ref{proposition:expectedl0normforrecgengauss} for $\sigma^*\in\{\sigma_{\operatorname{GN}},\sigma_{\operatorname{RGN}}\}$ for varying $\mu$ and $p$. (b) The empirical $\ell_0$ norms of pretrained Rectified LpJEPA features are measured against the theoretical $\ell_0$ norms of the target Rectified Generalized Gaussian distribution $\mathcal{RGN}_p(\mu,\sigma_{\operatorname{GN}})$ for varying $\mu$ and $p$. (c) We plot the empirical $\ell_0$ norms of pretrained Rectified LpJEPA features against the theoretical $\ell_0$ norms of the target Rectified Generalized Gaussian distribution $\mathcal{RGN}_p(\mu,\sigma_{\operatorname{RGN}})$ for varying $\mu$ and $p$.}
    \label{fig:l0normwithdiffsigmachoices}
\end{figure*}

\begin{figure*}[t!]
    \centering
    \begin{subfigure}[t]{0.3\linewidth}
        \centering
        \includegraphics[width=\linewidth]{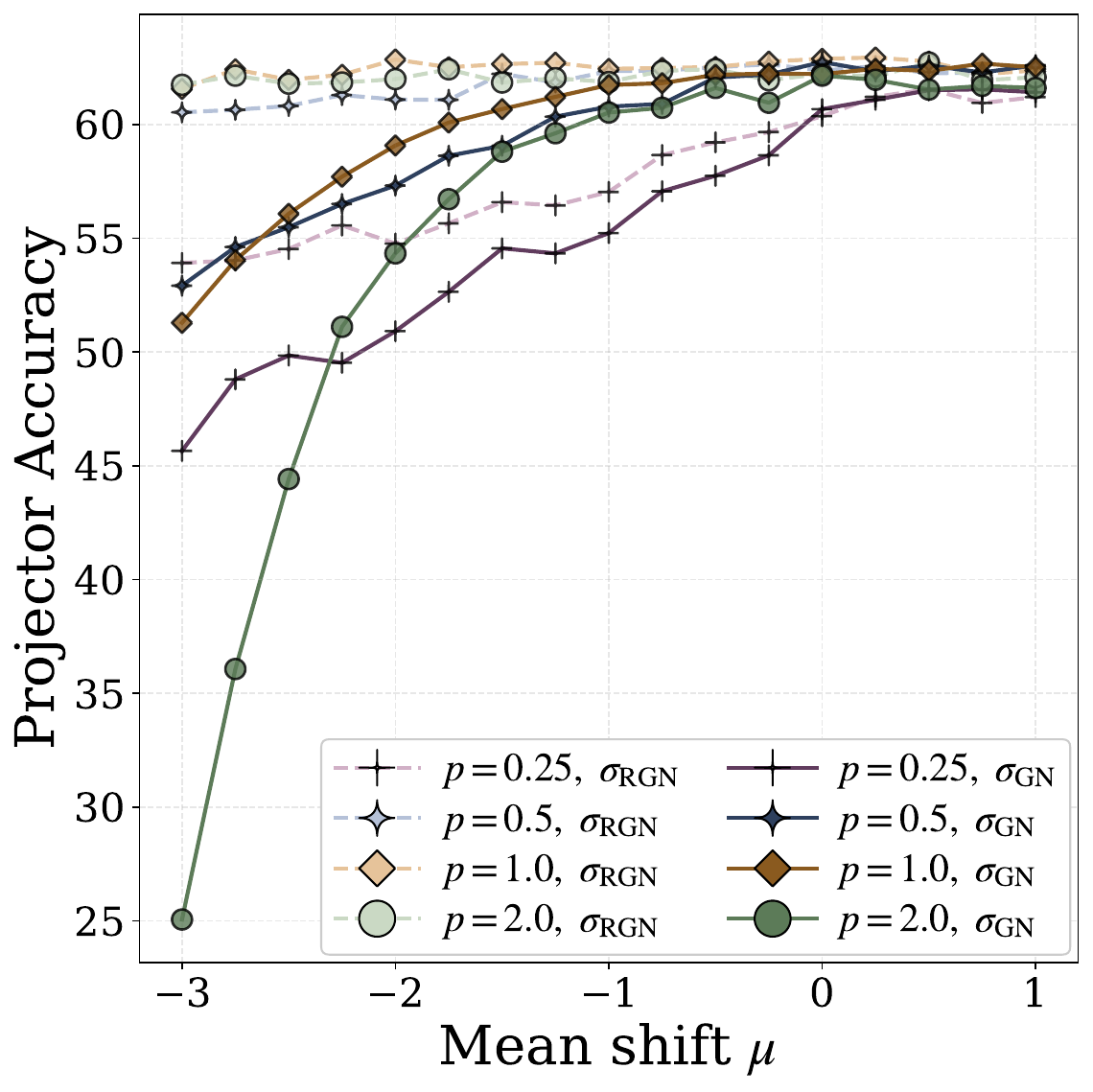}
        \captionsetup{justification=centering}
        \caption{Accuracy versus Mean Shift Value $\mu$}
        \label{fig:acc_vs_mu}
    \end{subfigure}
    \hfill
    \begin{subfigure}[t]{0.3\linewidth}
        \centering
        \includegraphics[width=\linewidth]{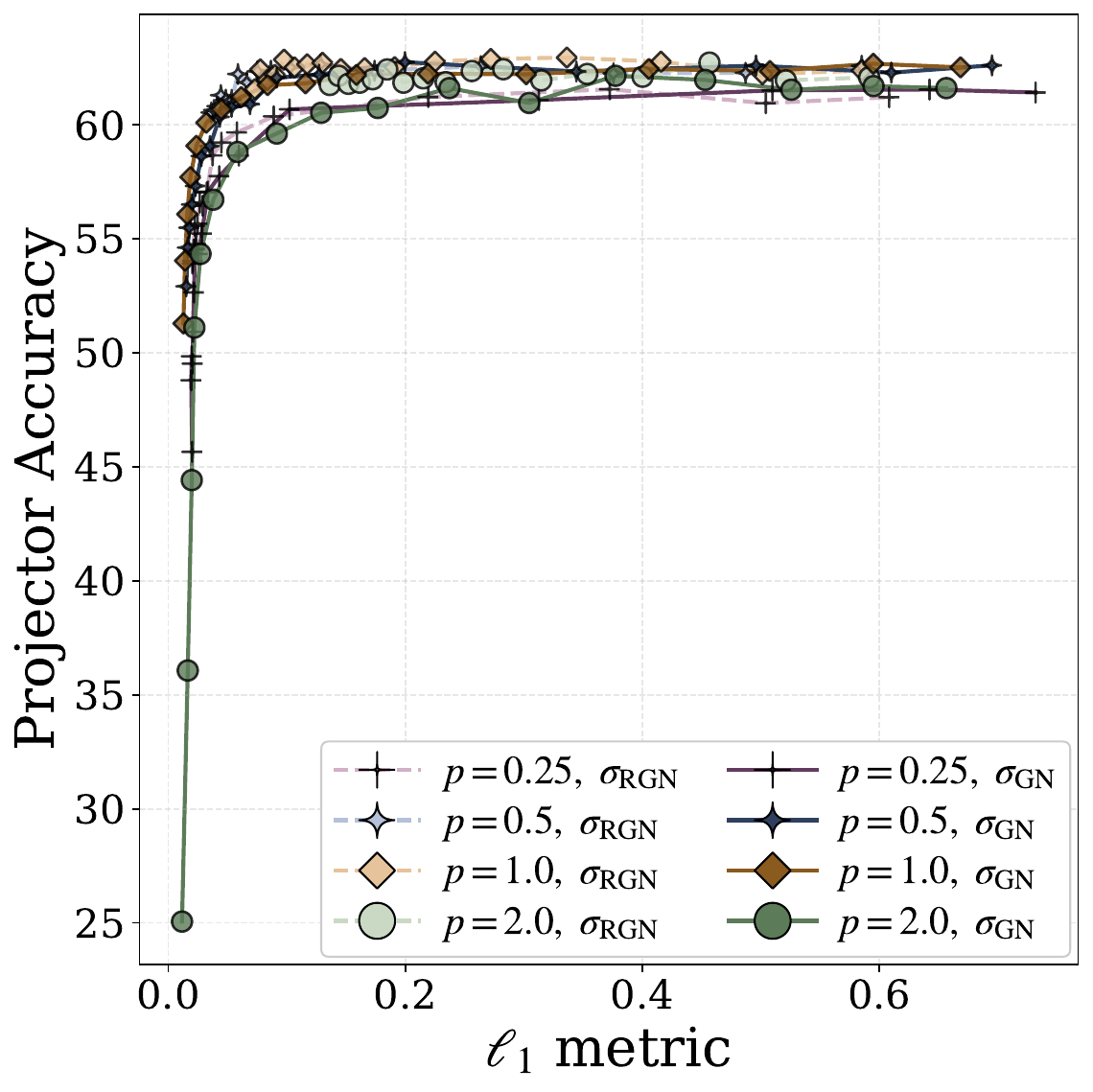}
        \captionsetup{justification=centering}
        \caption{Accuracy versus $\ell_1$ Sparsity Metric}
        \label{fig:acc_vs_l1}
    \end{subfigure}
    \hfill
    \begin{subfigure}[t]{0.3\linewidth}
        \centering
        \includegraphics[width=\linewidth]{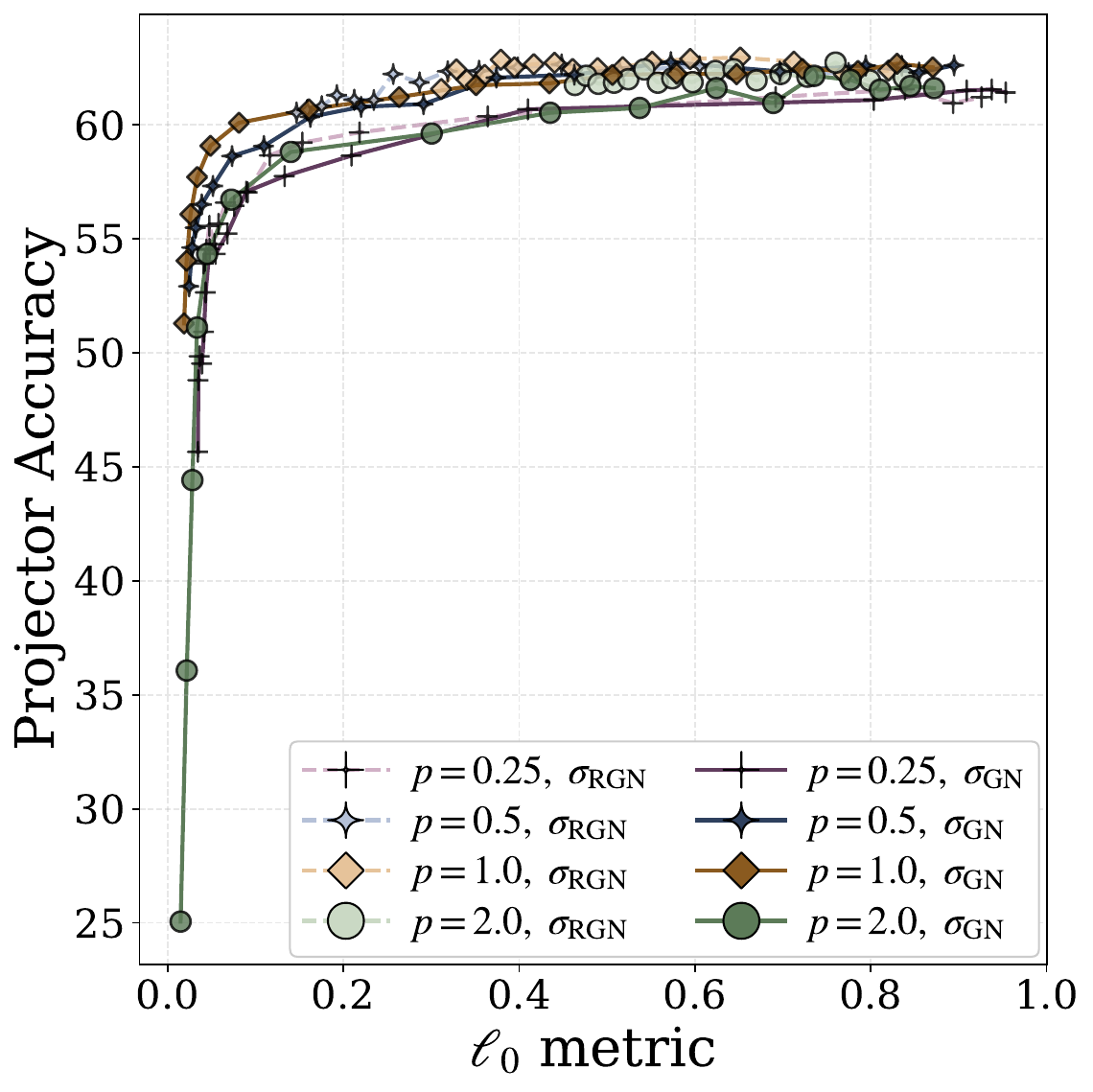}
        \captionsetup{justification=centering}
        \caption{Accuracy versus $\ell_0$ Sparsity Metric}
        \label{fig:acc_vs_l0}
    \end{subfigure}

    \caption{\textbf{The Sparsity-Performance Tradeoffs under Different Chocies of $\sigma^*\in\{\sigma_{\operatorname{GN}},\sigma_{\operatorname{RGN}}\}$}. (a) We report CIFAR-100 validation accuracy for pretrained Rectified LpJEPA projector representations under varying $\{\mu,\sigma,p\}$. Under the same mean shift value $\mu$, choosing $\sigma_{\operatorname{RGN}}$ leads to better performance compared to $\sigma_{\operatorname{GN}}$ if $\mu$ is more negative. (b) Projector accuracy is plotted against the $\ell_1$ sparsity metrics measured over the pretrained Rectified LpJEPA projector representations. The gaps between $\sigma_{\operatorname{GN}}$ and $\sigma_{\operatorname{RGN}}$ are negligible. (c) Switching from $\ell_1$ to $\ell_0$ sparsity metrics, we observe the same behavior. In fact, $\sigma_{\operatorname{GN}}$ attains minor advantages in the sparsity-performance tradeoffs, especially when $p=1$ or $p=0.5$.}
    \label{fig:acc_vs_sparsity_diff_sigma_options}
\end{figure*}

\begin{figure*}[t!]
    \centering
    \begin{subfigure}[t]{0.3\linewidth}
        \centering
        \includegraphics[width=\linewidth]{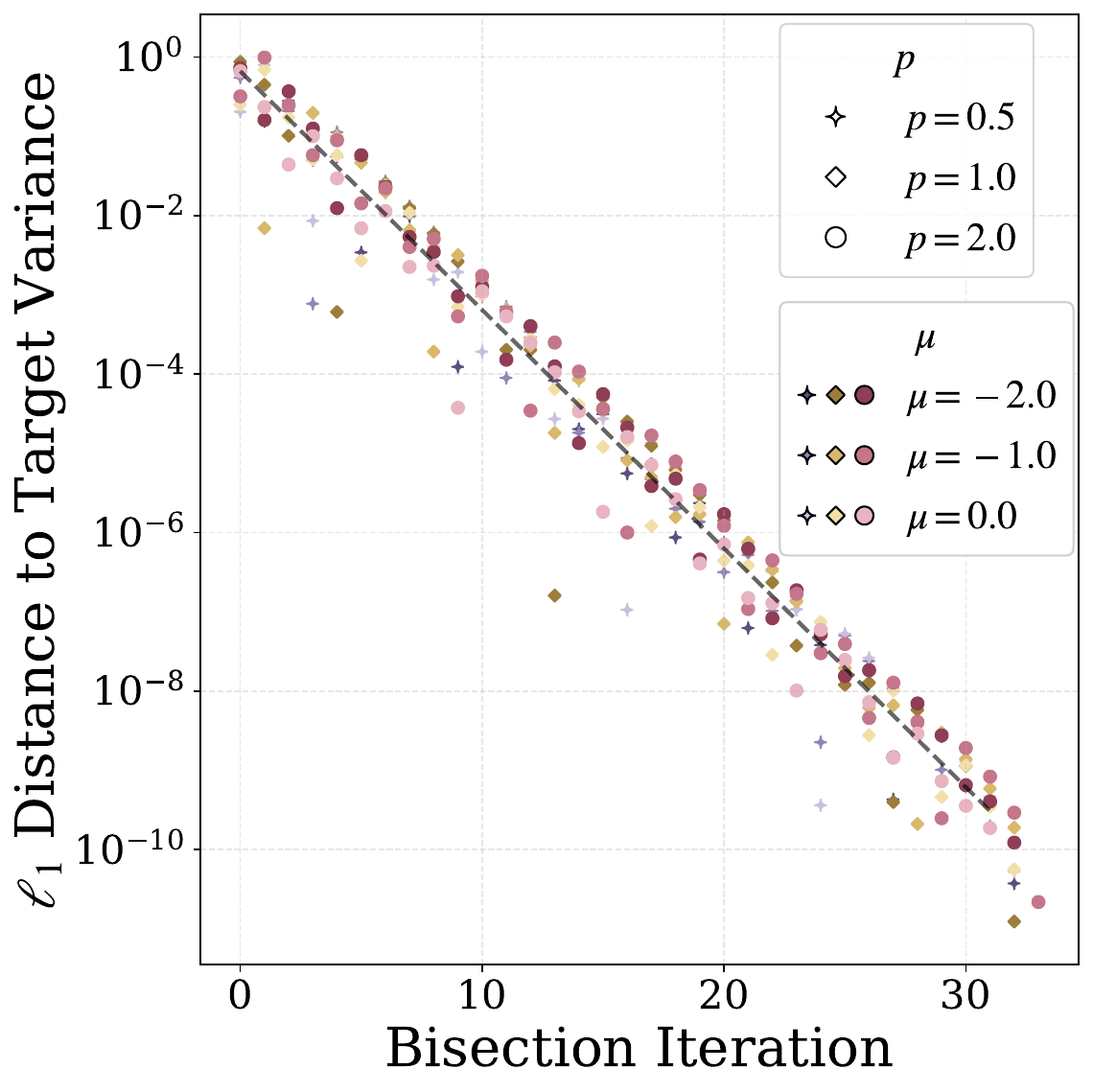}
        \captionsetup{justification=centering}
        \caption{Bisection Convergence of $\sigma^{*}$ for $|\operatorname{Var}(\mathcal{RGN}_p(\mu,\sigma^{*}))-1|<\epsilon$.}
        \label{fig:subfig_bisect_rate}
    \end{subfigure}
    \hfill
    \begin{subfigure}[t]{0.3\linewidth}
        \centering
        \includegraphics[width=\linewidth]{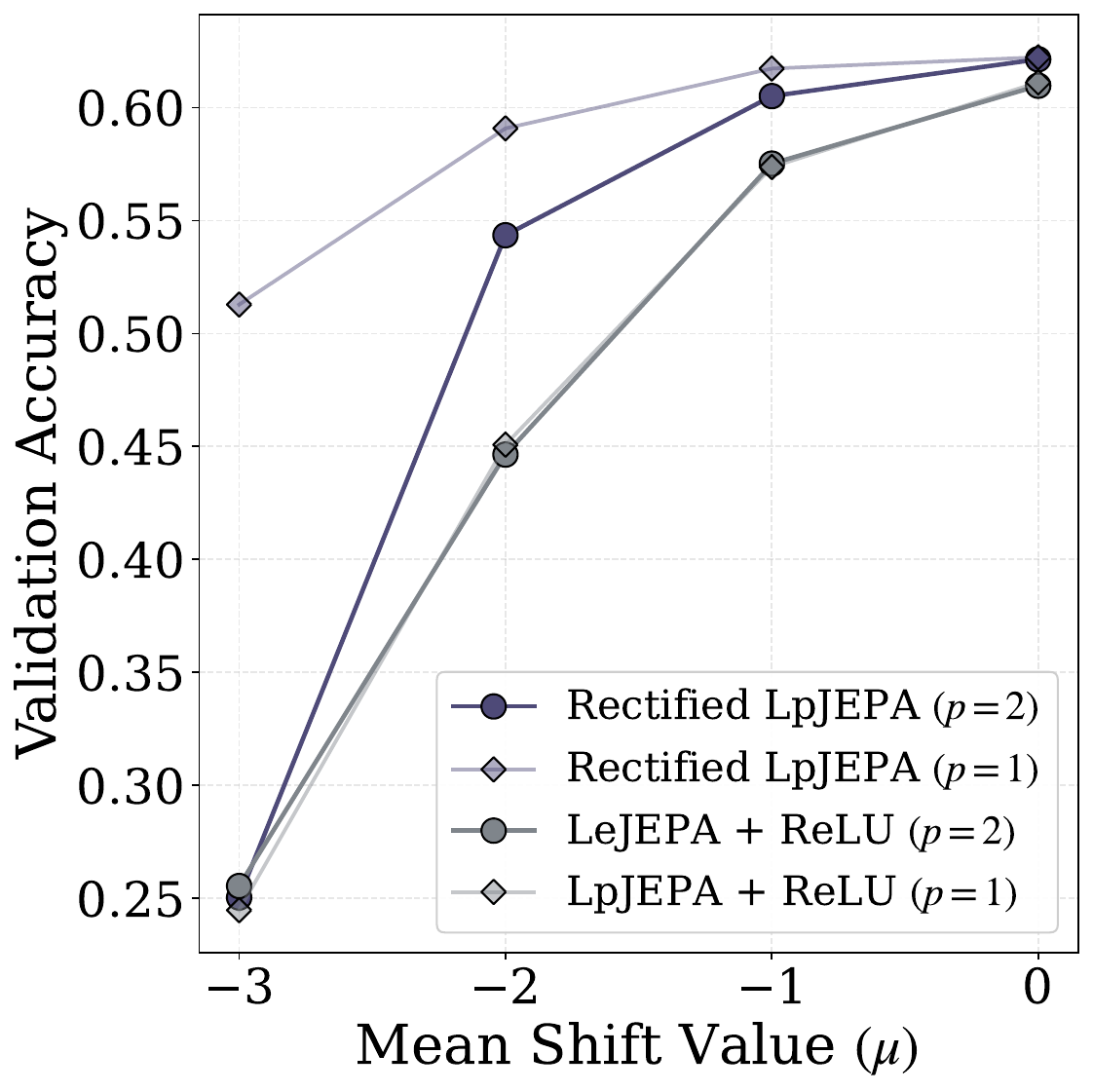}
        \captionsetup{justification=centering}
        \caption{Continuous Mapping Theorem}
        \label{fig:contmaptheoremfig}
    \end{subfigure}
    \hfill
    \begin{subfigure}[t]{0.3\linewidth}
        \centering
        \includegraphics[width=\linewidth]{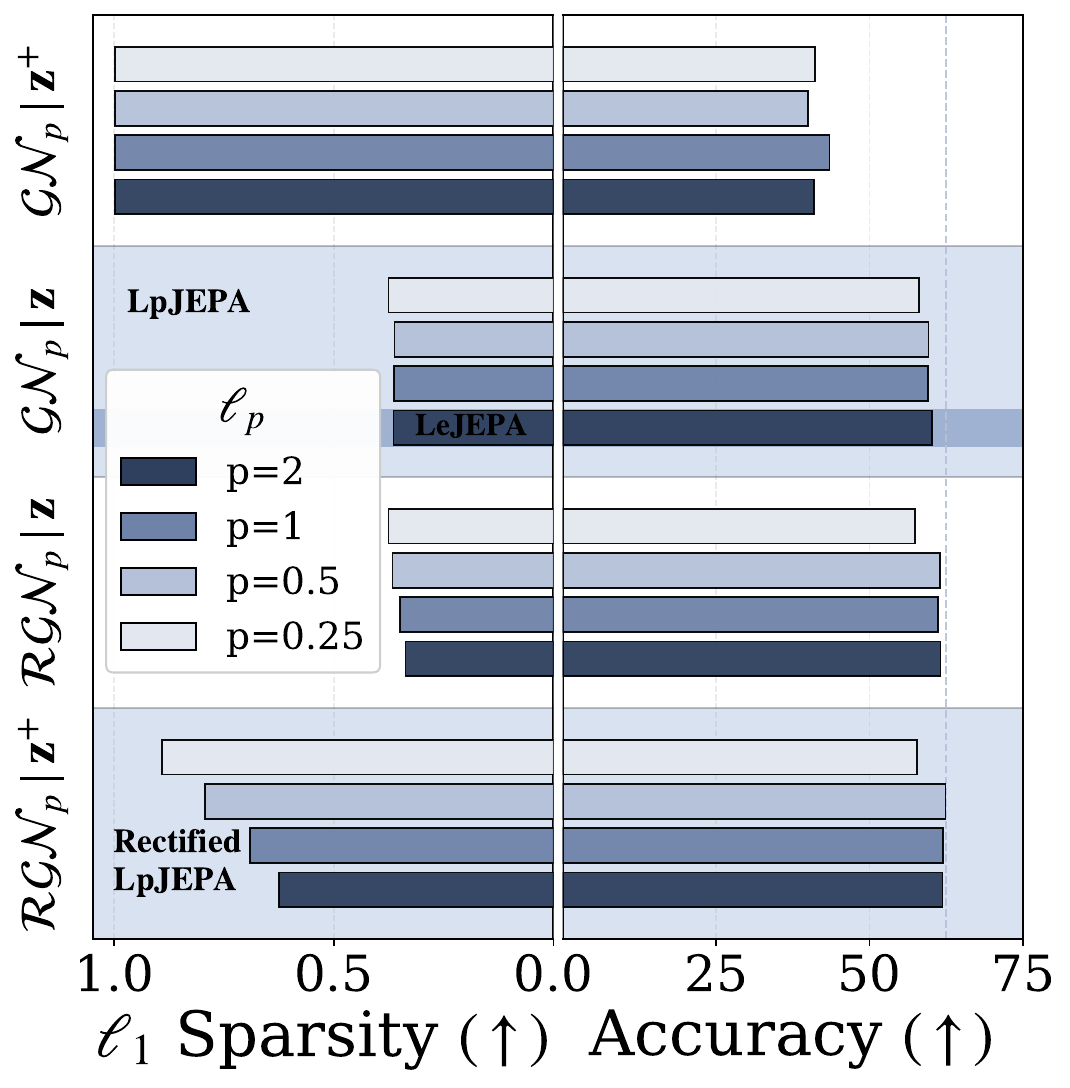}
        \captionsetup{justification=centering}
        \caption{Necessity of Rectifications.}
        \label{fig:fourorthantsl1l2plot}
    \end{subfigure}
    \caption{\textbf{Additional Results on the Choices of $\sigma$, the Location of $\operatorname{ReLU}(\cdot)$, and the Ablations of $\operatorname{ReLU}(\cdot)$ for Rectified LpJEPA}. (a) We report the bisection convergence error as a function of optimization iterations for finding the optimal $\sigma_{\operatorname{RGN}}$ (see \cref{sec:dedicatedstudyonsigmasec}). (b) We compared Rectified LpJEPA versus a version of distribution matching towards the Rectified Generalizd Gaussian distribution via the continuous mapping theorem (see \cref{sec:contmaptheoremfigsec}). Rectified LpJEPA is the better design. (c) We show that Rectified LpJEPA attains the best sparsity-performance tradeoffs across various ablations of $\operatorname{ReLU}(\cdot)$ under the $\ell_1$ sparsity metrics $(1/D)\cdot\mathbb{E}[\|\mathbf{z}\|_1^2/\|\mathbf{z}\|_2^2]$. See \cref{sec:necessityofrelusec} for details.}
    \label{fig:additionalmiscresultdumpfig}
\end{figure*}

\begin{figure*}[t!]
    \centering
    \begin{subfigure}[t]{0.48\linewidth}
        \centering
        \includegraphics[width=\linewidth]{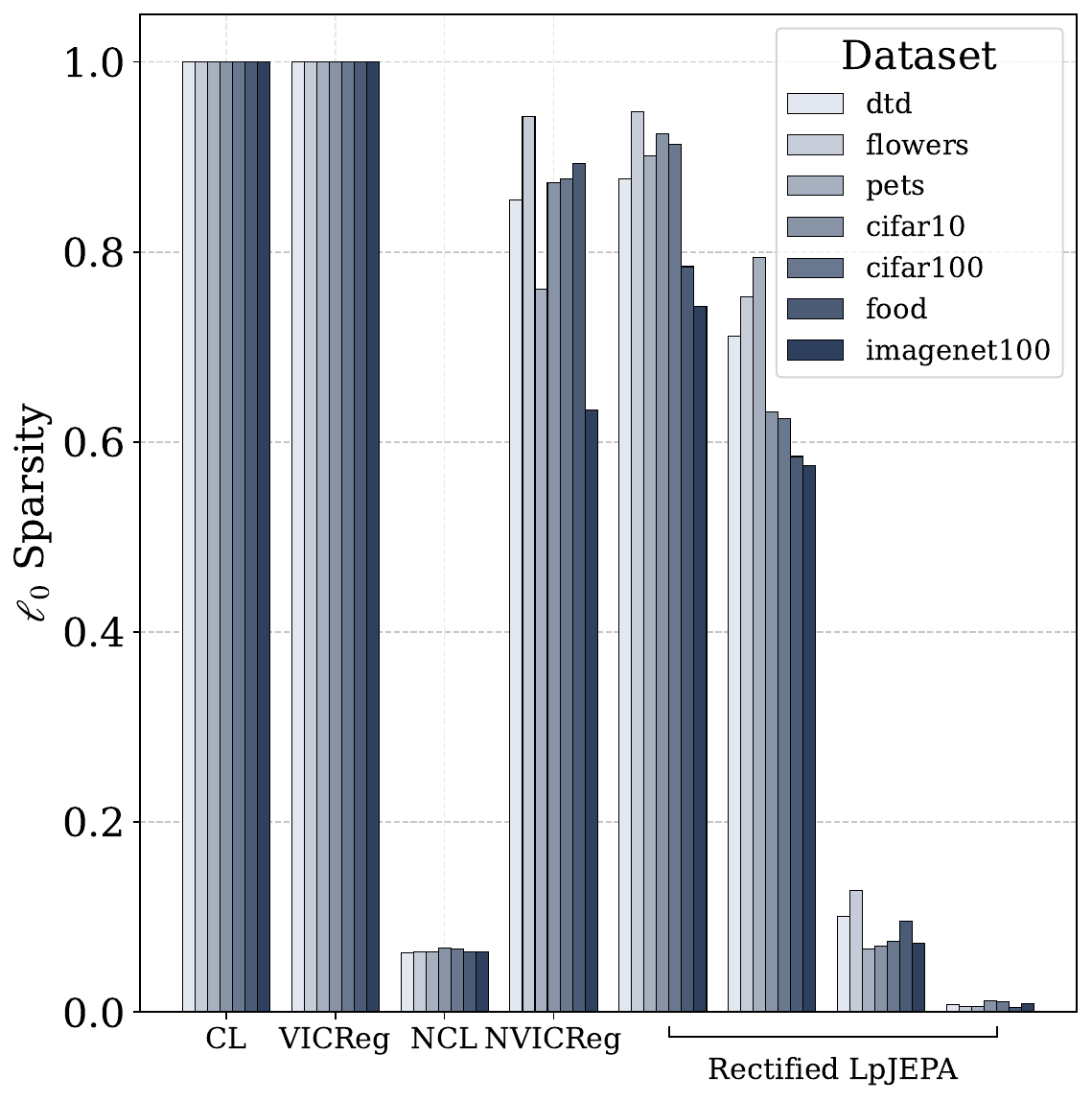}
        \captionsetup{justification=centering}
        \caption{$\ell_0$ Sparsity Across Transfer Tasks}
        \label{fig:l0_transfer_sparsity}
    \end{subfigure}
    \hfill
    \begin{subfigure}[t]{0.48\linewidth}
        \centering
        \includegraphics[width=\linewidth]{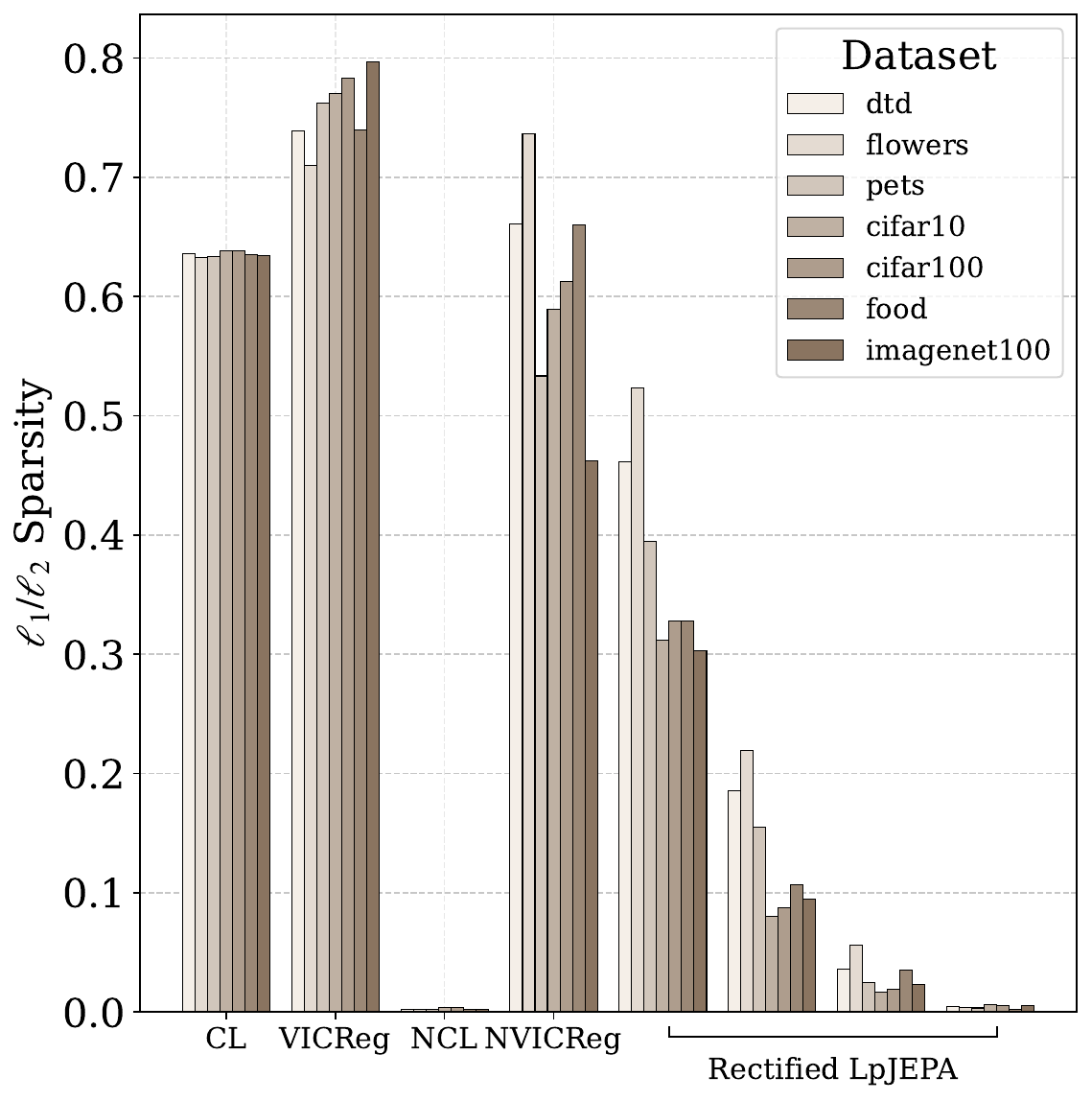}
        \captionsetup{justification=centering}
        \caption{$\ell_1$ Sparsity Across Transfer Tasks}
        \label{fig:l1_transfer_sparsity}
    \end{subfigure}
    \caption{\textbf{Pretrained dense and sparse representations exhibits varying level of sparsity across different downstream tasks}. We compare the $\ell_0$ and $\ell_1$ sparsity metrics for Rectified LpJEPA versus other baselines (see \cref{sec:baselinedesignssec}) pretrained over ImageNet-100 across a variety of downstream tasks. (a) Rectified LpJEPA has varying $\ell_0$ sparsity $(1/D)\cdot\mathbb{E}[\|\mathbf{z}\|_0]$ over different datasets as we vary the mean shift value $\mu\in\{0,-1,-2,-3\}$. CL and VICReg always have all entries being non-zero due to the lack of explicit rectifications. (b) Under $\ell_1$ sparsity metric $(1/D)\cdot\mathbb{E}[\|\mathbf{z}\|_1^2/\|\mathbf{z}\|_2^2]$, we observe varying sparsity for Rectified LpJEPA over different datasets for mean shift values $\ell_0$ sparsity $(1/D)\cdot\mathbb{E}[\|\mathbf{z}\|_0]$. We observe that NCL in fact achieves the lowest $\ell_1$ metric, but as we show in \cref{fig:sparsityoodsubfig3} the most amount of variations is attained by Rectified LpJEPA.}
    \label{fig:total_transfer_sparsity}
\end{figure*}

\section{Maximum Differential Entropy Distributions}

In the following section, we present a well-known statement for the form of the maximum-entropy probability distributions (\cref{sec:genericformofmaxentdist}) and use the result to prove that the Multivariate Truncated Generalized Gaussian distribution is the maximum-entropy distribution under the expected $\ell_p$ norm constraints given a fixed support (\cref{sec:proofsofmaxentfortrungengausssec}). We further show that the constraint is $\mathbb{E}[\|\mathbf{z}\|_p^p]=d\sigma^p$ without truncation (\cref{sec:maxentfullgengausssec}). In \cref{sec:maxentdistunderl1constraintsec} and \cref{sec:maxentdistunderl2constraintsec}, we present the well-known corollary of product Laplace and isotropic Gaussian being the maximum-entropy distributions under expected $\ell_1$ and $\ell_2$ norm constraints respectively. 

\subsection{Derivation of Maximum Entropy Continuous Multivariate Probability Distributions under Support Constraints}\label{sec:genericformofmaxentdist}

\citet{coverthomaselementsofinfo} provided a characterization of maximum entropy continuous univariate probability distributions. In Lemma~\ref{lemma:universalmaxentcontmultivariatelemma}, we provide a multivariate extension of the maximum entropy probability distribution under the support set with positive Lebesgue measure. 

\begin{lemma}[Maximum Entropy Continuous Multivaraite Probability Distributions]\label{lemma:universalmaxentcontmultivariatelemma}
Let $S\subseteq\mathbb{R}^d$ be a measurable set with positive Lebesgue measure. We define $r_1,\cdots,r_m:S\to\mathbb{R}$ as measurable functions and let $\alpha_1,\cdots,\alpha_m\in\mathbb{R}$. Consider the optimization problem
\begin{align}
\max_{p}\quad -& \int_S p(\mathbf{x})\ln p(\mathbf{x})d\mathbf{x} \\
\text{s.t.}\quad & \int_S p(\mathbf{x})d\mathbf{x} = 1, \\
& \int_S r_i(\mathbf{x})p(\mathbf{x})d\mathbf{x} = \alpha_i, \quad i=1,\cdots, m, \\
& p(\mathbf{x}) \ge 0 \quad \text{a.e. on } S.
\end{align}
We denote the set of functions that satisfies the given constraints as 
\begin{align}
    \mathcal{P}:=\bigg\{p:\to[0,\infty)\bigg{|} \int_{S} p(\mathbf{x})\,d\mathbf{x}=1, \int_{S}r_{i}(\mathbf{x})p(\mathbf{x})\,d\mathbf{x}=\alpha_i\,\forall i \bigg\}
\end{align}
Assume the set $\mathcal{P}$ is nonempty and that there exists
$\boldsymbol{\lambda}=(\lambda_1,\dots,\lambda_m)\in\mathbb{R}^m$ such that
\begin{align}
Z_{S}(\boldsymbol{\lambda})&=\int_S \exp\bigg(\sum_{i=1}^m \lambda_i r_i(\mathbf{x})\bigg)\,d\mathbf{x}
\;<\;\infty.
\end{align}
Then any maximizer $p^\star$ of the optimization problem has the form
\begin{align}
p^\star(\mathbf{x})&=\frac{1}{Z_{S}(\boldsymbol{\lambda})}
\exp\bigg(\sum_{i=1}^m \lambda_i r_i(\mathbf{x})\bigg)\cdot\mathbbm{1}_S(\mathbf{x}),
\end{align}
where $\{\lambda_i\}_{i=1}^{m}$ are chosen to satisfy the constraints $\{\alpha_i\}_{i=1}^{m}$.
\begin{proof}
We can form the Lagrangian functional of the constrained optimization problem as
\begin{align}
    \mathcal{J}[p]=-\int_{S} p(\mathbf{x})\ln p(\mathbf{x}) d\mathbf{x}+\lambda_0\bigg(\int_{S} p(\mathbf{x}) d\mathbf{x}-1\bigg)+\sum_{i=1}^{m}\lambda_i\bigg(\int_{S}r_i(\mathbf{x}) p(\mathbf{x})d\mathbf{x}-\alpha_i\bigg)
\end{align}
where $\lambda_0,\lambda_1,\dots,\lambda_m\in\mathbb{R}$ are Lagrange multipliers. Let $p$ be a maximizer that is strictly positive almost everywhere (a.e.) over $S$. We denote $\delta p$ as an arbitrary integrable perturbation supported on $S$ such that $p+\epsilon\delta p\geq0$ for sufficiently small $|\epsilon|$. Thus the Gateaux derivative of $\mathcal{J}$ in the direction of $\delta p$ is given by
\begin{align}
    \left.\frac{d}{d\epsilon}\mathcal{J}[p+\epsilon\delta p]\right|_{\epsilon=0}&=-\int_{S}\left.\frac{d}{d\epsilon}\bigg[(p(\mathbf{x})+\epsilon\delta p(\mathbf{x}))\ln (p(\mathbf{x})+\epsilon\delta p(\mathbf{x}))\bigg]d\mathbf{x}\right|_{\epsilon=0}+\lambda_0\bigg(\int_{S}\left.\frac{d}{d\epsilon} \bigg[p(\mathbf{x})+\epsilon\delta p(\mathbf{x})\bigg] d\mathbf{x}\right|_{\epsilon=0}\bigg)\\
    &+\sum_{i=1}^{m}\lambda_i\bigg(\int_{S}\left.\frac{d}{d\epsilon}\bigg[r_i(\mathbf{x}) (p(\mathbf{x})+\epsilon\delta p(\mathbf{x})))d\mathbf{x}\bigg]\right|_{\epsilon=0}\bigg)\\
    &=-\int_{S}\left.\bigg[\delta p(\mathbf{x})\ln (p(\mathbf{x})+\epsilon\delta p(\mathbf{x}))+\delta p(\mathbf{x})\bigg]d\mathbf{x}\right|_{\epsilon=0}+\lambda_0\bigg(\int_{S} 1\cdot\delta p(\mathbf{x})d\mathbf{x}\bigg)\\
    &+\sum_{i=1}^{m}\lambda_i\bigg(\int_{S} r_i(\mathbf{x})\delta p(\mathbf{x})d\mathbf{x}\bigg)\\
    &=-\int_{S}\bigg[\ln p(\mathbf{x})+1\bigg]\delta p(\mathbf{x})d\mathbf{x}+\bigg(\int_{S} \lambda_0\cdot\delta p(\mathbf{x})d\mathbf{x}\bigg)+\bigg(\int_{S}\sum_{i=1}^{m}\lambda_i r_i(\mathbf{x})\delta p(\mathbf{x})d\mathbf{x}\bigg)\\
\end{align}
Thus the functional derivative is
\begin{align}
    \frac{\delta\mathcal{J}}{\delta p}&=-\ln p(\mathbf{x})-1+\lambda_0+\sum_{i=1}^{m}\lambda_i r_i(\mathbf{x})
\end{align}
Since this expression must vanish for all admissible perturbations $\delta p$, we get $\frac{\delta\mathcal{J}}{\delta p}=0$ almost everywhere on $S$. Solving for $p$ yields
\begin{align}
    p(\mathbf{x})&=\exp\bigg(\lambda_0-1+\sum_{i=1}^m \lambda_i r_i(\mathbf{x})\bigg)
\end{align}
Absorbing the constant terms into $Z_{S}(\boldsymbol{\lambda})$, we end up with
\begin{align}
p(\mathbf{x})&=\frac{1}{Z_{S}(\boldsymbol{\lambda})}
\exp\bigg(\sum_{i=1}^m \lambda_i r_i(\mathbf{x})\bigg)\cdot\mathbbm{1}_S(\mathbf{x})
\end{align}
\end{proof}
\end{lemma}

\subsection{Proof of Proposition~\ref{proposition:truncatedgeneralizedgaussmaxent} (Maximum Entropy Distribution Under the $\ell_p$ Norm and Support Constraint)}\label{sec:proofsofmaxentfortrungengausssec}
\begin{proof}\label{proof:proofoftruncatedgeneralizedgaussmaxent}
    By Lemma~\ref{lemma:universalmaxentcontmultivariatelemma}, the target distribution has the form of
    \begin{align}
        p(\mathbf{x})&=\frac{1}{Z_S(\lambda_1)}\exp(\lambda_1\|\mathbf{x}\|_p^p)\cdot\mathbbm{1}_S(\mathbf{x})\\
        &=\frac{1}{Z_S(\lambda_1)}\exp\bigg(-\frac{\|\mathbf{x}\|_p^p}{p\sigma^p}\bigg)\cdot\mathbbm{1}_S(\mathbf{x})
    \end{align}
    where we choose $\lambda_1=-\frac{1}{p\sigma^p}$ which satisfies the constraint $\lambda_1<0$ for integration. Thus we have recovered the zero-mean Generalized Gaussian distribution with scale parameter $\sigma$. Now notice that
    \begin{align}
        \frac{d}{d\lambda_1}\log Z_S(\lambda_1)&=\frac{1}{Z_S(\lambda_1)}\frac{d}{d\lambda_1}Z_S(\lambda_1)\\&=\frac{1}{Z_S(\lambda_1)}\int_{S}\frac{d}{d\lambda_1}\exp(\lambda_1\|\mathbf{x}\|_p^p)d\mathbf{x}\\&=\frac{1}{Z_S(\lambda_1)}\int_{S}\|\mathbf{x}\|_p^p\exp(\lambda_1\|\mathbf{x}\|_p^p)d\mathbf{x}\\
        &=\int_{S}\|\mathbf{x}\|_p^p\frac{1}{Z_S(\lambda_1)}\exp\bigg(-\frac{\|\mathbf{x}\|_p^p}{p\sigma^p}\bigg)\cdot\mathbbm{1}_S(\mathbf{x})d\mathbf{x}\\
        &=\mathbb{E}[\|\mathbf{x}\|_p^p]
    \end{align}
    Thus we also obtain the constraint as $\mathbb{E}[\|\mathbf{x}\|_p^p] = \frac{d}{d\lambda_1}\log Z_{S}(\lambda_1)$.
\end{proof}

\subsection{Maximum Entropy Distribution Under the $\ell_p$ Norm with Full Support}\label{sec:maxentfullgengausssec}
\begin{corollary}\label{corollary:generalizedgaussmaxent}
If $S=\mathbb{R}^d$ in Proposition~\ref{proposition:truncatedgeneralizedgaussmaxent}, then the constraint 
\begin{align}
    \mathbb{E}[\|\mathbf{x}\|_p^p] = \frac{d}{d\lambda_1}\log Z_{S}(\lambda_1)=d\sigma^p
\end{align}
and we recover the Generalized Gaussian distribution with zero mean and scale parameter $\sigma$. 
\begin{align}
    p(x)=\frac{p^{d-d/p}}{(2\sigma)^d\Gamma(1/p)^d}\exp\bigg(-\frac{\|\mathbf{x}\|_p^p}{p\sigma^p}\bigg) 
\end{align}
\end{corollary}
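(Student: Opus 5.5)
With $S=\mathbb{R}^d$, I would first make the normalizer explicit. By Proposition~\ref{proposition:truncatedgeneralizedgaussmaxent}, the maximizer is
\begin{align}
p(\mathbf{x})=Z_{\mathbb{R}^d}(\lambda_1)^{-1}\exp(\lambda_1\|\mathbf{x}\|_p^p), \qquad \lambda_1=-1/(p\sigma^p)<0 .
\end{align}
The key observation is that $\|\mathbf{x}\|_p^p=\sum_i|x_i|^p$ is additive across coordinates. By Tonelli, the partition function therefore factorizes into a product of identical one-dimensional integrals:
\begin{align}
Z_{\mathbb{R}^d}(\lambda_1)=\bigg(\int_{\mathbb{R}}e^{\lambda_1|t|^p}\,dt\bigg)^{d}.
\end{align}

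I would evaluate the one-dimensional integral using symmetry and then \eqref{eq:upper-gamma-id} with $u=0$, $b=0$, $a=-\lambda_1$. This gives
\begin{align}
\int_{\mathbb{R}}e^{\lambda_1|t|^p}\,dt=\frac{2}{p}(-\lambda_1)^{-1/p}\Gamma(1/p),
\end{align}
and hence
\begin{align}
\log Z_{\mathbb{R}^d}(\lambda_1)=d\log\big(2\Gamma(1/p)/p\big)-\frac{d}{p}\log(-\lambda_1).
\end{align}
Differentiating in $\lambda_1$ gives
\begin{align}
\frac{d}{d\lambda_1}\log Z=-\frac{d}{p\lambda_1}.
\end{align}
Substituting $\lambda_1=-1/(p\sigma^p)$ yields $d\sigma^p$. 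Since the proof of Proposition~\ref{proposition:truncatedgeneralizedgaussmaxent} already identifies this derivative with $\mathbb{E}[\|\mathbf{x}\|_p^p]$, this establishes the constraint value. As a sanity check, the direct moment computation gives the same thing: $\mathbb{E}|x_i|^p=\frac{(-\lambda_1)^{-1-1/p}\Gamma(1+1/p)}{(-\lambda_1)^{-1/p}\Gamma(1/p)}=\frac{1}{p(-\lambda_1)}=\sigma^p$ per coordinate.

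For the density, I would plug $-\lambda_1=1/(p\sigma^p)$ into the one-dimensional normalizer:
\begin{align}
\frac{2}{p}(p\sigma^p)^{1/p}\Gamma(1/p)=2\sigma p^{1/p-1}\Gamma(1/p).
\end{align}
Raising this to the $d$-th power and inverting gives the constant $p^{d-d/p}/((2\sigma)^d\Gamma(1/p)^d)$. This matches the product of $d$ copies of the density in Definition~\ref{def:generalizedgaussiandist} with $\mu=0$, so the maximizer is $\prod_{i=1}^d\mathcal{GN}_p(0,\sigma)$.

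The only delicate step is justifying differentiation under the integral sign in the earlier identity. Here this is sidestepped, because $\log Z$ is available in closed form and is smooth for $\lambda_1<0$. Everything else is routine bookkeeping of powers of $p$ and $\sigma$.
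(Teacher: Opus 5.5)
Your argument is correct, and it is more self-contained than the paper's. The paper states the normalizing constant $p^{d-d/p}/((2\sigma)^d\Gamma(1/p)^d)$ without derivation. It then gets the constraint value by citing \citet{Dytso2018} for $\mathbb{E}[|\mathbf{x}_i|^p]=\sigma^p$ and summing over coordinates. So the paper evaluates the left-hand quantity $\mathbb{E}[\|\mathbf{x}\|_p^p]$ and relies on Proposition~\ref{proposition:truncatedgeneralizedgaussmaxent} for its equality with $\frac{d}{d\lambda_1}\log Z_{S}(\lambda_1)$.

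You instead derive the normalizer from the Tonelli factorization and the gamma integral. You then evaluate the middle quantity directly, $\frac{d}{d\lambda_1}\log Z_{\mathbb{R}^d}(\lambda_1)=-d/(p\lambda_1)$. This verifies the constant the paper leaves unproved. Because the formula holds for every $\lambda_1<0$, it also shows that any prescribed constraint value $\alpha>0$ is met by $\sigma^p=\alpha/d$. That makes explicit that the multiplier in Lemma~\ref{lemma:universalmaxentcontmultivariatelemma} can always be chosen to satisfy the constraint.

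Your ``sanity check'' is exactly the paper's step, derived rather than cited, and it does real logical work. The closed form for $\log Z$ avoids differentiating under the integral only for the derivative itself. Linking that derivative to $\mathbb{E}[\|\mathbf{x}\|_p^p]$ needs one of two things. The first is the interchange used in the Proposition's proof, easily justified by dominated convergence for $\lambda_1$ in compact subsets of $(-\infty,0)$. The second is your direct moment computation. You should therefore present that computation as part of the proof rather than as a check.
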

\begin{proof}
    By Proposition~\ref{proposition:truncatedgeneralizedgaussmaxent}, the target distribution has the form of
    \begin{align}
        p(x)=\frac{1}{Z_S(\lambda_1)}\exp\bigg(-\frac{\|\mathbf{x}\|_p^p}{p\sigma^p}\bigg) \cdot\mathbbm{1}_S(\mathbf{x})
    \end{align}
    If $S=\mathbb{R}^d$, then the normalization constant becomes
    \begin{align}
        \frac{1}{Z_S(\lambda_1)}=\frac{1}{Z_{\mathbb{R}^d}(\lambda_1)}=\frac{p^{d-d/p}}{(2\sigma)^d\Gamma(1/p)^d}
    \end{align}
    According to \citet{Dytso2018}, we know that $\mathbb{E}[|\mathbf{x}_i|^p]=\sigma^p$. Thus
    \begin{align}
        \mathbb{E}[\|\mathbf{x}\|_p^p] = \frac{d}{d\lambda_1}\log Z_{S}(\lambda_1)=d\sigma^p
    \end{align}
\end{proof}

\subsection{Maximum Entropy Distribution Under the $\ell_1$ Norm Constraint}\label{sec:maxentdistunderl1constraintsec}
In Corollary~\ref{corollary:productlaplacemaxent}, we show the well-known result that the maximum-entropy continuous multivariate distribution under the $\ell_1$ norm constraint is the product Laplace distribution.
\begin{corollary}\label{corollary:productlaplacemaxent}
The maximum entropy distribution over $\mathbb{R}^d$ under the constraints
\begin{align}
    \int_{\mathbb{R}^d} p(\mathbf{x}) d\mathbf{x} &= 1, \quad \mathbb{E}[\|\mathbf{x}\|_1] = db
\end{align}
is the product of independent univariate symmetric Laplace distributions with zero mean and scale parameter $b$
\begin{align}
    p(\mathbf{x})=\bigg(\frac{1}{2b}\bigg)^d\exp\bigg(-\frac{\|\mathbf{x}\|_1}{b}\bigg)
\end{align}
\end{corollary}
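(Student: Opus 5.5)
This corollary follows from Proposition~\ref{proposition:truncatedgeneralizedgaussmaxent} with $p=1$ and $S=\mathbb{R}^d$, together with Corollary~\ref{corollary:generalizedgaussmaxent}. Set $r_1(\mathbf{x})=\|\mathbf{x}\|_1$ in Lemma~\ref{lemma:universalmaxentcontmultivariatelemma}. Any maximizer then has the form $p^\star(\mathbf{x})=Z_{\mathbb{R}^d}(\lambda_1)^{-1}\exp(\lambda_1\|\mathbf{x}\|_1)$. The partition function is finite only when $\lambda_1<0$, so we may write $\lambda_1=-1/b$ with $b>0$. This is exactly the $p=1$ parametrization $\lambda_1=-1/(p\sigma^p)$ with $\sigma=b$.

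Next we compute the normalizer and check that it factorizes. Since $\exp(-\|\mathbf{x}\|_1/b)=\prod_{i=1}^d\exp(-|x_i|/b)$, Fubini gives
\[
Z_{\mathbb{R}^d}(\lambda_1)=\Bigl(\int_{\mathbb{R}}e^{-|t|/b}\,dt\Bigr)^d=(2b)^d,
\]
which is Corollary~\ref{corollary:generalizedgaussmaxent} at $p=1$ with $p^{d-d/p}=1$ and $\Gamma(1)=1$. The density is therefore $(2b)^{-d}\exp(-\|\mathbf{x}\|_1/b)=\prod_i f_{\mathcal{L}(0,b)}(x_i)$, a product of i.i.d.\ zero-mean Laplace densities, as in Remark~\ref{remark:gausslaplacespecialcases}.

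Finally we match the multiplier to the constraint. We have $\frac{d}{d\lambda_1}\log Z=\frac{d}{d\lambda_1}\bigl(-d\log(-\lambda_1)+d\log 2\bigr)=-d/\lambda_1=db$. Equivalently, $\mathbb{E}|x_i|=b$ for a Laplace$(0,b)$ variable, so the moment constraint $\mathbb{E}\|\mathbf{x}\|_1=db$ selects exactly $\lambda_1=-1/b$.

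The only delicate point is that Lemma~\ref{lemma:universalmaxentcontmultivariatelemma} gives a necessary (stationarity) condition, not sufficiency. If full rigor is wanted, I would add the standard Gibbs-inequality argument. For any feasible $q$,
\[
h(q)=-\int q\log p^\star-D(q\|p^\star)=\log Z+\tfrac{1}{b}\mathbb{E}_q\|\mathbf{x}\|_1-D(q\|p^\star)\le h(p^\star),
\]
because $\mathbb{E}_q\|\mathbf{x}\|_1=db$ is the same for every feasible $q$. Equality holds iff $q=p^\star$ almost everywhere, which also gives uniqueness.
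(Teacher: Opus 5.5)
Your proposal is correct and follows the paper's route. Like the paper, you invoke Lemma~\ref{lemma:universalmaxentcontmultivariatelemma} with $r_1(\mathbf{x})=\|\mathbf{x}\|_1$, require $\lambda_1<0$ for integrability, normalize to $(-\lambda_1/2)^d e^{\lambda_1\|\mathbf{x}\|_1}$, and set $b=-1/\lambda_1$.

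Two of your steps go beyond the paper's argument. First, you check via $\frac{d}{d\lambda_1}\log Z=-d/\lambda_1=db$ that the moment constraint actually forces $\lambda_1=-1/b$; the paper simply identifies the constraint's $b$ with $-1/\lambda_1$ without verifying it. Second, your Gibbs-inequality argument supplies sufficiency and uniqueness, which the stationarity-only lemma does not provide.
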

\begin{proof}
    By Lemma~\ref{lemma:universalmaxentcontmultivariatelemma}, the target distribution has the form of
    \begin{align}
        p(\mathbf{x})\propto \exp(\lambda_1\|\mathbf{x}\|_1)
    \end{align}
    with the constraint $\lambda_1<0$ for integration. After normalization, we obtain
    \begin{align}
        p(\mathbf{x})=\bigg(-\frac{\lambda_1}{2}\bigg)^d\exp(\lambda_1\|\mathbf{x}\|_1)
    \end{align}
    By a change of variable $b=-1/\lambda_1$, we arrive at
    \begin{align}
        p(\mathbf{x})=\bigg(\frac{1}{2b}\bigg)^d\exp\bigg(-\frac{\|\mathbf{x}\|_1}{b}\bigg)
    \end{align}
    Thus we have recovered the zero-mean product Laplace distribution with scale parameter $b$. 
\end{proof}

We note that the product Laplace distribution is different from the multivariate elliptical Laplace distribution presented in \citet{kotz2012laplace}. For our purposes of identifying the maximum-entropy distribution under the expected $\ell_1$ norm constraints, we should use the product Laplace distribution as the multivariate generalization of the univariate symmetric Laplace distribution. 

\subsection{Maximum Entropy Distribution under the $\ell_2$ Norm Constraint}\label{sec:maxentdistunderl2constraintsec}

In Corollary~\ref{corollary:gaussmaxent}, we present the well-known result that the maximum-entropy continuous multivariate distribution under the $\ell_2$ norm constraint is the isotropic Gaussian distribution.

\begin{corollary}\label{corollary:gaussmaxent}
The maximum entropy distribution over $\mathbb{R}^d$ under the constraints
\begin{align}
    \int_{\mathbb{R}^d} p(\mathbf{x}) d\mathbf{x} &= 1, \quad \mathbb{E}[\mathbf{x}] = \boldsymbol{\mu}, \quad \mathbb{E}[(\mathbf{x}-\boldsymbol{\mu})(\mathbf{x}-\boldsymbol{\mu})^\top]=\mathbf{\Sigma}
\end{align}
is the multivariate Gaussian distribution with mean $\boldsymbol{\mu}$ and covariance $\mathbf{\Sigma}$
\begin{align}
    p(\mathbf{x})\propto\exp\bigg(-\frac{1}{2}(\mathbf{x}-\boldsymbol{\mu})^\top\mathbf{\Sigma}^{-1}(\mathbf{x}-\boldsymbol{\mu})\bigg)
\end{align}
When $\boldsymbol{\mu}=0$ and $\mathbf{\Sigma}=\mathbf{I}$, the density function takes the form of
\begin{align}
    p(\mathbf{x})\propto\exp\bigg(-\frac{1}{2}\|\mathbf{x}\|_2^2\bigg)
\end{align}
\begin{proof}
    Notice that the vector-valued mean constraint and matrix-valued covariance constraint can be factorized as a collection of scalar-valued constraints
    \begin{align}
        \mathbb{E}[\mathbf{x}_i]=\boldsymbol{\mu}_i,\quad\mathbb{E}[\mathbf{x}_i \mathbf{x}_j]=\mathbf{\Sigma}_{ij}+\boldsymbol{\mu}_i\boldsymbol{\mu}_j, \quad\forall\space i,j\in\{1,\cdots, d\}
    \end{align}
    By Lemma~\ref{lemma:universalmaxentcontmultivariatelemma}, the maximum entropy distribution has the form
    \begin{align}
        p(\mathbf{x})&\propto\exp\bigg(\sum_{i=1}^{d}\boldsymbol{\lambda}_i \mathbf{x}_i+\sum_{i=1}^{d}\sum_{j=1}^{d}\mathbf{\Lambda}_{ij}\mathbf{x}_i\mathbf{x}_j\bigg)\\
        &=\exp\bigg(\boldsymbol{\lambda}^\top\mathbf{x}+\mathbf{x}^\top\mathbf{\Lambda}\mathbf{x}\bigg)\\
        &\propto\exp\bigg(-\frac{1}{2}(\mathbf{x}-\boldsymbol{\mu})^\top\mathbf{\Sigma}^{-1}(\mathbf{x}-\boldsymbol{\mu})\bigg)
    \end{align}
    for $\boldsymbol{\lambda}=\mathbf{\Sigma}^{-1}\boldsymbol{\mu}$ and $\mathbf{\Lambda}=-\frac{1}{2}\mathbf{\Sigma}^{-1}$, which is the multivariate Gaussian distribution up to normalizations. When $\boldsymbol{\mu}=0$ and $\mathbf{\Sigma}=\mathbf{I}$, the density function trivially evaluates to 
    \begin{align}
        p(\mathbf{x})&\propto\exp\bigg(-\frac{1}{2}\|\mathbf{x}\|_2^2\bigg)
    \end{align}
    which is the maximum-entropy distribution under the expected $\ell_2$ norm constraints based on Lemma~\ref{lemma:universalmaxentcontmultivariatelemma}.
\end{proof}
\end{corollary}

\section{Rényi Information Dimension and Entropy}\label{sec:renyiinfodimtotalsec}
In the following section, we provide a self-contained review of the Rényi information dimension and the $d(\xi)$-dimensional entropy. The contents are based on the original paper by \citet{renyi1959dimension}. After introducing the basic concepts in \cref{sec:basicdefofrenyiinfodim}, we go on to derive and prove the corresponding quantities for our Rectified Generalized Distribution in \cref{sec:proofofrenyiinfodimsec}. In \cref{sec:empestimateofrenyiinfodimsec}, we provide an empirical estimator for the $d(\xi)$-dimensional entropy. We further show, perhaps somewhat trivially, that the $d(\xi)$-dimensional entropy is in fact equivalent to another notion of entropy where we replace the Lebesgue measure $\lambda$ in standard differential entropy with the mixed measure $\nu:=\lambda+\delta_0$ (\cref{sec:alternativeinterpretofddiment}) for $\delta_0$ being the Dirac measure in Definition~\ref{def:diracmeasuredef}. In \cref{sec:genoftotalcorrelationundernuent}, we discuss how the total correlation can be decomposed using different notions of entropy. 

\subsection{Definition of the $d(\xi)$-dimensional Entropy}\label{sec:basicdefofrenyiinfodim}
Conventionally, the differential entropy for a random variable $X$ with distribution function $\mathbb{P}_{X}$ is defined as
\begin{align}
    \mathbb{H}(X)=-\int \frac{d\mathbb{P}_{X}}{d\lambda}\log\bigg(\frac{d\mathbb{P}_{X}}{d\lambda}\bigg)d\lambda
\end{align}
where $\lambda$ is the Lebesgue measure. For a Rectified Generalized Gaussian random variable $X\sim\mathcal{RGN}_p(\mu,\sigma)$, the Radon-Nikodym derivative of $\mathbb{P}_{X}$ with respect to $\lambda$ does not exist as shown in Lemma~\ref{lemma:absolutecontofrggwithrespecttomixed}. As a result, differential entropy is ill-defined for the Rectified Generalized Gaussian distribution. 

In the following section, we consider an alternative formulation known as the $d(\xi)$-dimensional entropy, where $d(\xi)$ is the Rényi information dimension of the random variable $\xi$ \citep{renyi1959dimension}. In Definition~\ref{def:infodimdef}, we review the basic definition of information dimension. 

\begin{definition}[Information Dimension \citep{renyi1959dimension}]\label{def:infodimdef}
    Consider a real-valued random variable $\xi\in\mathbb{R}$ and the discretization $\xi_n=(1/n)\cdot [n\xi]$, where $[x]$ preserves only the integral part of $x$. For example, $[3.42]=3$. Under suitable conditions, the information dimension $d(\xi)$ exists and is given by 
    \begin{align}
        d(\xi)=\lim_{n\to\infty}\frac{\mathbb{H}_{0}(\xi_n)}{\log n}
    \end{align}
    where 
    \begin{align}
        \mathbb{H}_{0}(\eta):=\sum_{k=1}^{\infty}q_k\log\frac{1}{q_k}
    \end{align}
    is the Shannon entropy for a discrete random variable $\eta$ with probabilities $q_k$ for $k=1,2,\dots$.
\end{definition}

Intuitively, $\xi_n$ represents the quantization of the real-valued random variable $\xi$ at the grid resolution of $1/n$. Thus the information dimension $d(\xi)$ measures how fast the Shannon entropy grows as a result of finer and finer grind discretizations. In Definition~\ref{def:ddimentropydef}, we present the definition of the $d(\xi)$-dimensional entropy first introduced in \citet{renyi1959dimension}.

\begin{definition}[$d(\xi)$-dimensional Entropy \citep{renyi1959dimension}]\label{def:ddimentropydef}
    If the information dimension $d(\xi)$ exists, the $d(\xi)$-dimensional entropy is defined as
    \begin{align}
        \mathbb{H}_{d(\xi)}=\lim_{n\to\infty}(\mathbb{H}_{0}(\xi_n)-d(\xi)\log n)
    \end{align}
\end{definition}

Effectively, the $d(\xi)$-dimensional entropy measures the amount of uncertainty distributed along the $d(\xi)$ continuous degrees of freedom. For discrete random variable $x$, the information dimension $d(x)=0$ since it's invariant to finer discretization \citep{renyi1959dimension}. Thus the discrete Shannon entropy is the $0$-dimensional entropy $\mathbb{H}_0$. The continuous random variable $x'$ has information dimension $d(x')=1$ and so the differential entropy is simply the $1$-dimensional entropy $\mathbb{H}_1$ \citep{renyi1959dimension}.

In Definition~\ref{def:ddimentformixedmeasuresdef}, we review the special case of $\mathbb{H}_{d(\xi)}$ when the random variable $\xi$ follows has a mixture probability measure.

\begin{definition}[$d(\xi)$-dimensional Entropy for Mixed Measures \citep{renyi1959dimension}]\label{def:ddimentformixedmeasuresdef}
    Let $\xi$ be a random variable with probability measure 
    \begin{align}
        \mathbb{P}_{\xi}=(1-d)\cdot\mathbb{P}_0+d\cdot\mathbb{P}_1
    \end{align}
    where $\mathbb{P}_0$ is discrete and $\mathbb{P}_1$ is absolutely continuous with respect to the Lebesgue measure. Then the information dimension $d(\xi)=d$, and the $d(\xi)$-dimensional entropy is given by
    \begin{align}
        \mathbb{H}_{d(\xi)}(\xi)=(1-d)\cdot\sum_{k=1}^{\infty}p_k\log\frac{1}{p_k}-d\cdot\int_{\mathbb{R}}\frac{d\mathbb{P}_1}{d\lambda}(x)\log\bigg(\frac{d\mathbb{P}_1}{d\lambda}(x)\bigg)d\lambda(x)+d\log\frac{1}{d}+(1-d)\log\frac{1}{1-d}
    \end{align}
    where $\lambda$ is the Lebesgue measure.
\end{definition}
\subsection{Proof of Theorem~\ref{theorem:renyiinfoentropyofrecgengauss} (The $d(\xi)$-dimensional Entropy of the Rectified Generalized Gaussian Distribution)}\label{sec:proofofrenyiinfodimsec}
In the following section, we prove the $d(\boldsymbol{\xi})$-dimensional entropy characterization of the Multivariate Rectified Generalized Gaussian distribution presented in Theorem~\ref{theorem:renyiinfoentropyofrecgengauss}.
\begin{proof}\label{proof:proofofrenyiinfoentropyofrecgengauss}
    Since $\boldsymbol{\xi}\sim\prod_{i=1}^{D}\mathcal{RGN}_p(\mu,\sigma)$ where each $\boldsymbol{\xi}_i\sim\mathcal{RGN}_p(\mu,\sigma)$ are i.i.d, it's trivial that
    \begin{align}
        d(\boldsymbol{\xi})&=D\cdot d(\boldsymbol{\xi}_i)\\
        \mathbb{H}_{d(\boldsymbol{\xi})}(\boldsymbol{\xi})&=\sum_{i=1}^{D}\mathbb{H}_{d(\boldsymbol{\xi}_i)}(\boldsymbol{\xi}_i)=D\cdot \mathbb{H}_{d(\boldsymbol{\xi}_i)}(\boldsymbol{\xi}_i)
    \end{align}
    for all $i$ by independence. In \cref{sec:genoftotalcorrelationundernuent}, we also present an alternative interpretation of the $d(\boldsymbol{\xi})$-dimensional entropy that enables the decomposition of the joint entropy $\mathbb{H}_{d(\boldsymbol{\xi})}(\boldsymbol{\xi})$ into the sums of the marginals $\mathbb{H}_{d(\boldsymbol{\xi}_i)}(\boldsymbol{\xi}_i)$ under the independence assumption. Thus it suffices to prove in the univariate case. By Definition~\ref{def:measuretheoreticaldefofrgg} and Definition~\ref{def:ddimentformixedmeasuresdef}, we know that the information dimension is given by
    \begin{align}
        d(\boldsymbol{\xi}_i)=1-\Phi_{\mathcal{GN}_p(0,1)}\bigg(-\frac{\mu}{\sigma}\bigg)=\Phi_{\mathcal{GN}_p(0,1)}\bigg(\frac{\mu}{\sigma}\bigg)
    \end{align}
    We observe that $\mathbb{P}_0$ in Definition~\ref{def:ddimentformixedmeasuresdef} correspond to the Dirac measure $\delta_0$ in Definition~\ref{def:measuretheoreticaldefofrgg}. Thus
    \begin{align}
        (1-d(\boldsymbol{\xi}_i))\cdot\sum_{k=1}^{\infty}p_k\log\frac{1}{p_k}=(1-d(\boldsymbol{\xi}_i))\cdot(1\log 1)=0
    \end{align}
    Now we can define a Bernoulli gating random variable 
    \begin{align}
        \mathbbm{1}_{(0,\infty)}(\boldsymbol{\xi}_i)=\begin{cases}
            1, \text{if } \boldsymbol{\xi}_i\in (0,\infty)\implies \text{with probability }d(\boldsymbol{\xi}_i)\\
            0, \text{if } \boldsymbol{\xi}_i\notin (0,\infty)\implies \text{with probability }1-d(\boldsymbol{\xi}_i)
        \end{cases}
    \end{align}
    It's well known that the Shannon entropy for a Bernoulli random variable is 
    \begin{align}
        \mathbb{H}_0(\mathbbm{1}_{(0,\infty)}(\boldsymbol{\xi}_i))&=d(\boldsymbol{\xi}_i)\log\frac{1}{d(\boldsymbol{\xi}_i)}+(1-d(\boldsymbol{\xi}_i))\log\frac{1}{1-d(\boldsymbol{\xi}_i)}
    \end{align}
    Thus by Definition~\ref{def:ddimentformixedmeasuresdef}, the $d(\boldsymbol{\xi}_i)$-dimensional entropy is 
    \begin{align}
        \mathbb{H}_{d(\boldsymbol{\xi}_i)}(\boldsymbol{\xi}_i)&=0-d(\boldsymbol{\xi}_i)\cdot\int_{\mathbb{R}}\frac{d\mathbb{P}_{\mathcal{TGN}_p(\mu,\sigma)}}{d\lambda}(x)\log\bigg(\frac{d\mathbb{P}_{\mathcal{TGN}_p(\mu,\sigma)}}{d\lambda}(x)\bigg)d\lambda(x)+\mathbb{H}_0(\mathbbm{1}_{(0,\infty)}(\boldsymbol{\xi}_i))\\
        &=\Phi_{\mathcal{GN}_p(0,1)}\bigg(\frac{\mu}{\sigma}\bigg)\cdot\mathbb{H}_{1}(\mathcal{TGN}_p(\mu,\sigma))+\mathbb{H}_{0}(\mathbbm{1}_{(0,\infty)}(\boldsymbol{\xi}_i))
    \end{align}
    So we have proven the expression in Theorem~\ref{theorem:renyiinfoentropyofrecgengauss}.
\end{proof}

\subsection{Empirical Estimators of the $d(\xi)$-dimensional entropy}\label{sec:empestimateofrenyiinfodimsec}

\begin{lemma}[Probability Measure Under Rectification]\label{lemma:probmeasureunderrect}
Let $X\sim\mathbb{P}_{X}$ be a real-valued random variable where $\mathbb{P}_X$ is absolutely continuous with respect to the Lebesgue measure $\lambda$, i.e. $\mathbb{P}_X\ll\lambda$. Then the probability measure of $Z:=\max(0, X)$ over $([0,\infty),\mathcal{B}([0,\infty)))$ is 
\begin{align}
    \mathbb{P}_Z=(1-d)\cdot\delta_0+d\cdot\mathbb{P}_{X\mid (0,\infty)}
\end{align}
where $\delta_0$ is the Dirac measure and $1-d:=\mathbb{P}(Z=0)=\mathbb{P}(X\leq0)$ and 
\begin{align}
    \mathbb{P}_{X\mid (0,\infty)}(A):=\frac{\mathbb{P}_X(A\cap(0,\infty))}{\mathbb{P}_X((0,\infty))}=\frac{\mathbb{P}_X(A)}{d}
\end{align}
for any Borel $A\subset (0,\infty)$.
\end{lemma}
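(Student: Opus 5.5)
The plan is to compute $\mathbb{P}_Z(A)$ directly for an arbitrary Borel set $A\subseteq[0,\infty)$, by splitting $A$ into its intersection with $\{0\}$ and with $(0,\infty)$. Since $Z=\max(0,X)$, the events decompose as
\begin{align}
\{Z\in A\}=\big(\{Z=0\}\cap\{0\in A\}\big)\,\cup\,\{X\in A\cap(0,\infty)\},
\end{align}
and the two pieces are disjoint. On $(0,\infty)$ the map $x\mapsto\max(0,x)$ is the identity, so $Z\in A\cap(0,\infty)$ if and only if $X\in A\cap(0,\infty)$. Also $\{Z=0\}=\{X\le 0\}$, which gives $1-d=\mathbb{P}(X\le 0)$ and $d=\mathbb{P}_X((0,\infty))$. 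Hence
\begin{align}
\mathbb{P}_Z(A)=(1-d)\,\delta_0(A)+\mathbb{P}_X(A\cap(0,\infty)).
\end{align}

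Next, assume first that $d>0$. Then I would rewrite the second term as $d\cdot\mathbb{P}_X(A\cap(0,\infty))/\mathbb{P}_X((0,\infty))$, which equals $d\cdot\mathbb{P}_{X\mid(0,\infty)}(A)$ by definition. This yields the mixture identity. For Borel $A\subset(0,\infty)$ we have $A\cap(0,\infty)=A$, so $\mathbb{P}_{X\mid(0,\infty)}(A)=\mathbb{P}_X(A)/d$.

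The remaining points are bookkeeping. First, $\mathbb{P}_{X\mid(0,\infty)}$ is a probability measure on $[0,\infty)$ that assigns zero mass to $\{0\}$. Second, the degenerate case $d=0$ must be handled: there $\mathbb{P}_Z=\delta_0$ and the conditional term is vacuous under the convention $0\cdot(\cdot)=0$. Third, absolute continuity $\mathbb{P}_X\ll\lambda$ is used only to note that $\mathbb{P}(X=0)=0$. Because of this, it does not matter whether the boundary point $0$ is placed in the Dirac part or the continuous part, and $\mathbb{P}_{X\mid(0,\infty)}\ll\lambda$. That last fact is what makes the result match Definition~\ref{def:ddimentformixedmeasuresdef}.

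I expect no real obstacle. The only care needed is the disjoint decomposition of $\{Z\in A\}$ and the $d=0$ edge case.
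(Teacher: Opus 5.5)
Your proof is correct and is essentially the paper's argument: the paper writes $\mathbb{P}_Z$ as the pushforward of $\mathbb{P}_X$ under $\varphi(x)=\max(0,x)$ and splits $\varphi^{-1}(B)$ into its parts in $(-\infty,0]$ and $(0,\infty)$, which is exactly your disjoint decomposition of $\{Z\in A\}$. Your explicit treatment of the degenerate case $d=0$, which the paper tacitly excludes by dividing by $d$, is a small improvement. Note also that the identity itself never uses $\mathbb{P}_X\ll\lambda$, not even through $\mathbb{P}(X=0)=0$, because the conditional part lives on the open interval $(0,\infty)$; the hypothesis matters only downstream, to give $\mathbb{P}_{X\mid(0,\infty)}\ll\lambda$.
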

\begin{proof}
Let $\varphi:\mathbb{R}\to[0,\infty)$ be the rectification map $\varphi(x):=\max(0,x).$ Then $\mathbb{P}_Z$ is the pushforward of $\mathbb{P}_X$ by $\varphi$, i.e. for any Borel set
$B\in\mathcal{B}([0,\infty))$,
\begin{align}
\mathbb{P}_Z(B)=\mathbb{P}_X(\varphi^{-1}(B)).
\end{align}
We can write $\varphi^{-1}(B)$ as
\begin{align}
\varphi^{-1}(B)=\big(\varphi^{-1}(B)\cap(-\infty,0]\big)\;\cup\;\big(\varphi^{-1}(B)\cap(0,\infty)\big),
\end{align}
For $x\in(-\infty,0]$, $\varphi(x)=0$. So we have
\begin{align}
\varphi^{-1}(B)\cap(-\infty,0]=
\begin{cases}
(-\infty,0], & 0\in B,\\
\emptyset, & 0\notin B.
\end{cases}
\end{align}
Now for $(0,\infty)$, $\varphi(x)=x$. So we have
\begin{align}
\varphi^{-1}(B)\cap(0,\infty)=B\cap(0,\infty).
\end{align}
Combining these together, we arrive at
\begin{align}
\mathbb{P}_Z(B)=\mathbb{P}_X(\varphi^{-1}(B))=\mathbb{P}_X(B\cap(0,\infty))+\mathbb{P}_X((-\infty,0])\cdot\delta_0(B)
\end{align}
where $\delta_0(B)$ is the Dirac measure in Definition~\ref{def:diracmeasuredef} that evaluates to $1$ if $0\in B$ and $0$ otherwise. 

Let $d:=\mathbb{P}(X>0)=\mathbb{P}_X((0,\infty))$. So trivially, $1-d=\mathbb{P}(X\le 0)=\mathbb{P}_X((-\infty,0])$. By the definition of the conditional measure, we have that for any $A\in\mathcal{B}(\mathbb{R})$,
\begin{align}
\mathbb{P}_{X\mid(0,\infty)}(A):=\frac{\mathbb{P}_X(A\cap(0,\infty))}{\mathbb{P}_X((0,\infty))}=\frac{\mathbb{P}_X(A\cap(0,\infty))}{d},
\qquad A\in\mathcal{B}(\mathbb{R}). 
\end{align}
Then for every $B\in\mathcal{B}([0,\infty))$,
\begin{align}
\mathbb{P}_Z(B)=d\,\mathbb{P}_{X\mid(0,\infty)}(B)+(1-d)\,\delta_0(B)
\end{align}
Thus we have proven the expression of the probability measure. Now if $A\subset(0,\infty)$ is Borel, then $A\cap(0,\infty)=A$ and we have $\mathbb{P}_{X\mid(0,\infty)}(A)=\mathbb{P}_X(A)/d$.

\end{proof}

\begin{lemma}[Radon–Nikodym Derivative Under Rectifications]\label{lemma:generalradonnikodymunderrectifications}
Let $X\sim\mathbb{P}_{X}$ be a real-valued random variable where $\mathbb{P}_X$ is absolutely continuous with respect to the Lebesgue measure $\lambda$, i.e. $\mathbb{P}_X\ll\lambda$. Consider $Z:=\max(0, X)\sim\mathbb{P}_{Z}$ and let $\delta_0$ be the Dirac measure in Definition~\ref{def:diracmeasuredef}. Then the Radon–Nikodym derivative of $\mathbb{P}_{Z}$ with respect to $\nu=\delta_0+\lambda$ is given by
\begin{align}
\frac{d\mathbb{P}_{Z}}{d\nu}(x)=(1-d)\cdot\mathbbm{1}_{\{0\}}(x)+d\cdot\frac{d\mathbb{P}_{X|(0,\infty)}}{d\lambda}(x)\cdot\mathbbm{1}_{(0,\infty)}(x)
\end{align}
\end{lemma}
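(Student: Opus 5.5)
The plan mirrors the proof of Lemma~\ref{lemma:radonnikodymderivativeofrgg}, with the Generalized Gaussian density replaced by an arbitrary density.

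\textbf{Step 1 (absolute continuity).} By Lemma~\ref{lemma:probmeasureunderrect}, $\mathbb{P}_Z=(1-d)\cdot\delta_0+d\cdot\mathbb{P}_{X\mid(0,\infty)}$, where $d=\mathbb{P}_X((0,\infty))$. I will first show $\mathbb{P}_Z\ll\nu$, following the argument of Lemma~\ref{lemma:absolutecontofrggwithrespecttomixed}. If $\nu(A)=0$, then both $\delta_0(A)=0$ and $\lambda(A)=0$. The first gives $0\notin A$, so the atom term vanishes. The second gives $\mathbb{P}_X(A\cap(0,\infty))=0$, since $\mathbb{P}_X\ll\lambda$. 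Hence $\mathbb{P}_Z(A)=0$.

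\textbf{Step 2 (existence of the conditional density).} By Radon--Nikodym, $d\mathbb{P}_Z/d\nu$ exists. Since $\mathbb{P}_X\ll\lambda$, it has a density $f_X$. The conditional measure $\mathbb{P}_{X\mid(0,\infty)}$ is also absolutely continuous, with density
\begin{align}
\frac{d\mathbb{P}_{X\mid(0,\infty)}}{d\lambda}=\frac{f_X\,\mathbbm{1}_{(0,\infty)}}{d}.
\end{align}
This requires $d>0$. If $d=0$, the second term is read as zero: $\mathbb{P}_Z=\delta_0$, and the claimed formula reduces to $\mathbbm{1}_{\{0\}}$, which is correct.

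\textbf{Step 3 (verification).} Let $g$ denote the candidate derivative in the statement. It suffices to check $\int_A g\,d\nu=\mathbb{P}_Z(A)$ for every Borel $A$. I will split the integral as $\int_A g\,d\delta_0+\int_A g\,d\lambda$.

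For the $\delta_0$ part, evaluate at $0$. There $\mathbbm{1}_{(0,\infty)}(0)=0$, so
\begin{align}
\int_A g\,d\delta_0=(1-d)\,\delta_0(A).
\end{align}

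For the $\lambda$ part, the term $(1-d)\mathbbm{1}_{\{0\}}$ integrates to $(1-d)\lambda(A\cap\{0\})=0$. The remaining term gives
\begin{align}
\int_{A\cap(0,\infty)} d\cdot\frac{d\mathbb{P}_{X\mid(0,\infty)}}{d\lambda}\,d\lambda=d\cdot\mathbb{P}_{X\mid(0,\infty)}(A).
\end{align}
Summing the two parts recovers the mixture representation of $\mathbb{P}_Z$ from Lemma~\ref{lemma:probmeasureunderrect}.

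\textbf{Step 4 (uniqueness).} By $\nu$-a.e.\ uniqueness of Radon--Nikodym derivatives, this identifies $d\mathbb{P}_Z/d\nu$.

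\textbf{Main obstacle.} The only delicate point is the bookkeeping at the atom $x=0$. One must ensure no double counting between $\delta_0$ and $\lambda$: the Dirac part must pick up only the $\mathbbm{1}_{\{0\}}$ term, and the Lebesgue part only the continuous term. This works because $\lambda(\{0\})=0$ and the continuous term vanishes at $0$.

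Two further subtleties are minor. Sets $A$ containing $0$ are handled because the continuous term integrates only over $A\cap(0,\infty)$. The degenerate case $d=0$ was addressed in Step 2.
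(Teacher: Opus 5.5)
Your proposal is correct and follows essentially the same route as the paper. Both establish absolute continuity via the argument of Lemma~\ref{lemma:absolutecontofrggwithrespecttomixed}, then verify $\int_A g\,d\nu=\mathbb{P}_Z(A)$ by splitting $\nu$ into its Dirac and Lebesgue parts and matching the mixture form from Lemma~\ref{lemma:probmeasureunderrect}; your explicit handling of the degenerate case $d=0$ and of $\nu$-a.e.\ uniqueness are small additions the paper omits, not a change of method.
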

\begin{proof}
Following the same arguments in Lemma~\ref{lemma:absolutecontofrggwithrespecttomixed}, we know that $\mathbb{P}_Z$ is absolutely continuous with respect to $\nu$, i.e. $\mathbb{P}_Z\ll \nu$. Again, following the same arguments in Lemma~\ref{lemma:radonnikodymderivativeofrgg}, we observe that for any Borel $A\subset [0,\infty)$
\begin{align}
    \int_A\frac{d\mathbb{P}_{Z}}{d\nu}d\nu&=\int_A\frac{d\mathbb{P}_{Z}}{d\nu}d\delta_0+\int_A\frac{d\mathbb{P}_{Z}}{d\nu}d\lambda
\end{align}
Notice that
\begin{align}
    \int_A\frac{d\mathbb{P}_{Z}}{d\nu}d\delta_0=\frac{d\mathbb{P}_{Z}}{d\nu}(0)\delta_0(A)=(1-d)\cdot\delta_0(A)
\end{align}
and 
\begin{align}
    \int_A\frac{d\mathbb{P}_{Z}}{d\nu}d\lambda&=\int_A(1-d)\cdot\mathbbm{1}_{\{0\}}(x)d\lambda(x)+\int_Ad\cdot\frac{d\mathbb{P}_{X|(0,\infty)}}{d\lambda}(x)\cdot\mathbbm{1}_{(0,\infty)}(x)d\lambda(x)\\
    &=(1-d)\cdot\int_A\mathbbm{1}_{\{0\}}(x)d\lambda(x)+\int_{A\cap (0,\infty)}d\cdot\frac{d\mathbb{P}_{X|(0,\infty)}}{d\lambda}(x)\cdot d\lambda(x)\\
    &=0+d\cdot\int_{A\cap (0,\infty)} d\mathbb{P}_{X|(0,\infty)}(x) \\
    &=d\cdot\mathbb{P}_{X|(0,\infty)}(A)
\end{align}
Putting everything together, we have
\begin{align}
    \int_A\frac{d\mathbb{P}_{Z}}{d\nu}d\nu&=(1-d)\cdot\delta_0(A)+d\cdot\mathbb{P}_{X|(0,\infty)}(A)=\mathbb{P}_{Z}(A)
\end{align}
Thus we have shown that the Radon–Nikodym derivative is correct.
\end{proof}

Consider a real-valued random variable $X$ from some unknown distribution $\mathbb{P}_X$ that's absolutely continuous with respect to the Lebesgue measure. Let $Z=\max(X,0)$ be the rectified random variable. Then by Lemma~\ref{lemma:probmeasureunderrect}, the probability measure of $Z$ can be written as
\begin{align}
    \mathbb{P}_Z=(1-d)\cdot\delta_0+d\cdot\mathbb{P}_{X\mid (0,\infty)}\label{eq:decompositionofgeneralrectifiedvar}
\end{align}
where $\delta_0$ is the Dirac measure and $1-d:=\mathbb{P}(Z=0)=\mathbb{P}(X\leq0)$ and 
\begin{align}
    \mathbb{P}_{X\mid (0,\infty)}(A)=\frac{\mathbb{P}_X(A)}{\mathbb{P}_X((0,\infty))}
\end{align}
for any Borel $A\subset (0,\infty)$. This is a probabilistic model that is suitable for characterizing the neural network output feature marginal distributions after rectifications. Notice that \cref{eq:decompositionofgeneralrectifiedvar} is in the form presented in Definition~\ref{def:ddimentformixedmeasuresdef}. Thus it's valid to compute the $d(Z)$-dimensional entropy for any distribution that follows the decomposition in \cref{eq:decompositionofgeneralrectifiedvar}. We also observe that the Rectified Generalized Gaussian probability measure is just a special case of \cref{eq:decompositionofgeneralrectifiedvar}.

By Definition~\ref{def:ddimentformixedmeasuresdef}, the $d(Z)$-dimensional entropy is given by
\begin{align}
    \mathbb{H}_{d(Z)}(Z)=d(Z)\cdot\mathbb{H}_1(\mathbb{P}_{X\mid (0,\infty)})+\mathbb{H}_{0}(\mathbbm{1}_{(0,\infty)}(Z))
\end{align}
In practice, we will have samples $\{z_i\}_{i=1}^{B}$ from the random variable $Z$. We can estimate the information dimension by
\begin{align}
    \hat{d}(Z)=\frac{1}{B}\sum_{i=1}^{B}\mathbbm{1}_{(0,\infty)}(z_i)
\end{align}
Now we consider a subset $\{z_i\}_{i=1}^{B'}$ where each $z_i>0$. The differential entropy over $\{z_i\}_{i=1}^{B'}$ can be computed using the $m$-spacing estimator \citep{vasicek1976test,learned2003ica}
\begin{align}
    \hat{\mathbb{H}}_1(\mathbb{P}_{X\mid (0,\infty)})=\frac{1}{B'-m}\sum_{i=1}^{B'-m}\log\bigg(\frac{B'+1}{m}\bigg(z_{(i+m)}-z_{(i)}\bigg)\bigg)
\end{align}
where $m$ is a spacing hyperparameter, and $\{z_{(i)}\mid z_{(1)}\leq z_{(2)}\leq\dots \leq z_{(B')}\}_{i=1}^{B'}$ are sorted samples of the original set $\{z_i\}_{i=1}^{B'}$. Putting these estimators together, the empirical $d(Z)$-dimensional entropy can be computed as
\begin{align}
    \hat{\mathbb{H}}_{d(Z)}(Z)=\hat{d}(Z)\cdot\hat{\mathbb{H}}_1(\mathbb{P}_{X\mid (0,\infty)})+\hat{d}(Z)\log\frac{1}{\hat{d}(Z)}+(1-\hat{d}(Z))\log\frac{1}{1-\hat{d}(Z)}
\end{align}

If we consider the multivariate case for the random vector $\mathbf{z}=\operatorname{ReLU}(\mathbf{x})$, where $\mathbf{x}\in\mathbb{R}^D$ follows some unknown distribution $\mathbb{P}_{\mathbf{x}}$ and $\operatorname{ReLU}(\cdot)$ applies coordinate-wise, then in general we cannot compute the $d(\mathbf{z})$-dimensional entropy of the joint distribution $\mathbb{P}_{\mathbf{z}}$ both due to lack of estimators and intractable complexity.

However, we can compute the upper bound of the joint entropy by computing the sums of the marginal entropies
\begin{align}
    \mathbb{H}_{d(\mathbf{z})}(\mathbf{z})\leq\sum_{i=1}^{D}\mathbb{H}_{d(\mathbf{z}_i)}(\mathbf{z}_i)
\end{align}
where "$\leq$" reduces to the equality sign "$=$" if we have independence between all dimensions. 

\subsection{Alternative Interpretation of the $d(\xi)$-dimensional Entropy}\label{sec:alternativeinterpretofddiment}

Let's denote the standard differential entropy as $\mathbb{H}_{\lambda}(X)$
\begin{align}
    \mathbb{H}_{\lambda}(X)=-\int \frac{d\mathbb{P}_{X}}{d\lambda}\log\bigg(\frac{d\mathbb{P}_{X}}{d\lambda}\bigg)d\lambda
\end{align}
where $\lambda$ is the Lebesgue measure, $X\sim\mathbb{P}_{X}$ is a real-valued random variable, and $\mathbb{P}_{X}\ll \lambda$. We know from \cref{sec:empestimateofrenyiinfodimsec} that the probability measure of $Z:=\operatorname{ReLU}(X)$ is defined as 
\begin{align}
    \mathbb{P}_Z=(1-d)\cdot\delta_0+d\cdot\mathbb{P}_{X\mid (0,\infty)}
\end{align}

Let's consider another definition of entropy with respect to the mixued measure $\nu:=\delta_0+\lambda$
\begin{align}
    \mathbb{H}_{\nu}(Z)=-\int \frac{d\mathbb{P}_{Z}}{d\nu}\log\bigg(\frac{d\mathbb{P}_{Z}}{d\nu}\bigg)d\nu
\end{align}

In Lemma~\ref{lemma:equidefofddiment}, we show that this coincides with the $d(Z)$-dimensional entropy in Definition~\ref{def:ddimentformixedmeasuresdef}. 

\begin{lemma}\label{lemma:equidefofddiment}
    The entropy $\mathbb{H}_{\nu}(Z)$ is equivalent to the $d(Z)$-dimensional entropy
    \begin{align}
        \mathbb{H}_{\nu}(Z)=H_{d(Z)}(Z)
    \end{align}
\end{lemma}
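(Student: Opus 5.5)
The plan is to compute $\mathbb{H}_{\nu}(Z)$ directly from the explicit Radon--Nikodym derivative in Lemma~\ref{lemma:generalradonnikodymunderrectifications}, and then compare the result term by term with the formula in Definition~\ref{def:ddimentformixedmeasuresdef}.

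Write $g:=d\mathbb{P}_{X\mid(0,\infty)}/d\lambda$. By that lemma,
\[
\frac{d\mathbb{P}_Z}{d\nu}(x)=(1-d)\mathbbm{1}_{\{0\}}(x)+d\,g(x)\mathbbm{1}_{(0,\infty)}(x).
\]
Since $\nu=\delta_0+\lambda$, the integral defining $\mathbb{H}_\nu(Z)$ splits into a $\delta_0$-part and a $\lambda$-part:
\[
\mathbb{H}_\nu(Z)=-\int f\log f\,d\delta_0-\int f\log f\,d\lambda ,
\]
where $f=d\mathbb{P}_Z/d\nu$. This split is legitimate because each piece is integrable; that reduces to integrability of $g\log g$, i.e.\ finiteness of the differential entropy of the truncated component, which I assume as a standing hypothesis.

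The $\delta_0$-part just evaluates $f$ at $0$. Because $\mathbbm{1}_{(0,\infty)}(0)=0$, this gives $-(1-d)\log(1-d)$.

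For the $\lambda$-part, the atom $\{0\}$ is $\lambda$-null, so only $(0,\infty)$ contributes, with integrand $d\,g\log(d\,g)$. Expanding $\log(dg)=\log d+\log g$ and using $\int_{(0,\infty)} g\,d\lambda=1$ gives
\[
-d\log d-d\int g\log g\,d\lambda=d\log\tfrac{1}{d}+d\,\mathbb{H}_1(\mathbb{P}_{X\mid(0,\infty)}).
\]

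Summing the two parts yields
\[
d\,\mathbb{H}_1(\mathbb{P}_{X\mid(0,\infty)})+d\log\tfrac1d+(1-d)\log\tfrac{1}{1-d}.
\]
This is exactly the expression in Definition~\ref{def:ddimentformixedmeasuresdef} with $\mathbb{P}_0=\delta_0$, $\mathbb{P}_1=\mathbb{P}_{X\mid(0,\infty)}$ and $d(Z)=d$. The discrete sum there vanishes because $\delta_0$ has a single atom of mass $1$, so $1\log 1=0$, as in the proof of Theorem~\ref{theorem:renyiinfoentropyofrecgengauss}. The two sides therefore agree, and the identity $\mathbb{H}_{d(Z)}(Z)=d(Z)\mathbb{H}_1+\mathbb{H}_0(\mathbbm{1}_{(0,\infty)}(Z))$ from \cref{sec:empestimateofrenyiinfodimsec} follows.

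The main obstacle is not the algebra but two bookkeeping points. First, the degenerate cases $d\in\{0,1\}$ must be handled with the convention $0\log0=0$: if $d=0$, both sides reduce to $0$; if $d=1$, both sides reduce to the differential entropy. Second, the additive split of the $\nu$-integral needs justification. I would handle it by noting that $f\log f$ is $\nu$-integrable whenever $g\log g\in L^1(\lambda)$, and I would state that hypothesis explicitly, since Rényi's mixed-measure formula itself requires it.
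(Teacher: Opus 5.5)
Your proposal is correct and follows the same route as the paper. You split the $\nu$-integral into its $\delta_0$- and $\lambda$-parts using the Radon--Nikodym derivative from Lemma~\ref{lemma:generalradonnikodymunderrectifications}, obtain $-(1-d)\log(1-d)$ and $d\,\mathbb{H}_1(\mathbb{P}_{X\mid(0,\infty)})-d\log d$, and match the sum against Definition~\ref{def:ddimentformixedmeasuresdef}, where the discrete term for $\delta_0$ vanishes; your explicit integrability hypothesis and your treatment of $d\in\{0,1\}$ are extra care the paper omits, and your constant $(1-d)\log\frac{1}{1-d}$ is the correct one (the paper's final line writes $\frac{1}{d-1}$, a typo).
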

\begin{proof}
    We start by expanding the integral
    \begin{align}
        \mathbb{H}_{\nu}(Z)&=-\int \frac{d\mathbb{P}_{Z}}{d\nu}\log\bigg(\frac{d\mathbb{P}_{Z}}{d\nu}\bigg)d\nu\\
        &=-\int \frac{d\mathbb{P}_{Z}}{d\nu}\log\bigg(\frac{d\mathbb{P}_{Z}}{d\nu}\bigg)d\delta_0-\int \frac{d\mathbb{P}_{Z}}{d\nu}\log\bigg(\frac{d\mathbb{P}_{Z}}{d\nu}\bigg)d\lambda
    \end{align}
    By the property of the Dirac measure, we have
    \begin{align}
        -\int \frac{d\mathbb{P}_{Z}}{d\nu}(x)\log\bigg(\frac{d\mathbb{P}_{Z}}{d\nu}(x)\bigg)d\delta_0(x)=-\frac{d\mathbb{P}_{Z}}{d\nu}(0)\log\bigg(\frac{d\mathbb{P}_{Z}}{d\nu}(0)\bigg)=-(1-d)\log(1-d)
    \end{align}
    Lemma~\ref{lemma:generalradonnikodymunderrectifications} tells us that 
    \begin{align}
        \frac{d\mathbb{P}_{Z}}{d\nu}(x)=(1-d)\cdot\mathbbm{1}_{\{0\}}(x)+d\cdot\frac{d\mathbb{P}_{X|(0,\infty)}}{d\lambda}(x)\cdot\mathbbm{1}_{(0,\infty)}(x)
    \end{align}
    Due to the term $\mathbbm{1}_{\{0\}}(x)$, any Lebesgue measure evaluates to $0$ since $\{0\}$ is a Lebesgue measure zero set. So effectively, we can write 
    \begin{align}
        -\int \frac{d\mathbb{P}_{Z}}{d\nu}\log\bigg(\frac{d\mathbb{P}_{Z}}{d\nu}\bigg)d\lambda
        &=-\int d\cdot\frac{d\mathbb{P}_{X|(0,\infty)}}{d\lambda}(x)\cdot\mathbbm{1}_{(0,\infty)}(x)\log(d\cdot\frac{d\mathbb{P}_{X|(0,\infty)}}{d\lambda}(x)\cdot\mathbbm{1}_{(0,\infty)}(x))d\lambda(x)\\
        &=-\int d\cdot\frac{d\mathbb{P}_{X|(0,\infty)}}{d\lambda}(x)\cdot\mathbbm{1}_{(0,\infty)}(x)\log(\frac{d\mathbb{P}_{X|(0,\infty)}}{d\lambda}(x)\cdot\mathbbm{1}_{(0,\infty)}(x))d\lambda(x)\\&-\int d\cdot\frac{d\mathbb{P}_{X|(0,\infty)}}{d\lambda}(x)\cdot\mathbbm{1}_{(0,\infty)}(x)\log(d)d\lambda(x)\\
        &=d\cdot\mathbb{H}_1(\mathbb{P}_{X|(0,\infty)})-d\log(d)
    \end{align}
    Combing the terms together, we have
    \begin{align}
        \mathbb{H}_{\nu}(Z)&=d\cdot\mathbb{H}_1(\mathbb{P}_{X|(0,\infty)})-d\log(d)-(1-d)\log(1-d)\\
        &=d\cdot\mathbb{H}_1(\mathbb{P}_{X|(0,\infty)})+d\log\bigg(\frac{1}{d}\bigg)+(1-d)\log\bigg(\frac{1}{d-1}\bigg)
    \end{align}
    By Definition~\ref{def:ddimentformixedmeasuresdef}, the information dimension $d(Z)=d$. Notice how $\mathbb{H}_0(\delta_0)=0$. So we have
    \begin{align}
        \mathbb{H}_{\nu}(Z)&=d(Z)\cdot\mathbb{H}_1(\mathbb{P}_{X|(0,\infty)})+d(Z)\log\bigg(\frac{1}{d(Z)}\bigg)+(1-d(Z))\log\bigg(\frac{1}{d(Z)-1}\bigg)=H_{d(Z)}(Z).
    \end{align}
\end{proof}

\subsection{Generalization of the Entropy Decomposition of Total Correlation}\label{sec:genoftotalcorrelationundernuent}

The standard definition of total correlation for the the random vector $\mathbf{x}=(\mathbf{x}_1,\dots,\mathbf{x}_D)\sim\mathbb{P}_{\mathbf{x}}$ in $D$ dimensions is
\begin{align}
    \operatorname{TC}(\mathbf{x})=D_{\operatorname{KL}}\bigg(\mathbb{P}_{\mathbf{x}}\bigg\|\prod_{i=1}^{D}\mathbb{P}_{\mathbf{x}_i}\bigg)=\int\log\bigg(\frac{d\mathbb{P}_{\mathbf{x}}}{\prod_{i=1}^{D}\mathbb{P}_{\mathbf{x}_i}}\bigg)d\mathbb{P}_{\mathbf{x}}
\end{align}
which only involves the joint probability measure and the product of marginal probability measures. However, when it comes to the entropy decomposition of the total correlation, we know that
\begin{align}
    \operatorname{TC}(\mathbf{x})=\sum_{i=1}^{D}\mathbb{H}_{\lambda}(\mathbf{x}_i)-\mathbb{H}_{\lambda^{\otimes D}}(\mathbf{x})
\end{align}
where $\lambda$ is the Lebesgue measure over $\mathbb{R}$ and $\lambda^{\otimes D}$ is the Lebesgue measure over $\mathbb{R}^d$. For our purposes, we adopt the decomposition
\begin{align}
    \operatorname{TC}(\mathbf{x})=\sum_{i=1}^{D}\mathbb{H}_{\nu}(\mathbf{x}_i)-\mathbb{H}_{\nu^{\otimes D}}(\mathbf{x})
\end{align}
where $\nu:=\delta_0+\lambda$ and $\mathbb{H}_{\nu}$ is defined in Lemma~\ref{lemma:equidefofddiment} and is equivalent to the $d$-dimensional entropy. So we have
\begin{align}
    \operatorname{TC}(\mathbf{x})=\sum_{i=1}^{D}\mathbb{H}_{d(\mathbf{x}_i)}(\mathbf{x}_i)-\mathbb{H}_{d(\mathbf{x})}(\mathbf{x})
\end{align}

\section{Hilbert-Schmidt Independence Criterion}\label{sec:hsicdetailsappendix}

For two random variables $X,Y$ with empirical samples $\mathbf{x},\mathbf{y}\in\mathbb{R}^{B\times 1}$, the empirical Hilbert-Schmidt Independence Criterion (HSIC) is given by
\begin{align}
    \operatorname{HSIC}(X,Y)=\frac{1}{(B-1)^2}\operatorname{Tr}(\mathbf{KHLH})
\end{align}
where $\mathbf{K}_{ij}=k(\mathbf{x}_i,\mathbf{x}_j)$, $\mathbf{L}_{ij}=l(\mathbf{x}_i,\mathbf{x}_j)$ for some kernels $k,l$, $\mathbf{H}:=\mathbf{I}-(1/B)\cdot\mathbf{1}\mathbf{1}^\top$ is the centering matrix, and $\operatorname{Tr}(\cdot)$ is the trace operator \citep{gretton2005measuring}. We denote the normalized HSIC as 
\begin{align}
    \operatorname{nHSIC}(X,Y)=\frac{\operatorname{HSIC}(X,Y)}{\sqrt{\operatorname{HSIC}(X,X)\cdot \operatorname{HSIC}(Y,Y)}}
\end{align}
Both HSIC and nHSIC capture nonlinear dependencies beyond second-order statistics and thus serve as a proxy for measuring statistical independence beyond inspecting the covariance matrix. 

For a feature random vector $\mathbf{z}\in\mathbb{R}^d$, we can obtain the nHSIC matrix by computing all pair-wise $\operatorname{nHSIC}(\mathbf{z}_i,\mathbf{z}_j)$. In our experiments, we report the average of the off-diagonals of the nHSIC matrix in \cref{fig:hsicsubfig2}. Following \citet{mialon2022variance}, we pick the Gaussian kernel where the bandwidth parameter $\sigma$ is the median of pairwise $\ell_2$ distances between samples. Due to the presence of rectifications, we set $\sigma$ to be the standard deviation of the positive activations.

\section{Baseline Designs}\label{sec:baselinedesignssec}

\textbf{CL and NCL}. We denote SimCLR \citep{chen2020simpleframeworkcontrastivelearning} as CL. Non-Negative Contrastive Learning (NCL) \citep{wang2024nonnegativecontrastivelearning} essentially applies the SimCLR loss over rectified features and thus is a sparse variant of contrastive learning.

\textbf{VICReg and NVICReg}. VICReg \citep{bardes2022vicregvarianceinvariancecovarianceregularizationselfsupervised} minimizes the $\ell_2$ distance between the features of semantically related views while regularizing the empirical feature covariance matrix towards scalar times identity $\gamma\cdot\mathbf{I}$. We design a sparse version of VICReg, which we call Non-Negative VICReg (NVICReg), that applies the same VICReg loss over rectified features. 

\textbf{ReLU and RepReLU}. Let $z\in\mathbb{R}$. NCL \citep{wang2024nonnegativecontrastivelearning} adopts a reparameterization of the standard rectified non-linearity as
\begin{align}
    \operatorname{RepReLU}(z):=\operatorname{ReLU}(z).\operatorname{detach}() + \operatorname{GeLU}(z) - \operatorname{GeLU}(z).\operatorname{detach}()
\end{align}
where $\operatorname{detach}()$ blocks gradient flow. The $\operatorname{RepReLU}(\cdot)$ is equivalent to $\operatorname{ReLU}(\cdot)$ in the forward pass but allows gradient for negative entries. For NCL and NVICReg, we use  NCL-ReLU and NVICReg-ReLU to denote usage of $\operatorname{ReLU}(\cdot)$ and NCL-RepReLU and NVICReg-RepReLU when using $\operatorname{RepReLU}(\cdot)$. 

For our Rectified LpJEPA, we note that we just use $\operatorname{ReLU}(\cdot)$ to avoid extra hyperparameter tuning. We defer detailed investigations on the activation functions to future work.

\textbf{LpJEPA and LeJEPA}. Rectified LpJEPA regularizes rectified features towards Rectified Generalized Gaussian distributions. We also design LpJEPA, which regularize non-rectified features towards the Generalized Gaussian distributions using the same projections-based distribution matching loss. When $p=2$, LpJEPA reduces to LeJEPA \citep{balestriero2025lejepaprovablescalableselfsupervised}, since $\mathcal{GN}_2(\mu,\sigma)=\mathcal{N}(\mu,\sigma^2)$. For $0<p\leq 1$, LpJEPA is still penalizing the $\|\cdot\|_p^p$ norms of the features and thus serves as another set of sparse baselines even though all entries are non-zero.  

\section{Connection to Non-Negative VCReg}\label{sec:nonnegvcregrecovery}

The Cramér--Wold device motivates matching the feature distribution to the Rectified Generalized Gaussian target $\prod_{i=1}^{d}\mathcal{RGN}_p(\mu,\sigma)$, whose coordinates are i.i.d. in the population target. Prior work such as VICReg \citep{bardes2022vicregvarianceinvariancecovarianceregularizationselfsupervised} demonstrates that explicitly removing second-order dependencies via covariance regularization is already sufficient to prevent representational collapse in practice. This motivates us to investigate whether RDMReg likewise controls second-order dependencies more explicitly.

In Proposition~\ref{proposition:nonnegvcregrecovery}, we show a conditional covariance result: if the projected feature distribution matches the RGG target along the eigenvectors of the centered feature covariance matrix, then the centered covariance is isotropic. This statement clarifies the relationship to Non-Negative VCReg, but it is not a claim that finite random projections exactly recover VCReg.

\begin{proposition}[Implicit Regularization of Second-Order Statistics]\label{proposition:nonnegvcregrecovery}
    Let $\mathbf{z}\in\mathbb{R}^{d}$ be a neural network feature random vector with covariance matrix $\operatorname{Cov}[\mathbf{z}]=\mathbf{\Sigma}$. We denote the eigendecomposition as $\mathbf{\Sigma}=\mathbf{U}\mathbf{\Lambda}\mathbf{U}^\top$ with the set of eigenvectors being $\{\mathbf{u}_i\}_{i=1}^{d}$. Let $\mathbf{y}\sim\prod_{i=1}^{d}\mathcal{RGN}_{p}(\mu,\sigma)$ be the Rectified Generalized Gaussian random vector and define $\gamma:=\operatorname{Var}[\mathcal{RGN}_{p}(\mu,\sigma)]\in(0,\infty)$. If $\mathbf{u}_i^\top\mathbf{z}\stackrel{d}{=}\mathbf{u}_i^\top\mathbf{y}$ for all $i\in\{1,\dots,d\}$, then $\mathbf{\Sigma}=\gamma\cdot\mathbf{I}_d$.
\end{proposition}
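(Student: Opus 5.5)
The plan is to compare second moments along each eigendirection and then reassemble $\mathbf{\Sigma}$ from its spectral decomposition. Equality in distribution of two univariate random variables forces equality of every finite moment they possess. Both $\mathbf{u}_i^\top\mathbf{z}$ and $\mathbf{u}_i^\top\mathbf{y}$ have finite second moments: $\mathbf{z}$ has a covariance by assumption, and $\mathbf{y}$ has finite variance by Proposition~\ref{proposition:meanandvarofrecgengauss}. Hence, for each $i$, the hypothesis $\mathbf{u}_i^\top\mathbf{z}\stackrel{d}{=}\mathbf{u}_i^\top\mathbf{y}$ gives $\operatorname{Var}[\mathbf{u}_i^\top\mathbf{z}]=\operatorname{Var}[\mathbf{u}_i^\top\mathbf{y}]$.

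I will then evaluate each side of this identity. On the feature side, orthonormality of the eigenvectors gives
\begin{align}
\operatorname{Var}[\mathbf{u}_i^\top\mathbf{z}]=\mathbf{u}_i^\top\mathbf{\Sigma}\mathbf{u}_i=\lambda_i,
\end{align}
the $i$-th eigenvalue. On the target side, the coordinates of $\mathbf{y}$ are i.i.d.\ with common variance $\gamma$, so $\operatorname{Cov}[\mathbf{y}]=\gamma\mathbf{I}_d$. Consequently
\begin{align}
\operatorname{Var}[\mathbf{u}_i^\top\mathbf{y}]=\gamma\|\mathbf{u}_i\|_2^2=\gamma .
\end{align}
Combining the two computations yields $\lambda_i=\gamma$ for every $i$, so $\mathbf{\Lambda}=\gamma\mathbf{I}_d$. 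Therefore $\mathbf{\Sigma}=\mathbf{U}(\gamma\mathbf{I}_d)\mathbf{U}^\top=\gamma\mathbf{I}_d$, because $\mathbf{U}$ is orthogonal; here I use that the eigendecomposition of the symmetric positive semidefinite matrix $\mathbf{\Sigma}$ can be taken with an orthonormal basis.

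There is no genuine obstacle in this argument. The only points needing care are the integrability justification for passing from equality in distribution to equality of variances, and the use of an orthonormal eigenbasis (not merely an arbitrary basis of eigenvectors). Both are covered by the finiteness of $\gamma$ stated in the proposition and by the spectral theorem.

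Matching of higher moments or of means along these directions is not needed for the conclusion. I will remark on this only to emphasize that the result is purely second-order: it controls $\mathbf{\Sigma}$, which is the link to Non-Negative VCReg, but says nothing further about the distribution of $\mathbf{z}$.
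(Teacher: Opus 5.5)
Your proposal is correct and follows essentially the same route as the paper: both proofs equate $\operatorname{Var}(\mathbf{u}_i^\top\mathbf{z})=\lambda_i$ with $\operatorname{Var}(\mathbf{u}_i^\top\mathbf{y})=\gamma$ and then conclude $\mathbf{\Sigma}=\mathbf{U}(\gamma\mathbf{I}_d)\mathbf{U}^\top=\gamma\mathbf{I}_d$. You additionally make explicit two points the paper takes for granted: that equality in distribution passes to equality of (finite) variances, and that the eigenbasis must be chosen orthonormal.
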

\begin{proof}
Let $\mathbf{\Sigma}=\mathrm{Cov}[\mathbf{z}]$ and let $\mathbf{\Sigma}=\mathbf{U}\mathbf{\Lambda}\mathbf{U}^\top$ be its eigendecomposition, where $\mathbf{U}=[\mathbf{u}_1,\dots,\mathbf{u}_d]$ is orthonormal and $\mathbf{\Lambda}=\mathrm{diag}(\lambda_1,\dots,\lambda_d)$. Since $\mathbf{y}\sim\prod_{i=1}^{d}\mathcal{RGN}_{p}(\mu,\sigma)$ has i.i.d. coordinates with variance
$\gamma:=\mathrm{Var}[\mathcal{RGN}_{p}(\mu,\sigma)]$, its covariance satisfies
$\mathrm{Cov}[\mathbf{y}]=\gamma\mathbf{I}_d$. Hence, for any vector $\mathbf{u}_i$ such that $\|\mathbf{u}_i\|_2=1$,
\begin{align}
\operatorname{Var}(\mathbf{u}_i^\top\mathbf{y})
= \mathbf{u}_i^\top (\gamma\mathbf{I}_d)\mathbf{u}_i = \gamma\cdot\|\mathbf{u}_i\|_2^2 = \gamma.
\end{align}
By the assumption $\mathbf{u}_i^\top \mathbf{z}\stackrel{d}{=}\mathbf{u}_i^\top \mathbf{y}$ for all $i$, the variances of the one-dimensional projected marginals are equal, i.e.
\begin{align}
\operatorname{Var}(\mathbf{u}_i^\top \mathbf{z})
= \operatorname{Var}(\mathbf{u}_i^\top \mathbf{y})
= \gamma
\qquad \forall i.
\end{align}
On the other hand, for each eigenvector $\mathbf{u}_i$,
\begin{align}
\mathrm{Var}(\mathbf{u}_i^\top \mathbf{z})
= \mathbf{u}_i^\top \mathbf{\Sigma}\mathbf{u}_i= \lambda_i\cdot \|\mathbf{u}_i\|_2^2 = \lambda_i,
\end{align}
where $\lambda_i$ is the $i$-th eigenvalue of $\mathbf{\Sigma}$. Therefore $\lambda_i=\gamma$ for all $i$, so $\mathbf{\Lambda}=\gamma \mathbf{I}_d$.

Substituting back into the eigendecomposition yields
\begin{align}
\mathbf{\Sigma}
= \mathbf{U}\mathbf{\Lambda}\mathbf{U}^\top
= \mathbf{U}(\gamma \mathbf{I}_d)\mathbf{U}^\top
= \gamma \mathbf{I}_d,
\end{align}
which is a scalar multiple of the identity. Hence all off-diagonal entries of $\mathbf{\Sigma}$ vanish and the covariance matrix is isotropic.
\end{proof}
Thus, matching the target projected marginals along covariance eigenvectors is sufficient to control the centered covariance matrix of neural network features. In practice, we always have $B\ll D$. Thus truncated SVD has $\mathcal{O}(B^2D)$ complexity using dense methods \citep{golub2013matrix}, or $\mathcal{O}(BDk)$ when computing only the top-$k$ eigenvectors via Lanczos or randomized methods \citep{parlett1998symmetric,halko2011finding}.

Since our feature matrix $\mathbf{Z}\in\mathbb{R}^{B\times D}$ is obtained after rectifications, this gives a non-negative analogue of covariance regularization when eigenvector projections are used: it encourages non-negative neural network features to have isotropic centered covariance. This second-order view is related to, but weaker than, exact recovery of the full VICReg objective. If one uses an uncentered second-moment penalty instead of centered covariance, the corresponding empirical objective can be written in a non-negative matrix-factorization-like form \citep{NIPS2000_f9d11525}:
\begin{align}
    \|\gamma\cdot\mathbf{I}_d - \tilde{\mathbf{Z}}^\top\tilde{\mathbf{Z}}\|_F^2,
\end{align}
where $\tilde{\mathbf{Z}}:=(1/\sqrt{B})\cdot\mathbf{Z}$ remains non-negative for rectified features. We therefore distinguish this uncentered NMF interpretation from the centered covariance statement in Proposition~\ref{proposition:nonnegvcregrecovery}. Non-Negative Contrastive Learning (NCL) \citep{wang2024nonnegativecontrastivelearning} shows that applying SimCLR loss over rectified features recovers a form of NMF over the rescaled variant of the Gram matrix. Based on the Gram-Covariance matrix duality between contrastive and non-contrastive learning \citep{Garridoduality}, we observe a similar duality in NMF and defer the detailed investigations to future work. 

\section{Additional Experimental Results}\label{sec:additionalimageresultssec}

In the following section, we include additional experimental results for evaluating our Rectified LpJEPA methods.

\subsection{Linear Probe over CIFAR-100}

In \cref{tab:cifar100-table}, we report linear probe performances of Rectified LpJEPA and other dense and sparse baselines over CIFAR-100. Rectified LpJEPA achieves competitive sparsity-performance tradeoffs. 

\subsection{Ablations on Projector Dimensions}

In \cref{tab:projector-dim-ablation}, we compare VICReg, LeJEPA, and Rectified LpJEPA with varying projector dimensions. We observe that Rectified LpJEPA consistently attains competitive or better performances. 

\subsection{Rectified LpJEPA with ViT Backbones}

We evaluate whether the strong performance of Rectified LpJEPA with a ResNet backbone shown in \cref{tab:imagenet100-table} generalizes across encoder architectures. As shown in \cref{tab:rpl_msv_sweep_vit}, Rectified LpJEPA remains competitive when instantiated with a ViT encoder.

\subsection{ImageNet-1K Experiments}\label{sec:imagenet1k_experiments}

We further evaluate Rectified LpJEPA in a larger-scale ImageNet-1K pretraining setting. Due to computational constraints, we run a controlled 100-epoch comparison rather than an extensive 1000-epoch hyperparameter sweep. We compare VICReg against Rectified LpJEPA with $p=1.0$, $\mu=0$, and $\sigma=\sigma_{\operatorname{GN}}$, using either random projection vectors only or a mixture of random projections and eigenvectors of the feature covariance matrix.

As shown in \cref{tab:imagenet1k-table}, Rectified LpJEPA with random projections only underperforms VICReg on encoder accuracy after 100 epochs, while the eigenvector-augmented variant slightly improves over VICReg on encoder accuracy and substantially improves projector accuracy. This supports our observation in \cref{subsec:additionaleigenresults} that, in short training regimes, random-projection RDMReg may spend optimization effort reducing higher-order dependencies without explicitly prioritizing second-order covariance structure. Incorporating eigenvector projections provides a direct second-order signal while retaining the distribution-matching objective over random projections.

\begin{table}[ht!]
  \caption{\textbf{ImageNet-1K Linear Probe Results after 100 Epochs.}
  We report encoder and projector top-1 accuracy (\%). Rectified LpJEPA uses $\mathcal{RGN}_{1.0}(0,\sigma_{\operatorname{GN}})$ as the target distribution.}
  \label{tab:imagenet1k-table}
  \centering
  \small
  \setlength{\tabcolsep}{7pt}
  \renewcommand{\arraystretch}{1.15}
  \begin{tabular}{lcc}
    \toprule
    Method & Encoder Acc1 $\uparrow$ & Projector Acc1 $\uparrow$ \\
    \midrule
    VICReg & 60.52 & 46.23 \\
    $\mathcal{RGN}_{1.0}(0,\sigma_{\operatorname{GN}})$ & 57.95 & 46.07 \\
    $\mathcal{RGN}_{1.0}(0,\sigma_{\operatorname{GN}})$ + Eigenvector Proj. & \textbf{60.63} & \textbf{48.84} \\
    \bottomrule
  \end{tabular}
\end{table}

\subsection{Batch Size Dependency of RDMReg}\label{sec:batch_size_dependency_rdmreg}

RDMReg is a two-sample sliced distribution-matching objective, so its empirical estimate depends on the batch size used to approximate one-dimensional Wasserstein distances. We therefore evaluate the sensitivity of Rectified LpJEPA to the pretraining batch size on CIFAR-100. As shown in \cref{tab:rdmreg-batch-size}, performance improves rapidly at small batch sizes and begins to saturate around batch sizes $128$--$256$. This behavior is consistent with the design goal of avoiding the large-batch negative-sample requirements of contrastive objectives while still obtaining stable distribution-matching estimates.

\begin{table}[ht!]
  \caption{\textbf{Batch Size Dependency of RDMReg on CIFAR-100.}
  We report mean $\pm$ standard deviation of encoder and projector top-1 accuracy (\%).}
  \label{tab:rdmreg-batch-size}
  \centering
  \small
  \setlength{\tabcolsep}{4pt}
  \renewcommand{\arraystretch}{1.15}
  \begin{tabular}{lccccccc}
    \toprule
    Batch Size & 24 & 32 & 48 & 64 & 128 & 256 & 512 \\
    \midrule
    Enc Acc1 $\uparrow$ & $64.88{\pm}0.17$ & $66.01{\pm}0.44$ & $67.47{\pm}0.17$ & $67.59{\pm}0.49$ & $67.69{\pm}0.55$ & $68.29{\pm}0.30$ & $67.88{\pm}0.10$ \\
    Proj Acc1 $\uparrow$ & $56.19{\pm}0.07$ & $58.77{\pm}0.59$ & $61.11{\pm}0.22$ & $62.24{\pm}0.10$ & $63.54{\pm}0.10$ & $64.41{\pm}0.22$ & $64.16{\pm}0.12$ \\
    \bottomrule
  \end{tabular}
\end{table}

\subsection{Runtime Comparison between SIGReg and RDMReg}\label{sec:runtime_sigreg_rdmreg}

We compare the wall-clock cost of SIGReg and RDMReg under the same CIFAR-100 pretraining setup. As shown in \cref{tab:sigreg-rdmreg-runtime}, RDMReg is only slightly slower than SIGReg in end-to-end training time and uses comparable GPU memory. This suggests that, at the batch sizes and projection counts used in our experiments, the sorting operation required by the sliced Wasserstein distance is not a dominant bottleneck relative to the shared projection and neural-network computation.

\begin{table}[ht!]
  \caption{\textbf{Runtime Comparison between SIGReg and RDMReg.}
  Both methods are trained for 100 epochs on CIFAR-100.}
  \label{tab:sigreg-rdmreg-runtime}
  \centering
  \small
  \setlength{\tabcolsep}{6pt}
  \renewcommand{\arraystretch}{1.15}
  \begin{tabular}{lccc}
    \toprule
    Method & Training Time (min) $\downarrow$ & Avg. GPU Util. (\%) $\uparrow$ & GPU Memory \\
    \midrule
    SIGReg & 19.02 & 46.59 & 2.65 GB \\
    RDMReg & 19.27 & 45.04 & 2.63 GB \\
    \bottomrule
  \end{tabular}
\end{table}

\subsection{Big-O Efficiency Analysis}\label{sec:big_o_efficiency_rdmreg}

Let $B$ denote batch size, $D$ feature dimension, $P$ number of random projections, and $K$ number of frequency/grid evaluations used by SIGReg. We compare only the distribution-matching components, ignoring shared operations such as drawing and normalizing random projection vectors.

RDMReg consists of sampling from the target RGG distribution, projecting both features and target samples, sorting projected samples along the batch dimension for each projection, and reducing the resulting one-dimensional Wasserstein distances. These steps have complexity
\begin{align}
\mathcal{O}(BD) + \mathcal{O}(BDP) + \mathcal{O}(PB\log B) + \mathcal{O}(BP),
\end{align}
so the dominant cost is
\begin{align}
\mathcal{O}(BDP + PB\log B).
\end{align}
SIGReg shares the projection cost and additionally evaluates a characteristic-function discrepancy over a fixed grid, giving
\begin{align}
\mathcal{O}(BDP + BPK),
\end{align}
with dominant projection cost $\mathcal{O}(BDP)$ when $K$ is treated as a fixed constant.

Thus, RDMReg and SIGReg share the same leading projection cost $\mathcal{O}(BDP)$, while RDMReg incurs an additional $\mathcal{O}(PB\log B)$ sorting term. In practice, our batch-size study in \cref{tab:rdmreg-batch-size} indicates that RDMReg does not require very large batches, and the end-to-end comparison in \cref{tab:sigreg-rdmreg-runtime} suggests that this sorting term is modest in our implementation. A limitation of both implementations is that several projection and reduction operations are not fused kernels, so optimized kernels could further reduce memory movement and wall-clock overhead.

\begin{table*}[t]
  \caption{\textbf{Linear Probe Results on CIFAR-100.} Acc1 (\%) is higher-is-better ($\uparrow$); sparsity is lower-is-better ($\downarrow$). \textbf{Bold} denotes best and \underline{underline} denotes second-best in each column (ties allowed).}
  \label{tab:cifar100-table}
  \centering
  \small
  \setlength{\tabcolsep}{4pt}
  \renewcommand{\arraystretch}{1.15}
  \begin{tabular}{llcccc}
    \toprule
    \cmidrule(lr){3-6}
    & & Encoder Acc1 $\uparrow$ 
    & Projector Acc1 $\uparrow$ 
    & L1 Sparsity $\downarrow$ 
    & L0 Sparsity $\downarrow$ \\
    \midrule
    \textbf{Rectified LpJEPA} & $\mathcal{RGN}_{2.0}(0,\sigma_{\operatorname{GN}})$ & $\underline{66.29}$ & $62.15$ & $0.3773$ & $0.7357$ \\
    & $\mathcal{RGN}_{1.0}(0,\sigma_{\operatorname{GN}})$ & $65.97$ & $62.22$ &  $0.3019$ & $0.6474$ \\
    & $\mathcal{RGN}_{0.75}(0,\sigma_{\operatorname{GN}})$ & $65.78$ & $\mathbf{62.80}$ & $0.2583$ & $0.6099$ \\
    & $\mathcal{RGN}_{0.50}(0,\sigma_{\operatorname{GN}})$ & $66.10$ & $\underline{62.74}$ & $0.1996$ & $0.5727$ \\
    & $\mathcal{RGN}_{1.0}(-2,\sigma_{\operatorname{GN}})$ & $64.75$ & $59.08$ &  $0.0236$ & $\underline{0.0489}$ \\
    \midrule
    \textbf{Sparse Baselines} & NCL-ReLU & $66.23$ & $61.88$ & $\underline{0.0228}$ &  $0.0503$ \\
    & NVICReg-ReLU & $63.76$ & $58.82$ & $0.7415$ & $0.8935$ \\
    & NCL-RepReLU & $\textbf{66.32}$ & $61.40$ & $\textbf{0.0202}$ & $\textbf{0.0426}$ \\
    & NVICReg-RepReLU & $63.83$ & $58.53$ & $0.1551$ & $0.2657$ \\
    \midrule
    \textbf{Dense Baselines} & SimCLR & $66.00$ & $61.95$ & $0.6364$ & $1.0000$ \\
    & VICReg & $63.78$ & $58.82$ & $0.8660$ & $1.0000$ \\
    & LeJEPA & $65.65$ & $62.69$ & $0.6379$ & $1.0000$ \\
    \bottomrule
  \end{tabular}
  \vspace{3pt}
\end{table*}

\begin{figure*}[t!]
    \centering
    \begin{subfigure}[t]{0.32\linewidth}
        \centering
        \includegraphics[width=\linewidth]{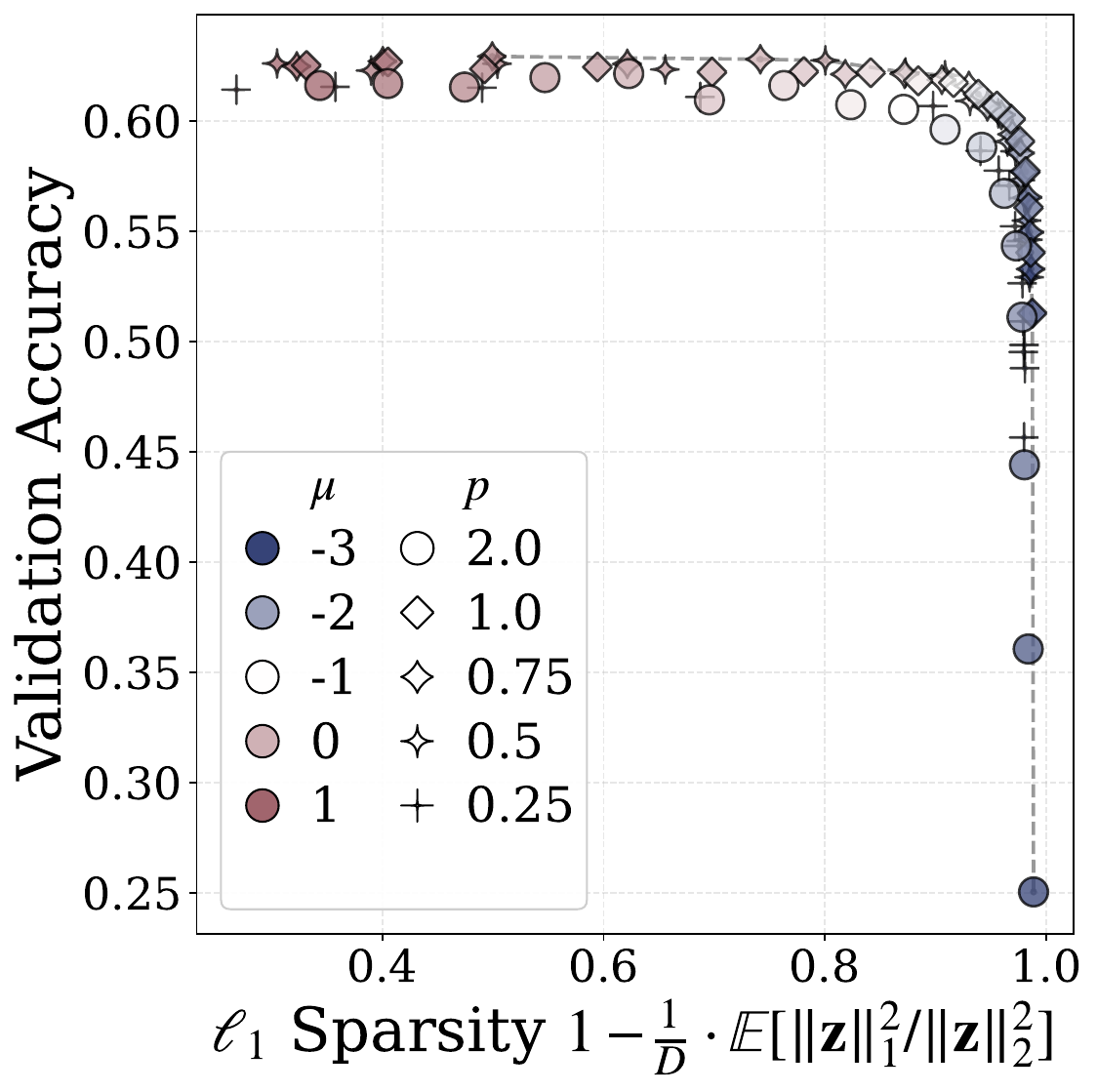}
        \captionsetup{justification=centering}
        \caption{The Pareto frontier of performance against $\ell_1$ sparsity }
        \label{fig:additionalparetofrontier}
    \end{subfigure}
    \hfill
    \begin{subfigure}[t]{0.32\linewidth}
        \centering
        \includegraphics[width=\linewidth]{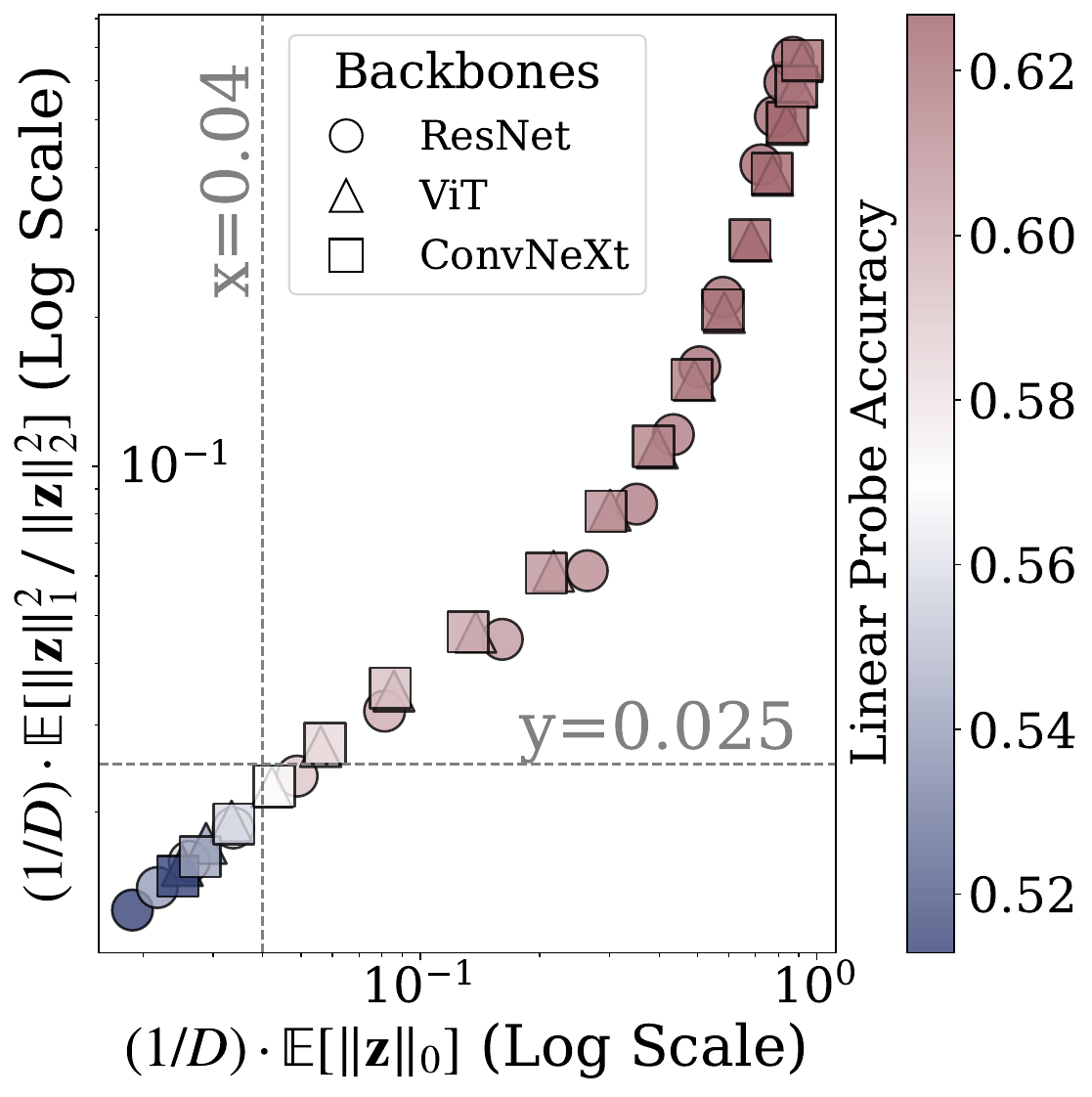}
        \captionsetup{justification=centering}
        \caption{Correlation between $\ell_1$ and $\ell_0$ Sparsity Metrics}
        \label{fig:correlationbetweendiffmetrics}
    \end{subfigure}
    \hfill
    \begin{subfigure}[t]{0.32\linewidth}
        \centering
        \includegraphics[width=\linewidth]{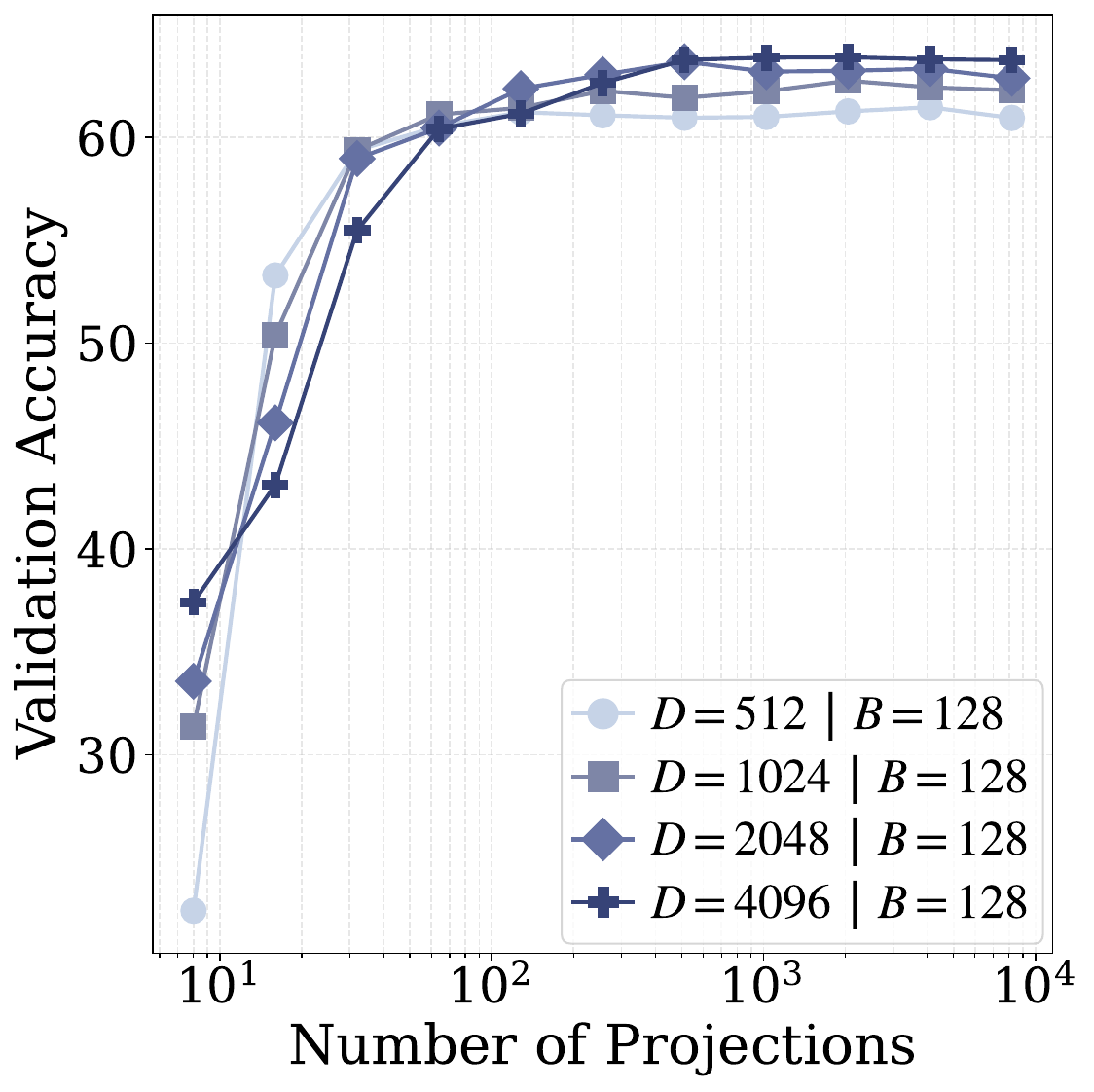}
        \captionsetup{justification=centering}
        \caption{The Effect of Number of Random Projections}
        \label{fig:samplefficiencyofrandproj}
    \end{subfigure}

    \caption{\textbf{Additional results on the sparsity-performance tradeoffs, the correlation between different sparsity metrics, and the effect of numbers of random projections on performance}. (a) We present another version of \cref{fig:paretofrontieraccandsparsesubfig3} where the sparsity metric is switched from $\ell_0$ to $\ell_1$. Again, we observe the same Pareto frontier with a sharp drop in performance under extreme sparsity. (b) Across different backbones, Rectified LpJEPAs with the target distribution being the Rectified Laplace distributions learn sparser representations as we decrease the mean shift value $\mu$. Specifically, we observe that the $\ell_0$ and $\ell_1$ metrics are quite correlated. Thus both metrics are effective in measuring sparsity. (c) We test Rectified LpJEPA models with batch size $B=128$ and varying feature dimension $D$ as long as numbers of projections. As we increase the dimensions $D$, we observe that the number of random projections required for good performance do not grow and are small relative to $D$. Hence Rectified LpJEPA maintains stable sampling efficiency as the feature dimension grows in these experiments.}
    \label{fig:total_tbddd}
\end{figure*}

\subsection{Additional Results on Eigenvectors}\label{subsec:additionaleigenresults}

We would like to know whether incoporating the eigenvectors of the empirical feature covariance matrices into the projection directions for RDMReg can lead to faster convergence and directly removes second-order dependencies. To this end, we pretrain Rectified LpJEPA with RDMReg and log the variance and covariance loss defined in VICReg \citep{bardes2022vicregvarianceinvariancecovarianceregularizationselfsupervised}. 

The variance loss computes the $\ell_2$ distance between the diagonal of the empirical feature covariance matrix and the theoretical variance of the Rectified Generalized Gaussian distribution as we derived in Proposition~\ref{proposition:meanandvarofrecgengauss}. The covariance loss is simply the sum of the off-diagonal entries of the empirical feature covariance matrix scaled by $1/D$, where $D$ is the feature dimension. We emphasize that we don't incorporate the variance and covariance losses into optimizations but only use them as evaluation metrics.

In \cref{fig:total_eigen}, we show that incorporating eigenvectors indeed leads to faster convergence and better performance and also signficant reductions in the variance and covariance loss. This further validates our observations on the Non-Negative VCReg recovery (Proposition~\ref{proposition:nonnegvcregrecovery}) of the RDMReg loss as we observe significant reductions in second-order dependencies.

\subsection{Additional Results on Transfer Sparsity}

In \cref{fig:total_transfer_sparsity}, we plot the $\ell_0$ and $\ell_1$ sparsity metrics for baselines and Rectified LpJEPA across different downstream transfer tasks. We observe that Rectified LpJEPA exhibits larger variations in the sparsity values across datasets, indicating that sparsity can be used as a crude proxy for whether the task at hand is within the training distribution. 

In \cref{fig:transfer_sparsity}, we further probe whether the sparsity metric can be used as a signal for whether groups of inputs are correctly or incorrectly classified by the model. We observe that this is partially true when the inputs are from the pretraining dataset. The distribution of the $\ell_1$ sparsity metrics is distinct between correctly and incorrectly classified examples. The divergence is less prominent for downstream transfer tasks, and we defer further investigations to future work.

\begin{figure*}[t!]
    \centering
    \begin{subfigure}[t]{0.32\linewidth}
        \centering
        \includegraphics[width=\linewidth]{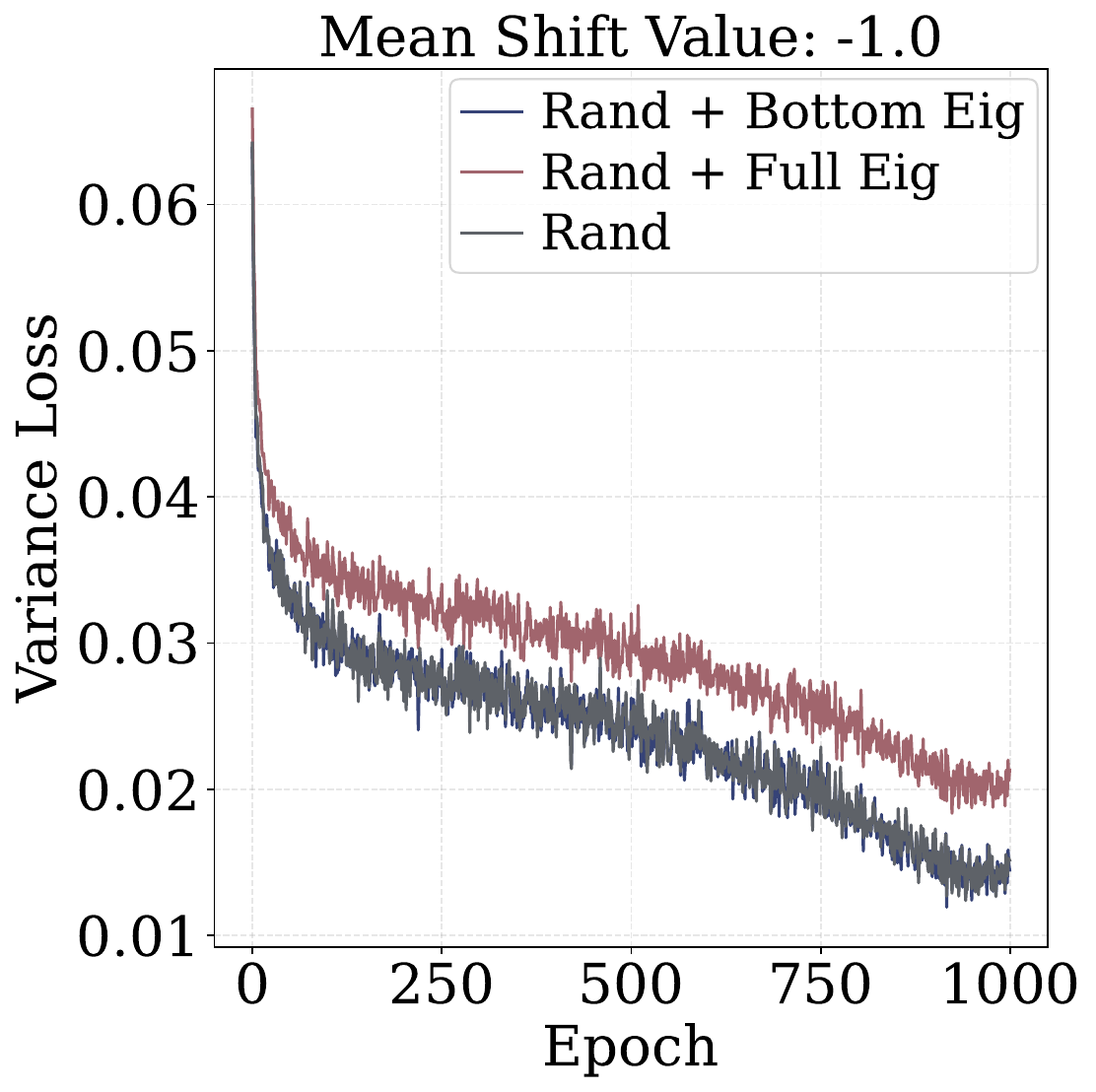}
        \captionsetup{justification=centering}
        \caption{Variance Loss when $\mu=-1$}
        \label{fig:eigen_1}
    \end{subfigure}
    \hfill
    \begin{subfigure}[t]{0.32\linewidth}
        \centering
        \includegraphics[width=\linewidth]{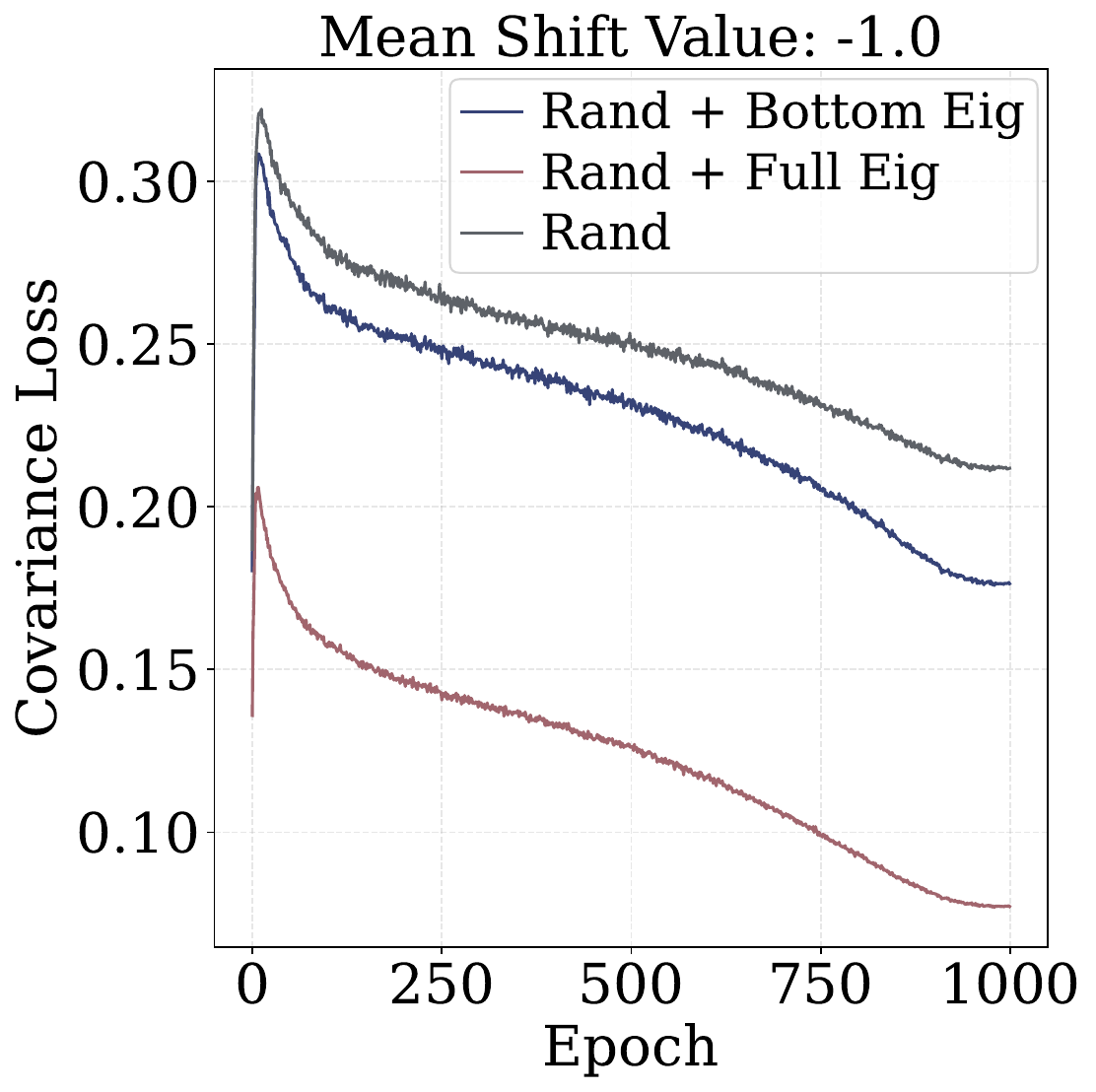}
        \captionsetup{justification=centering}
        \caption{Covariance Loss when $\mu=-1$}
        \label{fig:eigen_2}
    \end{subfigure}
    \hfill
    \begin{subfigure}[t]{0.32\linewidth}
        \centering
        \includegraphics[width=\linewidth]{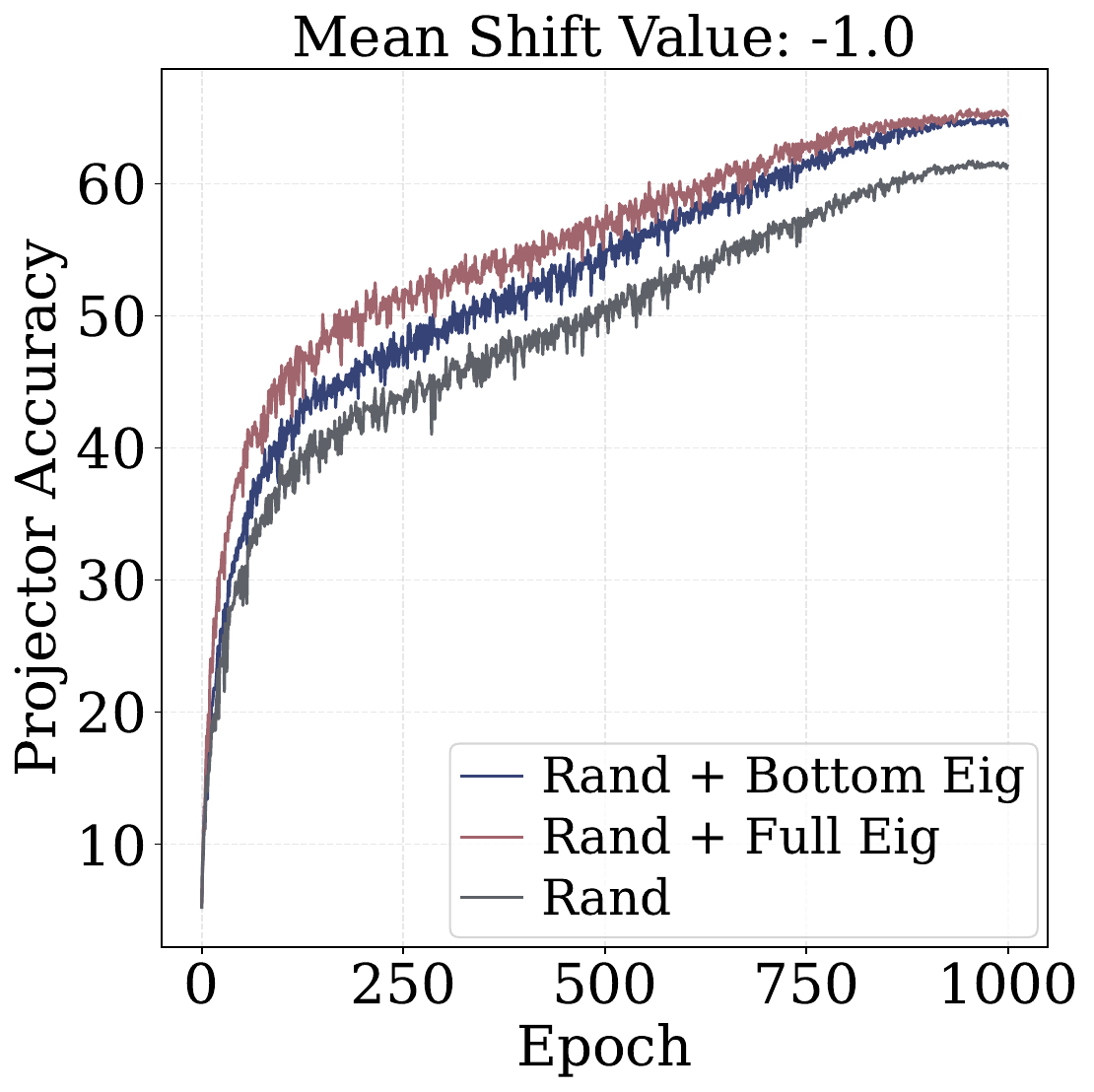}
        \captionsetup{justification=centering}
        \caption{Accuracy when $\mu=-1$}
        \label{fig:eigen_3}
    \end{subfigure}

    \vspace{0.5em}

    \begin{subfigure}[t]{0.32\linewidth}
        \centering
        \includegraphics[width=\linewidth]{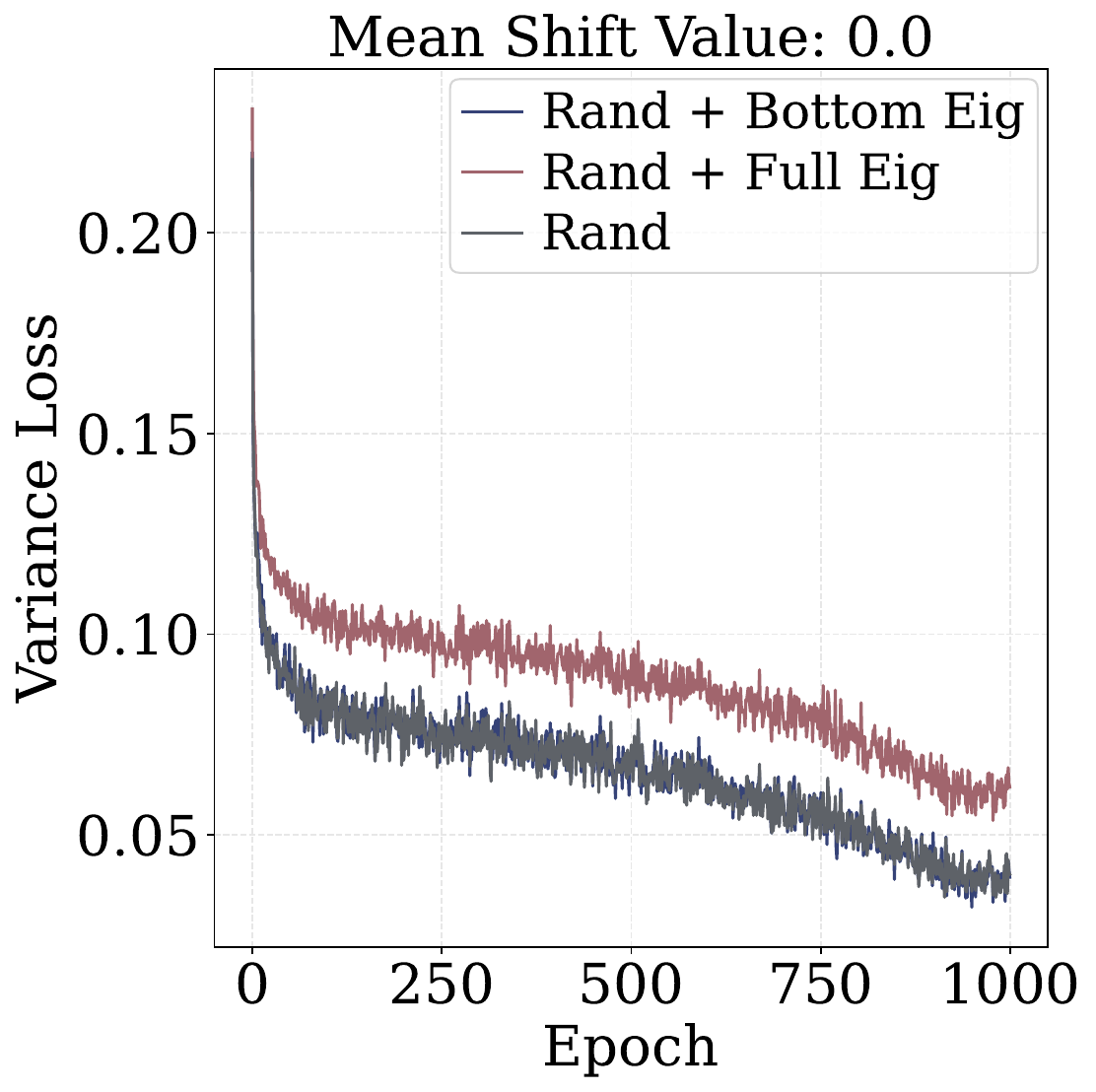}
        \captionsetup{justification=centering}
        \caption{Variance Loss when $\mu=0$}
        \label{fig:eigen_4}
    \end{subfigure}
    \hfill
    \begin{subfigure}[t]{0.32\linewidth}
        \centering
        \includegraphics[width=\linewidth]{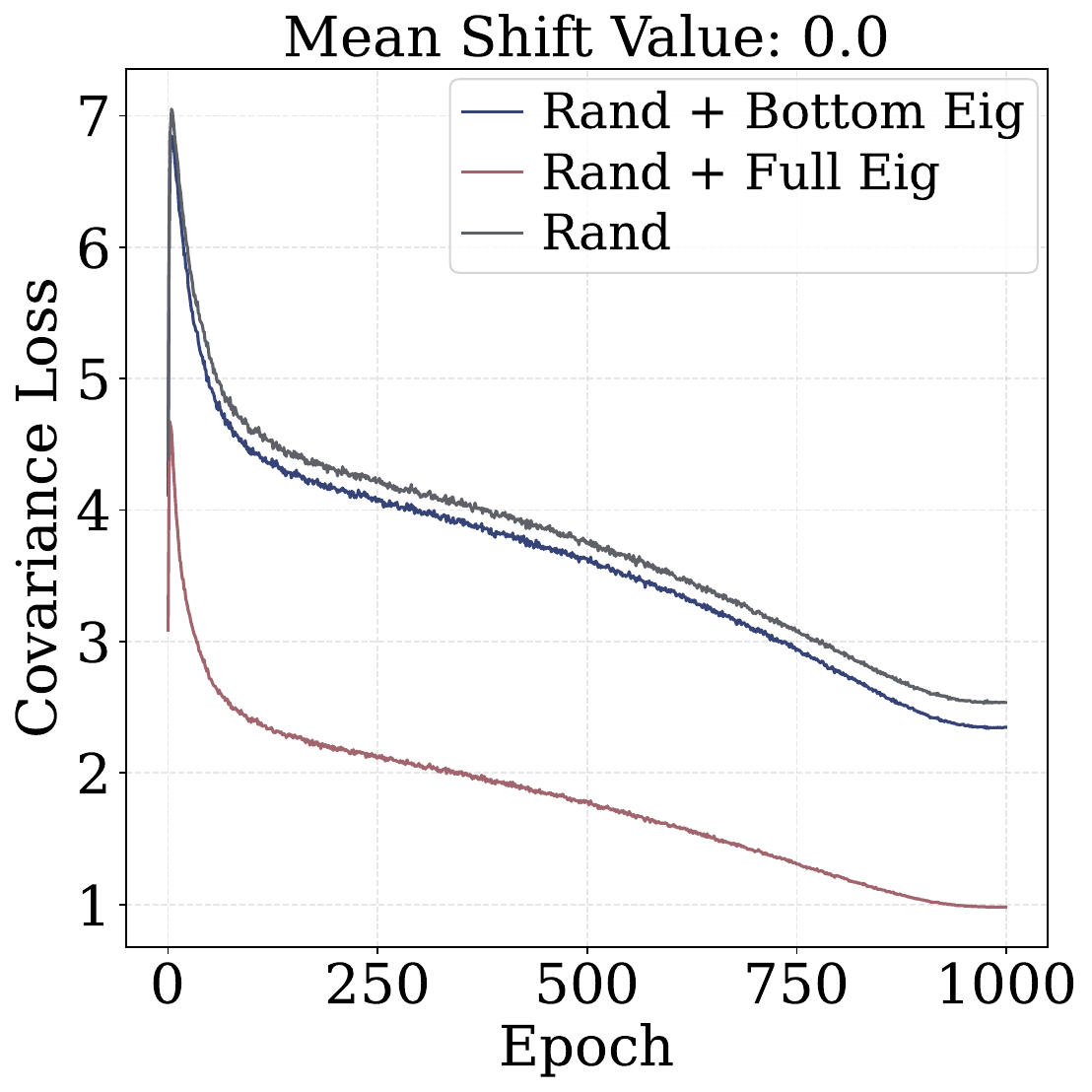}
        \captionsetup{justification=centering}
        \caption{Covariance Loss when $\mu=0$}
        \label{fig:eigen_5}
    \end{subfigure}
    \hfill
    \begin{subfigure}[t]{0.32\linewidth}
        \centering
        \includegraphics[width=\linewidth]{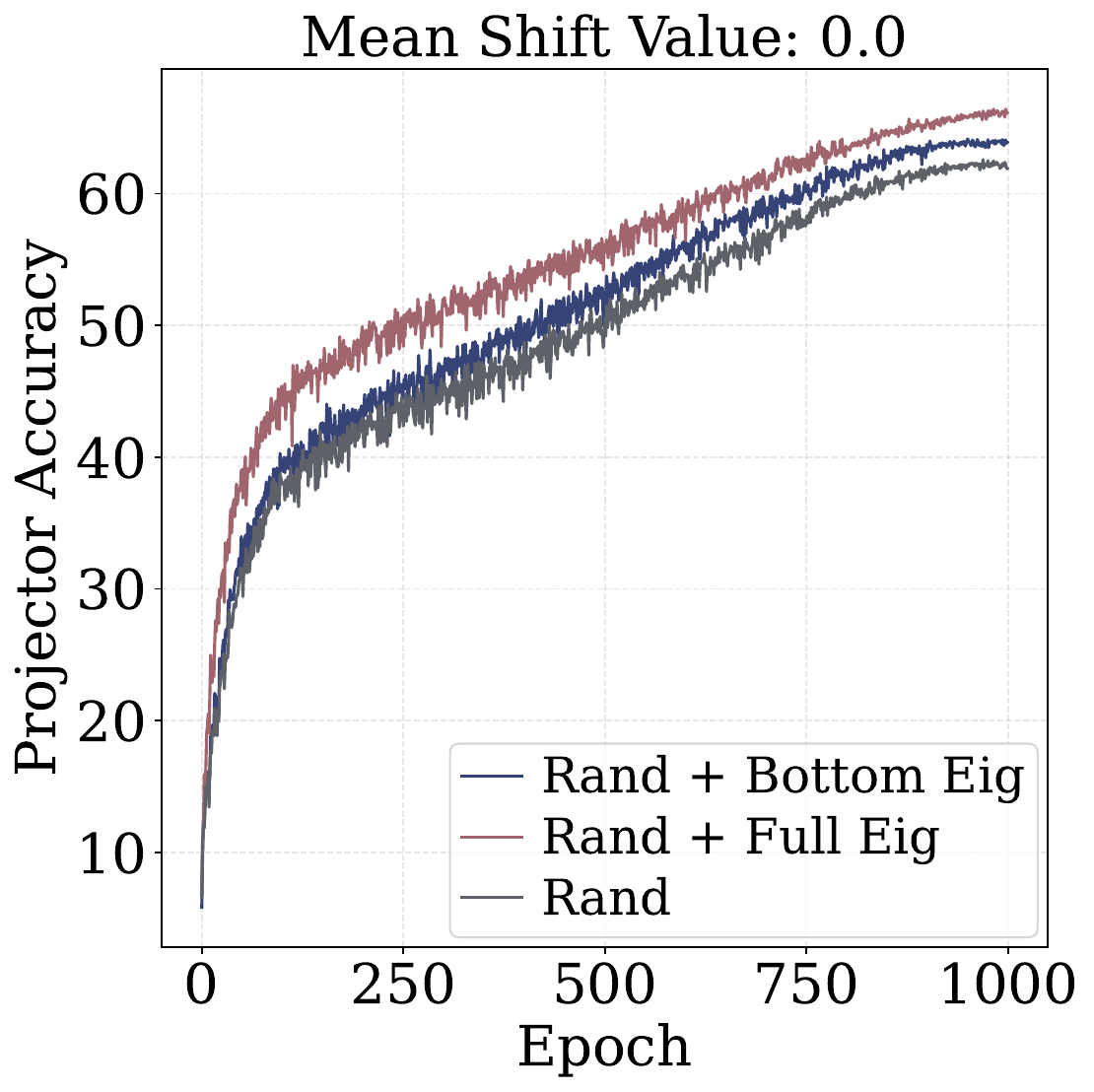}
        \captionsetup{justification=centering}
        \caption{Accuracy when $\mu=0$}
        \label{fig:eigen_6}
    \end{subfigure}

    \vspace{0.5em}

    \begin{subfigure}[t]{0.32\linewidth}
        \centering
        \includegraphics[width=\linewidth]{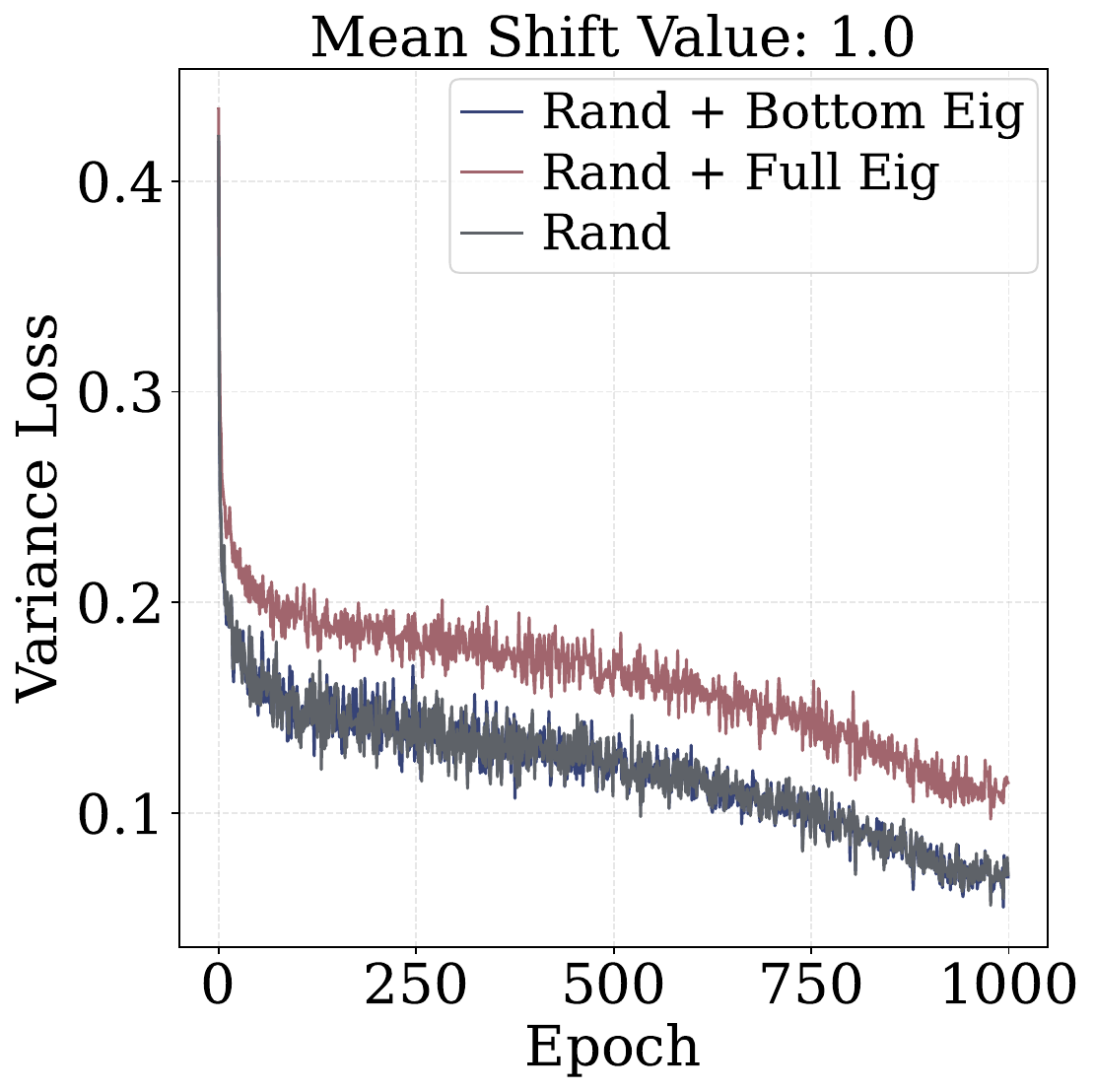}
        \captionsetup{justification=centering}
        \caption{Variance Loss when $\mu=1$}
        \label{fig:eigen_7}
    \end{subfigure}
    \hfill
    \begin{subfigure}[t]{0.32\linewidth}
        \centering
        \includegraphics[width=\linewidth]{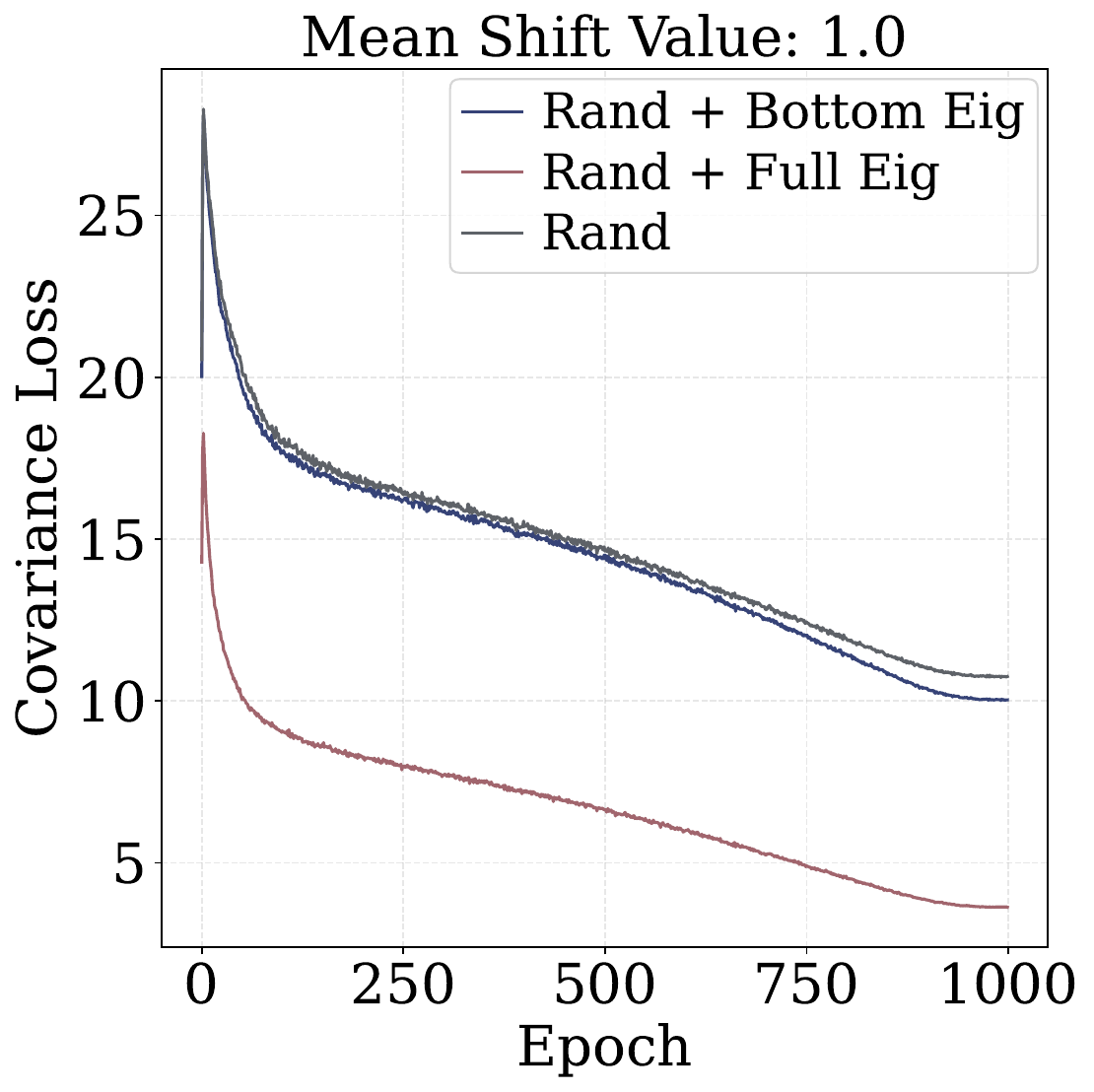}
        \captionsetup{justification=centering}
        \caption{Covariance Loss when $\mu=1$}
        \label{fig:eigen_8}
    \end{subfigure}
    \hfill
    \begin{subfigure}[t]{0.32\linewidth}
        \centering
        \includegraphics[width=\linewidth]{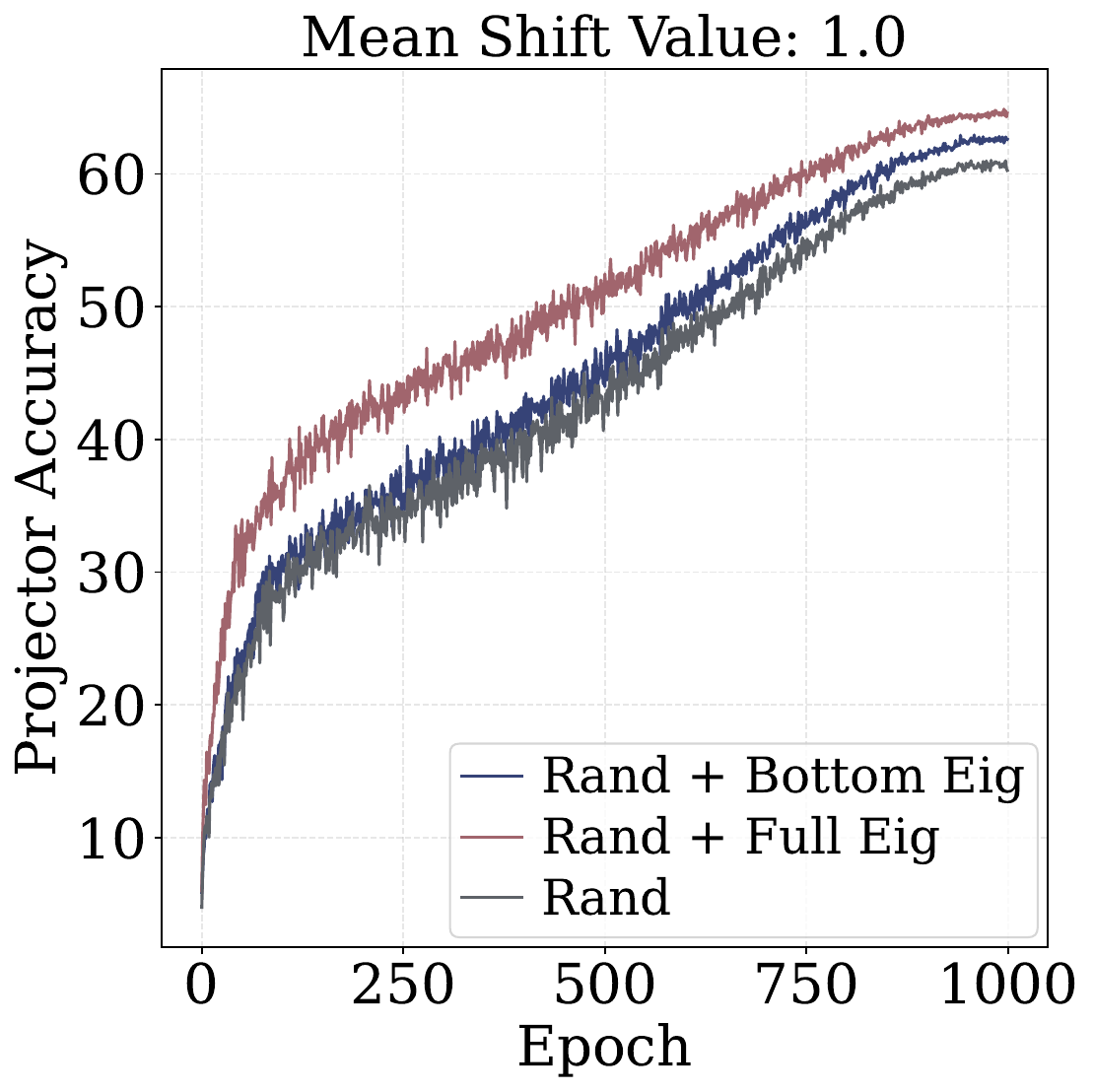}
        \captionsetup{justification=centering}
        \caption{Accuracy when $\mu=1$}
        \label{fig:eigen_9}
    \end{subfigure}

    \caption{\textbf{Incorporating eigenvectors into random projections accelerates implicit VCReg loss minimization and speed-up convergence}. We pretrain Rectified LpJEPA over CIFAR-100 with target distributions $\mathcal{RGN}_{1}(\mu,\sigma_{\operatorname{GN}})$ where the mean shift value $\mu\in\{-1,0,1\}$. We consider three settings of selecting the projection vectors $\{c_i\}_{i=1}^{N}$ where we set $N=8192$. We denote the setting as "Rand" if all $\mathbf{c}_i$ are uniformly sampled from the unit $\ell_2$ sphere. Since in our setting the batch size is always smaller than the feature dimension, i.e. $B<D$, we call the setting "Rand + Full Eig" if we select the top-$B$ eigenvectors and mix them with $N-B$ random projection vectors. We also consider "Rand + Bottom Eig" by sampling the bottom half $B/2$ eigenvectors and mixing with $N-B/2$ random projections. (a) (d) (g) We evaluate the variance loss between the three settings across varying $\mu$. Incorporating all eigenvectors puts overly strong constraints, whereas penalizing the bottom half eigenvectors gives us good performances. (b) (e) (h) the covariance losses are evaluated during training and plotted for different projection vector settings across $\mu$. Regularizing all top-$B$ eigenvector projections leads to significant implicit covariance loss minimization, where the covariance loss is the average of the off-diagonal of the empirical covariance matrix. (c) (f) (i) We report the projector accuracy against epoch for different projection settings. Using eigenvectors leads to both faster convergence and ultimately better performance.}
    \label{fig:total_eigen}
\end{figure*}

\begin{table}[t]
  \centering
  \caption{\textbf{1-shot linear probe accuracy (\%) using \emph{encoder} features.} All results are in the 1-shot setting. Avg. denotes the mean across datasets.}
  \label{tab:1shot-encoder-probe}
  \small
  \setlength{\tabcolsep}{4pt}
  \renewcommand{\arraystretch}{1.15}
  \begin{tabular}{lrrrrrrr}
    \toprule
    model
      & DTD & cifar10 & cifar100 & flowers & food & pets & avg. \\
    \midrule
    Non-negative VICReg
      & 11.86 & 65.62 & 24.95 & 5.09 & 24.82 & 29.74 & 27.01 \\
    Non-negative SimCLR
      & 11.17 & 67.67 & 24.23 & 5.59 & 24.44 & 19.71 & 25.47 \\
    \midrule
    \multicolumn{8}{l}{\textbf{Our methods}} \\
    $\mathcal{RGN}_{1.0}(0,\sigma_{\operatorname{GN}})$
      & 9.89 & \underline{68.31} & \textbf{26.27} & 4.21 & \textbf{25.98} & 17.66 & 25.39 \\
    $\mathcal{RGN}_{1.0}(-1,\sigma_{\operatorname{GN}})$
      & 9.47 & 67.19 & 23.58 & 4.85 & 24.93 & 16.60 & 24.44 \\
    $\mathcal{RGN}_{1.0}(-2,\sigma_{\operatorname{GN}})$
      & 12.66 & 66.36 & 23.91 & \textbf{10.18} & 24.88 & \textbf{25.18} & \textbf{27.20} \\
    $\mathcal{RGN}_{1.0}(-3,\sigma_{\operatorname{GN}})$
      & 9.31 & 50.39 & 8.35 & 6.13 & 10.63 & 21.72 & 17.76 \\
    $\mathcal{RGN}_{2.0}(0,\sigma_{\operatorname{GN}})$
      & \textbf{14.04} & \textbf{69.47} & 24.09 & 4.86 & \underline{25.68} & 24.34 & \underline{27.08} \\
    $\mathcal{RGN}_{2.0}(-1,\sigma_{\operatorname{GN}})$
      & 12.82 & 65.97 & \underline{24.12} & 4.91 & 24.44 & 20.20 & 25.41 \\
    $\mathcal{RGN}_{2.0}(-2,\sigma_{\operatorname{GN}})$
      & \underline{13.88} & 59.62 & 15.88 & 7.84 & 16.21 & \underline{24.97} & 23.07 \\
    $\mathcal{RGN}_{2.0}(-3,\sigma_{\operatorname{GN}})$
      & 11.06 & 55.71 & 11.42 & \underline{8.18} & 12.92 & 12.13 & 18.57 \\
    \midrule
    \multicolumn{8}{l}{\textbf{Dense baselines}} \\
    VICReg
      & 10.85 & 63.98 & 21.52 & 6.23 & \textbf{25.29} & 23.33 & 25.20 \\
    SimCLR
      & 12.82 & 66.90 & 23.93 & \textbf{10.60} & 24.99 & 24.12 & \textbf{27.23} \\
    LeJEPA
      & \textbf{15.85} & \textbf{68.07} & \textbf{24.57} & 5.79 & 23.72 & \textbf{24.34} & 27.06 \\
    \bottomrule
  \end{tabular}%
\end{table}

\begin{table}[t]
  \centering
  \caption{\textbf{1-shot linear probe accuracy (\%) using \emph{projector} features.} All results are in the 1-shot setting. Avg. denotes the mean across datasets.}
  \label{tab:1shot-projector-probe}
  \small
  \setlength{\tabcolsep}{4pt}
  \renewcommand{\arraystretch}{1.15}
  \begin{tabular}{lrrrrrrr}
    \toprule
    model
      & DTD & cifar10 & cifar100 & flowers & food & pets & avg. \\
    \midrule
    Non-negative VICReg
      & 8.62 & 33.47 & 7.51 & 3.25 & 8.78 & 17.28 & 13.15 \\
    Non-negative SimCLR
      & 6.70 & 41.71 & 9.24 & 2.91 & 9.17 & 15.37 & 14.18 \\
    \midrule
    \multicolumn{8}{l}{\textbf{Our methods}} \\
    $\mathcal{RGN}_{1.0}(0,\sigma_{\operatorname{GN}})$
      & 6.81 & \textbf{48.96} & \underline{11.46} & 2.03 & \textbf{12.09} & 16.38 & 16.29 \\
    $\mathcal{RGN}_{1.0}(-1,\sigma_{\operatorname{GN}})$
      & 7.82 & 47.14 & 11.20 & 2.75 & 11.43 & 14.94 & 15.88 \\
    $\mathcal{RGN}_{1.0}(-2,\sigma_{\operatorname{GN}})$
      & \underline{10.37} & 38.36 & 9.13 & \textbf{5.56} & 9.53 & \textbf{20.93} & 15.65 \\
    $\mathcal{RGN}_{1.0}(-3,\sigma_{\operatorname{GN}})$
      & 5.69 & 32.25 & 5.16 & 2.70 & 5.76 & 19.02 & 11.76 \\
    $\mathcal{RGN}_{2.0}(0,\sigma_{\operatorname{GN}})$
      & \textbf{10.59} & \underline{47.74} & 10.61 & 2.70 & 11.46 & 20.50 & \textbf{17.27} \\
    $\mathcal{RGN}_{2.0}(-1,\sigma_{\operatorname{GN}})$
      & 9.84 & 46.97 & \textbf{12.33} & 3.07 & \underline{11.70} & 19.38 & \underline{17.21} \\
    $\mathcal{RGN}_{2.0}(-2,\sigma_{\operatorname{GN}})$
      & 9.57 & 35.08 & 6.59 & \underline{4.49} & 7.41 & \underline{20.69} & 13.97 \\
    $\mathcal{RGN}_{2.0}(-3,\sigma_{\operatorname{GN}})$
      & 9.20 & 18.36 & 2.64 & 2.33 & 3.64 & 7.14 & 7.22 \\
    \midrule
    \multicolumn{8}{l}{\textbf{Dense baselines}} \\
    VICReg
      & 8.09 & 39.35 & 6.74 & 3.04 & 9.34 & 19.49 & 14.34 \\
    SimCLR
      & \textbf{12.39} & 48.05 & \textbf{11.52} & \textbf{7.24} & \textbf{14.16} & \textbf{21.34} & \textbf{19.12} \\
    LeJEPA
      & 8.30 & \textbf{48.17} & 11.27 & 4.47 & 11.84 & 18.32 & 17.06 \\
    \bottomrule
  \end{tabular}%
\end{table}

\begin{table}[t]
  \centering
  \caption{\textbf{10-shot linear probe accuracy (\%) using \emph{encoder} features.} All results are in the 10-shot setting. Avg. denotes the mean across datasets.}
  \label{tab:10shot-encoder-probe}
  \small
  \setlength{\tabcolsep}{4pt}
  \renewcommand{\arraystretch}{1.15}
  \begin{tabular}{lrrrrrrr}
    \toprule
    model
      & DTD & cifar10 & cifar100 & flowers & food & pets & avg. \\
    \midrule
    Non-negative VICReg
      & 37.23 & 77.88 & 47.32 & 31.16 & 48.33 & 55.49 & 49.57 \\
    Non-negative SimCLR
      & 40.11 & 79.11 & 50.35 & 23.96 & 50.60 & 50.37 & 49.08 \\
    \midrule
    \multicolumn{8}{l}{\textbf{Our methods}} \\
    $\mathcal{RGN}_{1.0}(0,\sigma_{\operatorname{GN}})$
      & \textbf{41.91} & \underline{79.58} & \underline{49.67} & 24.93 & \underline{49.97} & \underline{55.63} & 50.28 \\
    $\mathcal{RGN}_{1.0}(-1,\sigma_{\operatorname{GN}})$
      & 38.19 & 78.98 & 48.76 & \textbf{32.87} & 48.51 & 55.41 & \underline{50.45} \\
    $\mathcal{RGN}_{1.0}(-2,\sigma_{\operatorname{GN}})$
      & 40.32 & 79.25 & 45.86 & 22.72 & 46.51 & \textbf{56.91} & 48.60 \\
    $\mathcal{RGN}_{1.0}(-3,\sigma_{\operatorname{GN}})$
      & 19.63 & 66.93 & 24.45 & 8.25 & 26.12 & 32.76 & 29.69 \\
    $\mathcal{RGN}_{2.0}(0,\sigma_{\operatorname{GN}})$
      & \underline{40.90} & \textbf{80.15} & \textbf{50.69} & \underline{29.76} & \textbf{50.34} & 53.07 & \textbf{50.82} \\
    $\mathcal{RGN}_{2.0}(-1,\sigma_{\operatorname{GN}})$
      & 39.63 & 79.00 & 47.51 & 26.85 & 47.39 & 55.60 & 49.33 \\
    $\mathcal{RGN}_{2.0}(-2,\sigma_{\operatorname{GN}})$
      & 30.80 & 73.88 & 34.61 & 12.03 & 35.56 & 44.92 & 38.63 \\
    $\mathcal{RGN}_{2.0}(-3,\sigma_{\operatorname{GN}})$
      & 15.27 & 68.30 & 24.84 & 7.64 & 29.79 & 26.63 & 28.74 \\
    \midrule
    \multicolumn{8}{l}{\textbf{Dense baselines}} \\
    VICReg
      & 38.35 & 76.25 & 45.63 & 26.52 & 48.30 & 52.19 & 47.88 \\
    SimCLR
      & \textbf{41.70} & 77.88 & 46.87 & \textbf{31.66} & 49.27 & 49.74 & 49.52 \\
    LeJEPA
      & 38.67 & \textbf{79.06} & \textbf{49.02} & 30.18 & \textbf{49.34} & \textbf{53.88} & \textbf{50.03} \\
    \bottomrule
  \end{tabular}%
\end{table}

\begin{table}[t]
  \centering
  \caption{\textbf{10-shot linear probe accuracy (\%) using \emph{projector} features.} All results are in the 10-shot setting. Avg. denotes the mean across datasets.}
  \label{tab:10shot-projector-probe}
  \small
  \setlength{\tabcolsep}{4pt}
  \renewcommand{\arraystretch}{1.15}
  \begin{tabular}{lrrrrrrr}
    \toprule
    model
      & DTD & cifar10 & cifar100 & flowers & food & pets & avg. \\
    \midrule
    Non-negative VICReg
      & 22.50 & 43.24 & 10.56 & 12.86 & 12.65 & 32.92 & 22.46 \\
    Non-negative SimCLR
      & 23.09 & 48.38 & 17.73 & 8.72 & 14.97 & 32.87 & 24.29 \\
    \midrule
    \multicolumn{8}{l}{\textbf{Our methods}} \\
    $\mathcal{RGN}_{1.0}(0,\sigma_{\operatorname{GN}})$
      & 24.47 & \textbf{58.06} & \underline{22.40} & 9.97 & 19.73 & \textbf{40.17} & 29.13 \\
    $\mathcal{RGN}_{1.0}(-1,\sigma_{\operatorname{GN}})$
      & \underline{25.90} & 57.61 & 22.39 & 11.40 & \underline{21.24} & 38.18 & \underline{29.45} \\
    $\mathcal{RGN}_{1.0}(-2,\sigma_{\operatorname{GN}})$
      & 23.83 & 44.69 & 15.32 & 9.19 & 14.65 & 33.31 & 23.50 \\
    $\mathcal{RGN}_{1.0}(-3,\sigma_{\operatorname{GN}})$
      & 18.19 & 36.39 & 9.03 & 7.46 & 8.67 & 26.17 & 17.65 \\
    $\mathcal{RGN}_{2.0}(0,\sigma_{\operatorname{GN}})$
      & 22.82 & \underline{57.93} & 21.26 & \underline{12.07} & 19.03 & 36.90 & 28.33 \\
    $\mathcal{RGN}_{2.0}(-1,\sigma_{\operatorname{GN}})$
      & \textbf{27.39} & 57.32 & \textbf{24.07} & \textbf{13.06} & \textbf{21.54} & \underline{40.01} & \textbf{30.57} \\
    $\mathcal{RGN}_{2.0}(-2,\sigma_{\operatorname{GN}})$
      & 22.23 & 40.83 & 12.58 & 8.93 & 12.01 & 29.38 & 20.99 \\
    $\mathcal{RGN}_{2.0}(-3,\sigma_{\operatorname{GN}})$
      & 9.89 & 23.81 & 5.03 & 6.02 & 6.15 & 10.08 & 10.16 \\
    \midrule
    \multicolumn{8}{l}{\textbf{Dense baselines}} \\
    VICReg
      & 23.09 & 41.20 & 10.24 & 10.60 & 13.55 & 32.49 & 21.86 \\
    SimCLR
      & \textbf{33.09} & \textbf{59.56} & \textbf{25.91} & \textbf{15.68} & \textbf{28.61} & \textbf{42.00} & \textbf{34.14} \\
    LeJEPA
      & 24.95 & 55.76 & 19.55 & 10.72 & 17.17 & 38.98 & 27.85 \\
    \bottomrule
  \end{tabular}%
\end{table}

\begin{table}[t]
  \centering
  \caption{\textbf{All-shot linear probe accuracy (\%) using \emph{encoder} features.} All-shot uses the full training set. Avg. denotes the mean across datasets.}
  \label{tab:allshot-encoder-probe}
  \small
  \setlength{\tabcolsep}{4pt}
  \renewcommand{\arraystretch}{1.15}
  \begin{tabular}{lrrrrrrr}
    \toprule
    model
      & DTD & cifar10 & cifar100 & flowers & food & pets & avg. \\
    \midrule
    Non-negative VICReg
      & 63.56 & 82.68 & 60.40 & 82.55 & 60.39 & 78.63 & 71.37 \\
    Non-negative SimCLR
      & 64.68 & 84.41 & 62.95 & 84.97 & 63.32 & 76.56 & 72.82 \\
    \midrule
    \multicolumn{8}{l}{\textbf{Our methods}} \\
    $\mathcal{RGN}_{1.0}(0,\sigma_{\operatorname{GN}})$
      & \textbf{65.05} & 84.50 & \underline{62.73} & \underline{83.75} & \textbf{62.65} & \underline{78.00} & \underline{72.78} \\
    $\mathcal{RGN}_{1.0}(-1,\sigma_{\operatorname{GN}})$
      & \underline{64.68} & 83.97 & 60.75 & 81.77 & 60.25 & 77.68 & 71.52 \\
    $\mathcal{RGN}_{1.0}(-2,\sigma_{\operatorname{GN}})$
      & 63.67 & \textbf{85.31} & 58.11 & 81.62 & 58.39 & 77.38 & 70.75 \\
    $\mathcal{RGN}_{1.0}(-3,\sigma_{\operatorname{GN}})$
      & 49.04 & 79.11 & 43.94 & 44.72 & 46.22 & 61.30 & 54.06 \\
    $\mathcal{RGN}_{2.0}(0,\sigma_{\operatorname{GN}})$
      & 64.52 & \underline{85.06} & \textbf{64.51} & \textbf{84.09} & \underline{62.35} & \textbf{78.25} & \textbf{73.13} \\
    $\mathcal{RGN}_{2.0}(-1,\sigma_{\operatorname{GN}})$
      & 64.26 & 84.21 & 59.90 & 81.59 & 59.47 & 77.65 & 71.18 \\
    $\mathcal{RGN}_{2.0}(-2,\sigma_{\operatorname{GN}})$
      & 57.87 & 82.19 & 49.81 & 69.38 & 51.90 & 68.93 & 63.35 \\
    $\mathcal{RGN}_{2.0}(-3,\sigma_{\operatorname{GN}})$
      & 53.35 & 78.95 & 45.32 & 57.47 & 46.32 & 52.22 & 55.61 \\
    \midrule
    \multicolumn{8}{l}{\textbf{Dense baselines}} \\
    VICReg
      & 62.77 & 81.47 & 59.23 & 80.96 & 60.44 & 77.11 & 70.33 \\
    SimCLR
      & \textbf{64.73} & 82.49 & 60.10 & 80.47 & \textbf{61.18} & 71.44 & 70.07 \\
    LeJEPA
      & 63.19 & \textbf{83.54} & \textbf{62.38} & \textbf{83.07} & 61.14 & \textbf{78.30} & \textbf{71.94} \\
    \bottomrule
  \end{tabular}%
\end{table}

\begin{table}[t]
  \centering
  \caption{\textbf{All-shot linear probe accuracy (\%) using \emph{projector} features.} All-shot uses the full training set. Avg. denotes the mean across datasets.}
  \label{tab:allshot-projector-probe}
  \small
  \setlength{\tabcolsep}{4pt}
  \renewcommand{\arraystretch}{1.15}
  \begin{tabular}{lrrrrrrr}
    \toprule
    model
      & DTD & cifar10 & cifar100 & flowers & food & pets & avg. \\
    \midrule
    Non-negative VICReg
      & 35.80 & 47.70 & 14.90 & 32.04 & 15.74 & 44.84 & 31.83 \\
    Non-negative SimCLR
      & 36.86 & 50.24 & 21.92 & 25.57 & 17.70 & 47.02 & 33.22 \\
    \midrule
    \multicolumn{8}{l}{\textbf{Our methods}} \\
    $\mathcal{RGN}_{1.0}(0,\sigma_{\operatorname{GN}})$
      & 41.49 & \underline{63.14} & 29.46 & \underline{39.96} & 26.70 & 52.55 & 42.22 \\
    $\mathcal{RGN}_{1.0}(-1,\sigma_{\operatorname{GN}})$
      & \underline{42.82} & 63.05 & \underline{32.96} & 39.68 & \underline{28.98} & \underline{55.08} & \underline{43.76} \\
    $\mathcal{RGN}_{1.0}(-2,\sigma_{\operatorname{GN}})$
      & 39.84 & 48.69 & 19.47 & 29.65 & 18.21 & 44.51 & 33.39 \\
    $\mathcal{RGN}_{1.0}(-3,\sigma_{\operatorname{GN}})$
      & 28.72 & 37.47 & 11.13 & 12.65 & 9.61 & 36.09 & 22.61 \\
    $\mathcal{RGN}_{2.0}(0,\sigma_{\operatorname{GN}})$
      & 41.81 & 62.82 & 29.31 & 37.32 & 24.80 & 53.18 & 41.54 \\
    $\mathcal{RGN}_{2.0}(-1,\sigma_{\operatorname{GN}})$
      & \textbf{45.00} & \textbf{63.85} & \textbf{34.13} & \textbf{42.72} & \textbf{30.30} & \textbf{56.09} & \textbf{45.35} \\
    $\mathcal{RGN}_{2.0}(-2,\sigma_{\operatorname{GN}})$
      & 34.52 & 43.39 & 15.19 & 23.34 & 14.44 & 40.39 & 28.55 \\
    $\mathcal{RGN}_{2.0}(-3,\sigma_{\operatorname{GN}})$
      & 21.38 & 24.92 & 6.58 & 7.94 & 7.20 & 16.63 & 14.11 \\
    \midrule
    \multicolumn{8}{l}{\textbf{Dense baselines}} \\
    VICReg
      & 37.55 & 44.84 & 13.37 & 34.23 & 17.03 & 43.45 & 31.75 \\
    SimCLR
      & \textbf{51.65} & \textbf{63.71} & \textbf{35.10} & \textbf{45.73} & \textbf{36.06} & \textbf{57.78} & \textbf{48.34} \\
    LeJEPA
      & 40.05 & 56.81 & 23.42 & 43.31 & 20.70 & 51.19 & 39.25 \\
    \bottomrule
  \end{tabular}%
\end{table}

\begin{table}[ht!]
  \caption{\textbf{Projector Dimension Ablation on ImageNet-100.}
  For each method and projector dimension, we report the best linear probe accuracy (\%) over learning rates \(\{0.3, 0.03\}\).
  Encoder and projector accuracies are measured using frozen features. Bold indicates the best value within each projector-dimension block.}
  \label{tab:projector-dim-ablation}
  \centering
  \small
  \setlength{\tabcolsep}{7pt}
  \renewcommand{\arraystretch}{1.15}
  \begin{tabular}{lcc}
    \toprule
    Method & Encoder Acc1 $\uparrow$ & Projector Acc1 $\uparrow$ \\
    \midrule
    \multicolumn{3}{c}{\textbf{Projector Dim. = 512}} \\
    \midrule
    VICReg & 63.72 & 57.80 \\
    LeJEPA & 65.53 & 59.18 \\
    $\mathcal{RGN}_{1.0}(0,\sigma_{\operatorname{GN}})$ & 67.56 & 61.34 \\
    $\mathcal{RGN}_{2.0}(0,\sigma_{\operatorname{GN}})$ & \textbf{68.31} & \textbf{61.74} \\
    \midrule
    \multicolumn{3}{c}{\textbf{Projector Dim. = 2048}} \\
    \midrule
    VICReg & 68.73 & 61.81 \\
    LeJEPA & 67.18 & 60.12 \\
    $\mathcal{RGN}_{1.0}(0,\sigma_{\operatorname{GN}})$ & 69.33 & \textbf{64.90} \\
    $\mathcal{RGN}_{2.0}(0,\sigma_{\operatorname{GN}})$ & \textbf{69.54} & 64.85 \\
    \bottomrule
  \end{tabular}
\end{table}

\section{Qualitative Analysis of Rectified LpJEPA}\label{sec:totalseconqualanalysis}

In the following section, we present qualitative analyses of Rectified LpJEPA models and baseline models pretrained over ImageNet-100. We use the target distribution $\mathcal{RGN}_{p}(\mu,\sigma_{\operatorname{GN}})$ to denote Rectified LpJEPA with hyperparameters $\{\mu,\sigma_{\operatorname{GN}},p\}$. In \cref{sec:nearestneighborretrievalsec}, we visualize nearest-neighbor retrieval of selected exemplar images in representation space. We present additional visual attribution maps in \cref{sec:visualattributionmapsec}.

\subsection{$k$-Nearest Neighbors Visualizations}\label{sec:nearestneighborretrievalsec}

For a selected exemplar image, we retrieve its top-$k$ nearest neighbors from the ImageNet-100 validation set using cosine similarity over frozen projector features. Retrieved neighbors are outlined in green if their labels match the exemplar’s class and in red otherwise.

In \cref{fig:qual_pirate_ship}, we visualize the top-$7$ nearest neighbors of a pirate ship exemplar for Rectified LpJEPA with $p=1$ under varying mean-shift values $\mu$, alongside dense and sparse baseline models. Despite substantial variation in feature sparsity induced by $\mu$, Rectified LpJEPA consistently retrieves semantically coherent neighbors: across all settings, the retrieved images belong exclusively to the pirate ship class. Combined with the competitive linear-probe performance reported in \cref{tab:imagenet100-table}, these qualitative results suggest that Rectified LpJEPA preserves semantic structure even in highly sparse regimes.

We next consider a more challenging exemplar depicting a tabby cat in the foreground against a laptop background. As shown in \cref{fig:qual_tabby_cat}, dense baselines such as SimCLR (denoted CL for brevity) retrieve a mixture of cat and laptop images, indicating label-agnostic encodings that capture multiple objects present in the scene. In contrast, both sparse baselines and Rectified LpJEPA predominantly retrieve images of cats, except in the extreme sparsity setting of Rectified LpJEPA with $\mu=-3$. In this regime, the retrieved neighbors consist almost exclusively of laptop images. This behavior suggests that under extreme sparsity, either information about the cat is lost or the background laptop features dominate over the cat features.

To distinguish between these possibilities, we further probe Rectified LpJEPA with $\mu=-3$ by cropping the exemplar image to retain only the cat foreground. As shown in \cref{fig:qual_probe_cat}, once the background is removed, Rectified LpJEPA with $\mu=-3$ consistently retrieves images of cats. This shows that even under extreme sparsity, Rectified LpJEPA still preserves information instead of being a lossy compression of the input, and we hypothesize that retrieval of solely laptop images in \cref{fig:qual_tabby_cat} is due to competitions between features in the scene rather than information loss.

\subsection{Visual Attribution Maps}\label{sec:visualattributionmapsec}

To further support the above observations, we visualize attribution maps for the tabby cat exemplar and its cropped variant using Grad-CAM-style heatmaps \cite{Selvaraju_2019}. Specifically, we backpropagate a scalar score derived from the representation—the squared $\ell_2$ norm of the projector feature—to a late backbone layer, and compute a weighted combination of activations that is overlaid on the input image.

As shown in \cref{fig:qual_attribution}, when the background laptop is removed, all models concentrate their attributions on the cat, consistent with the cat-only retrieval behavior observed in \cref{fig:qual_probe_cat}. For the full image containing both the cat and the laptop, attributions are more spatially spread-out. Notably, Rectified LpJEPA with $\mu=-3$ places a large fraction of its attribution mass on the background, aligning with its tendency to retrieve laptop images in \cref{fig:qual_tabby_cat}.

Taken together, these results suggest that, for the examined examples, Rectified LpJEPA can perform task-agnostic encoding under extreme sparsity without simply discarding the relevant object information. This behavior aligns with our objective of learning sparse representations whose continuous components retain high entropy, encouraging the representation to preserve input information while remaining agnostic to downstream tasks.

\section{Implementation Details}\label{sec:implementation_details}

\subsection{Pretraining data and setup}
We conduct self-supervised pretraining on ImageNet-100, a 100-class subset derived from ImageNet-1K \citep{JMLR:v23:21-1155}, using a ResNet-50 encoder. Unless otherwise specified, all methods are trained with identical data, architecture, optimizer, and augmentation pipelines to ensure fair comparison.

\subsection{Architecture}
The encoder is followed by a 3-layer MLP projector with hidden and output dimension 2048. For Rectified LpJEPA variants, denoted $\mathcal{RGN}_p(\mu,\sigma_{\text{GN}})$, we append a final $\operatorname{ReLU}(\cdot)$ to the projector output to enforce explicit rectifications in the representation space. When $p=1$, our target distribution is Rectified Laplace. For $p=2$, the RDMReg loss is matching to the Rectified Gaussian distribution. 

\subsection{Data augmentation}
Following the standard protocol in \citet{JMLR:v23:21-1155}, we generate two stochastic views per image using: random resized crop (scale in $[0.2,1.0]$, output resolution $224\times224$), random horizontal flip ($p=0.5$), color jitter ($p=0.8$; brightness $0.4$, contrast $0.4$, saturation $0.2$, hue $0.1$), random grayscale ($p=0.2$), Gaussian blur ($p=0.5$), and solarization ($p=0.1$).

\subsection{Optimization}
We pretrain for 1000 epochs using LARS optimizer \citep{you2017largebatchtrainingconvolutional} with a warmup+cosine learning rate schedule (10 warmup epochs). Unless otherwise specified, we use batch size 128, learning rate $0.0825$ for the encoder, learning rate $0.0275$ for the linear classifier, momentum $0.9$, and weight decay $10^{-4}$. All ImageNet-100 experiments are run on a single node with a single NVIDIA L40S GPU.

\subsection{Distribution matching objective}
For Rectified LpJEPA, we set the invariance weight to $\lambda_{\text{sim}}=25.0$ and the RDMReg loss weight to $\lambda_{\text{dist}}=125.0$. We perform distribution-matching using the sliced 2-Wasserstein distance (SWD) with 8192 random projections per iteration.

\subsection{Compute and runtime}
A full 1000-epoch ImageNet-100 pretraining run takes approximately 2d 7h wall-clock time on a single NVIDIA L40S GPU. To speed-up training, we pre-load the entire ImageNet-100 dataset to CPU memory to avoid additional I/O costs. We find that this significantly improves GPU utilization and also minimizes the communication time. However, we're not able to do this for larger-scale datasets.

\subsection{Transfer evaluation protocol}\label{sec:transfer_protocol}
We evaluate transfer performance with frozen-feature linear probing on six downstream datasets: DTD, CIFAR-10, CIFAR-100, Flowers-102, Food-101, and Oxford-IIIT Pets. For each pretrained checkpoint, we freeze the encoder (and projector when applicable) and train a single linear classifier on top of both encoder features and projector features.

We report three label regimes: 1\% (1-shot), 10\% (10-shot), and 100\% (All-shot) of the labeled training data. The linear probe is trained for 100 epochs using Adam with learning rate $10^{-2}$, batch size 512, and no weight decay. Evaluation inputs are resized to 256 on the shorter side, then center-cropped to $224\times224$ and normalized with ImageNet statistics; we apply no data augmentation during probing.

\subsection{Reproducibility.}
All ImageNet-100 pretraining results are reported from a single run (seed $5$), trained with mixed-precision ($16$-mixed) on a single NVIDIA L40S GPU (one node, one GPU; no distributed training).

\subsection{Continuous mapping theorem evaluation (post-hoc ReLU probes).}
For the continuous mapping theorem ablation (\Cref{fig:contmaptheoremfig}), we evaluate pretrained checkpoints by extracting frozen encoder/projector features and optionally applying a post-hoc rectification $\operatorname{ReLU}(\cdot)$ at evaluation time. We report (i) sparsity statistics computed on the validation features before and after rectification, and (ii) linear probe accuracy when training on dense (pre-ReLU) features versus rectified (post-ReLU) features.

Linear probes are trained for 100 epochs using SGD (momentum $0.9$) with a cosine learning-rate schedule, learning rate $10^{-2}$, batch size 512, and weight decay $10^{-6}$. We use the same deterministic evaluation preprocessing described in \cref{sec:transfer_protocol}.

\subsection{Vision Transformer (ViT) experiments}\label{sec:vit_impl_details}
For the ViT results in \Cref{tab:rpl_msv_sweep_vit}, we use a ViT-Small backbone (\texttt{vit\_small}). The encoder is followed by a 3-layer MLP projector with hidden and output dimension 2048 (i.e., a $2048$--$2048$--$2048$ MLP), and we apply a final $\operatorname{ReLU}(\cdot)$ to the projector output. We pretrain on ImageNet-100 for 1000 epochs using AdamW (batch size 128, learning rate $5 \times 10^{-4}$, weight decay $10^{-4}$) under the same augmentation pipeline described above. Other hypers are the same as used for ResNet-50 Experiments. A full 1000-epoch ImageNet-100 pretraining run for ViT takes approximately 2d 6h wall-clock time. Unless otherwise specified, we use mixed-precision (16-mixed) training on a single NVIDIA L40S GPU.

\begin{table}[ht!]
  \caption{\textbf{$\mathcal{RGN}_{1.0}(\mu,\sigma_{\operatorname{GN}})$ Mean-Shift Sweep (ViT) with Baselines.}
  We report encoder Acc1 (val\_acc1), projector Acc1 (val\_proj\_acc1), and sparsity metrics.
  \textbf{Bold} indicates best in a column; \underline{underline} indicates second best.
  For sparsity columns, lower is better (more sparse).}
  \label{tab:rpl_msv_sweep_vit}
  \centering
  \small
  \setlength{\tabcolsep}{7pt}
  \renewcommand{\arraystretch}{1.15}
  \begin{tabular}{lcccc}
    \toprule
    Method & Enc Acc1 $\uparrow$ & Proj Acc1 $\uparrow$ & $\ell_{1}$ Sparsity $\downarrow$ & $\ell_{0}$ Sparsity $\downarrow$ \\
    \midrule
    \multicolumn{5}{c}{\textbf{Ours: $\mathcal{RGN}_{1.0}(\mu,\sigma_{\operatorname{GN}})$ (Mean Shift Value, MSV)}} \\
    \midrule
    $\mathcal{RGN}_{1.0}(1.0,\sigma_{\operatorname{GN}})$  & 74.34 & 65.60 & 0.6459 & 0.9359 \\
    $\mathcal{RGN}_{1.0}(0.5,\sigma_{\operatorname{GN}})$  & 74.58 & 66.42 & 0.4768 & 0.8825 \\
    $\mathcal{RGN}_{1.0}(0.0,\sigma_{\operatorname{GN}})$  & \textbf{75.44} & \textbf{67.16} & 0.2730 & 0.7721 \\
    $\mathcal{RGN}_{1.0}(-0.5,\sigma_{\operatorname{GN}})$ & 74.80 & \underline{66.86} & 0.1407 & 0.5526 \\
    $\mathcal{RGN}_{1.0}(-1.0,\sigma_{\operatorname{GN}})$ & 74.18 & 65.14 & 0.0737 & 0.3067 \\
    $\mathcal{RGN}_{1.0}(-1.5,\sigma_{\operatorname{GN}})$ & \underline{74.88} & 63.70 & 0.0390 & 0.1227 \\
    $\mathcal{RGN}_{1.0}(-2.0,\sigma_{\operatorname{GN}})$ & 73.54 & 60.70 & 0.0238 & 0.0523 \\
    $\mathcal{RGN}_{1.0}(-2.5,\sigma_{\operatorname{GN}})$ & 72.06 & 57.96 & 0.0188 & 0.0357 \\
    $\mathcal{RGN}_{1.0}(-3.0,\sigma_{\operatorname{GN}})$ & 71.64 & 57.46 & \underline{0.0132} & \underline{0.0220} \\
    \midrule
    \multicolumn{5}{c}{\textbf{Baselines (Dense)}} \\
    \midrule
    LeJEPA & 65.36 & 59.12 & 0.6369 & 1.0000 \\
    VICReg & 72.06 & 63.56 & 0.7877 & 1.0000 \\
    SimCLR & 74.18 & \underline{66.86} & 0.6663 & 1.0000 \\
    \midrule
    \multicolumn{5}{c}{\textbf{Baselines (Sparse / NonNeg)}} \\
    \midrule
    NonNeg-VICReg & 71.64 & 65.42 & 0.5075 & 0.7066 \\
    NonNeg-SimCLR & 74.48 & 63.76 & \textbf{0.0016} & \textbf{0.0023} \\
    \bottomrule
  \end{tabular}
\end{table}

\begin{figure*}[t]
    \centering
    \includegraphics[width=\linewidth]{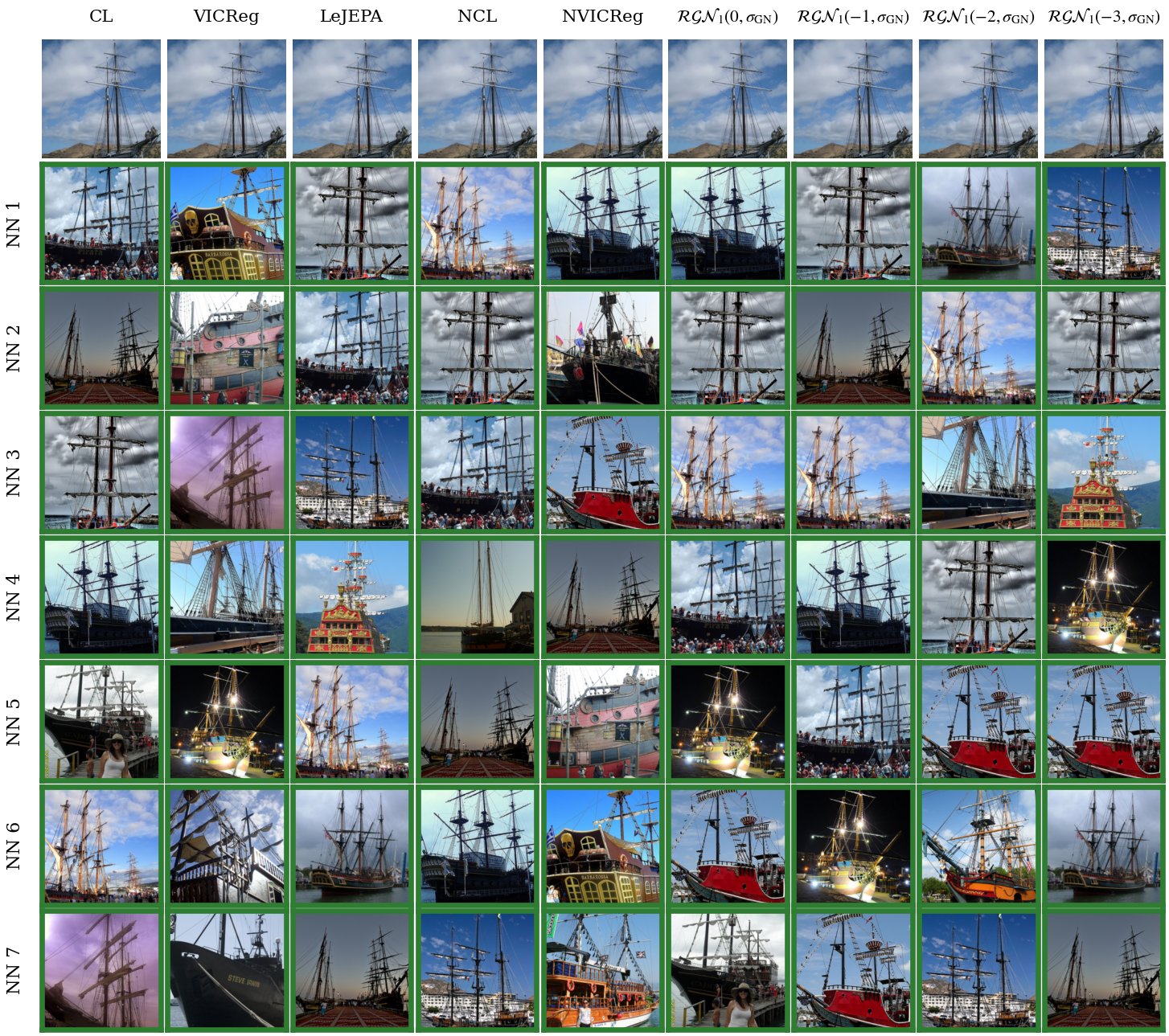}
    \caption{\textbf{Nearest neighbors in feature space (ImageNet synset; unambiguous class).}
    Top-$k$ cosine nearest neighbors in the \emph{projector} space for a query labeled as pirate ship. Both dense and sparse methods retrieve pirate ships consistently, illustrating that even at high sparsity our models can preserve semantic consistency when the query is unambiguous.}
    \label{fig:qual_pirate_ship}
\end{figure*}

\begin{figure*}[t]
    \centering
    \includegraphics[width=\linewidth]{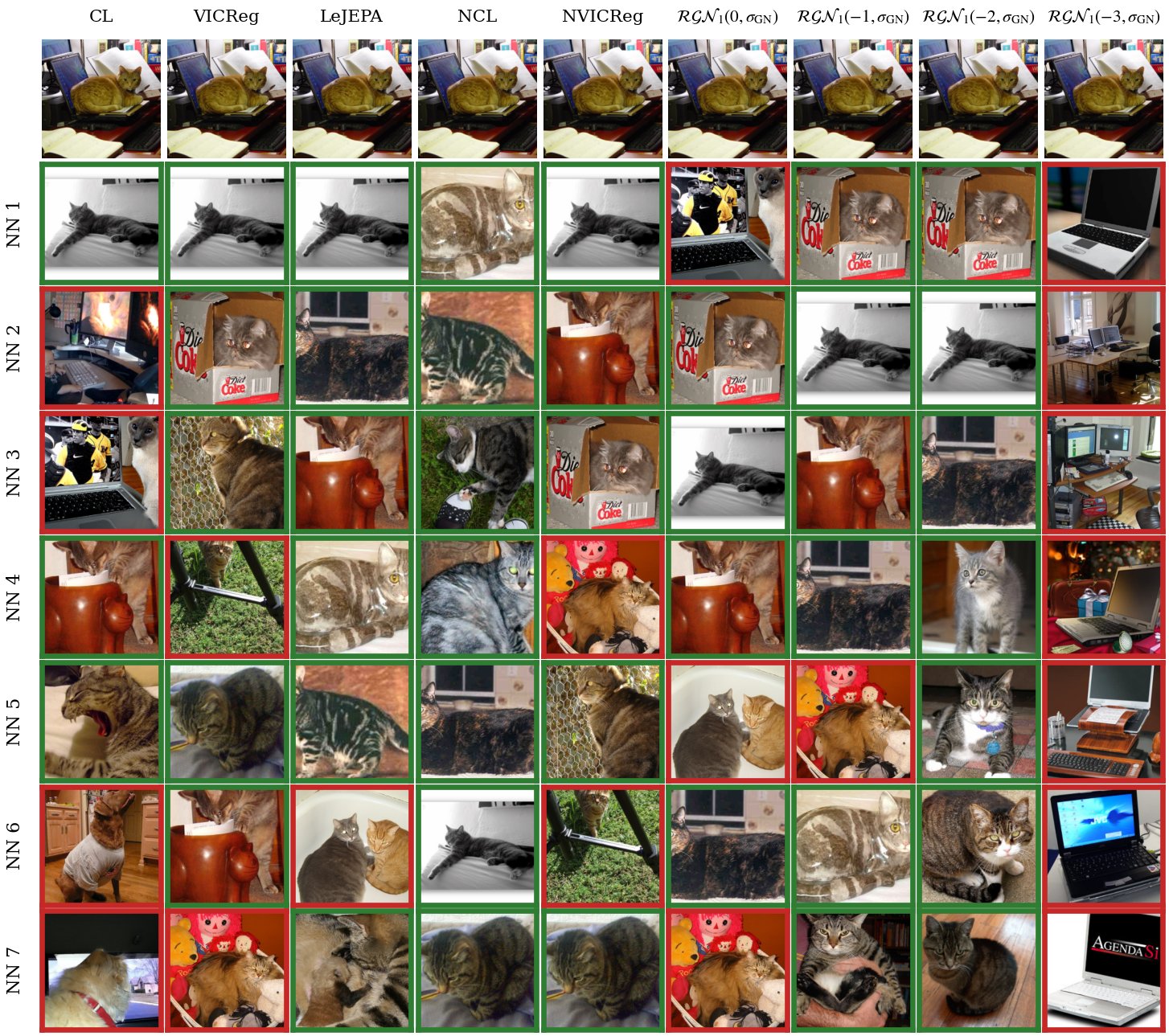}
    \caption{\textbf{Nearest neighbors in feature space (ImageNet synset; full scene).}
    Top-$k$ cosine nearest neighbors in the \emph{projector} space for a query labeled as tabby cat (n02123045) that contains both the cat and a salient laptop/desk context. Dense methods (e.g., SimCLR) can return a mixture of cat and laptop/desk neighbors. In contrast, highly sparse $\mathcal{RGN}_{1.0}(\mu,\sigma_{\operatorname{GN}})$ variants tend to commit to a single factor: at MSV$=-2$ neighbors are predominantly tabby cats, while at MSV$=-3$ neighbors flip to predominantly laptop/desk images.}
    \label{fig:qual_tabby_cat}
\end{figure*}

\begin{figure*}[t]
    \centering
    \includegraphics[width=\linewidth]{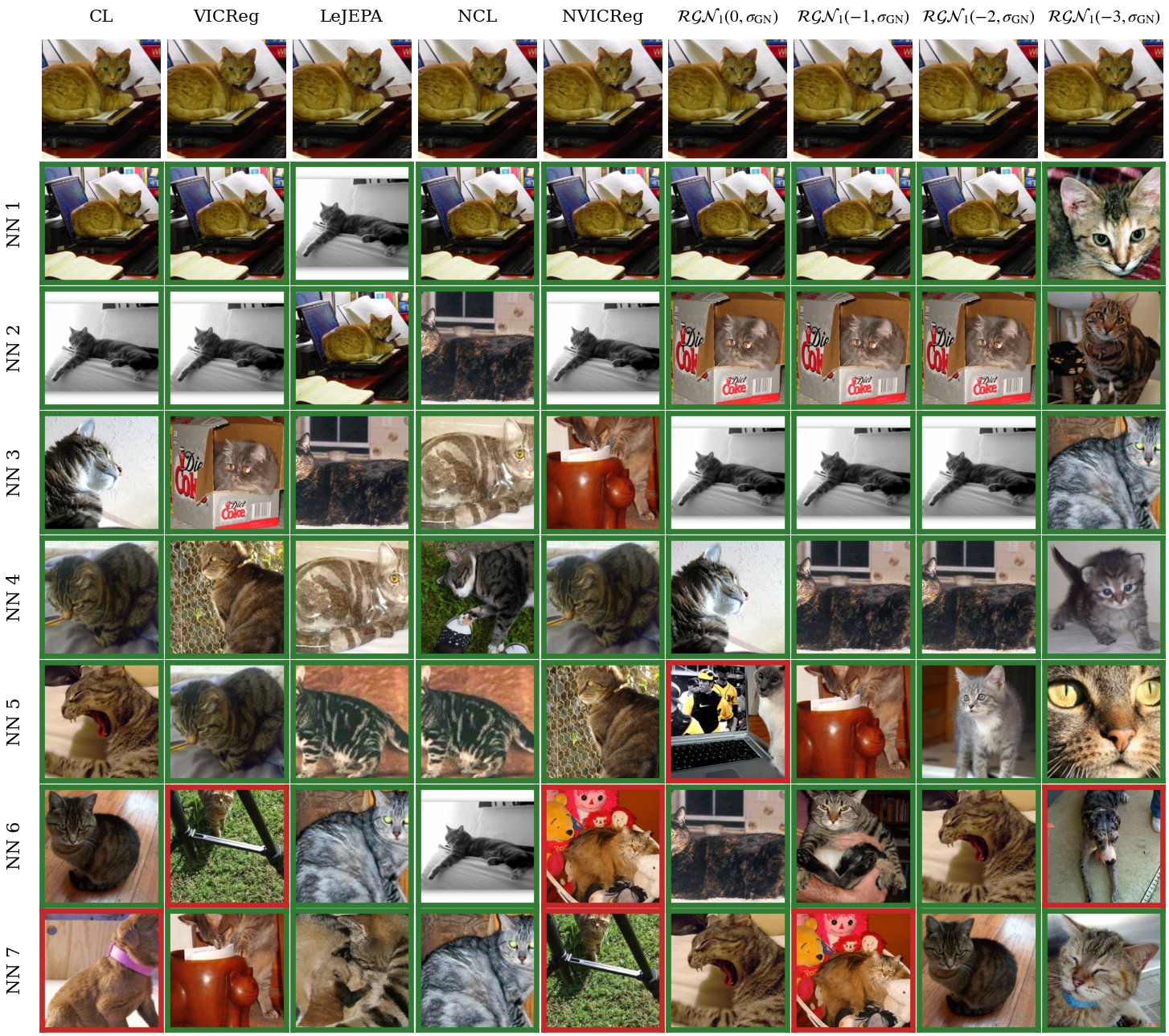}
    \caption{\textbf{Nearest neighbors in feature space (probe crop).}
    Top-$k$ cosine nearest neighbors in the \emph{projector} space for a zoomed-in query that isolates the cat from \Cref{fig:qual_tabby_cat} by removing the competing laptop/desk cues. In this less ambiguous setting, neighbors remain tabby cats across both dense and highly sparse methods, including the most sparse $\mathcal{RGN}_{1.0}(\mu,\sigma_{\operatorname{GN}})$ variants.}
    \label{fig:qual_probe_cat}
\end{figure*}

\begin{figure*}[t]
    \centering
    \includegraphics[width=\linewidth]{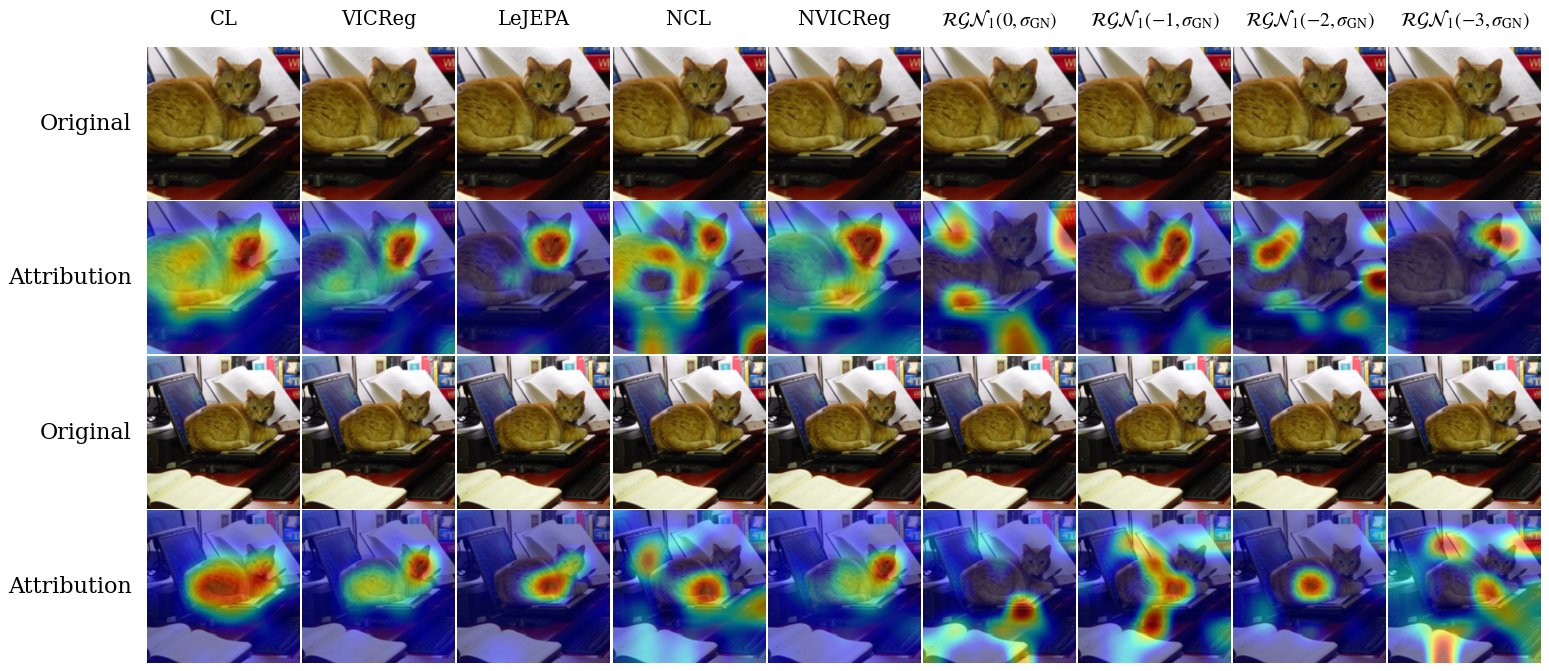}
    \caption{\textbf{Representation-focused attribution across methods.}
    Grad-CAM-style attribution maps computed on the \emph{projector} representation for two views of the same scene (a tabby cat lying on a laptop). Rows compare dense baselines (SimCLR, VICReg, LeJEPA), sparse baselines (RepReLU variants), and our $\mathcal{RGN}_{p}(\mu,\sigma_{\operatorname{GN}})$ family (where $p=1.0$ corresponds to Laplace and $\mu$ is the mean-shift value, MSV) at increasing mean-shift values, which induce increasing sparsity.}
    \label{fig:qual_attribution}
\end{figure*}

\begin{figure}
    \centering
    \includegraphics[width=1\linewidth]{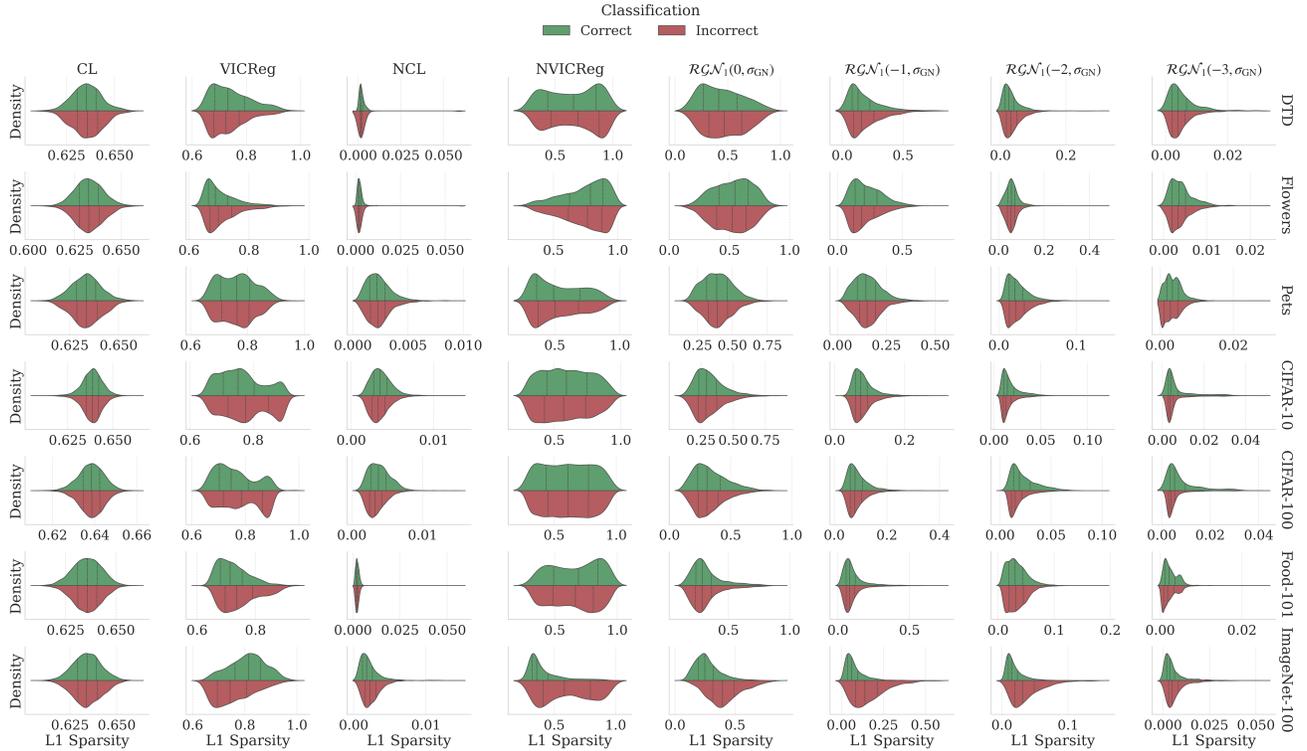}
    \caption{\textbf{Distribution of representation sparsity in the transfer setting.}
    Violin plots showing the distribution of output-feature $\ell_1$ sparsity for correctly (green) and incorrectly (red) classified samples across datasets and methods. All models are pretrained on ImageNet-100 and evaluated using a full-shot linear-probe transfer setup, where a linear classifier is trained on frozen representations using $100\%$ of each downstream dataset. Sparsity is computed per sample from the frozen output features. }
    \label{fig:transfer_sparsity}
\end{figure}


\end{appendices}

\end{document}